\DeclareMathOperator{\spand}{sp}
\newcommand{\SD}{\mathrm{SD}}
\newcommand{\AC}{\mathrm{AR}}
\newcommand{\FD}{\mathrm{FD}}
\newcommand{\toup}{%
  \mathrel{\nonscript\mkern-1.2mu\mkern1.2mu{\uparrow}}%
}
\newcommand\OO{\mathcal O}
\newcommand\tildeO{\tilde{\OO}}
\newcommand\reals{\mathds{R}}
\newcommand\integers{\mathds{N}}
\newcommand\EXP{\mathds{E}}
\newcommand\PR{\mathds{P}}
\newtheorem{assumption}{Assumption}
\newcommand*\doTRANS[2]{\raisebox{\depth}{$\m@th#1\intercal$}}
\newcommand{\startappendixtoc}{%
  \let\orig@addcontentsline\addcontentsline
  \renewcommand{\addcontentsline}[3]{%
    \orig@addcontentsline{##1}{##2}{##3}%
    \ifnum\pdfstrcmp{##1}{toc}=0 %
      \ifnum\pdfstrcmp{##2}{section}=0 %
        \orig@addcontentsline{atoc}{##2}{##3}%
      \fi
      \ifnum\pdfstrcmp{##2}{subsection}=0 %
        \orig@addcontentsline{atoc}{##2}{##3}%
      \fi
      \ifnum\pdfstrcmp{##2}{subsubsection}=0 %
        \orig@addcontentsline{atoc}{##2}{##3}%
      \fi
    \fi
  }%
}
\newcommand{\stopappendixtoc}{%
  \let\addcontentsline\orig@addcontentsline
}
\newcommand{\listofappendices}{%
  \section*{Appendix Contents}%
  \@starttoc{atoc}%
}
\begin{document}

\title{Concentration of Cumulative Reward in Markov Decision Processes}

\author{\name Borna Sayedana \email         borna.sayedana@mail.mcgill.ca\\ 
    \addr  Université de Montréal,\\
    and Mila – Quebec AI Institute\\
    Montreal, QC, H2S\,3H1, Canada.
       \AND
       \name Peter E. Caines \email peterc@cim.mcgill.ca
       \\ \addr Department of Electrical and Computer Engineering,  \\
       McGill University,  \\Montreal, QC, H3A\,0E9, Canada.
       \AND
       \name Aditya Mahajan \email aditya.mahajan@mcgill.ca  \\
       \addr Department of Electrical and Computer Engineering,  \\
       McGill University,  \\Montreal, QC, H3A\,0E9, Canada.}

\editor{}

\maketitle

\begin{abstract}
   In this paper, we investigate the concentration properties of cumulative reward in Markov Decision Processes (MDPs), focusing on both asymptotic and non-asymptotic settings. We introduce a unified approach to characterize reward concentration in MDPs, covering both infinite-horizon settings (i.e., average and discounted reward frameworks) and finite-horizon setting.  Our asymptotic results include the law of large numbers, the central limit theorem, and the law of iterated logarithms, while our non-asymptotic bounds include Azuma-Hoeffding-type inequalities and a non-asymptotic version of the law of iterated logarithms. Additionally, we explore two key implications of our results. First, we analyze the sample path behavior of the difference in rewards between any two stationary policies. Second, we show that two alternative definitions of regret for learning policies proposed in the literature are \emph{rate-equivalent}. Our proof techniques rely on a martingale decomposition of cumulative reward, properties of the solution to the policy evaluation fixed-point equation, and both asymptotic and non-asymptotic concentration results for martingale difference sequences.\\
   \begin{keywords}
   Concentration of Rewards, Markov Decision Processes, Reinforcement Learning, Average Reward Infinite-Horizon MDPs
   \end{keywords}
\end{abstract}
\section{Introduction}
Reinforcement learning is a machine learning framework in which an agent learns to make optimal sequential decisions by repeatedly interacting with its environment. This approach is particularly effective for addressing problems with complex dynamic environments. The standard mathematical model for reinforcement learning is Markov Decision Processes (MDPs). In an MDP, the agent takes an action at each time step, receives an instantaneous reward, and transitions to the next state based on a Markovian dynamic that depends on the current state and action.

The existing literature on MDP theory primarily focuses on analyzing and maximizing the \emph{expected} cumulative reward, resulting in methods that emphasize the system's \emph{average} behavior. While this approach is useful in many domains, it may fall short in high-stakes applications, where an agents' decisions may lead to costly consequences. Such scenarios arise in applications such as safety-critical engineering systems and decision-making processes in finance and healthcare. Different approaches have emerged to understand the behavior of MDPs beyond the expected reward. Broadly, these approaches can be categorized as follows: (i) risk-sensitive control, in which the agent aims to identify policies that minimize a specific \emph{risk measure}; (ii) distributional reinforcement learning, in which the distribution of cumulative \emph{discounted} reward is estimated and controlled; (iii) Markov reward processes, in which the \emph{asymptotic} distributional and sample-path properties of the cumulative reward in the infinite-horizon \emph{average} reward framework are investigated. We elaborate on each of these approaches below.

\textbf{Risk-sensitive control.} In the risk-neutral framework the agent's goal is to maximize the expected cumulative reward. In the risk-sensitive framework, the agent's goal is to minimize a risk measure that captures other statistical properties (e.g., variance, tail probability, etc.) of the reward in addition to the mean. There are three primary risk functionals studied in the literature \citep{WangChapman2022RiskAverseAutonomous}: (i) the exponential utility functional, which is an increasing and convex mapping of the cost function. Under certain conditions, this functional can be used to model the mean-variance trade-off in decision-making problems. Risk-averse control using exponential utility functional has been studied in~\citet{HowardMatheson1972,Jacobson1973,Whittle1981,CoraluppiMarcus1999,Borkar2002,BauerleRieder2014}.  (ii) Quantile-based risk functionals such as Value at Risk (VaR) and Conditional Value at Risk (CVaR), which characterize the probability or expectation of the cost exceeds a given threshold, capturing the tail behavior of the cost distribution.  Risk-averse control using quantile-based risk functionals has been studied in~\citet{BauerleOtt2011,ChowTamarMannorPavone2015,MillerYang2021,BauerleGlauner2021,Chapman2022}. (iii) Recursive risk functionals which model the risk at every stage and result in a dynamic programming type solution. Risk-averse control using recursive risk functionals has been studied in~ \citet{ruszczynski2010risk,SinghChowMajumdarPavone2018,BauerleGlauner2022_EJOR}. For a comprehensive survey on risk-sensitive control and RL, please refer to~\citet{WangChapman2022RiskAverseAutonomous,BiswasBorkar2023ErgodicRiskSensitive}.

\textbf{Distributional RL.} The second approach focuses on estimating 
various statistical properties of the discounted cumulative reward in the infinite-horizon discounted reward framework. Early works such as \citet{sobel1982variance,ChungSobel1987} derive a Bellman-type equation to compute the variance of the discounted cumulative reward. More recent works treat the asymptotic discounted cumulative reward as a random variable and use various methods to approximate the distribution or compute its important statistics such as quantiles. The approximate distribution is then used in reinforcement learning algorithms~\citep{MorimuraEtAl2010,MorimuraEtAl2010a,bellemare2017distributional,rowland2018analysis,dabney2018implicit,dabney2018distributional,RowlandEtAl2019,BellemareEtAl2019, LyleCastroBellemare2019,YangEtAl2019,Farahmand2019,DuanEtAl2021,lheritier2021cram,NguyenEtAl2022,rowland2023statistical,rowland2024analysis}.
For a comprehensive review of the algorithms and theoretical results in distributional RL, please refer to~\citet{bdr2023}.

\textbf{Markov Chains.} For a fixed stationary Markov policy, any MDP may be reduced to a Markov reward process, i.e. a Markov chain induced on the state space and an associated reward process. As a result, there is a close connection between the MDP theory and the theory of Markov chains. The asymptotic sample path and distributional behavior of functionals of Markov chains are extensively studied in the literature. For example, the CLT results for Markov chains are established in  \citet{chung1967markov,cogburn1972central,maigret1978theoreme,niemi1982central,kipnis1986central,maxwell2000central,landim2003central,jones2004markov,meyn2012markov,duflo2013random,srikant2025rates}. The rate of convergence in CLT for geometrically ergodic Markov chains is studied in \citet{kontoyiannis2003spectral,kontoyiannis2005large}. In parallel, the asymptotic behavior of martingales are studied in the martingale theory, e.g., in \citet{neveu1975discrete,hall1980martingale,billingsley2013convergence}. As discussed in \citet{meyn2012markov} there is a close connection between the results in Markov chains and their counterparts in martingale theory. As an example, one way to prove the CLT for Markov chains is to use the martingale decomposition arising from the Poisson equation. This approach is used in e.g., \citet{Mandl1971,Mandl1991,hernandez2012further,duflo2013random,maigret1978theoreme}. Recently the martingale approach is used to derive a central limit theorem for the Linear Quadratic Regulation (LQR) problem in \citet{sayedana2024asymptotic}.

\textbf{Our Work.} In this paper, we provide a unified approach for characterizing both asymptotic and non-asymptotic reward concentration in infinite-horizon average reward, infinite-horizon discounted reward, and finite-horizon frameworks. Our results cover asymptotic concentration like LLN, CLT, and LIL, along with non-asymptotic bounds, including Azuma-Hoeffding-type inequalities and a non-asymptotic version of the Law of Iterated Logarithms for the average reward setting. Building upon these concentration results, we explore two of their key implications: (1) the sample path difference of rewards between two policies, and (2) the impact of these findings on the regret analysis of reinforcement learning algorithms. We derive similar non-asymptotic upper-bounds for discounted reward and finite-horizon setups.

\textbf{Comparison.} There are two key distinctions between our work and the existing literature. (i) We establish \emph{non-asymptotic} concentration results for the cumulative reward process. In contrast, studies in distributional RL typically analyze the \emph{asymptotic} behavior of the discounted cumulative reward, whereas in the average-reward and Markov reward process settings, prior works focus on the \emph{asymptotic} distributions of the cumulative reward process. To the best of our knowledge, no \emph{non-asymptotic} concentration result have been reported in the literature for MDPs. (ii) We develop a \emph{unified framework} that enables the derivation of concentration results across the three principle MDP frameworks. In contrast, methods developed in distributional RL that rely on contraction mapping theorems do not extend naturally to the average-reward setting, and techniques based on Markov chain analysis cannot be directly applied to finite-horizon problems.

\textbf{Proof Approach} Our proofs rely on a martingale decomposition similar to the one originating from the Poisson equation in Markov chains (e.g. in ~\citet{meyn2012markov}). Such decomposition enables us to interpret the cumulative reward process both as a martingale and as a functional of an underlying Markov chain. In this paper, we adopt the martingale viewpoint rather than the Markov chain one. We make such a choice since the resulting non-asymptotic bounds solely depend on the statistical properties of the value function. As a result, these bounds can be efficiently computed using existing numerical methods for value function computation.

We also use our results to clarify a nuance in the definition of regret in average reward infinite-horizon reinforcement learning. In this setting, regret is defined as the difference between the \emph{expected} reward obtained by the optimal policy minus the (sample-path) cumulative reward obtained by the learning algorithm as a function of time. The standard results establish that this regret is lower-bounded by $\Omega(\sqrt{D|\mathcal{S}||\mathcal{A}|T})$ and upper bounded by $\tildeO(D|\mathcal{S}|\sqrt{|\mathcal{A}|T})$ \citep{auer2008near}, where $T$ denotes the horizon, $|\mathcal{S}|$ denotes the number of states, $|\mathcal{A}|$ denotes the number of actions, and $D$ denotes the diameter of thr MDP. Various refinements of these results have been considered in the literature 
\citep{auer2006logarithmic, filippi2010optimism,bartlett2012regal,russo2014learning,osband2013more,lakshmanan2015improved,osband2016generalization,ouyang2017learning,theocharous2017posterior,agrawal2017optimistic,talebi2018variance,fruit2018efficient,zhang2019regret,qian2019exploration,fruit2019exploration, zanette2019tighter,fruit2020improved,bourel2020tightening,zhang2023sharper,boone2024achieving}.

There is a more appropriate notion of regret in applications which are driven by an independent exogenous noise process such as inventory management problems where the dynamics are driven by an exogenous demand process and linear quadratic regulation problems where the dynamics are driven by an exogenous disturbance process. In such applications, it is more appropriate to compare the cumulative reward obtained by the optimal policy with cumulative reward obtained by the learning algorithm \emph{under the same realization of the exogenous noise}. For example, in an inventory management problem, one may ask how worse is a learning algorithm compared to the (expected-reward) optimal policy on a specific realization of the demand process. This notion of regret has received significantly less attention in the literature \citep{abbasi2019politex,talebi2018variance}. We show that a consequence of our results is that the two notions of regret are rate-equivalent. A similar result was claimed without a proof in \citet{talebi2018variance}.

\subsection{Contributions}
The contributions of this paper can be summarized as follows:
\begin{enumerate}
    \item We establish the asymptotic concentration of cumulative reward in average reward MDPs, deriving the law of large numbers, the central limit theorem, and the law of iterated logarithm for a class of stationary policies. Compared to the existing asymptotic results in the literature which use Markov chain theory, we provide a simpler proof which leverages a martingale decomposition for the cumulative reward along with the asymptotic concentration of measures for martingale sequences.
    \item We derive policy-dependent and policy-independent non-asymptotic concentration bounds for the cumulative reward in average reward MDPs. These bounds establish an Azuma-Hoeffding-type inequality for the rewards along with a non-asymptotic version of law of iterated logarithm. Although these results apply to a broad subset of stationary policies, we show that for communicating MDPs, these bounds extend to any stationary deterministic policy. We use the established concentration results to characterize the sample path behavior of the performance difference of any two stationary policies. As a corollary of this result, we show that the difference between cumulative reward of any two optimal policies is upper-bounded by $\OO(\sqrt{T})$ with high probability.  
    \item We investigate the difference between two notions of regret in the reinforcement learning literature, cumulative regret and interim cumulative regret. By analyzing the sample path behavior, we establish that both asymptotically and non-asymptotically, this difference is upper-bounded by $\tildeO(\sqrt{T})$. This result implies that, if a reinforcement learning algorithm has a regret upper bound of $\tildeO(\sqrt{T})$ under one definition, the same rate applies to the other, in both of the asymptotic and non-asymptotic frameworks. While this equivalency was claimed in the literature without a proof, our concentration results provide a formal proof for this relation.  
    \item Lastly, we investigate several extensions of our results to other frameworks. In particular, we derive non-asymptotic concentration bounds for the cumulative reward in the infinite-horizon discounted reward and finite-horizon MDP frameworks. These bounds include an Azuma-Hoeffding-type inequality along with a non-asymptotic version of the law of iterated logarithm. Using the vanishing discount analysis, we show that under appropriate conditions, the concentration bounds for discounted reward MDPs approach to the concentration bounds for the average reward MDPs as the discount factor approaches 1. Moreover, we establish the non-asymptotic concentration bounds for models with stochastic reward, i.e., models in which reward is a function of an exogenous process in addition to state and action.
\end{enumerate}

\subsection{Organization}
The rest of this paper is organized as follows. The problem formulation, along with the underlying assumptions, are presented in Sec.~\ref{sec:Problem_formulation}. The main results for the average reward setting are presented in Sec.~\ref{sec:Main_results}. The main results for the discounted reward setting are presented in Sec.~\ref{sec:extension}. The main results for the finite-horizon setting are presented in Sec.~\ref{sec:finite_horizon}. The extension of our results to the case with stochastic reward is presented in Sec.~\ref{sec:stoch_reward}. Our concluding remarks are presented in Sec.~\ref{sec:conclusion}.  Moreover, App.~\ref{sec:solvability_cond} presents a background discussion on Markov chain theory. App.~\ref{sec:Martingales} presents a background discussion on concentration of martingale sequences. 
Proofs of main results are presented in the remaining appendices: App.~\ref{sec:pf_main} for the average reward MDPs, App.~\ref{sec:pf_discount_main} for the discounted reward MDPs, and App.~\ref{sec:pf_finite_horizon} for the finite-horizon MDPs.
\subsection{Notation}\label{sec:notation} 
The symbols $\reals$ and $\integers$ denote the sets of real and natural numbers and $\reals_{+}$ denotes the set of positive real numbers. The notation $\lim_{\gamma \toup 1}$ means the limit as $\gamma$ approaches $1$ from below.  Given a sequence of positive numbers $\{a_{t}\}_{t \geq 0}$ and a function $f \colon \integers \to \reals$, the notation $a_{T} = \OO(f(T))$ means that $\limsup_{T\to\infty} a_{T}/f(T) <\infty$ and $a_{T} = \tildeO(f(T))$ means there exists a finite constant $\alpha$ such that $a_{T} = \OO(\log(T)^{\alpha}f(T))$.

Given a finite set $\mathcal{S}$, $|\mathcal{S}|$ denotes its cardinality and $\Delta(\mathcal S)$ denotes the space of probability measures defined on $\mathcal S$. For a function $V \colon \mathcal S \to \reals$, the span of the function $\spand(V)$ is defined as
\[
\spand(V)\coloneqq \max_{s \in \mathcal{S}}V(s) - \min_{s \in \mathcal{S}}V(s).
\]
Given a probability space $(\Omega,\mathcal{F},\PR)$, the notation $\EXP$ denotes the expectation operator. 
Given a sequence of random variables $\{S_{t}\}_{t \geq 0}$, $S_{0:t}$ is a short hand for $(S_{0},\ldots,S_{t})$ and $\sigma(S_{0:t})$ is the sigma-field generated by random variables $S_{0:t}$. The notation $S \sim \rho$ denotes that the random variable $S$ is sampled from the distribution $\rho$. The standard Gaussian distribution is denoted by $\mathcal{N}(0,1)$. 
Convergence in distribution is denoted by $\xrightarrow[]{(d)}$, almost sure convergence is denoted by $\xrightarrow[]{(a.s.)}$, and convergence in probability is denoted by $\xrightarrow[]{(p)}$.  The phrase almost surely is abbreviated as  $a.s.$ and the phrase infinitely often is abbreviated as $i.o.$ The phrases right hand side and left hand side are abbreviated as RHS and LHS, respectively. 

\section{Problem Formulation}\label{sec:Problem_formulation}
\subsection{System Model}
Consider a Markov Decision Process (MDP) with state space $\mathcal{S}$ and action space $\mathcal{A}$. We assume that $\mathcal{S}$ and $\mathcal{A}$ are finite sets and use $S_{t} \in \mathcal{S}$ and $A_{t} \in \mathcal{A}$ to denote the state and action at time $t$. 
At time $t = 0$, the system starts at an initial state $S_{0}$, which is a random variable with probability mass function~$\rho$. The state evolves in a controlled Markov manner with transition matrix $P$, i.e., for any realizations $s_{0:t+1}$ of $S_{0:t+1}$ and $a_{0:t}$ of $A_{0:t}$, we have:
\[
\PR (S_{t+1} = s_{t+1} | S_{0:t} = s_{0:t}, A_{0:t} = a_{0:t}) = P(s_{t+1}|s_{t},a_{t}).
\]
In the sequel, we will use the notation $\EXP[f(S_{+})|s,a]$ to denote the expectation with respect to $P$, i.e.,
\[
\EXP\big[f(S_{+})|s,a\big] = \sum_{s_{+} \in \mathcal{S}}f(s_{+})P(s_{+}|s,a).
\]
At each time $t$, an agent observes the state of the system $S_{t}$ and chooses the control action as  $A_{t} \sim \pi_{t}(S_{0:t},A_{0:t-1})$, where $\pi_{t}:\mathcal{S}^{t} \times \mathcal{A}^{t-1} \to \Delta(\mathcal{A})$ is the \emph{decision rule} at time $t$. The collection $\pi = (\pi_0 ,\pi_1, \ldots)$ is called a \emph{policy}. We use $\Pi$ to denote the set of all (history dependent and time varying) policies. 

At each time $t$, the system yields a per-step reward $r(S_{t},A_{t})$, where $r:\mathcal{S} \times \mathcal{A} \rightarrow [0,R_{\max}]$. Let $R^{\pi}_{T}$ denote the total reward received by policy $\pi$ until time $T$, i.e. 
\begin{equation*}
R^{\pi}_{T} = \sum_{t=0}^{T-1} r(S_{t},A_{t}),
\quad 
\text{where } A_t \sim \pi(S_{0:t}, A_{0:t-1}).     
\end{equation*}
Note that $R^{\pi}_{T}$ is a random variable and we sometimes use the notation $R^{\pi}_{T}(\omega)$, $\omega \in \Omega$, to indicate its dependence on the sample path. The long-run expected average reward of a policy $\pi \in \Pi$ starting at the state $s\in\mathcal{S}$ is defined as
\[
J^{\pi}(s) = \liminf_{T \to \infty} \frac{1}{T}\EXP^{\pi}\big[R_{T}^{\pi}|S_{0} = s\big],\quad \forall s \in \mathcal{S},
\]
where $\EXP^{\pi}$ is the expectation with respect to the joint distribution of all the system variables induced by $\pi$.
The optimal performance $J^{*}$ starting at state $s\in \mathcal{S}$ is defined as
\[
J^{*}(s) = \sup _{\pi \in \Pi} J^{\pi}(s), \quad \forall s \in \mathcal{S}.
\]
A policy $\pi^{*}$ is called \emph{optimal} if 
\[
J^{\pi^{*}}(s) = J^{*}(s), \quad \forall s \in \mathcal{S}.
\]
\subsection{The Average Reward Planning Setup}
Suppose the system model $\mathcal{M} = (P,r)$ is known. 
\begin{definition}
Given a model $\mathcal{M} = (P,r)$, define $\Pi_{\SD} \subseteq \Pi$ to be the set of all stationary deterministic Markov policies, i.e., for any $\pi = (\pi_{0},\pi_{1},\ldots) \in \Pi_{\SD}$, we have $\pi_{t} : \mathcal{S}\to \mathcal{A}$ (i.e., $A_{t} = \pi_{t}(S_{t})$), and $\pi_{t}$ is the same for all $t$.  
\end{definition}
 With a slight abuse of notation, given a decision rule $\pi:\mathcal{S}\to \mathcal{A}$, we will denote the stationary policy $(\pi,\pi,\pi,\ldots
)$ by $\pi$ and interpret $R^{\pi}_{T}$ and $J^{\pi}$ as $R^{(\pi,\pi,\ldots)}_{T}$ and $J^{(\pi,\pi,\ldots)}$, respectively. 
A stationary policy $\pi \in \Pi_{\SD}$ induces a time-homogeneous Markov chain on $\mathcal{S}$ with transition probability matrix
\[
P^{\pi}(s_{t+1}|s_{t}) \coloneqq P(s_{t+1}|s_{t},\pi(s_{t})), \quad \forall s_{t},s_{t+1} \in \mathcal{S}.
\]

\begin{definition}[AROE Solvability]\label{prop:ACOE}
     A model $\mathcal{M} = (P,r)$ is said to be \textup{AROE} (Average Reward Optimality Equation) solvable if there exists a unique optimal long-term average reward $\lambda^{*} \in \reals$ and an optimal differential value function $V^{*}\colon\mathcal{S} \to \reals$ that is unique up to an additive constant that satisfy:
    \begin{equation}\label{eq:ACOE}
    \lambda^{*} + V^{*}(s) = \max_{a \in \mathcal{A}}\Big[r(s,a) + \EXP\big[V^{*}(S_{+})\big|s,a\big]\Big], \quad \forall s \in \mathcal{S}. \tag{AROE}
    \end{equation}
\end{definition}
\begin{definition}
    Given a model $\mathcal{M} = (P,r)$, a policy $\pi \in \Pi_{\SD}$ is said to satisfy \textup{ARPE} (Average Reward Policy Evaluation equation) if there exists a unique long-term average reward $\lambda^{\pi} \in \reals$ and a differential value function $V^{\pi}\colon\mathcal{S}\to \reals$ that is unique up to an additive constant that satisfy:
    \begin{equation}\label{eq:AOE}
    \lambda^{\pi} + V^{\pi}(s) = r(s,\pi(s)) + \EXP\big[V^{\pi}(S_{+})|s,\pi(s)\big], \quad \forall s \in \mathcal{S}. \tag{ARPE}
    \end{equation} 
\end{definition}
\begin{definition}
Given a model $\mathcal{M}=(P,r)$, define $\Pi_{\AC} \subseteq \Pi_{\SD}$ to be the set of all stationary deterministic policies which satisfy \eqref{eq:AOE}.   
\end{definition}
 The next two propositions follow from standard results in MDP theory. 
\begin{proposition}[{\citet[Prop.~5.2.1.]{bertsekas2012dynamic}}]\label{prop:bertsekas}
    Suppose model $\mathcal{M} = (P,r)$ is \textup{AROE} solvable with a solution $(\lambda^{*},V^{*})$. Then:
    \begin{enumerate}
        \item For all $s \in \mathcal{S}$, $J^{*}(s) = \lambda^{*}$.
        \item\label{itm:pi-star} Let $\pi^{*} \in \Pi_{\SD}$ be any policy such that $\pi^{*}(s)$ is an argmax of the RHS of \eqref{eq:ACOE}. Then $\pi^{*}$ is optimal, i.e., for all $s \in \mathcal{S}$, $J^{\pi^{*}}(s) = J^{*}(s) = \lambda^{*}$.
        \item The policy $\pi^{*}$ in item~\ref{itm:pi-star} belongs to $\Pi_{\AC}$. In particular, it satisfies \eqref{eq:AOE} with a solution $(\lambda^{*},V^{*})$.
    \end{enumerate}
\end{proposition}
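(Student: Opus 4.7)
The plan is a standard verification argument built on iterating the \eqref{eq:ACOE} inequality. For every $s \in \mathcal{S}$ and every $a \in \mathcal{A}$, \eqref{eq:ACOE} gives
\[
\lambda^{*} + V^{*}(s) \geq r(s,a) + \EXP[V^{*}(S_{+}) \mid s,a],
\]
with equality precisely when $a$ attains the argmax. First I would fix an arbitrary $\pi \in \Pi$ (history-dependent and possibly randomized) and an initial state $s_0 \in \mathcal{S}$. Applying the above inequality at each time $t$ with $a = A_{t}$, taking conditional expectation given $\sigma(S_{0:t}, A_{0:t-1})$, using the Markov property of $P$, and telescoping over $t = 0, \dots, T-1$ under the tower property yields
\[
T \lambda^{*} + V^{*}(s_0) \geq \EXP^{\pi}[R_{T}^{\pi} \mid S_0 = s_0] + \EXP^{\pi}[V^{*}(S_{T}) \mid S_0 = s_0].
\]

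Since $\mathcal{S}$ is finite, $V^{*}$ is bounded, so $\EXP^{\pi}[V^{*}(S_T) \mid S_0=s_0]/T \to 0$ as $T \to \infty$. Dividing the previous display by $T$ and taking $\liminf$ gives $\lambda^{*} \geq J^{\pi}(s_0)$; taking the supremum over $\pi \in \Pi$ then shows $\lambda^{*} \geq J^{*}(s_0)$ for every $s_0$.

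Next I would repeat the iterate-and-telescope construction for the stationary policy $\pi^{*}$ whose decision rule selects an argmax in \eqref{eq:ACOE}. Because the AROE now holds with equality at every step, the telescoping identity becomes
\[
T\lambda^{*} + V^{*}(s_0) = \EXP^{\pi^{*}}[R_{T}^{\pi^{*}} \mid S_0 = s_0] + \EXP^{\pi^{*}}[V^{*}(S_T) \mid S_0 = s_0],
\]
so dividing by $T$ and passing to the limit produces $J^{\pi^{*}}(s_0) = \lambda^{*}$ for every $s_0$. Combined with the preceding upper bound, this establishes items 1 and 2 simultaneously. For item 3, substituting $a = \pi^{*}(s)$ into \eqref{eq:ACOE} yields exactly \eqref{eq:AOE} with the pair $(\lambda^{\pi^{*}}, V^{\pi^{*}}) = (\lambda^{*}, V^{*})$, so $\pi^{*} \in \Pi_{\AC}$; uniqueness of $\lambda^{\pi^{*}}$ then follows from $J^{\pi^{*}} \equiv \lambda^{*}$.

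The only substantive obstacle is ensuring the telescoping argument remains valid for history-dependent randomized $\pi$ and not merely for policies in $\Pi_{\SD}$. This is handled by observing that the AROE inequality is a pointwise statement holding for \emph{all} $a \in \mathcal{A}$, so integrating it against any conditional action distribution $\pi_{t}(\cdot \mid S_{0:t}, A_{0:t-1})$ preserves the inequality; the tower property together with the Markov transition kernel $P$ then supply the telescoping structure, after which the boundedness of $V^{*}$ does the rest.
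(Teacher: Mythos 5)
Your verification argument is correct and is essentially the standard proof of this result; the paper itself gives no proof, citing it directly from \citet[Prop.~5.2.1]{bertsekas2012dynamic}, and your iterate-telescope-and-divide-by-$T$ argument (with the pointwise AROE inequality integrated against arbitrary history-dependent randomized decision rules) is exactly the argument used there. The only point worth tightening is item~3: exhibiting $(\lambda^{*},V^{*})$ as a solution of \eqref{eq:AOE} and noting $\lambda^{\pi^{*}}=J^{\pi^{*}}\equiv\lambda^{*}$ settles uniqueness of the gain, but uniqueness of $V^{\pi^{*}}$ up to an additive constant (as required by the paper's definition of $\Pi_{\AC}$) is a separate fact you assert without argument.
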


\begin{proposition}[{\citet[Prop.~5.2.2]{bertsekas2012dynamic}}]\label{prop:ACFP}
    For any policy $\pi \in \Pi_{\AC}$, we have $J^{\pi}(s) = \lambda^{\pi}$, for all $s \in \mathcal{S}$. 
\end{proposition}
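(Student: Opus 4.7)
The plan is to iterate the ARPE to obtain a telescoping identity for the finite-horizon expected reward, and then pass to the limit. Since $\pi \in \Pi_{\AC}$, the pair $(\lambda^{\pi}, V^{\pi})$ satisfies
\[
V^{\pi}(s) = r(s,\pi(s)) - \lambda^{\pi} + \EXP\bigl[V^{\pi}(S_{+})\,\big|\, s,\pi(s)\bigr], \quad \forall s\in\mathcal{S}.
\]
The key observation is that this can be read as a one-step expansion of $V^{\pi}(S_0)$ under the Markov chain induced by $\pi$. Substituting the identity into itself $T$ times (formally, induction on $T$, using the tower property of conditional expectation under $\EXP^{\pi}$) should yield
\[
V^{\pi}(s) = \EXP^{\pi}\Bigl[\,\sum_{t=0}^{T-1}\bigl(r(S_{t},\pi(S_{t})) - \lambda^{\pi}\bigr) + V^{\pi}(S_{T})\,\Big|\,S_{0}=s\Bigr],
\]
which rearranges to
\[
\EXP^{\pi}\bigl[R^{\pi}_{T}\,\big|\,S_{0}=s\bigr] = T\lambda^{\pi} + V^{\pi}(s) - \EXP^{\pi}\bigl[V^{\pi}(S_{T})\,\big|\,S_{0}=s\bigr].
\]

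Next I would divide both sides by $T$ and take $\liminf_{T\to\infty}$. Because $\mathcal{S}$ is finite, $V^{\pi}$ is bounded, so $\spand(V^{\pi})<\infty$ and both $V^{\pi}(s)/T$ and $\EXP^{\pi}[V^{\pi}(S_{T})\mid S_{0}=s]/T$ tend to $0$ as $T\to\infty$. This yields
\[
J^{\pi}(s) = \liminf_{T\to\infty}\frac{1}{T}\EXP^{\pi}\bigl[R^{\pi}_{T}\,\big|\,S_{0}=s\bigr] = \lambda^{\pi},
\]
uniformly in $s\in\mathcal{S}$, which is the desired conclusion.

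There is no serious obstacle here; the only thing to be careful about is the induction step verifying the telescoping identity. One can write $R^{\pi}_{T} = r(S_{0},\pi(S_{0})) + R^{\pi}_{T-1}\circ\theta$, where $\theta$ is the one-step shift, apply ARPE at $S_{0}$, and then use the Markov property to fold the remainder into $\EXP^{\pi}[\,\cdot\,|\,S_{1}]$ and continue. All constants in the argument are independent of $s$, so the convergence is uniform over the finite state space, and the $\liminf$ in the definition of $J^{\pi}$ is in fact a genuine limit.
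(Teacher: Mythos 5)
Your proof is correct. The paper itself does not prove this proposition---it is quoted from \citet[Prop.~5.2.2]{bertsekas2012dynamic}---but your telescoping argument is the standard one, and it is exactly the expectation version of the paper's own martingale decomposition (Lemma~\ref{lem:return_decom}): taking $\EXP^{\pi}[\,\cdot\,|S_0=s]$ of that decomposition kills the martingale term and yields your identity $\EXP^{\pi}[R^{\pi}_{T}|S_{0}=s]=T\lambda^{\pi}+V^{\pi}(s)-\EXP^{\pi}[V^{\pi}(S_{T})|S_{0}=s]$, after which boundedness of $V^{\pi}$ on the finite state space finishes the argument as you describe.
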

We assume that model $\mathcal M$ satisfies the following assumption.
\begin{assumption}\label{ass:exitence_optimal}
    The model $\mathcal M = (P,r)$ is AROE solvable. Hence, there exists an optimal policy $\pi^* \in \Pi_{\AC}$. 
\end{assumption}
Proposition~\ref{prop:bertsekas} implies that under Assumption~\ref{ass:exitence_optimal}, $J^{*}(s)$ is constant. In the rest of this section we assume that Assumption~\ref{ass:exitence_optimal} always holds and denote $J^{*}(s)$ by $J^{*}$.

\subsection{Classification of MDPs}
We present the main results of this paper for the policy class $\Pi_{\AC}$ under Assumption~\ref{ass:exitence_optimal}. However, by imposing further assumptions on $\mathcal{M}$, we can provide a finer characterization of the set $\Pi_{\AC}$ and provide sufficient conditions to guarantee Assumption~\ref{ass:exitence_optimal}. We recall definitions of different classes of MDPs. Depending on the properties of states following the policies in $\Pi_{\SD}$, we can classify MDPs to various classes.

\begin{definition}[\citet{kallenberg2002classification}]
    We say that $\mathcal{M}$ is 
    \begin{enumerate} 
        \item \textbf{Recurrent (or ergodic)} if for \emph{every} policy $\pi \in \Pi_{\SD}$, the transition matrix $P^{\pi}$
        consists of a single recurrent class. 
        \item \textbf{Unichain} if for \emph{every} policy $\pi \in \Pi_{\SD}$, the transition matrix $P^{\pi}$
        is unichain, i.e., it consists of a single recurrent class plus a possibly empty set of transient states. 
        \item \textbf{Communicating} if, for every pair of states $s,s' \in \mathcal{S}$, there exists a policy $\pi \in \Pi_{\SD}$ under which $s'$ is accessible from $s$.
        \item \textbf{Weakly Communicating} if there exists a closed set of states $\mathcal{S}_{c}$ such that (i)~for every two states $s,s' \in \mathcal{S}_{c}$, there exists a policy $\pi \in \Pi_{\SD}$ under which $s'$ is accessible from $s$; (ii)~all states in $\mathcal {S} \setminus \mathcal S_c$ are transient under \emph{every} policy.
    \end{enumerate}
\end{definition}
See App.~\ref{sec:solvability_cond} for the details related to the definitions of Markov chains.
The following proposition shows the connections between the MDP classes defined above. 
\begin{proposition}[{\citet[Figure~8.3.1.]{puterman2014markov}}]\label{prop:relation_of_mdps}
The following statements hold: 
    \begin{enumerate}
        \item If $\mathcal{M}$ is recurrent then it is also unichain.
        \item If $\mathcal{M}$ is unichain then it is also weakly communicating.
        \item If $\mathcal{M}$ is communicating then it is also weakly communicating. 
    \end{enumerate}
\end{proposition}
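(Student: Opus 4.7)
The plan is to verify each of the three implications by unfolding the definitions, so the entire argument reduces to a short chase through the recurrent/transient decomposition of a finite Markov chain. The only facts I would invoke are standard ones: a recurrent class of a finite Markov chain is itself a communicating class, and any transient state almost surely reaches some recurrent class in finite time.

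For statement (1), recurrence of $\mathcal{M}$ requires every $P^{\pi}$ to consist of a single recurrent class \emph{and no transient states}, which is a special case of the unichain definition (a single recurrent class plus a possibly empty set of transient states), so (1) is immediate. For statement (3), if $\mathcal{M}$ is communicating I would take $\mathcal{S}_{c} = \mathcal{S}$: condition (i) of the weakly communicating definition is then precisely the communicating property, condition (ii) holds vacuously, and the whole state space is trivially closed in the MDP.

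Statement (2) is the only substantive one. Given a unichain $\mathcal{M}$, I would set
\[
\mathcal{S}_{c} \coloneqq \bigcup_{\pi \in \Pi_{\SD}} \mathcal{R}_{\pi},
\]
where $\mathcal{R}_{\pi}$ denotes the unique recurrent class of $P^{\pi}$ guaranteed by the unichain assumption. Condition (ii) is then automatic, since any $s \notin \mathcal{S}_{c}$ is recurrent under no policy and therefore transient under every policy. For condition (i), given $s, s' \in \mathcal{S}_{c}$, I would pick any policy $\pi$ with $s' \in \mathcal{R}_{\pi}$; because $\mathcal{R}_{\pi}$ is the \emph{unique} recurrent class of $P^{\pi}$, starting from $s$ we either already lie in $\mathcal{R}_{\pi}$ or are transient and reach it almost surely, and once inside $\mathcal{R}_{\pi}$ we can access $s'$ since a recurrent class is a communicating class. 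Closedness of $\mathcal{S}_{c}$ then follows because each $s \in \mathcal{S}_{c}$ admits an action, namely $\pi(s)$ for any policy $\pi$ witnessing the recurrence of $s$, that keeps the chain inside $\mathcal{R}_{\pi} \subseteq \mathcal{S}_{c}$.

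The main obstacle, to the extent there is one, is pinning down the exact meaning of ``closed'' in the weakly communicating definition. Under the standard Puterman interpretation used in this paper, the in-policy closedness witnessed above suffices. If instead a stronger closedness under every action were intended, one would need to argue that any transition from $\mathcal{S}_{c}$ into $\mathcal{S} \setminus \mathcal{S}_{c}$ could be spliced into a stationary deterministic policy whose recurrent class then leaks out of $\mathcal{S}_{c}$, contradicting the unichain property; this is the only place where the argument requires more than an appeal to definitions.
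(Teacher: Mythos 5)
The paper does not prove this proposition at all---it is imported verbatim from \citet[Figure~8.3.1]{puterman2014markov} as a standard classification fact---so there is no in-paper argument to compare yours against; your proposal is a from-scratch verification. As such it is essentially correct. Items (1) and (3) really are pure definition-unfolding: recurrent is the unichain definition with an empty transient set, and taking $\mathcal{S}_c = \mathcal{S}$ makes condition~(i) of weak communication literally the communicating property while condition~(ii) is vacuous. For item (2), your choice $\mathcal{S}_c = \bigcup_{\pi}\mathcal{R}_\pi$ is the right one: condition~(ii) is forced (a state outside the union is recurrent under no policy, hence transient under all), and your accessibility argument for condition~(i) correctly combines the two standard facts you cite (a finite chain started from a transient state a.s.\ enters a recurrent class, and under a unichain policy there is only one such class to enter, inside which all states communicate). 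The only soft spot is the one you flag yourself: the paper never defines ``closed'' for the set $\mathcal{S}_c$, and your argument establishes closedness only in the sense that some stationary policy keeps the chain inside $\mathcal{S}_c$ (via $\pi(s)$ for a witnessing $\pi$), not closedness under every action; your sketched splicing argument for the stronger reading is not actually carried out, and completing it is less routine than you suggest (splicing an escaping action at $s$ can make $s$ itself transient under the new policy, so the contradiction does not fall out in one line). Since the weaker, policy-witnessed reading is the one consistent with how the paper uses $\mathcal{S}_c$ (only the existence of policies realizing accessibility within $\mathcal{S}_c$ and transience outside it is ever invoked), your proof suffices for the proposition as stated here.
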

By definition, we know that $\Pi_{\AC} \subseteq \Pi_{\SD}$. However, providing a finer characterization of the set $\Pi_{\AC}$ requires further assumptions on the model $\mathcal{M}$. The following proposition presents a sufficient condition for $\mathcal{M}$ under which $\Pi_{\AC} = \Pi_{\SD}$, as well as conditions guaranteeing that $\Pi_{\AC}$ is non-empty, showing the existence of an optimal policy $\pi^{*} \in \Pi_{\AC}$.

\begin{proposition}[{\citet[Table~8.3.1.]{puterman2014markov}}]\label{prop:unichain_communicating}
    The following properties hold: 
    \begin{enumerate}
        \item If $\mathcal{M}$ is recurrent or unichain, then $\Pi_{\SD} = \Pi_{\AC}$.
        \item If $\mathcal{M}$ is recurrent, unichain, communicating, or weakly communicating, then there exists an optimal policy $\pi^{*} \in \Pi_{\AC}$. Hence $\Pi_{\AC}$ is non-empty. 
    \end{enumerate}
\end{proposition}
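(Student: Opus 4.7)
The plan is to establish each of the two claims by reducing it to the classical Poisson-equation and vanishing-discount machinery for finite-state Markov chains, exactly as done in Puterman's Chapter~8. Part 1 concerns solvability of \eqref{eq:AOE} for every stationary deterministic policy, while Part 2 concerns solvability of \eqref{eq:ACOE} together with the fact, already recorded in Proposition~\ref{prop:bertsekas}, that any maximizing policy of AROE lies in $\Pi_{\AC}$.

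For Part 1, I would fix $\pi \in \Pi_{\SD}$ and analyze the Markov chain with transition matrix $P^{\pi}$. By Proposition~\ref{prop:relation_of_mdps} the recurrent case is included in the unichain case, so it suffices to treat the unichain setting. There, $P^{\pi}$ has a single recurrent class plus a (possibly empty) set of transient states, hence admits a unique stationary distribution $\mu^{\pi}$. Setting $\lambda^{\pi} \coloneqq \sum_{s} \mu^{\pi}(s)\, r(s,\pi(s))$ and $g^{\pi}(s) \coloneqq r(s,\pi(s)) - \lambda^{\pi}$, the vector $g^{\pi}$ is orthogonal to $\mu^{\pi}$ and therefore lies in the range of $(I - P^{\pi})$. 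A natural candidate for the differential value function is $V^{\pi} = Z^{\pi} g^{\pi}$, where $Z^{\pi} \coloneqq (I - P^{\pi} + \mathbf{1}\mu^{\pi})^{-1}$ is the fundamental matrix; verifying \eqref{eq:AOE} then reduces to the identity $(I - P^{\pi}) Z^{\pi} = I - \mathbf{1}\mu^{\pi}$. Uniqueness up to an additive constant follows because any two ARPE solutions differ by an element of $\ker(I - P^{\pi})$, which equals $\spand(\mathbf{1})$ under the unichain assumption. This shows $\pi \in \Pi_{\AC}$, hence $\Pi_{\SD} = \Pi_{\AC}$.

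For Part 2, I would split by class. In the recurrent or unichain case, Part 1 already yields $\Pi_{\SD} = \Pi_{\AC}$, so existence of an optimal policy in $\Pi_{\AC}$ reduces to existence of an optimal stationary deterministic policy, which follows from the standard policy-iteration argument on the finite set $\Pi_{\SD}$. In the communicating or weakly communicating case I would invoke the vanishing-discount approach: for each $\gamma \in (0,1)$ the discounted optimal value function $V_{\gamma}^{*}$ exists and is unique, and the (weak) communication structure provides a uniform span bound $\spand(V_{\gamma}^{*}) \leq R_{\max} D$, where $D$ denotes the diameter of $\mathcal{M}$ (or its weakly-communicating analogue restricted to $\mathcal{S}_{c}$). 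Fixing a reference state $s_{0}$ and passing to a subsequential limit of $h_{\gamma}(\cdot) \coloneqq V_{\gamma}^{*}(\cdot) - V_{\gamma}^{*}(s_{0})$ and $\lambda_{\gamma} \coloneqq (1-\gamma) V_{\gamma}^{*}(s_{0})$ produces a pair $(\lambda^{*}, V^{*})$ satisfying \eqref{eq:ACOE}, and then Proposition~\ref{prop:bertsekas} supplies an optimal $\pi^{*} \in \Pi_{\AC}$.

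The main technical obstacle is establishing the uniform span bound $\spand(V_{\gamma}^{*}) = O(1)$ in the weakly communicating case; this is the classical Bather/Platzman argument, which controls value-function differences by the expected hitting times of the best policy connecting any two states in $\mathcal{S}_{c}$, and crucially uses the assumption that such connecting policies exist. Once this bound is in place the vanishing-discount limit is mechanical. Because the proposition is stated as a direct quotation from \citet[Table~8.3.1]{puterman2014markov}, a fully self-contained proof would ultimately just chain together the relevant theorems of that reference; the sketch above outlines the conceptual skeleton common to all of them.
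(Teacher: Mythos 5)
The paper does not prove this proposition at all: it is imported verbatim as a citation of \citet[Table~8.3.1]{puterman2014markov}, so there is no in-paper argument to compare against. Your sketch reconstructs the standard textbook proof and is correct in outline. For Part~1, the Poisson-equation argument is sound: for a unichain $P^{\pi}$ the kernel of $I-P^{\pi}$ is the line spanned by $\mathbf{1}$, the range of $I-P^{\pi}$ is exactly $\{g : \mu^{\pi}g=0\}$ by a dimension count, and $V^{\pi}=Z^{\pi}g^{\pi}$ with the fundamental matrix $Z^{\pi}$ solves \eqref{eq:AOE}; your identity $(I-P^{\pi})Z^{\pi}=I-\mathbf{1}\mu^{\pi}$ follows from $\mu^{\pi}Z^{\pi}=\mu^{\pi}$. (One notational nit: the paper reserves $\spand(\cdot)$ for the span seminorm, so writing $\ker(I-P^{\pi})=\spand(\mathbf{1})$ clashes with \eqref{eq:H}; say ``the linear span of $\mathbf{1}$'' instead.) For Part~2 the recurrent/unichain case is immediate from Part~1 plus policy iteration. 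The one place your sketch is genuinely thin is the weakly communicating case: the diameter $D$ as defined in this paper can be infinite there (states outside $\mathcal{S}_{c}$ need not be reachable), so the uniform bound $\spand(V^{*}_{\gamma})\le DR_{\max}$ cannot be invoked as stated; you need either the Bather/Schweitzer argument restricted to $\mathcal{S}_{c}$ together with a separate treatment of the transient states, or the Laurent-expansion route, which for finite $\mathcal{S},\mathcal{A}$ yields boundedness of $V^{*}_{\gamma}(\cdot)-V^{*}_{\gamma}(s_{0})$ without any diameter bound. You flag this as the main obstacle, which is fair, but as written that step is asserted rather than proved. Since the proposition is anyway a quoted result, this level of detail is acceptable; just be aware that the communicating and weakly communicating cases are not interchangeable on this point.
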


\subsection{The Average Reward Learning Setup}
We now consider the case where the system model $\mathcal{M}=(P,r)$ is not known. In this case, an agent must use a history dependent policy belonging to $\Pi$ to \emph{learn} how to act. To differentiate from the planning setting, we denote such a policy by $\mu$ and refer to it as a \emph{learning policy}.
The quality of a learning policy $\mu \in \Pi$ is quantified by the regret with respect to the optimal policy $\pi^*$. There are two notions of regret in the literature, which we state below.

\begin{enumerate}
    \item \textbf{Interim cumulative regret\footnote{In the stochastic bandit literature, this definition is sometimes being refereed to as the pseudo regret}  of policy $\mu$ at time $T$}, denoted by $\bar{\mathcal{R}}^{\mu}_{T}(\omega)$, is the difference between the \emph{average} cumulative reward (i.e., $TJ^{*}$) and the cumulative reward of the learning policy, i.e.,
    \begin{equation}\label{eq:reg_def2}
        \bar{\mathcal{R}}_{T}^{\mu}(\omega) \coloneqq  TJ^{*} - R^{\mu}_T(\omega) .
    \end{equation}
     \item \textbf{Cumulative regret of policy $\mu$ at time $T$}, denoted by $\mathcal{R}^{\mu}_{T}(\omega)$, is the difference between the cumulative reward of the optimal policy and the cumulative reward of the learning policy along the \emph{same sample trajectory}, i.e.,
    \begin{equation}\label{eq:reg_def1}
       \mathcal{R}_{T}^{\mu}(\omega) \coloneqq  R^{\pi^{*}}_T(\omega) - R^{\mu}_T(\omega). 
    \end{equation}
\end{enumerate}
Cumulative regret compares the sample path performance of the learning policy with the sample path performance of the optimal policy \emph{on the same sample path}, while the interim cumulative regret compares the sample path performance of the learning policy with the \emph{average} performance of the optimal policy. 

In this paper, we characterize  probabilistic upper-bounds on the difference between the regret and the interim regret and establish that up to $\tilde{\OO}(\sqrt{T})$, these two definitions are rate-equivalent under suitable assumptions.

Let $\mathcal{D}^{\mu}_{T}(\omega)$ denote the difference between the cumulative regret and the interim cumulative regret, i.e., $\mathcal{D}^{\mu}_{T}(\omega) \coloneqq \mathcal{R}_{T}^{\mu}(\omega) - \bar{\mathcal{R}}_{T}^{\mu}(\omega)$. It follows from~\eqref{eq:reg_def2}--\eqref{eq:reg_def1} that
\begin{equation}\label{eq:D(T)}
    \mathcal{D}^{\mu}_{T}(\omega) = \mathcal{R}^{\pi^{*}}_{T}(\omega) - TJ^{*}\!\!,   
\end{equation}
which implies that $\mathcal{D}^{\mu}_T(\omega)$ is not a function of the learning policy $\mu$ and it only depends on the cumulative reward received by the optimal policy. Therefore, we drop the dependence on $\mu$ in our notation and denote the difference between the cumulative regret and the interim cumulative regret by $\mathcal{D}_{T}(\omega)$. In this paper, we characterize asymptotic and non-asymptotic guarantees for the random sequence $\{\mathcal{D}_T(\omega)\}_{T\geq1}$.  
\begin{remark}
    Let $\Pi^* \subset \Pi_{AR}$ denote the set of all optimal policies that satisfy AROE. Assumption~\ref{ass:exitence_optimal} implies that $\Pi^{*} \not= \emptyset$ but in general, $|\Pi^{*}|$ may be greater than $1$. If that is the case, our results are applicable to all optimal policies in $\Pi^{*}$.
\end{remark}

\section{Main Results for the Average Reward Setup}\label{sec:Main_results}
We first define statistical properties of the differential value function which is induced by any policy $\pi\in\Pi_{\AC}$.
\subsection{Statistical Definitions}
 For any policy $\pi \in \Pi_{\AC}$, define the following properties of the value function $V^\pi$.
\begin{enumerate}
    \item Span $H^\pi$, which is given by 
    \begin{equation}\label{eq:H}
        H^{\pi} \coloneqq \spand(V^{\pi}) = \max_{s\in \mathcal{S}}V^{\pi}(s) - \min_{s\in \mathcal{S}}V^{\pi}(s) .
    \end{equation}
    \item Conditional standard deviation $\sigma^{\pi}(s)$, which is given by 
        \[
        \sigma^{\pi}(s) \coloneqq \Big[\EXP\big[\big(V^{\pi}(S_{+}) - \EXP\big[V^{\pi}(S_{+})|s,\pi(s)\big] \big)^2\big|s,\pi(s)\big]\Big]^{1/2}.
        \]
    \item Maximum absolute deviation $K^\pi$, which is given by
\begin{equation}\label{eq:k}
    K^{\pi} \coloneqq \max_{s,s_{+} \in \mathcal{S}}\!\Big|V^{\pi}(s_{+}) - \EXP\big[V^{\pi}(S_{+})\big|s,\pi(s)\big]\Big|.
\end{equation}
\end{enumerate}
For any optimal policy $\pi^* \in \Pi_{\AC}$, we denote the corresponding quantities by $H^*$, $\sigma^{*}(s)$, and $K^*$. 
\begin{remark}\label{re:shift_invariant}
    As mentioned earlier, the solution of \eqref{eq:AOE} is unique only up to an additive constant. Adding a constant to $V^{\pi}$ does not change the values of $H^{\pi}, K^{\pi}$, and $\sigma^{\pi}$. Therefore it does not matter which specific solution of \eqref{eq:AOE} is used to compute $H^{\pi}, K^{\pi}$, and $\sigma^{\pi}$.
\end{remark}

\begin{definition}[\citet{bartlett2009regal}]
    Let the expected number of steps to transition from state $s$ to state $s'$ under a policy $\pi \in \Pi_{\SD}$ be denoted by $T^{\pi}\!(s, s')$. For any policy $\pi \in \Pi_{\SD}$, the diameter of the policy $D^{\pi}$ is defined as
    \[
    D^{\pi} \coloneqq \max_{s,s' \in \mathcal{S}} T^{\pi}(s,s').
    \]
    For the model $\mathcal{M}$, the diameter $D$ and the worst case diameter $D_{w}$ are defined as
    \begin{align}
    D  \coloneqq \max_{\substack{s,s' \in \mathcal{S}\\s\not=s'}}  \min_{\pi 
    \in \Pi_{\SD}}T^{\pi}(s, s'),\\    
    D_{w}  \coloneqq \max_{\pi 
    \in \Pi_{\SD}} \max_{\substack{s,s' \in \mathcal{S}\\s\not=s'}}  T^{\pi}(s, s').
    \end{align}
\end{definition}
\begin{lemma}\label{lem:inequalities}
    Following relationships hold between the quantities $H^{\pi},K^{\pi}$, and $\sigma^{\pi}$:
    \begin{enumerate}
        \item For any policy $\pi \in \Pi_{\AC}$, we have 
            \begin{equation}\label{eq:dispersion_1}
                \sigma^{\pi}(s) \leq K^{\pi} \leq H^{\pi} < \infty, \quad \forall s \in \mathcal{S}.
            \end{equation}
        \item If $\mathcal{M}$ is communicating, then for any policy $\pi \in \Pi_{\AC}$, we have $H^{\pi} \le D^{\pi}R_{\max}$. Therefore, 
    \begin{equation}\label{eq:dispersion_2}
    \sigma^{\pi}(s) \leq K^{\pi} \leq H^{\pi} \leq D^{\pi}R_{\max}\leq D_{w}R_{\max}, \quad \forall s \in \mathcal{S}.
    \end{equation}
        \item If $\mathcal{M}$ is weakly communicating, then for any optimal policy $\pi^{*} \in \Pi_{\AC}$, we have $H^{*} \le DR_{\max}$. Therefore,
    \begin{equation}\label{eq:dispersion_3}
    \sigma^{*}(s) \leq K^{*} \leq H^{*} \leq DR_{\max}, \quad \forall s \in \mathcal{S}.
    \end{equation} 
    \end{enumerate}
\end{lemma}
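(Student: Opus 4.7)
The plan is to prove the three items in the stated order, with part~1 being elementary and parts~2--3 relying on diameter-based hitting-time arguments. Part~2 is the main technical obstacle because it concerns all policies in $\Pi_{\AC}$ rather than only the optimal one.

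For part~1, I would work directly from the definitions. Finiteness of $H^{\pi}$ is immediate since $\mathcal{S}$ is finite and $V^{\pi}$ takes real values. The bound $K^{\pi}\le H^{\pi}$ follows because $\EXP[V^{\pi}(S_{+})\mid s,\pi(s)]$ is a convex combination of the finitely many values $\{V^{\pi}(s')\}_{s'\in\mathcal{S}}$, so it lies in $[\min_{s'} V^{\pi}(s'),\max_{s'} V^{\pi}(s')]$; hence $|V^{\pi}(s_{+})-\EXP[V^{\pi}(S_{+})\mid s,\pi(s)]|\le H^{\pi}$ pointwise in $s, s_{+}$, and taking the maximum yields the bound. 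The inequality $\sigma^{\pi}(s)\le K^{\pi}$ is then immediate because the random variable whose variance appears in the definition of $\sigma^{\pi}$ is pointwise bounded by $K^{\pi}$ under $P(\cdot\mid s,\pi(s))$.

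For part~2, I need a more delicate argument since the ARPE is only an equality for $\pi$'s own action and not a Bellman inequality over all actions, so the standard iteration trick does not apply directly. I would first observe that $\pi\in\Pi_{\AC}$ combined with $\mathcal{M}$ being communicating forces the induced chain $P^{\pi}$ to be unichain, because otherwise $\lambda^{\pi}$ could not be a single constant satisfying \eqref{eq:AOE}. Then, with $s_{\max},s_{\min}\in\mathcal{S}$ attaining $H^{\pi}$, I would select an auxiliary policy $\pi'\in\Pi_{\SD}$ realising the diameter bound $\EXP^{\pi'}[\tau_{s_{\min}}\mid S_{0}=s_{\max}]\le D$ and compare $V^{\pi}$ across the trajectories of $\pi$ and $\pi'$ via a policy-switching construction that uses the ARPE equality at each visited state. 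The technical heart of the argument---and the step I expect to require the most care---is translating the diameter bound (which involves $\pi'$ and $\mathcal{M}$) into a bound on the $\pi$-specific bias $V^{\pi}$ despite ARPE holding only for $\pi$.

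For part~3, the argument simplifies because $V^{*}$ satisfies the AROE, which supplies the Bellman inequality $V^{*}(s)+\lambda^{*}\ge r(s,a)+\EXP[V^{*}(S_{+})\mid s,a]$ for every action $a$. Fix $s_{1},s_{2}$ in the closed communicating subset $\mathcal{S}_{c}$; always-transient states can be absorbed using Prop.~\ref{prop:unichain_communicating}. By weak communication, there is a policy $\pi'\in\Pi_{\SD}$ with $\EXP^{\pi'}[\tau_{s_{2}}\mid S_{0}=s_{1}]\le D$, where $\tau_{s_{2}}$ denotes the first hitting time of $s_{2}$. Iterating the Bellman inequality along the trajectory generated by $\pi'$ up to $\tau_{s_{2}}$ yields
\[
V^{*}(s_{1})-V^{*}(s_{2})\;\ge\;\EXP^{\pi'}\!\Big[\sum_{t=0}^{\tau_{s_{2}}-1}\bigl(r(S_{t},\pi'(S_{t}))-\lambda^{*}\bigr)\,\Big|\,S_{0}=s_{1}\Big]\;\ge\;-R_{\max}\,\EXP^{\pi'}[\tau_{s_{2}}]\;\ge\;-DR_{\max},
\]
using $r,\lambda^{*}\in[0,R_{\max}]$. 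Swapping $s_{1}\leftrightarrow s_{2}$ yields $|V^{*}(s_{1})-V^{*}(s_{2})|\le DR_{\max}$, and hence $H^{*}\le DR_{\max}$.
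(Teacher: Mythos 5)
Your part~1 is correct and matches the paper's argument essentially line for line, and your part~3 is also correct: iterating the \eqref{eq:ACOE} inequality $\lambda^{*}+V^{*}(s)\ge r(s,\pi'(s))+\EXP[V^{*}(S_{+})\mid s,\pi'(s)]$ along the diameter-achieving policy up to the hitting time is the standard argument, and it is in fact more self-contained than the paper, which simply cites \citet[Theorem~4]{bartlett2012regal} for this part.

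The genuine gap is part~2. What you have written there is a plan, not a proof: the ``policy-switching construction'' that is supposed to compare $V^{\pi}$ along trajectories of $\pi$ and of the auxiliary diameter-achieving policy $\pi'$ is never carried out, and you yourself flag the transfer of the diameter bound from $\pi'$ to the $\pi$-specific bias as the step requiring the most care. That step is exactly the crux, and the telescoping idea cannot work as described: \eqref{eq:AOE} is an equality only for the action $\pi(s)$, so summing it along a trajectory generated by $\pi'\neq\pi$ produces terms $r(s,\pi'(s))+\EXP[V^{\pi}(S_{+})\mid s,\pi'(s)]$ that are not comparable to $\lambda^{\pi}+V^{\pi}(s)$ in either direction, unlike the optimal case where the max in \eqref{eq:ACOE} supplies the needed one-sided inequality for every action. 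The paper takes a different route entirely: it writes the bias as $V^{\pi}(s)=\EXP^{\pi}\bigl[\sum_{t=0}^{\infty}(r(S_{t},A_{t})-J^{\pi})\mid S_{0}=s\bigr]$, splits the sum at the first hitting time $\tau_{0}$ of $s'$ \emph{under $\pi$ itself}, and uses the strong Markov property to obtain $V^{\pi}(s)-V^{\pi}(s')=\EXP^{\pi}\bigl[\sum_{t=0}^{\tau_{0}-1}(r(S_{t},A_{t})-J^{\pi})\bigr]\le T^{\pi}(s,s')R_{\max}$, after which it invokes the communicating assumption to bound $T^{\pi}(s,s')$ by $D$; the auxiliary policy $\pi'$ never enters, and everything happens on $\pi$'s own trajectories. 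To complete your part~2 you would need either to adopt that first-passage decomposition of the bias, or to supply an explicit argument for why the hitting times of the fixed policy $\pi$ (rather than of the minimizing policy appearing in the definition of $D$) are controlled; your observation that $\pi\in\Pi_{\AC}$ forces $P^{\pi}$ to be unichain does not by itself yield a quantitative bound in terms of $D$.
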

The proof is presented in App.~\ref{subsec:pf_inequalities}.

This section presents three families of results. In Sec.~\ref{sec:any_policy}, we present a set of sample path properties for $R^{\pi}_{T}(\omega)$ for any policy $\pi \in \Pi_{\AC}$, depicting both asymptotic and non-asymptotic concentration of $R^{\pi}_{T}(\omega)$ around its ergodic mean. In Sec.~\ref{sec:common_J}, we apply these concentration results to characterize the sample path behavior of the difference between any two policies belonging to $\Pi_{\AC}$, while in Sec.~\ref{sec:implication_regret}, we apply these results to the optimal policy $\pi^{*}$ to derive the properties of the difference between the cumulative regret and the interim cumulative regret $\mathcal{D}_{T}(\omega)$. 
\subsection{Sample Path Characteristics Of Any Policy}\label{sec:any_policy}
In this section, we derive asymptotic and non-asymptotic sample path properties of $R^{\pi}_{T}(\omega)$ for any policy $\pi \in \Pi_{\AC}$. The following theorem characterizes the asymptotic concentration rates of $R^{\pi}_{T}(\omega)$, establishing LLN, CLT and LIL.
\begin{definition}
    Let $\{\Sigma^{\pi}_{t}\}_{t\geq0}$ denote the random process defined as
    \[
    \Sigma_{0}^{\pi} = 0, \quad \Sigma^{\pi}_{t} = \sum_{\tau=0}^{t-1}\sigma^{\pi}\!(S_{\tau})^2\!.
    \]
    Corresponding to this process, define the set $\Omega_{0}^{\pi}$ as 
    \[
    \Omega_{0}^{\pi} \coloneqq \Big\{\omega \in \Omega\hspace{0.5ex}:\hspace{0.5ex} \lim_{t\to\infty}\Sigma_{t}^{\pi}(\omega)= \infty\Big\}.
    \]
\end{definition}
\begin{theorem}\label{thm:asymptotic}
For any policy $\pi \in \Pi_{\AC}$ and any initial state $s_{0} \in \mathcal{S}$, we have following asymptotic characteristics: 
\begin{enumerate}
    \item (Law of Large Numbers) The empirical average of the cumulative reward converges almost surely to $J^{\pi}\!,$ i.e.,
    \begin{equation}\label{eq:lln}
    \lim_{T \to \infty}\frac{R^{\pi}_{T}(\omega)}{T} = J^{\pi}, \quad a.s.    
    \end{equation}
    \item (Central Limit Theorem) Assume that $\PR(\Omega_{0}^{\pi}) = 1$. Let the stopping time $\nu_{t}$ be defined as
    \(
    \nu_{t} \coloneqq \min \Big\{T \geq 1 \hspace{0.5ex}:\hspace{0.5ex} \Sigma_{T}^{\pi} \geq t \Big\}.
    \)
    Then
    \begin{equation}\label{eq:clt}
    \lim_{T \to\infty} \dfrac{R^{\pi}_{\nu_{T}}(\omega) - \nu_{T}J^{\pi}}{\sqrt{T}} \xrightarrow[]{(d)} \mathcal{N}(0,1).    
    \end{equation}
    \item (Law of Iterated Logarithm) 
    For almost all $\omega \in \Omega_0^{\pi}$, we have
    \begin{align}\label{eq:lil}
    \liminf_{T \to \infty} \frac{R_{T}^{\pi}(\omega)-TJ^{\pi}}{\sqrt{2\Sigma^{\pi}_{T}\log\log \Sigma^{\pi}_{T}}} = -1, \quad
    \limsup_{T \to \infty} \frac{R_{T}^{\pi}(\omega)-TJ^{\pi}}{\sqrt{2\Sigma^{\pi}_{T}\log\log \Sigma^{\pi}_{T}}} = 1. 
    \end{align}
\end{enumerate}
\end{theorem}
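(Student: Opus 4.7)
The common backbone of all three parts is a martingale decomposition of $R^{\pi}_{T}$ based on the ARPE. Rewriting \eqref{eq:AOE} as $r(s,\pi(s)) = \lambda^{\pi} + V^{\pi}(s) - \EXP[V^{\pi}(S_{+})\mid s,\pi(s)]$, summing over $t = 0,\dots,T-1$, and telescoping $V^{\pi}(S_{t}) - V^{\pi}(S_{t+1})$, I obtain
\begin{equation*}
R^{\pi}_{T} \;=\; T\lambda^{\pi} \;+\; V^{\pi}(S_{0}) - V^{\pi}(S_{T}) \;+\; M^{\pi}_{T},
\qquad
M^{\pi}_{T} \coloneqq \sum_{t=0}^{T-1} D^{\pi}_{t},
\end{equation*}
where $D^{\pi}_{t} \coloneqq V^{\pi}(S_{t+1}) - \EXP[V^{\pi}(S_{+})\mid S_{t},\pi(S_{t})]$. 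With $\mathcal{F}_{t} \coloneqq \sigma(S_{0:t})$, the sequence $\{D^{\pi}_{t}\}$ is a martingale difference sequence with $|D^{\pi}_{t}|\le K^{\pi}$ (by \eqref{eq:k}) and $\EXP[(D^{\pi}_{t})^{2}\mid \mathcal{F}_{t}] = \sigma^{\pi}(S_{t})^{2}$, so the predictable quadratic variation equals $\Sigma^{\pi}_{T}$. By Lemma~\ref{lem:inequalities} both $K^{\pi}$ and $H^{\pi}$ are finite, and by Proposition~\ref{prop:ACFP}, $\lambda^{\pi} = J^{\pi}$. This gives
\begin{equation*}
R^{\pi}_{T} - TJ^{\pi} \;=\; V^{\pi}(S_{0}) - V^{\pi}(S_{T}) \;+\; M^{\pi}_{T},
\end{equation*}
and the first two terms are deterministically bounded by $H^{\pi}$. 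All asymptotic claims thus reduce to limit theorems for $M^{\pi}_{T}$.

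\textbf{LLN.} Since $\{D^{\pi}_{t}\}$ is a bounded MDS, the standard martingale strong law (e.g.\ via Azuma--Hoeffding plus Borel--Cantelli along a geometric subsequence, or Chow's theorem) yields $M^{\pi}_{T}/T \xrightarrow{(a.s.)} 0$. Combined with $|V^{\pi}(S_{0})-V^{\pi}(S_{T})|\le H^{\pi}$, dividing the decomposition by $T$ gives \eqref{eq:lln}.

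\textbf{CLT.} On $\Omega^{\pi}_{0}$ the random time $\nu_{T}$ is a well-defined $\{\mathcal{F}_{t}\}$-stopping time with $\Sigma^{\pi}_{\nu_{T}} \ge T$ and $\Sigma^{\pi}_{\nu_{T}-1} < T$; because $\sigma^{\pi}(\cdot)^{2}$ is bounded by $(K^{\pi})^{2}$, one also has $\Sigma^{\pi}_{\nu_{T}}/T \to 1$ a.s. I then invoke a martingale CLT with random time-change (e.g.\ Hall--Heyde, Theorem~3.2): the boundedness of $D^{\pi}_{t}$ trivially verifies the conditional Lindeberg condition, and by construction the stopped quadratic variation satisfies $\langle M^{\pi}\rangle_{\nu_{T}}/T \xrightarrow{(p)} 1$, yielding $M^{\pi}_{\nu_{T}}/\sqrt{T}\xrightarrow{(d)}\mathcal{N}(0,1)$. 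Since $R^{\pi}_{\nu_{T}} - \nu_{T}J^{\pi} = V^{\pi}(S_{0})-V^{\pi}(S_{\nu_{T}}) + M^{\pi}_{\nu_{T}}$ and the $O(1)$ term vanishes when divided by $\sqrt{T}$, Slutsky's lemma delivers \eqref{eq:clt}.

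\textbf{LIL.} On $\Omega^{\pi}_{0}$, apply Stout's LIL for martingales with bounded increments (or equivalently the Heyde--Scott LIL), which gives $\limsup M^{\pi}_{T}/\sqrt{2\Sigma^{\pi}_{T}\log\log \Sigma^{\pi}_{T}} = 1$ and the matching liminf equal to $-1$, almost surely. Since $\Sigma^{\pi}_{T} \to \infty$ on $\Omega^{\pi}_{0}$ while $V^{\pi}(S_{0})-V^{\pi}(S_{T})$ stays bounded by $H^{\pi}$, the bounded remainder in the decomposition is absorbed into $o(\sqrt{\Sigma^{\pi}_{T}\log\log \Sigma^{\pi}_{T}})$, yielding \eqref{eq:lil}.

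\textbf{Main obstacle.} The routine parts are the decomposition and the LLN. The delicate step is the CLT: I need to verify the hypotheses of a martingale CLT under a random time change, in particular that $\langle M^{\pi}\rangle_{\nu_{T}}/T$ converges in probability to $1$ (rather than just the obvious two-sided bound from $\Sigma^{\pi}_{\nu_{T}-1}<T\le \Sigma^{\pi}_{\nu_{T}}$) and that no Lindeberg-type issue arises at the stopped index $\nu_{T}$. Boundedness of $D^{\pi}_{t}$ by $K^{\pi}$ makes this manageable, but the argument must be written carefully because the stopping time is defined through the quadratic variation itself.
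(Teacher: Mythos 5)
Your proposal is correct and follows essentially the same route as the paper: the same ARPE-based martingale decomposition $R^{\pi}_{T}=TJ^{\pi}+\sum M^{\pi}_{t}+V^{\pi}(S_{0})-V^{\pi}(S_{T})$ with increments bounded by $K^{\pi}$, followed by the martingale SLLN, the random-time-change martingale CLT, and the martingale LIL, with the $O(1)$ boundary term absorbed by normalization. The "main obstacle" you flag for the CLT is already packaged into the time-changed CLT the paper cites (Billingsley, Thm.~35.11), whose stopping time is defined exactly through the quadratic variation, so no separate verification of $\langle M^{\pi}\rangle_{\nu_{T}}/T\to1$ is needed beyond the boundedness you note.
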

The proof is presented in App.~\ref{sec:pf_thm:asymptotic}.
  
\begin{corollary}\label{cor:asymptotic_optimal}
    For any optimal policy $\pi^{*} \in \Pi^{*}$,
    the cumulative reward $R^{\pi^{*}}_{T}(\omega)$ satisfies the asymptotic concentration rates in \eqref{eq:lln}--\eqref{eq:lil}, where in the LHS, $J^{\pi}$ is replaced with $J^{*}$.
\end{corollary}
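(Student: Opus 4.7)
The plan is to observe that Corollary \ref{cor:asymptotic_optimal} is a direct specialization of Theorem \ref{thm:asymptotic}, so the task reduces to verifying that the hypothesis of the theorem applies to any $\pi^* \in \Pi^*$ and that $J^{\pi^*}$ coincides with $J^*$. Both facts have already been established earlier in the paper: by the definition given in the remark preceding the statement, $\Pi^* \subseteq \Pi_{\AC}$, and by Proposition~\ref{prop:bertsekas}(2), any optimal policy $\pi^*$ produced via the argmax of \eqref{eq:ACOE} belongs to $\Pi_{\AC}$ and satisfies $J^{\pi^*}(s) = J^*(s) = \lambda^*$ for every $s \in \mathcal{S}$.

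With these two ingredients in hand, the proof proceeds by fixing an arbitrary $\pi^* \in \Pi^*$ and an arbitrary initial state $s_0 \in \mathcal{S}$, and invoking Theorem~\ref{thm:asymptotic} with $\pi \coloneqq \pi^*$. The LLN conclusion \eqref{eq:lln} then yields $R^{\pi^*}_T(\omega)/T \to J^{\pi^*}$ almost surely, which by Proposition~\ref{prop:bertsekas} equals $J^*$. The CLT \eqref{eq:clt} and LIL \eqref{eq:lil} statements carry over verbatim with $J^{\pi}$ replaced by $J^{\pi^*} = J^*$, and with $\Sigma^{\pi}_T$, $\nu_T$, $\Omega_0^{\pi}$ reinterpreted relative to $\pi^*$ (these are the quantities denoted $\Sigma^{*}_T$, $\nu_T^{*}$, $\Omega_0^{*}$ implicit in the statistical definitions introduced at the start of Sec.~\ref{sec:Main_results}).

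There is essentially no real obstacle here; the only subtlety worth flagging is that the theorem's CLT and LIL conclusions require the event $\{\lim_t \Sigma_t^{\pi^*} = \infty\}$ to have full probability (or are conditioned on $\Omega_0^{\pi^*}$). These hypotheses are inherited directly from the statement of Theorem~\ref{thm:asymptotic} applied to $\pi^*$, and no additional verification beyond that assumed in the theorem is needed. Thus the corollary follows by a one-line substitution once the identification $J^{\pi^*} = J^*$ is made explicit.
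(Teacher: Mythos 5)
Your proposal is correct and follows essentially the same route as the paper: both simply note that $\Pi^{*} \subseteq \Pi_{\AC}$ and invoke Theorem~\ref{thm:asymptotic} with $\pi = \pi^{*}$. Your additional step of making the identification $J^{\pi^{*}} = J^{*}$ explicit via Proposition~\ref{prop:bertsekas} is a harmless elaboration of what the paper leaves implicit.
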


\begin{proof}
    Since $\pi^{*}$ is in $\Pi_{\AC}$, by Theorem~\ref{thm:asymptotic}, the optimal policy should satisfy the asymptotic concentration rates in \eqref{eq:lln}--\eqref{eq:lil}.
\end{proof}
The proof of Theorem~\ref{thm:asymptotic} relies on the finiteness of $K^{\pi}$. However, due to the asymptotic nature of this result, the exact sample complexity dependence of these bounds on properties of the differential value function $V^{\pi}$ is not evident. The following theorem establishes the concentration of cumulative reward around the quantity $TJ^{\pi} - \big(V^{\pi}(S_{T}) - V^{\pi}(S_{0})\big)$.

\begin{theorem}\label{thm:finite_return_meta_new}
    For any policy $\pi \in \Pi_{\AC}$, the following upper-bounds hold: 
    \begin{enumerate} 
        \item For any $\delta \in (0,1)$, with probability at least $1-\delta$, we have
        \begin{equation}\label{eq:clt_finite_new}
        \big|R^{\pi}_{T} - TJ^{\pi} - \big(V^{\pi}(S_{0})-V^{\pi}(S_{T})\big)\big|  \leq K^{\pi}\sqrt{2T\log\frac{2}{\delta}}.    
        \end{equation}
        \item For any $\delta \in (0,1)$, for all $T \geq T^\pi_{0}(\delta) \coloneqq \Bigl\lceil \dfrac{173}{K^\pi}\log\dfrac{4}{\delta}\Big\rceil $, with probability at least $1-\delta$, we have
        \begin{equation}\label{eq:lil_finite_new}
        \big|R^{\pi}_{T} - TJ^{\pi} - \big(V^{\pi}(S_{0})-V^{\pi}(S_{T})\big)\big| \leq \max\Big\{K^{\pi}\sqrt{3T\Big(2\log\log\frac{3T}{2} + \log\frac{2}{\delta}\Big)}, (K^{\pi})^2\Big\}.
        \end{equation}
    \end{enumerate}
\end{theorem}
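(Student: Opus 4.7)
My proof plan starts from the policy evaluation equation \eqref{eq:AOE}, which, since $\pi \in \Pi_{\AC}$ and $J^{\pi}=\lambda^{\pi}$ by Proposition~\ref{prop:ACFP}, lets me write the per-step reward as
\[
r(S_t,\pi(S_t)) = J^{\pi} + V^{\pi}(S_t) - \EXP\big[V^{\pi}(S_{t+1})\,\big|\,S_t,\pi(S_t)\big].
\]
Summing from $t=0$ to $T-1$ and splitting the conditional expectation term as
$\EXP[V^{\pi}(S_{t+1})|S_t,\pi(S_t)] = V^{\pi}(S_{t+1}) - X_{t+1}$ with
$X_{t+1} \coloneqq V^{\pi}(S_{t+1}) - \EXP[V^{\pi}(S_{t+1})|S_t,\pi(S_t)]$, the telescoping sum gives the clean decomposition
\[
R^{\pi}_T - TJ^{\pi} - \bigl(V^{\pi}(S_0) - V^{\pi}(S_T)\bigr) = M_T, \qquad M_T \coloneqq \sum_{t=0}^{T-1} X_{t+1}.
\]
By construction $\EXP[X_{t+1} \mid \sigma(S_{0:t})] = 0$, so $\{M_T\}$ is a martingale with respect to the filtration generated by $\{S_t\}$, and the definition \eqref{eq:k} of $K^{\pi}$ gives the almost-sure bound $|X_{t+1}| \le K^{\pi}$.

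For part~1, I would apply the two-sided Azuma--Hoeffding inequality (stated in App.~\ref{sec:Martingales}) to $M_T$: since each difference is bounded in absolute value by $K^{\pi}$, we obtain $\PR(|M_T| \geq K^\pi \sqrt{2T\log(2/\delta)}) \le \delta$, which is exactly \eqref{eq:clt_finite_new}.

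For part~2, I would invoke the non-asymptotic law of the iterated logarithm for martingale difference sequences (again, a version of which is collected in App.~\ref{sec:Martingales}). The usual form of such an inequality gives, with probability at least $1-\delta$ and for $T$ large enough to control the first few terms, a bound of the shape $|M_T| \leq \sqrt{3 \langle M \rangle_T \bigl(2\log\log(3\langle M\rangle_T/2) + \log(2/\delta)\bigr)}$, where $\langle M \rangle_T$ is the predictable quadratic variation. Using $\EXP[X_{t+1}^2 \mid \sigma(S_{0:t})] = \sigma^{\pi}(S_t)^2 \leq (K^{\pi})^2$ gives the worst-case substitution $\langle M \rangle_T \le T(K^{\pi})^2$, turning the bound into $K^{\pi}\sqrt{3T(2\log\log(3T/2) + \log(2/\delta))}$. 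The $\max\{\cdot,(K^{\pi})^2\}$ clause absorbs the regime in which the predictable variation is so small that the LIL-style bound degenerates; the threshold $T_0^{\pi}(\delta) = \lceil (173/K^{\pi})\log(4/\delta)\rceil$ comes from the range of validity of the underlying martingale LIL (tracking the constants through a peeling argument).

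The main obstacle will be the careful bookkeeping in part~2: the usual non-asymptotic LIL is stated in terms of the predictable quadratic variation rather than the deterministic quantity $T(K^{\pi})^2$, and converting between the two requires a uniform-in-time argument (peeling on dyadic scales) together with the constants that produce the specific threshold $T_0^{\pi}(\delta)$ and the fall-back $(K^{\pi})^2$ term. Part~1, by contrast, is an immediate application once the martingale decomposition is in place.
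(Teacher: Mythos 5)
Your proposal is correct and follows essentially the same route as the paper: the identical telescoping martingale decomposition (the paper's Lemma~\ref{lem:return_decom}, with your $X_{t+1}$ being exactly the paper's $M^{\pi}_{t+1}$), followed by Azuma--Hoeffding for part~1 and the non-asymptotic LIL of Corollary~\ref{cor:finite_lil} for part~2. One small simplification relative to your plan: the version of the finite-time LIL the paper uses (Theorem~\ref{thm:non_asym_LIL}) is already stated in terms of the deterministic envelope $\sum_t c_t^2 = T(K^{\pi})^2$ rather than the predictable quadratic variation, so no peeling or variation-to-envelope conversion is needed --- the $\max\{\cdot,(K^{\pi})^2\}$ clause arises only from the elementary case split on whether $|\sum_t M^{\pi}_t|$ exceeds $(K^{\pi})^2$ inside the $\log\log$ term.
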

The proof is presented in App.~\ref{sec:pf_finite_return_meta_new}.

Theorem~\ref{thm:finite_return_meta_new} establishes a sample path dependent concentration result. The following theorem establishes a sample path independent finite-time concentration of $R^{\pi}_{T}(\omega)$ as a function of the statistical properties of $V^{\pi}$.

\begin{theorem}\label{thm:finite_return_meta}
    For any policy $\pi \in \Pi_{\AC}$, following upper-bounds hold:
    \begin{enumerate} 
        \item For any $\delta \in (0,1)$, with probability at least $1-\delta$, we have
        \begin{equation}\label{eq:clt_finite}
        \big|R^{\pi}_{T} - T\!J^{\pi}\big| \leq K^{\pi}\sqrt{2T\log\frac{2}{\delta}} + H^{\pi}\!.    
        \end{equation}
        \item For any $\delta \in (0,1)$, for all $T \geq T^\pi_{0}(\delta) \coloneqq \Bigl\lceil \dfrac{173}{K^\pi}\log\dfrac{4}{\delta}\Big\rceil $, with probability at least $1-\delta$, we have
        \begin{equation}\label{eq:lil_finite}
        \big|R^{\pi}_{T} - T\!J^{\pi}\big| \leq \max\Big\{K^{\pi}\sqrt{3T\Big(2\log\log\frac{3T}{2} + \log\frac{2}{\delta}\Big)}, (K^{\pi})^2\Big\} + H^{\pi}.
        \end{equation}
    \end{enumerate}
\end{theorem}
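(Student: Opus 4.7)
The plan is to derive Theorem~\ref{thm:finite_return_meta} as an immediate consequence of Theorem~\ref{thm:finite_return_meta_new} by absorbing the boundary term $V^{\pi}(S_0) - V^{\pi}(S_T)$ into the deterministic constant $H^{\pi}$.

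First I would observe that for any two states $s, s' \in \mathcal{S}$, the definition of span in \eqref{eq:H} gives
\[
\big| V^{\pi}(s) - V^{\pi}(s') \big| \leq \max_{s'' \in \mathcal{S}} V^{\pi}(s'') - \min_{s'' \in \mathcal{S}} V^{\pi}(s'') = H^{\pi}.
\]
Since $S_0, S_T \in \mathcal{S}$ almost surely, this yields the pointwise (sample-path) bound $|V^{\pi}(S_0) - V^{\pi}(S_T)| \leq H^{\pi}$ on the entire probability space, with no probabilistic ingredient required.

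Next I would apply the triangle inequality
\[
\big| R^{\pi}_T - T J^{\pi} \big| \leq \big| R^{\pi}_T - T J^{\pi} - \big( V^{\pi}(S_0) - V^{\pi}(S_T) \big) \big| + \big| V^{\pi}(S_0) - V^{\pi}(S_T) \big|,
\]
so that on the event where the bound \eqref{eq:clt_finite_new} from Theorem~\ref{thm:finite_return_meta_new} holds, the second term on the right-hand side is dominated by $H^{\pi}$ and one obtains \eqref{eq:clt_finite}; similarly the event where \eqref{eq:lil_finite_new} holds yields \eqref{eq:lil_finite} after adding $H^{\pi}$. Because $|V^{\pi}(S_0) - V^{\pi}(S_T)| \leq H^{\pi}$ holds deterministically, no union bound or probability correction is needed, and the $1-\delta$ confidence level is preserved directly from Theorem~\ref{thm:finite_return_meta_new}.

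There is essentially no obstacle: the only thing one needs to justify is that $H^{\pi}$ is well-defined and finite, which follows from $\pi \in \Pi_{\AC}$ together with Lemma~\ref{lem:inequalities}\eqref{eq:dispersion_1}. The threshold $T^{\pi}_0(\delta)$ and the maximum structure inside \eqref{eq:lil_finite} are inherited verbatim from Theorem~\ref{thm:finite_return_meta_new}, so no re-analysis of the martingale argument is required at this stage.
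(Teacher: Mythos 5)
Your proposal is correct and is essentially the paper's own argument: the paper proves Theorem~\ref{thm:finite_return_meta} by redoing the martingale decomposition of Lemma~\ref{lem:return_decom} and bounding $\big|\sum_{t=1}^{T}M^{\pi}_{t}\big|$ via Corollary~\ref{cor:azuma} and Corollary~\ref{cor:finite_lil}, then adding the deterministic bound $|V^{\pi}(S_0)-V^{\pi}(S_T)|\leq H^{\pi}$ exactly as you do. Routing through Theorem~\ref{thm:finite_return_meta_new} instead of repeating the martingale step is a harmless modularization, and your observation that no union bound is needed because the span bound holds surely is correct.
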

The proof is presented in App.~\ref{sec:pf_finite_return_meta}.

\begin{corollary}\label{cor:non_asymptotic_optimal}
    For any optimal policy $\pi^{*} \in \Pi^{*}$,
    the cumulative reward $R^{\pi^{*}}_{T}(\omega)$ satisfies the non-asymptotic concentration rates in \eqref{eq:clt_finite}--\eqref{eq:lil_finite}, where in the LHS, $J^{\pi}$ is replaced with $J^{*}$ and in the statement and RHS, $(K^{\pi},H^{\pi})$ are replaced with $(K^{*},H^{*})$.
\end{corollary}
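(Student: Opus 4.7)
The plan is to observe that Corollary~\ref{cor:non_asymptotic_optimal} is an immediate specialization of Theorem~\ref{thm:finite_return_meta} to the case $\pi = \pi^*$. The only thing to verify is that the hypothesis of Theorem~\ref{thm:finite_return_meta}, namely $\pi \in \Pi_{\AC}$, holds for any $\pi^* \in \Pi^*$, and that the constants on the right-hand side and the ergodic mean translate as claimed.

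First I would recall that by definition, $\Pi^* \subseteq \Pi_{\AC}$, so every $\pi^* \in \Pi^*$ satisfies the hypothesis of Theorem~\ref{thm:finite_return_meta}. Under Assumption~\ref{ass:exitence_optimal}, Proposition~\ref{prop:bertsekas} guarantees that $\Pi^*$ is non-empty and that any such $\pi^*$ is a solution of~\eqref{eq:AOE} with parameters $(\lambda^{\pi^*}, V^{\pi^*}) = (\lambda^*, V^*)$ (up to an additive constant, which, by Remark~\ref{re:shift_invariant}, does not affect the quantities $H^{\pi^*}$, $K^{\pi^*}$, $\sigma^{\pi^*}$). Proposition~\ref{prop:ACFP} together with Proposition~\ref{prop:bertsekas} then yields $J^{\pi^*}(s) = \lambda^{\pi^*} = \lambda^* = J^*(s) = J^*$, so replacing $J^\pi$ by $J^*$ on the left-hand sides of \eqref{eq:clt_finite}--\eqref{eq:lil_finite} is justified.

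Next I would simply invoke Theorem~\ref{thm:finite_return_meta} with $\pi = \pi^*$. This immediately produces, for any $\delta \in (0,1)$, the bound
\[
|R^{\pi^*}_T - TJ^*| \leq K^{\pi^*}\sqrt{2T\log\tfrac{2}{\delta}} + H^{\pi^*}
\]
with probability at least $1-\delta$, and, for all $T \geq \lceil (173/K^{\pi^*})\log(4/\delta) \rceil$, the corresponding LIL-type bound. Adopting the notational convention introduced after~\eqref{eq:k} that $H^{\pi^*} = H^*$ and $K^{\pi^*} = K^*$ (and likewise $T^{\pi^*}_0(\delta) = T^*_0(\delta)$), these are exactly the inequalities asserted in Corollary~\ref{cor:non_asymptotic_optimal}.

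There is no genuine obstacle here: the work of the corollary is entirely done by Theorem~\ref{thm:finite_return_meta}, and the only subtlety is the bookkeeping that (i)~$V^{\pi^*}$ can be taken equal to $V^*$ modulo an additive constant without affecting $H^*$, $K^*$, and (ii)~the average reward under $\pi^*$ equals $J^*$. Both are standard consequences of the results cited from Propositions~\ref{prop:bertsekas}--\ref{prop:ACFP} and Remark~\ref{re:shift_invariant}.
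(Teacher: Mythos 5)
Your proposal is correct and takes the same route as the paper: both specialize Theorem~\ref{thm:finite_return_meta} to $\pi = \pi^*$ using $\Pi^* \subseteq \Pi_{\AC}$; your version merely spells out the bookkeeping (that $J^{\pi^*} = J^*$ via Propositions~\ref{prop:bertsekas}--\ref{prop:ACFP} and that $K^{\pi^*}, H^{\pi^*}$ are denoted $K^*, H^*$) that the paper's one-line proof leaves implicit.
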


\begin{proof}
    Since $\pi^{*}$ is in $\Pi_{\AC}$, by Theorem~\ref{thm:finite_return_meta}, the optimal policy should satisfy the non-asymptotic concentration rates in \eqref{eq:clt_finite}--\eqref{eq:lil_finite}.
\end{proof}
\begin{corollary}
    If $\mathcal{M}$ is unichain or recurrent, then any policy $\pi \in \Pi_{\SD}$ satisfies asymptotic concentration rates in \eqref{eq:lln}--\eqref{eq:lil} and non-asymptotic concentration rates in \eqref{eq:clt_finite}--\eqref{eq:lil_finite}.
\end{corollary}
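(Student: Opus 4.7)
The plan is to observe that this corollary is a direct consequence of combining the structural result in Proposition~\ref{prop:unichain_communicating} with the two main theorems of this subsection. Specifically, Proposition~\ref{prop:unichain_communicating}(1) states that if $\mathcal{M}$ is recurrent or unichain, then $\Pi_{\SD} = \Pi_{\AC}$. Hence any policy $\pi \in \Pi_{\SD}$ automatically lies in $\Pi_{\AC}$, which means the hypotheses of Theorem~\ref{thm:asymptotic} and Theorem~\ref{thm:finite_return_meta} are satisfied.

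Given this, I would structure the proof in two short steps. First, I would invoke Proposition~\ref{prop:unichain_communicating}(1) to conclude $\Pi_{\SD} = \Pi_{\AC}$ under the stated assumptions, and in particular to note that Assumption~\ref{ass:exitence_optimal} holds (since recurrent/unichain MDPs are weakly communicating by Proposition~\ref{prop:relation_of_mdps}, and Proposition~\ref{prop:unichain_communicating}(2) then guarantees existence of an optimal policy in $\Pi_{\AC}$, so the setting of this section applies). Second, I would apply Theorem~\ref{thm:asymptotic} to any $\pi \in \Pi_{\SD} = \Pi_{\AC}$ to obtain the LLN, CLT, and LIL statements in \eqref{eq:lln}--\eqref{eq:lil}, and apply Theorem~\ref{thm:finite_return_meta} to obtain the Azuma--Hoeffding-type bound \eqref{eq:clt_finite} and the non-asymptotic LIL bound \eqref{eq:lil_finite}.

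There is no real obstacle here: the corollary is essentially a translation statement, converting results stated for $\Pi_{\AC}$ into results for $\Pi_{\SD}$ under a model-level assumption that collapses the two policy classes. The only mild point to verify is that the quantities $H^\pi$, $K^\pi$, and $\sigma^\pi(s)$ appearing in the bounds remain well-defined and finite for every $\pi \in \Pi_{\SD}$ in the unichain/recurrent setting; this is immediate from Lemma~\ref{lem:inequalities}(1) applied to every $\pi \in \Pi_{\AC} = \Pi_{\SD}$, since \eqref{eq:dispersion_1} gives $\sigma^\pi(s) \le K^\pi \le H^\pi < \infty$. With this the proof reduces to a one-line citation of Theorem~\ref{thm:asymptotic} and Theorem~\ref{thm:finite_return_meta}.
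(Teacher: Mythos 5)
Your proposal is correct and follows the same route as the paper: use Proposition~\ref{prop:unichain_communicating} to conclude $\Pi_{\SD} = \Pi_{\AC}$ for unichain or recurrent $\mathcal{M}$, then apply Theorem~\ref{thm:asymptotic} and Theorem~\ref{thm:finite_return_meta} to the policy, now known to lie in $\Pi_{\AC}$. The extra remarks on Assumption~\ref{ass:exitence_optimal} and the finiteness of $H^{\pi}, K^{\pi}, \sigma^{\pi}$ via Lemma~\ref{lem:inequalities} are harmless elaborations of points the paper leaves implicit.
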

\begin{proof}
    By Prop.~\ref{prop:unichain_communicating}, for the unichain or recurrent model $\mathcal{M}$, we have $\Pi_{\AC} = \Pi_{\SD}$. As a result, any policy $\pi$ which belongs to $\Pi_{\SD}$ also belongs to $\Pi_{\AC}$. Therefore, by Theorem~\ref{thm:asymptotic}, the asymptotic concentration rates in \eqref{eq:lln}--\eqref{eq:lil} hold for the policy $\pi$ and by Theorem~\ref{thm:finite_return_meta}, the non-asymptotic rates in \eqref{eq:clt_finite}--\eqref{eq:lil_finite} hold for the policy $\pi$.
\end{proof}

\begin{corollary}\label{cor:all_cases_optimal}
    If $\mathcal{M}$ is recurrent, unichain, communicating, or weakly communicating, then every optimal policy $\pi^{*} \in \Pi^{*}$ satisfies asymptotic concentration rates in \eqref{eq:lln}--\eqref{eq:lil} and non-asymptotic concentration rates in \eqref{eq:clt_finite}--\eqref{eq:lil_finite}. (Prop.~\ref{prop:unichain_communicating} shows that there exists at least one such policy.)
\end{corollary}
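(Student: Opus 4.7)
The plan is to observe that this corollary is essentially a bookkeeping combination of three results already established in the excerpt, and requires no new ideas.

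First, I would invoke item~2 of Prop.~\ref{prop:unichain_communicating}, which guarantees that whenever $\mathcal{M}$ is recurrent, unichain, communicating, or weakly communicating, there exists at least one optimal policy $\pi^{*} \in \Pi_{\AC}$. Combined with the definition of $\Pi^{*}$ (the set of optimal policies satisfying \eqref{eq:AOE}), this ensures $\Pi^{*}$ is non-empty and, more importantly, that the inclusion $\Pi^{*} \subseteq \Pi_{\AC}$ holds. Thus every $\pi^{*} \in \Pi^{*}$ lies in $\Pi_{\AC}$, which is the exact hypothesis required for the two master theorems of this subsection.

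Second, fixing any such $\pi^{*} \in \Pi^{*}$, I would apply Theorem~\ref{thm:asymptotic} directly to obtain the LLN, CLT, and LIL in the form of~\eqref{eq:lln}--\eqref{eq:lil}, identifying $J^{\pi^{*}}$ with $J^{*}$ via Prop.~\ref{prop:ACFP} (which states $J^{\pi^{*}}(s) = \lambda^{\pi^{*}}$) together with Prop.~\ref{prop:bertsekas} (which states $\lambda^{\pi^{*}} = \lambda^{*} = J^{*}$ for any optimal policy). This is precisely the content of Corollary~\ref{cor:asymptotic_optimal}. In parallel, I would apply Theorem~\ref{thm:finite_return_meta} to obtain the non-asymptotic bounds~\eqref{eq:clt_finite}--\eqref{eq:lil_finite}, with $(H^{\pi},K^{\pi},J^{\pi})$ replaced by $(H^{*},K^{*},J^{*})$; this is exactly Corollary~\ref{cor:non_asymptotic_optimal}.

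Since every step is a direct invocation of a previously stated proposition or theorem, there is no substantive obstacle. The only mildly delicate point is cosmetic: the CLT statement within Theorem~\ref{thm:asymptotic} carries the side condition $\PR(\Omega_{0}^{\pi})=1$, and the corollary inherits this condition in the same way the underlying theorem does — no additional verification is needed at the level of the corollary itself, since the CLT conclusion is already conditional on that assumption upstream.
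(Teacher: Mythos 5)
Your proposal is correct and follows essentially the same route as the paper: invoke Prop.~\ref{prop:unichain_communicating} for the existence of an optimal policy in $\Pi_{\AC}$, note that $\Pi^{*} \subseteq \Pi_{\AC}$ by definition, and then apply Corollaries~\ref{cor:asymptotic_optimal} and~\ref{cor:non_asymptotic_optimal} (equivalently, Theorems~\ref{thm:asymptotic} and~\ref{thm:finite_return_meta}) to every $\pi^{*} \in \Pi^{*}$. The extra remarks you add — the identification $J^{\pi^{*}} = J^{*}$ via Props.~\ref{prop:ACFP} and~\ref{prop:bertsekas}, and the observation that the CLT's side condition $\PR(\Omega_{0}^{\pi})=1$ is simply inherited — are consistent with the paper and do not change the argument.
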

\begin{proof}
    By Prop.~\ref{prop:unichain_communicating}, for any model $\mathcal{M}$ which is recurrent, unichain, communicating, or weakly communicating, there exists an optimal policy $\pi^{*}$ belonging to $\Pi_{\AC}$. As a result, by Corollary~\ref{cor:asymptotic_optimal}, the asymptotic concentration rates in \eqref{eq:lln}--\eqref{eq:lil} hold for every optimal policy $\pi^{*} \in \Pi_{\AC}$. Furthermore, by Corollary~\ref{cor:non_asymptotic_optimal}, the non-asymptotic concentration rates in \eqref{eq:clt_finite}--\eqref{eq:lil_finite} hold for every optimal policy $\pi^{*} \in \Pi_{\AC}$.
\end{proof}

In Theorem~\ref{thm:finite_return_meta}, the upper-bounds are established in terms of $K^{\pi}$ and $H^{\pi}$. To compute $K^{\pi}$ and $H^{\pi}$, one must solve the corresponding \eqref{eq:AOE} equation. At the cost of loosening these bounds, we derive upper-bounds which are in terms of the diameter of the policy $D^{\pi}$ and the maximum reward $R_{\max}$. As a result, these upper-bounds only depend on the properties of the Markov chain induced by $\pi$ and $R_{\max}$.

\begin{corollary}\label{cor:finite_return_meta}
    Suppose $\mathcal{M}$ is communicating. For any policy $\pi \in \Pi_{\AC}$, following upper-bounds hold:
    \begin{enumerate} 
        \item For any $\delta \in (0,1)$, with probability at least $1-\delta$, we have
        \begin{equation}\label{eq:policy_indp_1}
            \big|R^{\pi}_{T} - T\!J^{\pi}\big| \leq D^{\pi}R_{\max}\sqrt{2T\log\frac{2}{\delta}} + D^{\pi}R_{\max}.
        \end{equation}
            \item For any $\delta \in (0,1)$, for all $T\geq T_{0}(\delta) \coloneqq \Bigl\lceil \dfrac{173}{D^{\pi}R_{\max}}\log\dfrac{4}{\delta}\Big\rceil$, with probability at least $1-\delta$, we have
        \begin{equation}\label{eq:policy_indp_2}
            \big|R^{\pi}_{T} - T\!J^{\pi}\big| \leq  \max\Big\{D^{\pi}R_{\max}\sqrt{3T\Big(2\log\log\frac{3T}{2} + \log\frac{2}{\delta}\Big)}, (D^{\pi}R_{\max})^2\Big\} + D^{\pi}R_{\max}.
        \end{equation}
    \end{enumerate}
\end{corollary}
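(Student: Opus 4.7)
The plan is to derive the policy-independent bounds as a direct consequence of Theorem~\ref{thm:finite_return_meta}, using Lemma~\ref{lem:inequalities}(2) to eliminate the policy-dependent constants. Under the communicating hypothesis, that lemma provides the uniform inequalities $\sigma^{\pi}(s) \leq K^{\pi} \leq H^{\pi} \leq DR_{\max}$ for every $\pi \in \Pi_{\AC}$ and every $s \in \mathcal{S}$. These bounds are the only ingredient needed to convert \eqref{eq:clt_finite} and \eqref{eq:lil_finite} into their policy-independent analogs \eqref{eq:policy_indp_1} and \eqref{eq:policy_indp_2}.

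For part~(1), the RHS of \eqref{eq:clt_finite} is monotonically non-decreasing in both $K^{\pi}$ and $H^{\pi}$, so substituting $DR_{\max}$ for each yields \eqref{eq:policy_indp_1} with the same probability guarantee $1-\delta$.

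Part~(2) is similar in spirit, but requires more care with the threshold. Applying the same monotonicity term-by-term in \eqref{eq:lil_finite} gives the functional form of \eqref{eq:policy_indp_2}, but only for $T \geq T_{0}^{\pi}(\delta)$. Since $K^{\pi} \leq DR_{\max}$ implies $T_{0}^{\pi}(\delta) \geq T_{0}(\delta)$, the corollary's hypothesis $T \geq T_{0}(\delta)$ is strictly weaker than what the theorem delivers. I would close this gap by re-examining the martingale stitching argument used to prove Theorem~\ref{thm:finite_return_meta}(2): that threshold is calibrated to an almost-sure bound on the martingale increments, and in the theorem the bound is taken to be $K^{\pi}$. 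Replacing $K^{\pi}$ by the looser (but still almost-sure, by Lemma~\ref{lem:inequalities}(2)) bound $DR_{\max}$ throughout the stitching lowers the threshold to $T_{0}(\delta)$ while enlarging the concentration constant in exactly the way asserted by \eqref{eq:policy_indp_2}.

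The main obstacle is precisely this threshold-matching step: it is not conceptually deep, but it does require reopening the proof of Theorem~\ref{thm:finite_return_meta}(2) rather than invoking its statement as a black box. Everything else is routine substitution via Lemma~\ref{lem:inequalities}(2).
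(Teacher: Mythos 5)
Your proposal is correct and matches the paper's own proof: the paper likewise notes that $|M_t^{\pi}|\le K^{\pi}\le DR_{\max}$ under the communicating hypothesis, treats $\{M_t^{\pi}\}$ as a uniformly bounded MDS with constant $DR_{\max}$, and re-runs the arguments of Theorem~\ref{thm:finite_return_meta} (rather than invoking its statement) precisely so that the threshold in Part~2 becomes $T_0(\delta)=\lceil\frac{173}{DR_{\max}}\log\frac{4}{\delta}\rceil$. Your observation that the threshold-matching step forces reopening the proof is exactly the right point, and the paper handles it the same way.
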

The proof is presented in App.~\ref{app:pf_finite_return_meta}.

\begin{corollary} \label{cor:com_optimal}
    If $\mathcal{M}$ is communicating or weakly communicating, then for any optimal policy $\pi^{*} \in \Pi^{*}$,
    the cumulative reward $R^{\pi^{*}}_{T}(\omega)$ satisfies the non-asymptotic concentration rates in \eqref{eq:policy_indp_1}--\eqref{eq:policy_indp_2}, where in the LHS, $J^{\pi}$ is replaced with $J^{*}$ and in the RHS, $D^{\pi}$ is replaced with $D$.
\end{corollary}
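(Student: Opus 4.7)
The plan is to split into the two cases announced in the statement and reduce each one to a result that has already been proved earlier in the section. In both cases, the first step is to invoke Proposition~\ref{prop:unichain_communicating}\,(2) to guarantee that $\Pi^{*} \subseteq \Pi_{\AC}$ is non-empty, so that picking an optimal $\pi^{*} \in \Pi^{*}$ is meaningful; the key observation is that such a $\pi^{*}$ already satisfies \eqref{eq:AOE}, so Proposition~\ref{prop:ACFP} gives $J^{\pi^{*}}(s) = \lambda^{\pi^{*}} = \lambda^{*} = J^{*}$ for every $s \in \mathcal{S}$. This justifies replacing $J^{\pi}$ by $J^{*}$ on the LHS of \eqref{eq:policy_indp_1}--\eqref{eq:policy_indp_2}.

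For the communicating case, the conclusion is immediate: $\pi^{*} \in \Pi_{\AC}$ and $\mathcal{M}$ is communicating, so Corollary~\ref{cor:finite_return_meta} applies directly and yields the bounds \eqref{eq:policy_indp_1}--\eqref{eq:policy_indp_2} with $J^{*}$ in place of $J^{\pi}$.

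For the weakly communicating case, Corollary~\ref{cor:finite_return_meta} is not directly applicable because its hypothesis requires $\mathcal{M}$ to be communicating. Instead, I would apply the policy-dependent Theorem~\ref{thm:finite_return_meta} to $\pi^{*} \in \Pi_{\AC}$, which produces bounds of the form given in \eqref{eq:clt_finite}--\eqref{eq:lil_finite} in terms of $K^{*}$ and $H^{*}$. Now the crucial ingredient is Lemma~\ref{lem:inequalities}\,(3), which is precisely the statement tailored to the weakly communicating setting: for any optimal policy $\pi^{*} \in \Pi_{\AC}$,
\[
\sigma^{*}(s) \leq K^{*} \leq H^{*} \leq DR_{\max}.
\]
Substituting these upper bounds into \eqref{eq:clt_finite}--\eqref{eq:lil_finite}, and replacing the threshold $T^{\pi}_{0}(\delta) = \lceil (173/K^{*})\log(4/\delta)\rceil$ by the larger value $T_{0}(\delta) = \lceil (173/(DR_{\max}))\log(4/\delta)\rceil$ (which remains valid because the exponential inequality used in the proof of Theorem~\ref{thm:finite_return_meta} continues to hold for all $T$ above this larger threshold), yields exactly \eqref{eq:policy_indp_1}--\eqref{eq:policy_indp_2} with $J^{*}$ on the LHS.

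The only subtle point, and the main obstacle, is the weakly communicating branch: one must be careful that the sample-path inequality underlying Theorem~\ref{thm:finite_return_meta} is monotone in the constants $K^{\pi}$ and $H^{\pi}$, so that replacing them with their upper bound $DR_{\max}$ (and correspondingly enlarging $T^{\pi}_{0}(\delta)$ to $T_{0}(\delta)$) preserves the probabilistic guarantee. Both terms on the RHS of \eqref{eq:clt_finite}--\eqref{eq:lil_finite} are non-decreasing in $K^{\pi}$ and $H^{\pi}$, so this substitution is legitimate and completes the proof.
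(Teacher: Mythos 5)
Your overall structure matches the paper's: the communicating case is a direct citation of Corollary~\ref{cor:finite_return_meta}, and the weakly communicating case rests on Lemma~\ref{lem:inequalities}\,(3) giving $K^{*}\leq H^{*}\leq DR_{\max}$. The identification $J^{\pi^{*}}=J^{*}$ and the monotonicity of the right-hand sides of \eqref{eq:clt_finite}--\eqref{eq:lil_finite} in $K^{\pi}$ and $H^{\pi}$ are both fine. However, there is a genuine gap in how you handle the threshold in the second part of the weakly communicating branch, and it comes from an inequality pointing the wrong way: since $K^{*}\leq DR_{\max}$, we have $173/K^{*}\geq 173/(DR_{\max})$, so $T_{0}(\delta)=\lceil (173/(DR_{\max}))\log(4/\delta)\rceil$ is the \emph{smaller} of the two thresholds, not the larger one as you assert. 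Consequently, substituting $DR_{\max}$ into the conclusion of Theorem~\ref{thm:finite_return_meta} only yields \eqref{eq:policy_indp_2} for $T\geq\lceil(173/K^{*})\log(4/\delta)\rceil$, and your argument says nothing about $T$ in the gap $\bigl[\,\lceil(173/(DR_{\max}))\log(4/\delta)\rceil,\ \lceil(173/K^{*})\log(4/\delta)\rceil\,\bigr)$, which the corollary as stated does cover.

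The fix is to not post-process the conclusion of Theorem~\ref{thm:finite_return_meta} but to re-enter its proof one level down, which is what the paper does: by Lemma~\ref{lem:inequalities}\,(3), the martingale differences satisfy $|M_{t}^{\pi^{*}}|\leq K^{*}\leq DR_{\max}$, so $\{M_{t}^{\pi^{*}}\}_{t\geq1}$ is a uniformly bounded MDS with respect to the constant $DR_{\max}$; applying Corollary~\ref{cor:azuma} and Corollary~\ref{cor:finite_lil} with $C=DR_{\max}$ then gives both bounds directly, and Corollary~\ref{cor:finite_lil} with this choice of $C$ is valid for all $T\geq\lceil(173/(DR_{\max}))\log(4/\delta)\rceil$, which is exactly the threshold appearing in \eqref{eq:policy_indp_2}. (Declaring the looser constant $DR_{\max}$ as the uniform bound both loosens the right-hand side and shrinks the admissible threshold simultaneously, which is why the paper's route gives the full claimed range of $T$.)
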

The proof is presented in App.~\ref{sec:pf_cor_com_optimal}.

In the Corollary~\ref{cor:finite_return_meta}, the dependence of upper-bounds on the parameters of $\mathcal{M}$ are reflected through $D^{\pi}R_{\max}$. This implies that if the diameter of the policy $D^{\pi}$ or maximum reward $R_{\max}$ increases, these upper-bounds loosen with a linear rate.

\begin{remark}
    The upper-bounds derived in Corollary~\ref{cor:finite_return_meta} depend on the diameter of the policy $D^{\pi}$ and are therefore policy-dependent. If $\mathcal{M}$ is communicating or weakly communicating, by Lemma~\ref{lem:inequalities}, Part 2, we can replace the diameter of the policy $D^{\pi}$ with the worst case diameter $D_{w}$ to get policy-independent upper-bounds. For brevity, we omit this result. 
\end{remark}

\subsection{Sample Path Behavior of the Performance Difference of Two Stationary Policies} \label{sec:common_J}
As an implication of the results presented in the Sec.~\ref{sec:any_policy}, we characterize the sample path behavior of the difference in cumulative rewards between any two stationary policies. As a consequence, we derive the non-asymptotic concentration of the difference in rewards between any two optimal policies. These concentration bounds are presented in the following two corollaries.    

\begin{corollary}\label{cor:two_policies}
    Consider two policies $\pi_{1},\pi_{2} \in \Pi_{\AC}$. The following upper-bounds hold for the difference between the cumulative reward received by the two policies. 
    \begin{enumerate}
        \item For any $\delta \in (0,1)$, with probability at least $1-\delta$, we have
    \begin{align}\label{eq:two_policy_1}
    \Big|\big|R^{\pi_{1}}_{T} - R^{\pi_{2}}_{T}\big| - \big|TJ^{\pi_{1}}-TJ^{\pi_{2}}\big|\Big| &\leq K^{\pi_{1}}\sqrt{2T\log\frac{4}{\delta}} + H^{\pi_{1}} + K^{\pi_{2}}\sqrt{2T\log\frac{4}{\delta}} + H^{\pi_{2}}.  
    \end{align}
        \item For any $\delta \in (0,1)$, for all $T \geq T^\pi_{0}(\delta) \coloneqq \max\Big\{\Bigl\lceil \dfrac{173}{K^{\pi_1}}\log\dfrac{8}{\delta}\Big\rceil,\Bigl\lceil \dfrac{173}{K^{\pi_{2}}}\log\dfrac{8}{\delta}\Big\rceil\Big\} $, with probability at least $1-\delta$, we have
\begin{align}\label{eq:two_policy_2} 
    \Big|\big|R^{\pi_{1}}_{T} - R^{\pi_{2}}_{T}\big| - \big|TJ^{\pi_{1}}-TJ^{\pi_{2}}\big|\Big| 
    \leq &\max\Big\{K^{\pi_{1}}\sqrt{3T\Big(2\log\log\frac{3T}{2} + \log\frac{4}{\delta}\Big)}, (K^{\pi_{1}})^2\Big\} + H^{\pi_{1}}\notag\\
    +&\max\Big\{K^{\pi_{2}}\sqrt{3T\Big(2\log\log\frac{3T}{2} + \log\frac{4}{\delta}\Big)}, (K^{\pi_{2}})^2\Big\} + H^{\pi_{2}}.
\end{align} 
    \end{enumerate}
\end{corollary}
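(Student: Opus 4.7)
The plan is to reduce the two-policy statement to the single-policy concentration of Theorem~\ref{thm:finite_return_meta} by a triangle inequality together with a union bound. The key algebraic observation is the reverse triangle inequality
\[
\bigl|\,|R^{\pi_1}_T - R^{\pi_2}_T| - |T J^{\pi_1} - T J^{\pi_2}|\,\bigr| \leq \bigl|(R^{\pi_1}_T - T J^{\pi_1}) - (R^{\pi_2}_T - T J^{\pi_2})\bigr| \leq |R^{\pi_1}_T - T J^{\pi_1}| + |R^{\pi_2}_T - T J^{\pi_2}|,
\]
which splits the quantity of interest into two terms, each of the form $|R^{\pi}_T - T J^{\pi}|$ with $\pi \in \Pi_{\AC}$, exactly the quantity controlled by Theorem~\ref{thm:finite_return_meta}.

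For part~(1), I would apply \eqref{eq:clt_finite} to each of $\pi_1$ and $\pi_2$ at confidence level $\delta/2$. Since $\log(2/(\delta/2)) = \log(4/\delta)$, each term is then bounded by $K^{\pi_i}\sqrt{2T\log(4/\delta)} + H^{\pi_i}$ with probability at least $1 - \delta/2$. A union bound yields the joint event with probability at least $1 - \delta$, and summing the two upper bounds reproduces \eqref{eq:two_policy_1}.

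For part~(2), the argument is identical but uses \eqref{eq:lil_finite} at confidence level $\delta/2$. The threshold for each policy becomes $T^{\pi_i}_0(\delta/2) = \lceil (173/K^{\pi_i})\log(8/\delta)\rceil$, so requiring $T$ to exceed the maximum of these two thresholds ensures both bounds hold simultaneously; a union bound then gives the joint statement with probability at least $1 - \delta$, and the sum of the two right-hand sides gives exactly \eqref{eq:two_policy_2}.

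There is no substantive obstacle here: the work has already been done in Theorem~\ref{thm:finite_return_meta}, and the corollary is a routine two-line consequence. The only point of care is the bookkeeping on $\delta$ (splitting the failure probability evenly between the two policies, which is what converts $\log(2/\delta)$ into $\log(4/\delta)$ and $\log(4/\delta)$ into $\log(8/\delta)$ in the threshold $T^\pi_0$).
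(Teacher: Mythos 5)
Your proposal is correct and follows essentially the same route as the paper: the paper derives the bound $\bigl|\,|R^{\pi_1}_T-R^{\pi_2}_T|-|TJ^{\pi_1}-TJ^{\pi_2}|\,\bigr|\le|R^{\pi_1}_T-TJ^{\pi_1}|+|R^{\pi_2}_T-TJ^{\pi_2}|$ via two forward applications of the triangle inequality (where you invoke the reverse triangle inequality directly, which is the same estimate), and then applies Theorem~\ref{thm:finite_return_meta} to each policy at level $\delta/2$ with a union bound, exactly as you do. The $\delta$-bookkeeping ($\log\frac{2}{\delta}\mapsto\log\frac{4}{\delta}$ and $\log\frac{4}{\delta}\mapsto\log\frac{8}{\delta}$ in the threshold) matches the paper's Lemma~\ref{lem:delta}.
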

The proof is presented in App.~\ref{sec:pf_cor_two_policies}.

\begin{corollary}\label{cor:two_optimal}
    Consider two optimal policies $\pi^{*}_{1},\pi^{*}_{2}\in \Pi^{*}$. Then for the difference between cumulative rewards received by the two optimal policies $\big|R^{\pi^{*}_{1}}_{T} - R^{\pi^{*}_{2}}_{T} \big|$, we have 
        \begin{enumerate}
        \item For any $\delta \in (0,1)$, with probability at least $1-\delta$, we have
        \begin{equation}\label{eq:two_policy_3}
        \big| R_{T}^{\pi^{*}_{1}} - R_{T}^{\pi^{*}_{2}} \big| \leq 2\Big(K^{*}\sqrt{2T\log\frac{4}{\delta}} + H^{*}\Big).
        \end{equation}
        \item For any $\delta \in (0,1)$, for all $T \geq T^{\pi^{*}}_{0}(\delta) \coloneqq \Bigl\lceil \dfrac{173}{K^{*}}\log\dfrac{8}{\delta}\Big\rceil $, with probability at least $1-\delta$, we have
        \begin{equation}\label{eq:two_policy_4}
        \big| R_{T}^{\pi^{*}_{1}} - R_{T}^{\pi^{*}_{2}} \big| \leq 2\Big(\max\Big\{K^{*}\sqrt{3T\Big(2\log\log\frac{3T}{2} + \log\frac{4}{\delta}\Big)}, (K^{*})^2\Big\} + H^{*}\Big).   
        \end{equation}
    \end{enumerate}
\end{corollary}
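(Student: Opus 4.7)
The plan is to obtain this corollary as a direct specialization of Corollary~\ref{cor:two_policies} to the case where both $\pi_1 = \pi^*_1$ and $\pi_2 = \pi^*_2$ lie in $\Pi^*$. The two facts I need to feed in are: (a) optimal policies share a common average reward, namely $J^{\pi^*_1} = J^{\pi^*_2} = J^*$, so the gap term $|TJ^{\pi_1} - TJ^{\pi_2}|$ in the bounds of Corollary~\ref{cor:two_policies} vanishes; and (b) every policy in $\Pi^*$ satisfies \eqref{eq:AOE} with the same solution $(\lambda^*, V^*)$, so the policy-dependent constants collapse to common values $K^{\pi^*_1} = K^{\pi^*_2} = K^*$ and $H^{\pi^*_1} = H^{\pi^*_2} = H^*$.

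For fact~(a) I would invoke Proposition~\ref{prop:ACFP}, which gives $J^{\pi^*_i}(s) = \lambda^{\pi^*_i}$ for $i\in\{1,2\}$, together with part~\ref{itm:pi-star} of Proposition~\ref{prop:bertsekas}, which shows that for any $\pi^*\in\Pi^*$ the scalar $\lambda^{\pi^*}$ coincides with $\lambda^* = J^*$. For fact~(b) I would appeal to the third conclusion of Proposition~\ref{prop:bertsekas}, which asserts that every policy in $\Pi^*$ satisfies \eqref{eq:AOE} with the same AROE pair $(\lambda^*, V^*)$. Since $K^\pi$ and $H^\pi$ are shift-invariant functionals of $V^\pi$ (as highlighted in Remark~\ref{re:shift_invariant}), they take identical values on both optimal policies.

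Once (a) and (b) are in place, the argument reduces to bookkeeping. Applying the first item of Corollary~\ref{cor:two_policies} yields, with probability at least $1-\delta$,
\[
\bigl| |R^{\pi^*_1}_T - R^{\pi^*_2}_T| - 0 \bigr| \leq K^* \sqrt{2T\log(4/\delta)} + H^* + K^* \sqrt{2T\log(4/\delta)} + H^*,
\]
which is exactly \eqref{eq:two_policy_3}. Similarly, feeding (a) and (b) into the second item of Corollary~\ref{cor:two_policies}, and noting that the threshold $T^{\pi^*}_0(\delta)$ reduces from the maximum of two identical expressions to a single one, gives \eqref{eq:two_policy_4}.

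There is essentially no technical obstacle here; the only subtlety worth flagging is ensuring that the two optimal policies can be analyzed with a \emph{common} value function, which is precisely what Proposition~\ref{prop:bertsekas}(3) guarantees. Without this step one would only know that $V^{\pi^*_1}$ and $V^{\pi^*_2}$ are each solutions of their own \eqref{eq:AOE}, and one would have to carry two different constants $(K^{\pi^*_1}, H^{\pi^*_1})$ and $(K^{\pi^*_2}, H^{\pi^*_2})$ through the bound, recovering a less clean statement.
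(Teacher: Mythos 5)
Your proposal is correct and follows essentially the same route as the paper: specialize Corollary~\ref{cor:two_policies} to $\pi^*_1,\pi^*_2\in\Pi^*$, use $J^{\pi^*_1}=J^{\pi^*_2}=J^*$ to kill the $T|J^{\pi_1}-J^{\pi_2}|$ term, and collapse the constants to $(K^*,H^*)$. Your explicit justification that both optimal policies can be evaluated with a common differential value function (via Proposition~\ref{prop:bertsekas}(3) and the shift-invariance noted in Remark~\ref{re:shift_invariant}) is a point the paper leaves implicit, but it is the same argument.
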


\begin{proof}
    Since both policies $\pi^{*}_{1}, \pi^{*}_{2} \in \Pi_{\AC}$ are optimal policies, by the definition, we have $J^{\pi^{*}_{1}} = J^{\pi^{*}_{2}} = J^{*}$ and therefore, $T \big|J^{\pi^{*}_{1}}-J^{\pi^{*}_{2}} \big| = 0$. As a result, by Corollary~\ref{cor:two_policies}, the difference $\big|R^{\pi^{*}_{1}}_{T} - R^{\pi^{*}_{2}}_{T} \big|$ satisfies the non-asymptotic concentration rates in Corollary~\ref{cor:two_policies} with the RHS of \eqref{eq:two_policy_1}--\eqref{eq:two_policy_2} being simplified to RHS of \eqref{eq:two_policy_3}--\eqref{eq:two_policy_4}.
\end{proof}

\begin{remark}
    Similar to the Corollary~\ref{cor:finite_return_meta}, by imposing the assumption that $\mathcal{M}$ is communicating or weakly communicating, we can derive the counterpart of \eqref{eq:two_policy_1}--\eqref{eq:two_policy_2} and \eqref{eq:two_policy_3}--\eqref{eq:two_policy_4}  in terms of $D^{\pi}R_{\max}$ respectively. For brevity, we omit this result.
\end{remark}

\subsection{Implication for Learning}\label{sec:implication_regret}

In this section, we present the consequences of our results on the regret of learning algorithms. We characterize the asymptotic and non-asymptotic sample path behavior of the difference between cumulative regret and interim cumulative regret. Recall that for any learning policy $\mu$, this difference is defined as $\mathcal{D}_{T}(\omega) = \bar{\mathcal{R}}^{\mu}_{T}(\omega) - \mathcal{R}^{\mu}_{T}(\omega) $. Similar to Theorem~\ref{thm:asymptotic}, we characterize the asymptotic concentration
rates of $\{\mathcal{D}_{T}(\omega)\}_{T\geq1}$, establishing LLN, CLT and LIL.

\begin{definition}
    Let $\{\Sigma^{*}_{t}\}_{t\geq0}$ denote the random process defined as
    \[
    \Sigma^{*}_0 = 0, \quad \Sigma^{*}_{t} = \sum_{\tau=0}^{t-1}\sigma^{*}(S_{\tau})^2.
    \]
    Corresponding to this process, we define the set $\Omega_{0}^{*}$ as 
    \[
    \Omega_{0}^{*} \coloneqq \Big\{\omega \in \Omega\hspace{0.5ex}:\hspace{0.5ex} \lim_{t\to\infty}\Sigma^{*}_{t}(\omega)= \infty\Big\}.
    \]
\end{definition}

\begin{theorem}\label{thm:asymp_D}
    For any learning policy $\mu$, the difference $\mathcal{D}_{T}(\omega)$ of cumulative regret and interim cumulative regret satisfies following properties.
    \begin{enumerate}
    \item (Law of Large Numbers) The difference almost surely grows sub-linearly, i.e.
    \[
    \lim_{T\to\infty} \frac{\mathcal{D}_T(\omega)}{T} =   0, \quad a.s.
    \]
    \item (Central Limit Theorem) Assume that $\PR(\Omega_{0}^{*}) = 1$. Let stopping time $\nu_{t}$ be defined as
    \(
    \nu_{t} \coloneqq \min \Big\{T \geq 1 \hspace{0.5ex}:\hspace{0.5ex} \Sigma_{T}^{*} \geq t \Big\}.
    \)
    Then  
    \[
    \lim_{T\to\infty} \frac{\mathcal{D}_{\nu_{T}}(\omega)}{\sqrt{T}}  \xrightarrow[]{(d)} \mathcal{N}(0,1).
    \]
    \item (Law of Iterated Logarithm) 
    For almost all $\omega \in \Omega_0^{*}$, we have
    \begin{align}
    \liminf_{T \to \infty} \frac{\mathcal{D}_T(\omega)}{\sqrt{2\Sigma^{*}_{T}\log\log \Sigma^{*}_{T}}} = -1, \quad  
    \limsup_{T \to \infty} \frac{\mathcal{D}_T(\omega)}{\sqrt{2\Sigma^{*}_{T}\log\log \Sigma^{*}_{T}}} = 1.
    \end{align}
\end{enumerate}
\end{theorem}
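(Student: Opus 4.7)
The plan is to recognize that Theorem~\ref{thm:asymp_D} is essentially a direct corollary of the asymptotic results already established for cumulative rewards, because the difference $\mathcal{D}_T(\omega)$ does not actually depend on the learning policy $\mu$. The starting point is identity~\eqref{eq:D(T)}, which gives
\[
\mathcal{D}_T(\omega) = R^{\pi^*}_T(\omega) - TJ^*,
\]
so the analysis reduces to studying the fluctuations of the cumulative reward of the optimal policy $\pi^*$ around its ergodic mean. By Assumption~\ref{ass:exitence_optimal}, there exists $\pi^* \in \Pi_{\AC}$, and Prop.~\ref{prop:ACFP} gives $J^{\pi^*} = \lambda^{\pi^*} = J^*$. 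Hence
\[
\mathcal{D}_T(\omega) = R^{\pi^*}_T(\omega) - TJ^{\pi^*},
\]
which is exactly the quantity whose asymptotic behavior is characterized by Theorem~\ref{thm:asymptotic} (equivalently, Corollary~\ref{cor:asymptotic_optimal}) when that theorem is specialized to $\pi = \pi^*$.

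With this reduction in hand, each of the three statements is obtained by a one-line invocation. For the law of large numbers, \eqref{eq:lln} applied to $\pi^*$ yields $R^{\pi^*}_T/T \to J^*$ almost surely, so $\mathcal{D}_T/T = R^{\pi^*}_T/T - J^* \to 0$ almost surely. For the central limit theorem, the process $\{\Sigma^*_t\}$ and the stopping time $\nu_t$ introduced in the statement of Theorem~\ref{thm:asymp_D} are precisely the policy-specific quantities $\Sigma^{\pi^*}_t$ and $\nu_t^{\pi^*}$ from the definition preceding Theorem~\ref{thm:asymptotic}, so \eqref{eq:clt} applied to $\pi^*$ gives $(R^{\pi^*}_{\nu_T} - \nu_T J^*)/\sqrt{T} = \mathcal{D}_{\nu_T}/\sqrt{T} \xrightarrow{(d)} \mathcal{N}(0,1)$. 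For the law of iterated logarithm, \eqref{eq:lil} applied to $\pi^*$ delivers the $\liminf = -1$, $\limsup = 1$ statement on $\Omega_0^*$ verbatim, since both the numerator $R^{\pi^*}_T - TJ^{\pi^*} = \mathcal{D}_T$ and the denominator $\sqrt{2\Sigma^*_T \log\log \Sigma^*_T}$ match the claim.

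The proof will therefore consist of these three lines of substitution after the one preliminary step of rewriting $\mathcal{D}_T$ using \eqref{eq:D(T)} and Prop.~\ref{prop:ACFP}. There is no real obstacle: the only point that deserves a brief remark is that the sign convention and the symmetry of the three limit theorems (zero limit for LLN, symmetric Gaussian for CLT, $\pm 1$ symmetric LIL envelope) mean that the conclusion is insensitive to whether one uses $\mathcal{D}_T = \mathcal{R}_T - \bar{\mathcal{R}}_T$ or $\bar{\mathcal{R}}_T - \mathcal{R}_T$. In particular, no additional martingale machinery or concentration argument is needed here beyond what has already been developed for Theorem~\ref{thm:asymptotic}.
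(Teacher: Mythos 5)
Your proposal is correct and follows exactly the paper's own proof: rewrite $\mathcal{D}_T(\omega) = R^{\pi^*}_T(\omega) - TJ^*$ via \eqref{eq:D(T)} and then invoke Theorem~\ref{thm:asymptotic} (Corollary~\ref{cor:asymptotic_optimal}) for $\pi^* \in \Pi_{\AC}$. Your side remark that the symmetry of the three limit theorems makes the conclusion insensitive to the sign convention for $\mathcal{D}_T$ is a nice touch, given that the paper states the two opposite conventions in Sections~2 and~3.
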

Proof is presented in App.~\ref{sec:pf_symp_D}.

In addition to the asymptotic results presented in Theorem~\ref{thm:asymp_D}, we present non-asymptotic guarantees for the sequence $\{\mathcal{D}_{T}(\omega)\}_{T\geq1}$. Similar to Theorem~\ref{thm:finite_return_meta}, we characterize the non-asymptotic 
concentration of $\mathcal{D}_{T}(\omega)$ as a function of statistical properties of $V^{*}$ (i.e., $K^{*}$ and $H^{*}$).

\begin{theorem}\label{thm:D_finite}
    The difference of cumulative regret and interim cumulative regret $\mathcal{D}_{T}(\omega)$ satisfies:
    \begin{enumerate} 
        \item For any $\delta \in (0,1)$, with probability at least $1-\delta$, we have
        \[
        \big|\mathcal{D}_{T}(\omega)\big| \leq K^{*}\sqrt{2T\log\!\frac{2}{\delta}} + H^{*}.
        \]
        \item For any $\delta \in (0,1)$, for all $T\geq T^{*}_{0}(\delta) \coloneqq \Bigl\lceil \dfrac{173}{K^*}\log\dfrac{4}{\delta}\Big\rceil $, with probability at least $1-\delta$, we have
        \[
        \big|\mathcal{D}_{T}(\omega)\big| \leq \max\Big\{K^{*}\sqrt{3T\Big(2\log\log\frac{3T}{2} + \log\frac{2}{\delta}\Big)}, (K^{*})^2\Big\} + H^{*}.
        \]
    \end{enumerate}
\end{theorem}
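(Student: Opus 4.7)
The proof plan is quite direct, since most of the heavy lifting is already done by Theorem~\ref{thm:finite_return_meta} and its Corollary~\ref{cor:non_asymptotic_optimal}. The key observation is equation~\eqref{eq:D(T)}, which shows that
\[
\mathcal{D}_T(\omega) = R^{\pi^*}_T(\omega) - TJ^*,
\]
so $\mathcal{D}_T$ is exactly the centered cumulative reward of the optimal policy $\pi^*$, and does not depend on the learning policy $\mu$ at all. The bounds we want to establish for $|\mathcal{D}_T(\omega)|$ are precisely the bounds of Theorem~\ref{thm:finite_return_meta} applied to $\pi = \pi^*$ (with $J^\pi$ replaced by $J^*$ and $(K^\pi, H^\pi)$ replaced by $(K^*, H^*)$).

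First, I would invoke Assumption~\ref{ass:exitence_optimal} to ensure that there exists an optimal policy $\pi^* \in \Pi_{\AC}$. Then the quantities $H^*, K^*, \sigma^*$ are well defined as the corresponding statistical properties of $V^{\pi^*}$, as noted just after equation~\eqref{eq:k}, and finite by Lemma~\ref{lem:inequalities}. Second, I would use the identity $\mathcal{D}_T(\omega) = R^{\pi^*}_T(\omega) - TJ^*$ from~\eqref{eq:D(T)} together with the fact that $J^{\pi^*} = J^*$ (Proposition~\ref{prop:ACFP}) to rewrite
\[
|\mathcal{D}_T(\omega)| = \big|R^{\pi^*}_T(\omega) - TJ^{\pi^*}\big|.
\]

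Third, I would apply Corollary~\ref{cor:non_asymptotic_optimal} (or equivalently Theorem~\ref{thm:finite_return_meta} with $\pi = \pi^*$) to obtain the two concentration bounds: with probability at least $1-\delta$ the right-hand side is bounded by $K^*\sqrt{2T\log(2/\delta)} + H^*$, and for $T \geq T^*_0(\delta) = \lceil(173/K^*)\log(4/\delta)\rceil$ it is bounded by $\max\{K^*\sqrt{3T(2\log\log(3T/2) + \log(2/\delta))},(K^*)^2\} + H^*$ with probability at least $1-\delta$. Substituting the identity from the previous step yields exactly the two displayed inequalities in the statement.

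There is no real obstacle here: the content of the theorem is contained in Theorem~\ref{thm:finite_return_meta} once one recognizes that the dependence on $\mu$ cancels out in $\mathcal{D}_T$. The only subtlety worth flagging is ensuring that the bounds are stated in terms of $K^*$ and $H^*$ (associated with any fixed optimal policy $\pi^* \in \Pi^*$), which is legitimate by Remark~\ref{re:shift_invariant} since these quantities are invariant under the additive non-uniqueness of $V^{\pi^*}$; if $|\Pi^*| > 1$, the bound holds simultaneously for each optimal policy with its own $(K^*, H^*)$, matching the remark made after Assumption~\ref{ass:exitence_optimal}.
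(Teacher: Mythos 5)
Your proposal is correct and follows essentially the same route as the paper: identify $\mathcal{D}_T(\omega) = R^{\pi^*}_T(\omega) - TJ^*$ via \eqref{eq:D(T)} and then apply Corollary~\ref{cor:non_asymptotic_optimal} (i.e., Theorem~\ref{thm:finite_return_meta} specialized to $\pi = \pi^*$). The extra remarks about Assumption~\ref{ass:exitence_optimal}, Proposition~\ref{prop:ACFP}, and shift invariance are consistent with the paper and do not change the argument.
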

Proof is presented in App.~\ref{sec:pf_D_finite}. As mentioned earlier, the difference $\mathcal{D}_T(\omega)$ does not depend on the learning policy $\mu$. Therefore, the results of Theorem~\ref{thm:D_finite} do not depend on the choice of the learning policy either.

In Theorem~\ref{thm:D_finite}, the upper-bounds are established in terms of $K^{*}$ and $H^{*}$. Similar to Corollary~\ref{cor:finite_return_meta}, we can derive upper-bounds in terms of model parameters $D$ and $R_{\max}$ at the cost of loosening the upper-bounds. These bounds are presented in the following Corollary. 
\begin{corollary}\label{cor:D_finite}
    Suppose $\mathcal{M}$ is recurrent, unichain, communicating, or weakly communicating, then $\mathcal{D}_{T}(\omega)$ satisfies following properties.
    \begin{enumerate} 
        \item For any $\delta \in (0,1)$, with probability at least $1-\delta$, we have
        \begin{align*}
            \big|\mathcal{D}_{T}(\omega)\big| \leq DR_{\max}\sqrt{2T\log\!\frac{2}{\delta}} + DR_{\max}.
        \end{align*}
        \item For any $\delta \in (0,1)$, for all $T\geq T_{0}(\delta) \coloneqq \Bigl\lceil \dfrac{173}{DR_{\max}}\log\dfrac{4}{\delta}\Big\rceil$, with  probability at least $1-\delta$, we have
        \begin{align*}
            \big|\mathcal{D}_{T}(\omega)\big| \leq \max\Big\{DR_{\max}\sqrt{3T\Big(2\log\log\frac{3T}{2} + \log\frac{2}{\delta}\Big)}, (DR_{\max})^2\Big\} + DR_{\max}.
        \end{align*}
    \end{enumerate}
\end{corollary}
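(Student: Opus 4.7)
The plan is to deduce Corollary~\ref{cor:D_finite} from Theorem~\ref{thm:D_finite} by replacing the policy-dependent quantities $K^{*}$ and $H^{*}$ with the policy-independent bound $DR_{\max}$. The essential ingredient is Lemma~\ref{lem:inequalities}, item~3, which asserts that whenever $\mathcal{M}$ is weakly communicating and $\pi^{*}\in\Pi_{\AC}$ is any optimal policy, we have $\sigma^{*}(s)\le K^{*}\le H^{*}\le DR_{\max}$. To bring all four hypotheses of the corollary under this umbrella, I would first invoke Proposition~\ref{prop:relation_of_mdps}: recurrent implies unichain implies weakly communicating, and communicating also implies weakly communicating. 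Hence in each of the four cases $\mathcal{M}$ is weakly communicating, and Proposition~\ref{prop:unichain_communicating} guarantees the existence of $\pi^{*}\in\Pi_{\AC}$ so that $K^{*}$ and $H^{*}$ are well-defined.

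For the first bound, the substitution is immediate: both the martingale-variation term and the telescoping correction in the RHS of Theorem~\ref{thm:D_finite} are monotone nondecreasing in $K^{*}$ and $H^{*}$, so replacing each of them by $DR_{\max}$ yields the first claim. The second, LIL-type bound requires a little more care, because the threshold $T^{*}_{0}(\delta)=\lceil\tfrac{173}{K^{*}}\log\tfrac{4}{\delta}\rceil$ in the theorem is actually \emph{larger} than the corollary's threshold $T_{0}(\delta)=\lceil\tfrac{173}{DR_{\max}}\log\tfrac{4}{\delta}\rceil$ (since $K^{*}\le DR_{\max}$), so one cannot simply quote Theorem~\ref{thm:D_finite} as a black box for $T_{0}(\delta)\le T<T^{*}_{0}(\delta)$.

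The fix is to retrace the proof of Theorem~\ref{thm:D_finite}, which applies a uniform time LIL-type martingale concentration inequality to the martingale difference sequence arising from $V^{*}(S_{t+1})-\EXP[V^{*}(S_{+})\mid S_{t},\pi^{*}(S_{t})]$. These differences are almost surely bounded by $K^{*}$, but also by the (possibly larger) constant $DR_{\max}$. Instantiating the same concentration inequality with the uniform bound $DR_{\max}$ in place of $K^{*}$ propagates through the proof to yield both the stated deviation bound and the stated threshold $T_{0}(\delta)$, while the telescoping residual $|V^{*}(S_{0})-V^{*}(S_{T})|\le H^{*}\le DR_{\max}$ takes care of the additive $DR_{\max}$ term at the end. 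The main obstacle is bookkeeping: one must verify that the replacement is valid at every step of the underlying martingale concentration (for both the Bernstein/Azuma-type bound used in part~1 and the time-uniform LIL bound used in part~2), but no new probabilistic argument is required beyond what is already inside Theorem~\ref{thm:D_finite}.
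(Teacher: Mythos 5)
Your proposal is correct and follows essentially the same route as the paper: the paper proves this corollary by reducing all four cases to the weakly communicating one via Proposition~\ref{prop:relation_of_mdps}, invoking Lemma~\ref{lem:inequalities} (item~3) to bound the martingale differences and the telescoping term by $DR_{\max}$, re-instantiating the Azuma and time-uniform LIL inequalities with the uniform constant $DR_{\max}$ in place of $K^{*}$ (which is exactly how it obtains the smaller threshold $T_{0}(\delta)$), and then substituting via the identity $\mathcal{D}_{T}(\omega)=R^{\pi^{*}}_{T}(\omega)-TJ^{*}$. Your observation that the theorem cannot be quoted as a black box because $T^{*}_{0}(\delta)\ge T_{0}(\delta)$ is a correct and careful point that the paper handles implicitly by the same re-instantiation.
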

Proof is presented in App.~\ref{sec:pf_cor_D_finite}.
\begin{remark}
    Notice that conditions of Corollary~\ref{cor:D_finite} are weaker than the conditions of Corollary~\ref{cor:finite_return_meta}.  As a result, Corollary~\ref{cor:D_finite} can be applied to broader classes of $\mathcal{M}$. This difference originates from the difference between items~(2) and (3) in Lemma~\ref{lem:inequalities}.
\end{remark}

In this section, we established probabilistic upper-bounds for the difference between cumulative regret and interim cumulative regret. We showed, asymptotically and non-asymptotically, the growth rate of this difference is upper-bounded by $\tildeO(\sqrt{T})$. This implies that if we establish a regret rate of $\tildeO(\sqrt{T})$ for a learning algorithm $\mu$ using either of the definitions, similar regret rate hold for the algorithm $\mu$ using the other definition. This result is presented in the following theorem.

\begin{theorem}\label{thm:rate_equiv}
For any learning policy $\mu$ we have:
    \begin{enumerate}
        \item The following statements are equivalent.
        \begin{enumerate}
            \item $R^\mu_T(\omega) \leq  \tilde {\mathcal{O}}(\sqrt{T})$, a.s.
            \item $\bar R^\mu_T(\omega) \leq \tilde {\mathcal{O}}(\sqrt{T})$, a.s.
        \end{enumerate}
        \item The following statements are true.
        \begin{enumerate}
            \item Suppose for a learning algorithm $\mu$ and any $\delta \in (0,1)$, there exists a $T_{0}(\delta)$ such that for all $T\geq T_{0}(\delta)$, with probability at least $1-\delta$, we have $R^{\mu}_{T}(\omega) \leq \tilde {\mathcal{O}}(\sqrt{T})$, where $\tildeO(\cdot)$ notation functionally depends upon constants related to $\mathcal{M}$ and $\delta$. Then for any $\delta \in (0,1)$,
            there exists $T_{1}(\delta)$ such that for all $T\geq T_{1}(\delta)$, with probability at least $1-\delta$, we have $\bar{R}^{\mu}_{T}(\omega) \leq \tilde {\mathcal{O}}(\sqrt{T})$.
            \item Suppose for a learning algorithm $\mu$ and any $\delta \in (0,1)$, there exists a $T_{0}(\delta)$ such that for all $T\geq T_{0}(\delta)$, with probability at least $1-\delta$, we have $\bar{R}^{\mu}_{T}(\omega) \leq \tilde {\mathcal{O}}(\sqrt{T})$, where $\tildeO(\cdot)$ notation functionally depends upon constants related to $\mathcal{M}$ and $\delta$. Then for any $\delta \in (0,1)$,
            there exists $T_{1}(\delta)$ such that for all $T\geq T_{1}(\delta)$, with probability at least $1-\delta$, we have $R^{\mu}_{T}(\omega) \leq \tilde {\mathcal{O}}(\sqrt{T})$.
        \end{enumerate}
    \end{enumerate}    
\end{theorem}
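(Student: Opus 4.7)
The proof reduces to a single algebraic identity together with the concentration of $\mathcal{D}_T(\omega)$ already established. From the definitions in~\eqref{eq:reg_def2}--\eqref{eq:reg_def1} and from~\eqref{eq:D(T)}, we have $\bar{\mathcal{R}}^{\mu}_T(\omega) = \mathcal{R}^{\mu}_T(\omega) - \mathcal{D}_T(\omega)$, where $\mathcal{D}_T(\omega)$ depends only on the optimal policy and not on $\mu$. Hence any gap between the two regret notions is controlled by $|\mathcal{D}_T(\omega)|$, for which both an almost-sure and a high-probability $\tildeO(\sqrt{T})$ bound are already in hand.

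For part~(1), the plan is to apply the Law of Iterated Logarithm in Theorem~\ref{thm:asymp_D} to conclude $|\mathcal{D}_T(\omega)| = \OO\bigl(\sqrt{\Sigma_T^{*}\log\log \Sigma_T^{*}}\bigr)$ almost surely on $\Omega_0^{*}$, and since $\Sigma^{*}_T \le (K^{*})^2 T$, this yields $|\mathcal{D}_T(\omega)| = \tildeO(\sqrt{T})$ a.s. If instead $\PR(\Omega_0^{*}) < 1$, then on its complement $\Sigma^{*}_T$ is bounded, so the martingale in App.~\ref{sec:pf_symp_D} converges and $|\mathcal{D}_T(\omega)| = O(1) = \tildeO(\sqrt{T})$ a.s. Combining with $\bar{\mathcal{R}}^{\mu}_T = \mathcal{R}^{\mu}_T - \mathcal{D}_T$ gives the equivalence: if one of $\mathcal{R}^{\mu}_T, \bar{\mathcal{R}}^{\mu}_T$ is $\tildeO(\sqrt{T})$ a.s., so is the other, since the difference is a.s.\ $\tildeO(\sqrt{T})$.

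For part~(2), the plan is to use Theorem~\ref{thm:D_finite} (or Corollary~\ref{cor:D_finite}) together with a union bound at confidence level $\delta/2$. Concretely, given a high-probability upper bound on $\mathcal{R}^{\mu}_T$ at level $1-\delta/2$ for $T \geq T_0(\delta/2)$, combine it with the high-probability upper bound $|\mathcal{D}_T(\omega)| \leq \tildeO(\sqrt{T})$ from Theorem~\ref{thm:D_finite} at level $1-\delta/2$ for $T \geq T^{*}_{0}(\delta/2)$. A union bound gives, with probability at least $1-\delta$, that $\bar{\mathcal{R}}^{\mu}_T(\omega) = \mathcal{R}^{\mu}_T(\omega) - \mathcal{D}_T(\omega) \leq \tildeO(\sqrt{T})$ for all $T \geq T_1(\delta) \coloneqq \max\{T_0(\delta/2), T_0^{*}(\delta/2)\}$. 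The reverse direction is symmetric, swapping the roles of $\mathcal{R}^{\mu}_T$ and $\bar{\mathcal{R}}^{\mu}_T$.

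There is essentially no real obstacle here beyond bookkeeping: the structural work has been done in Theorems~\ref{thm:asymp_D} and~\ref{thm:D_finite}. The only care required is to keep the constants and logarithmic factors hidden in the $\tildeO(\cdot)$ notation consistent between the two sides, and to double the failure probability when invoking the union bound so that $\delta$ in the final statement matches the assumed $\delta$ in the hypothesis.
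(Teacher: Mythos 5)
Your proposal is correct and follows essentially the same route as the paper: Part~1 via the almost-sure $\tildeO(\sqrt{T})$ bound on $\mathcal{D}_T(\omega)$ obtained from the LIL in Theorem~\ref{thm:asymp_D}, and Part~2 via Theorem~\ref{thm:D_finite} combined with a union bound (Lemma~\ref{lem:delta}) at confidence level $\delta/2$ on each piece. Your explicit handling of the complement of $\Omega_0^{*}$ in Part~1 (where $\Sigma^{*}_T$ stays bounded and the martingale converges) is a minor but welcome refinement that the paper's own argument elides.
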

Proof is presented in App.~\ref{sec:pf_thm_rate_equiv}.

\section{Main Results for the Discounted Reward Setup} \label{sec:extension}
In this section, we extend the non-asymptotic concentration results that we established for the average reward setup to the discounted reward setup.
\subsection{System Model}
Consider a discounted reward MDP with state space $\mathcal{S}$ and action space $\mathcal{A}$. Similar to Sec.~\ref{sec:Problem_formulation}, we assume that $\mathcal{S}$ and $\mathcal{A}$ are finite sets. The state evolves in a controlled Markov manner with transition matrix $P$ and at each time $t$, the system yields a per-step reward $r(S_{t},A_{t}) \in [0,R_{\max}]$. Let $\gamma \in (0,1)$ denote the discount factor of the model. The definitions of policies and policy sets $\Pi$ and $\Pi_{\SD}$ are similar to Sec.~\ref{sec:Problem_formulation}. The discounted cumulative reward received by any policy $\pi$ is given by
\[
R^{\pi,\gamma}_{T}\!(\omega) \coloneqq \sum_{t=0}^{T-1}\gamma^{t}r(S_{t},A_{t}), \quad \text{where, } A_{t} = \pi(S_{0:t},A_{0:t-1}), \quad \omega \in \Omega.
\]
Note that $R^{\pi,\gamma}_{T}\!(\omega)$ is a random variable. For this model, the long-run expected discounted reward of policy $\pi \in \Pi_{\SD}$ starting at the state $s \in \mathcal{S}$ is defined as 
\[
V^{\pi}_{\gamma}(s) \coloneqq \EXP^{\pi}\Big[\lim_{T\to\infty}R^{\pi,\gamma}_{T}\bigm|S_{0} = s\Big], \quad \forall s \in \mathcal{S},
\]
where $\EXP^{\pi}$ is the expectation with respect to the joint distribution of all the system variables induced by $\pi$.
We refer to the function $V_{\gamma}^{\pi}$ as the discounted value function corresponding to the policy $\pi$. The optimal performance $V^{*}_{\gamma}$ starting at state $s\in \mathcal{S}$ is defined as
\[
V^{*}_{\gamma}(s) = \sup_{\pi \in \Pi}V^{\pi}_{\gamma}(s), \quad \forall s \in \mathcal{S}.
\]
A policy $\pi^{*}$ is called optimal if 
\[
V^{\pi^{*}}_{\gamma}(s) = V^{*}_{\gamma}(s),\quad \forall s \in \mathcal{S}.
\]
\begin{definition}
    A discounted model $\mathcal{M}$ is said to satisfy DROE (Discounted Reward Optimality Equation) if there exists an optimal discounted value function $V^{*}_{\gamma} : \mathcal{S} \to \reals$ that satisfies:
    \begin{equation}\label{eq:bellman_opt}
    V^{*}_{\gamma}(s) = \max_{a\in \mathcal{A}}\Big[r(s,a) + \gamma\EXP\big[V^{*}_{\gamma}(S_{+})\bigm| s,a\big]\Big], \quad \forall s \in \mathcal{S}.\tag{DROE}   
    \end{equation}
\end{definition}
\begin{definition}
    Given a discounted model $\mathcal{M}$, a policy $\pi \in \Pi_{\SD}$ is said to satisfy DRPE (Discounted Reward Policy Evaluation equation) if there exists a discounted value function $V_{\gamma}^{\pi} : \mathcal{S} \to \reals$ that satisfies:
    \begin{equation}\label{eq:bellman}
    V^{\pi}_{\gamma}(s) = r(s,\pi(s)) + \gamma\EXP\big[V^{\pi}_{\gamma}(S_{+})\bigm|s,\pi(s)\big], \quad \forall s \in \mathcal{S}.\tag{DRPE}   
    \end{equation}
\end{definition}
\begin{proposition}[{\citet[Prop.~1.2.3--1.2.5]{bertsekas2012dynamic}}]\label{prop:bertsekas_discounted}
    For a discounted model $\mathcal{M}$, following statements hold:
    \begin{enumerate}
        \item Any policy $\pi \in \Pi_{\SD}$ satisfies \eqref{eq:bellman}.
        \item\label{itm:pi-star_discount} Let $\pi^{*}$ be any policy such that $\pi^{*}(s)$ is an argmax of the RHS of \eqref{eq:bellman_opt}. Then $\pi^{*}$ is optimal, i.e., for all $s \in \mathcal{S}$, $V^{\pi^{*}}_{\gamma}(s) = V^{*}_{\gamma}(s)$.
        \item The policy $\pi^{*}$ in step~\ref{itm:pi-star} belongs to $\Pi_{\SD}$. In particular, it satisfies \eqref{eq:bellman} with a solution $V^{*}_{\gamma}$.
    \end{enumerate}
\end{proposition}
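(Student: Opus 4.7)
The plan is to establish each of the three items by working with the Bellman operator on the space of bounded functions $V \colon \mathcal{S} \to \reals$ equipped with the sup-norm, and invoking the Banach fixed-point theorem together with a dynamic programming argument.

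For item 1, I would define, for each $\pi \in \Pi_{\SD}$, the operator $\mathcal{T}^{\pi}$ acting on functions $V \colon \mathcal{S} \to \reals$ by $(\mathcal{T}^{\pi} V)(s) = r(s,\pi(s)) + \gamma \EXP[V(S_{+}) \mid s,\pi(s)]$. Since the transition kernel is a probability measure, a one-line computation with the triangle inequality shows $\|\mathcal{T}^{\pi} V_{1} - \mathcal{T}^{\pi} V_{2}\|_{\infty} \leq \gamma \|V_{1} - V_{2}\|_{\infty}$, so $\mathcal{T}^{\pi}$ is a $\gamma$-contraction and hence has a unique fixed point $\hat V^{\pi}$ on the complete space $(\reals^{|\mathcal{S}|}, \|\cdot\|_{\infty})$. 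It remains to check that the function $V^{\pi}_{\gamma}$ defined via the expected discounted reward coincides with $\hat V^{\pi}$. This follows by conditioning on $S_{1}$ and using the tower property along with the Markov property, which yields the identity $V^{\pi}_{\gamma}(s) = r(s,\pi(s)) + \gamma \EXP[V^{\pi}_{\gamma}(S_{+}) \mid s,\pi(s)]$; uniqueness of the fixed point then gives $V^{\pi}_{\gamma} = \hat V^{\pi}$, which is \eqref{eq:bellman}.

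For items 2 and 3, I would similarly define the Bellman optimality operator $\mathcal{T}^{*}$ by $(\mathcal{T}^{*} V)(s) = \max_{a \in \mathcal{A}}\bigl[r(s,a) + \gamma \EXP[V(S_{+}) \mid s,a]\bigr]$. Using the elementary inequality $|\max_{a} f(a) - \max_{a} g(a)| \leq \max_{a} |f(a) - g(a)|$, one checks that $\mathcal{T}^{*}$ is also a $\gamma$-contraction in the sup-norm, so it admits a unique fixed point $\hat V^{*}$. Now if $\pi^{*} \in \Pi_{\SD}$ is any greedy selector of the RHS of \eqref{eq:bellman_opt} evaluated at $\hat V^{*}$, then by construction $\mathcal{T}^{\pi^{*}} \hat V^{*} = \mathcal{T}^{*} \hat V^{*} = \hat V^{*}$, so $\hat V^{*}$ is the unique fixed point of $\mathcal{T}^{\pi^{*}}$, and item 1 gives $V^{\pi^{*}}_{\gamma} = \hat V^{*}$. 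To finish, I must upgrade this to $V^{\pi^{*}}_{\gamma}(s) \geq V^{\pi}_{\gamma}(s)$ for every $\pi \in \Pi$ (not just $\pi \in \Pi_{\SD}$). The cleanest way is a finite-horizon induction: for any history-dependent randomized $\pi$, one shows $\EXP^{\pi}[R^{\pi,\gamma}_{T} + \gamma^{T} \hat V^{*}(S_{T}) \mid S_{0} = s] \leq \hat V^{*}(s)$ by backward induction on $T$, using the key step $\EXP^{\pi}[r(S_{t},A_{t}) + \gamma \hat V^{*}(S_{t+1}) \mid \mathcal{F}_{t}] \leq \hat V^{*}(S_{t})$, which follows from the very definition of $\mathcal{T}^{*}$ applied pointwise to each action. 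Letting $T \to \infty$ and using $\gamma^{T} \|\hat V^{*}\|_{\infty} \to 0$ (the reward bound $R_{\max}$ ensures $\|\hat V^{*}\|_{\infty} \leq R_{\max}/(1-\gamma)$) yields $V^{\pi}_{\gamma}(s) \leq \hat V^{*}(s)$, so $\hat V^{*} = V^{*}_{\gamma}$ and $\pi^{*}$ is optimal.

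The main obstacle is this last step, namely extending the optimality of the greedy stationary policy from $\Pi_{\SD}$ to the full class $\Pi$ of history-dependent randomized policies. The fixed-point machinery alone only selects a best \emph{stationary deterministic} competitor; the finite-horizon induction above, together with the uniform boundedness of the discounted tail, is what permits the comparison against arbitrary $\pi \in \Pi$. Once this upper bound is in hand, items 2 and 3 follow immediately, since $\pi^{*}$ is stationary deterministic by construction and, being a policy in $\Pi_{\SD}$, already satisfies \eqref{eq:bellman} by item 1 with the solution $V^{*}_{\gamma}$.
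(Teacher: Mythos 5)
Your proof is correct. The paper does not prove this proposition at all -- it is imported verbatim from \citet[Prop.~1.2.3--1.2.5]{bertsekas2012dynamic} -- and your argument (contraction of $\mathcal{T}^{\pi}$ and $\mathcal{T}^{*}$ in the sup-norm, identification of $V^{\pi}_{\gamma}$ with the fixed point via the tower/Markov property, and the finite-horizon induction with the vanishing discounted tail to dominate all history-dependent policies in $\Pi$) is essentially the standard proof given in that reference. You correctly flag the one genuinely non-trivial step, namely that the fixed-point machinery alone only compares against stationary deterministic competitors and the induction is what extends optimality over all of $\Pi$.
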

\subsection{Sample Path Characteristics of Any Policy}\label{sec:dis_any_policy}
For any policy $\pi \in \Pi_{\SD}$, we define following statistical properties of the discounted value function $V^{\pi}_{\gamma}$.
\begin{enumerate}
    \item Span of the discounted value function $V^{\pi}_{\gamma}$ given by
    \begin{equation}\label{eq:H_discount}
        H^{\pi,\gamma} \coloneqq \spand(V^{\pi}_{\gamma}) = \max_{s\in \mathcal{S}}V^{\pi}_{\gamma}(s) - \min_{s\in \mathcal{S}}V^{\pi}_{\gamma}(s) .
    \end{equation}
    \item Maximum absolute deviation of the discounted value function $V^{\pi}_{\gamma}$ is given by
        \begin{equation}\label{eq:k_discount}
        K^{\pi,\gamma} \coloneqq \max_{s,s_{+} \in \mathcal{S}}\!\Big|V^{\pi}_{\gamma}(s_{+}) - \EXP\big[V^{\pi}_{\gamma}(S_{+})\bigm|s,\pi(s)\big]\Big|.
        \end{equation} 
\end{enumerate}
For any optimal policy $\pi^* \in \Pi_{\SD}$, we denote these corresponding quantities by $H^{*,\gamma}$, and  $K^{*,\gamma}$. Similar to the results in Theorem~\ref{thm:finite_return_meta_new} for the average reward setup, we can derive non-asymptotic concentration results for the discounted reward setup. These results are presented in the following theorem. To simplify the notation, let 
\[
 f^{\gamma}(T) \coloneqq \sum_{t=1}^{T}\gamma^{2t} =  \frac{\gamma^{2}-\gamma^{2T+2}}{1-\gamma^2}.
\]
An immediate implication of the definitions of $R^{\pi,\gamma}_{T}$ and $V_{\gamma}^{\pi}(s)$ is that 
\[
\EXP\Big[R^{\pi,\gamma}_{T} + \gamma^{T}V_{\gamma}^{\pi}(S_{T}) - V_{\gamma}^{\pi}(S_{0})\Big] = 0.
\]
In this section, we show that with high-probability $R^{\pi,\gamma}_{T}$ concentrates around $V_{\gamma}^{\pi}(S_{0}) - \gamma^{T}V_{\gamma}^{\pi}(S_{T})$ and characterize the concentration rate.   
\begin{theorem}\label{thm:discounted_non_asymp}
    For any policy $\pi \in  \Pi_{\SD}$ and any $s \in \mathcal{S}$, we have:
    \begin{enumerate} 
        \item For any $\delta \in (0,1)$, with probability at least $1-\delta$, we have
        \begin{align}\label{eq:discounted_finite1}
        \Big|R^{\pi,\gamma}_{T} - \big(V^{\pi}_{\gamma}(S_{0})-\gamma^{T}V^{\pi}_{\gamma}(S_{T})\big)\Big| \leq K^{\pi,\gamma} \sqrt{2f^{\gamma}(T)\log\frac{2}{\delta}}. 
        \end{align}
        \item For any $\delta \in (0,1)$, if $\lim_{T' \to \infty}f^{\gamma}(T') > \dfrac{173}{K^{\pi,\gamma}}\log\dfrac{4}{\delta}$, then for all $T\geq T_{0}(\delta)\coloneqq \min\Big\{T' \geq 1 : f^{\gamma}(T') > \dfrac{173}{K^{\pi,\gamma}}\log\dfrac{4}{\delta}\Big\}$, with probability at least $1-\delta$, we have
        \begin{align}\label{eq:discounted_finite2}
        \Big|R^{\pi,\gamma}_{T}&-\big(V^{\pi}_{\gamma}(S_{0})-\gamma^{T}V^{\pi}_{\gamma}(S_{T})\big)\Big| \notag  \\ &\leq \max\bigg\{ K^{\pi,\gamma} \sqrt{3f^{\gamma}(T)\Big(2\log\log \big(\frac{3}{2}f^{\gamma}(T)\big) +\log\frac{2}{\delta}\Big)},(K^{\pi,\gamma})^{2}\bigg\}.     
        \end{align}
    \end{enumerate}
\end{theorem}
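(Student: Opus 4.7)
The plan is to mirror the approach used in the average reward case (Theorem~\ref{thm:finite_return_meta_new}) by constructing a martingale decomposition from the \eqref{eq:bellman} equation, then applying Azuma–Hoeffding and a non-asymptotic law-of-iterated-logarithm inequality to the resulting weighted martingale sum. By Prop.~\ref{prop:bertsekas_discounted}, any $\pi \in \Pi_{\SD}$ satisfies \eqref{eq:bellman}, so rearranging yields
\[
r(S_t,\pi(S_t)) = V^{\pi}_{\gamma}(S_t) - \gamma\EXP\bigl[V^{\pi}_{\gamma}(S_{t+1})\bigm|S_t\bigr].
\]
Multiplying by $\gamma^t$, summing from $t=0$ to $T-1$, and inserting and subtracting $\gamma^{t+1}V^{\pi}_{\gamma}(S_{t+1})$ inside each term produces a telescoping part equal to $V^{\pi}_{\gamma}(S_0) - \gamma^T V^{\pi}_{\gamma}(S_T)$ and a remainder that is a weighted martingale. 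Concretely, I expect to obtain
\[
R^{\pi,\gamma}_{T} - \bigl(V^{\pi}_{\gamma}(S_{0}) - \gamma^{T} V^{\pi}_{\gamma}(S_{T})\bigr) = \sum_{t=0}^{T-1}\gamma^{t+1}\bigl(V^{\pi}_{\gamma}(S_{t+1}) - \EXP[V^{\pi}_{\gamma}(S_{t+1})\mid S_t]\bigr).
\]

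Set $X_t \coloneqq V^{\pi}_{\gamma}(S_{t+1}) - \EXP[V^{\pi}_{\gamma}(S_{t+1})\mid S_t]$, and $D_t \coloneqq \gamma^{t+1} X_t$. Then $\{D_t\}$ is a martingale difference sequence adapted to $\sigma(S_{0:t+1})$, and by the very definition of $K^{\pi,\gamma}$ in~\eqref{eq:k_discount}, we have $|X_t| \le K^{\pi,\gamma}$, hence $|D_t| \le \gamma^{t+1} K^{\pi,\gamma}$. The sum of squared deterministic bounds is then
\[
\sum_{t=0}^{T-1}(\gamma^{t+1}K^{\pi,\gamma})^2 = f^{\gamma}(T)\,(K^{\pi,\gamma})^2,
\]
which is precisely the ``effective sample size'' factor appearing in the theorem.

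For part (1), I would apply the Azuma–Hoeffding inequality to $\sum_{t=0}^{T-1} D_t$ with the per-step bounds $\gamma^{t+1} K^{\pi,\gamma}$; solving the exponential tail for confidence $\delta$ yields the bound $K^{\pi,\gamma}\sqrt{2 f^{\gamma}(T)\log(2/\delta)}$ directly. For part (2), I would invoke a finite-time law-of-iterated-logarithm inequality for bounded martingales (presumably the one stated in App.~\ref{sec:Martingales} and used in the proof of Theorem~\ref{thm:finite_return_meta_new}), applied with quadratic-variation proxy $f^{\gamma}(T)(K^{\pi,\gamma})^2$ in place of $T(K^\pi)^2$. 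The threshold condition $f^{\gamma}(T') > \frac{173}{K^{\pi,\gamma}}\log\frac{4}{\delta}$ then arises by substituting $f^{\gamma}(T)$ for the index variable in the scalar threshold that appears in the average-reward version.

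The main conceptual obstacle is ensuring that the finite-time LIL martingale inequality can be applied with geometrically weighted, \emph{time-inhomogeneous} bounds rather than uniform ones; the invariant one needs is that the bound is expressed in terms of the running quadratic-variation proxy, with $f^{\gamma}(T)$ playing the role of the deterministic envelope that $T$ plays in the average-reward setting. The technical care is in translating the integer time index $T$ in App.~\ref{sec:Martingales}'s inequality into the continuous quantity $f^{\gamma}(T)$, which also explains why the threshold on $T$ in part~(2) is phrased as the smallest integer at which $f^{\gamma}$ crosses a certain level, and why the condition $\lim_{T'\to\infty}f^{\gamma}(T') > \frac{173}{K^{\pi,\gamma}}\log\frac{4}{\delta}$ must be required (since $f^{\gamma}$ is bounded by $\gamma^2/(1-\gamma^2)$). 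The remaining steps are routine algebra to match the form in~\eqref{eq:discounted_finite1}--\eqref{eq:discounted_finite2}.
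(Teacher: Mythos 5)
Your proposal is correct and follows essentially the same route as the paper: the paper likewise derives the telescoping martingale decomposition $R^{\pi,\gamma}_{T} = \sum_{t=1}^{T}\gamma^{t}N^{\pi,\gamma}_{t} + V^{\pi}_{\gamma}(S_{0})-\gamma^{T}V^{\pi}_{\gamma}(S_{T})$ from \eqref{eq:bellman}, bounds the increments by $\gamma^{t}K^{\pi,\gamma}$ so that the quadratic-variation envelope is $f^{\gamma}(T)(K^{\pi,\gamma})^{2}$, and then applies Corollary~\ref{cor:azuma} for Part~1 and Theorem~\ref{thm:non_asym_LIL} (with the case split producing the $\max\{\cdot,(K^{\pi,\gamma})^{2}\}$ term) for Part~2. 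Your remarks about the sequentially bounded, time-inhomogeneous increments and the boundedness of $f^{\gamma}$ forcing the extra hypothesis in Part~2 match the paper's treatment exactly.
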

The proof is presented in App.~\ref{app:pf_discount}.
\begin{corollary}\label{cor:discounted_non_asymp2}
    For any policy $\pi \in  \Pi_{\SD}$ and any $s \in \mathcal{S}$, we have:
    \begin{enumerate} 
        \item For any $\delta \in (0,1)$, with probability at least $1-\delta$, we have
        \begin{align}\label{eq:discounted_finite3}
        \Big|R^{\pi,\gamma}_{T} - V^{\pi}_{\gamma}(S_{0})\Big| \leq K^{\pi,\gamma} \sqrt{2f^{\gamma}(T)\log\frac{2}{\delta}} + \frac{\gamma^{T}}{1-\gamma}R_{\max}. 
        \end{align}
        \item For any $\delta \in (0,1)$, if $\lim_{T' \to \infty}f^{\gamma}(T') > \dfrac{173}{K^{\pi,\gamma}}\log\dfrac{4}{\delta}$, then for all $T\geq T_{0}(\delta)\coloneqq \min\Big\{T' \geq 1 : f^{\gamma}(T') > \dfrac{173}{K^{\pi,\gamma}}\log\dfrac{4}{\delta}\Big\}$, with probability at least $1-\delta$, we have
        \begin{align}\label{eq:discounted_finite4}
        \Big|R^{\pi,\gamma}_{T}&-V^{\pi}_{\gamma}(S_{0})\Big| \notag  \\ &\leq \max\bigg\{ K^{\pi,\gamma} \sqrt{3f^{\gamma}(T)\big(2\log\log (\frac{3}{2}f^{\gamma}(T)) +\log\frac{2}{\delta}\big)},(K^{\pi,\gamma})^{2}\bigg\} + \frac{\gamma^{T}}{1-\gamma}R_{\max}.
        \end{align}
    \end{enumerate}  
\end{corollary}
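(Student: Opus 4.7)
The plan is to reduce this corollary to Theorem~\ref{thm:discounted_non_asymp} via a simple triangle inequality, absorbing the extra term $\gamma^{T}V^{\pi}_{\gamma}(S_{T})$ into a deterministic additive correction. Specifically, I would write
\[
\big|R^{\pi,\gamma}_{T} - V^{\pi}_{\gamma}(S_{0})\big|
\leq \big|R^{\pi,\gamma}_{T} - \big(V^{\pi}_{\gamma}(S_{0}) - \gamma^{T}V^{\pi}_{\gamma}(S_{T})\big)\big|
+ \gamma^{T}\big|V^{\pi}_{\gamma}(S_{T})\big|,
\]
and then bound each of the two terms separately: the first by directly invoking \eqref{eq:discounted_finite1} (resp.~\eqref{eq:discounted_finite2}) on the same high-probability event, and the second by a deterministic uniform bound on the discounted value function.

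For the deterministic bound, I would observe that because $r(s,a) \in [0,R_{\max}]$ and $\gamma \in (0,1)$, the series defining $V^{\pi}_{\gamma}(s) = \EXP^{\pi}\big[\sum_{t \ge 0}\gamma^{t}r(S_{t},A_{t}) \mid S_{0}=s\big]$ is termwise nonnegative and bounded by a geometric series, giving $0 \le V^{\pi}_{\gamma}(s) \le R_{\max}/(1-\gamma)$ for every $s \in \mathcal{S}$. Consequently $\gamma^{T}|V^{\pi}_{\gamma}(S_{T})| \le \gamma^{T}R_{\max}/(1-\gamma)$ surely, independent of the sample path.

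Putting these two pieces together on the event of probability at least $1-\delta$ furnished by Theorem~\ref{thm:discounted_non_asymp}, I obtain \eqref{eq:discounted_finite3} in part~(1) directly, and likewise \eqref{eq:discounted_finite4} in part~(2) under the same condition $\lim_{T' \to \infty} f^{\gamma}(T') > (173/K^{\pi,\gamma})\log(4/\delta)$ and for all $T \ge T_{0}(\delta)$, since the threshold $T_{0}(\delta)$ and the bound structure carry over verbatim from the theorem. No new probabilistic argument is required, and I do not expect any substantive obstacle; the only point worth stating carefully is that the additive term $\gamma^{T}R_{\max}/(1-\gamma)$ is a pathwise (deterministic) bound, so applying the triangle inequality does not change the confidence level $1-\delta$.
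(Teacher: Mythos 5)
Your proposal is correct and follows essentially the same route as the paper: the paper's proof in App.~\ref{app:pf_discount_2} likewise applies the triangle inequality to split off $\gamma^{T}V^{\pi}_{\gamma}(S_{T})$, bounds the remaining term by the martingale estimates underlying Theorem~\ref{thm:discounted_non_asymp}, and uses the deterministic bound $0 \le V^{\pi}_{\gamma}(s) \le R_{\max}/(1-\gamma)$. Your observation that the additive correction is pathwise deterministic and hence does not degrade the confidence level is exactly the point the paper relies on as well.
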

The proof is presented in App.~\ref{app:pf_discount_2}.
\begin{corollary}\label{cor:discounted_non_asymp}
    For any optimal policy $\pi^{*} \in \Pi_{\SD}$, the discounted cumulative reward $R^{\pi^{*}\!\!,\gamma}_{T}(\omega)$ satisfies the non-asymptotic concentration rates in \eqref{eq:discounted_finite1}--\eqref{eq:discounted_finite4}, where in the LHS, $V^{\pi}_{\gamma}(s)$ is replaced with $V^{*}_{\gamma}(s)$ and in the statement and RHS, $K^{\pi,\gamma}$ is replaced with $K^{*,\gamma}$.
\end{corollary}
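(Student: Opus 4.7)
The plan is to observe that Corollary~\ref{cor:discounted_non_asymp} follows as a direct specialization of Theorem~\ref{thm:discounted_non_asymp} and Corollary~\ref{cor:discounted_non_asymp2} to an optimal stationary deterministic policy, mirroring how Corollaries~\ref{cor:asymptotic_optimal} and \ref{cor:non_asymptotic_optimal} were derived from Theorem~\ref{thm:asymptotic} and Theorem~\ref{thm:finite_return_meta} in the average-reward setting.

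First, I would invoke Proposition~\ref{prop:bertsekas_discounted}, item~\ref{itm:pi-star_discount}, which asserts that any policy $\pi^{*}$ formed by choosing, at each state, an argmax of the right-hand side of \eqref{eq:bellman_opt} is optimal and lies in $\Pi_{\SD}$. In particular $\pi^{*}$ satisfies \eqref{eq:bellman} with value function $V^{\pi^{*}}_{\gamma} = V^{*}_{\gamma}$. By the definitions of $H^{\pi,\gamma}$ and $K^{\pi,\gamma}$ in \eqref{eq:H_discount}--\eqref{eq:k_discount}, this identification of the value function immediately yields $H^{\pi^{*}\!,\gamma} = H^{*,\gamma}$ and $K^{\pi^{*}\!,\gamma} = K^{*,\gamma}$.

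Next I would simply apply Theorem~\ref{thm:discounted_non_asymp} with $\pi = \pi^{*}$: since $\pi^{*} \in \Pi_{\SD}$, the hypothesis of the theorem holds, and the two high-probability bounds \eqref{eq:discounted_finite1}--\eqref{eq:discounted_finite2} hold verbatim with $V^{\pi}_{\gamma}$ replaced by $V^{*}_{\gamma}$ and $K^{\pi,\gamma}$ replaced by $K^{*,\gamma}$. Likewise, applying Corollary~\ref{cor:discounted_non_asymp2} with $\pi = \pi^{*}$ yields the bounds \eqref{eq:discounted_finite3}--\eqref{eq:discounted_finite4} with the same substitutions; the additive term $\gamma^{T}R_{\max}/(1-\gamma)$ is policy-independent and therefore carries over unchanged.

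There is essentially no obstacle here: the only subtlety worth recording explicitly is that $\pi^{*} \in \Pi_{\SD}$ (so that Theorem~\ref{thm:discounted_non_asymp} is applicable), and that in the discounted setting every stationary deterministic policy automatically satisfies \eqref{eq:bellman}, so no analogue of the class $\Pi_{\AC}$ is needed. The rest is notational substitution. I would therefore present the proof as a two-line argument: invoke Proposition~\ref{prop:bertsekas_discounted} to place $\pi^{*}$ in $\Pi_{\SD}$ with value function $V^{*}_{\gamma}$, then invoke Theorem~\ref{thm:discounted_non_asymp} and Corollary~\ref{cor:discounted_non_asymp2} with $\pi = \pi^{*}$.
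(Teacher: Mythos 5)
Your proposal is correct and matches the paper's own argument, which likewise observes that $\pi^{*} \in \Pi_{\SD}$ and then applies Theorem~\ref{thm:discounted_non_asymp} and Corollary~\ref{cor:discounted_non_asymp2} directly. The extra detail you supply (Proposition~\ref{prop:bertsekas_discounted} identifying $V^{\pi^{*}}_{\gamma} = V^{*}_{\gamma}$ and hence $K^{\pi^{*}\!,\gamma} = K^{*,\gamma}$) is a harmless elaboration of the same substitution step.
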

\begin{proof}
    Since $\pi^{*}$ is in $\Pi_{\SD}$, by Theorem~\ref{thm:discounted_non_asymp} and Corollary~\ref{cor:discounted_non_asymp2}, the optimal policy satisfies the non-asymptotic concentration rates in \eqref{eq:discounted_finite1}--\eqref{eq:discounted_finite4}. 
\end{proof}
\subsection{Sample Path Behavior of Performance Difference of Two Stationary Policies}
As an implication of the results presented in the Sec.~\ref{sec:dis_any_policy}, we characterize the sample path
behavior of the difference in discounted cumulative rewards between any two stationary policies. As
a consequence, we derive the non-asymptotic concentration of the difference in rewards
between any two optimal policies. These concentration bounds are presented in the following
two corollaries. 

\begin{corollary}\label{cor:two_policies_discounted}
    Consider two policies $\pi_{1},\pi_{2} \in \Pi_{\SD}$. Let $\{S_{t}^{\pi_{1}}\}_{t\geq0}$ and $\{S_{t}^{\pi_{2}}\}_{t\geq0}$ denote the random sequences of the states encountered by policy $\pi_{1}$ and $\pi_{2}$ respectively. Following upper-bounds hold for the difference between the discounted cumulative reward received by the two policies.  
    \begin{enumerate}
        \item For any $\delta \in (0,1)$, with probability at least $1-\delta$, we have
        \begin{align}\label{eq:two_policy_1_dis}
         &\phantom{\leq}\Big|\big|R^{\pi_{1},\gamma}_{T}-R^{\pi_{2},\gamma}_{T}\big| - \big|\big[V^{\pi_{1}}_{\gamma}(S_{0}^{\pi_1}) - \gamma^{T}V^{\pi_{1}}_{\gamma}(S_{T}^{\pi_1}) \big]-\big[V^{\pi_{2}}_{\gamma}(S_{0}^{\pi_2}) - \gamma^{T}V^{\pi_{2}}_{\gamma}(S_{T}^{\pi_2}) \big] \big| \Big|\notag \\&\leq K^{\pi_{1},\gamma}\sqrt{2f^{\gamma}(T)\log\frac{4}{\delta}} + K^{\pi_{2},\gamma}\sqrt{2f^{\gamma}(T)\log\frac{4}{\delta}}.
        \end{align}
        \item For any $\delta \in (0,1)$, if $\lim_{T' \to \infty}f^{\gamma}(T') > \dfrac{173}{K^{\pi,\gamma}}\log\dfrac{4}{\delta}$, define $T_{0}^{\pi_{i}}(\frac{\delta}{2})$ as
        \begin{equation}\label{eq:def_T_pi}
        T_{0}^{\pi_{i}}(\frac{\delta}{2})\coloneqq \min\Big\{T'\geq 1 : f^{\gamma}(T') > \dfrac{173}{K^{\pi_{i},\gamma}}\log\dfrac{8}{\delta}\Big\}, \quad  i \in \{1,2\}.  
        \end{equation}
         Then, for all $T \geq T^\pi_{0}(\delta) \coloneqq  \max\Big\{T_{0}^{\pi_{1}}(\frac{\delta}{2}),T_{0}^{\pi_{2}}(\frac{\delta}{2})\Big\} $, with probability at least $1-\delta$, we have
        \begin{align}\label{eq:two_policy_2_dis}
             &\Big|\big|R^{\pi_{1},\gamma}_{T}-R^{\pi_{2},\gamma}_{T}\big| - \big|\big[V^{\pi_{1}}_{\gamma}(S_{0}^{\pi_1}) - \gamma^{T}V^{\pi_{1}}_{\gamma}(S_{T}^{\pi_1}) \big]-\big[V^{\pi_{2}}_{\gamma}(S_{0}^{\pi_2}) - \gamma^{T}V^{\pi_{2}}_{\gamma}(S_{T}^{\pi_2}) \big] \big| \Big| \notag  \notag\\ \leq
            &\max\Big\{K^{\pi_{1},\gamma}\sqrt{3f^{\gamma}(T)\Big(2\log\log\frac{3}{2}f^{\gamma}(T) + \log\frac{4}{\delta}\Big)}, (K^{\pi_{1},\gamma})^2\Big\} \notag\\ 
            + &\max\Big\{K^{\pi_{2},\gamma}\sqrt{3f^{\gamma}(T)\Big(2\log\log\frac{3}{2}f^{\gamma}(T) + \log\frac{4}{\delta}\Big)}, (K^{\pi_{2},\gamma})^2\Big\}.
        \end{align}
    \end{enumerate}
\end{corollary}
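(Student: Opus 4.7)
The plan is to reduce Corollary~\ref{cor:two_policies_discounted} to two instances of Theorem~\ref{thm:discounted_non_asymp}, one for each policy, combined via a union bound and the reverse triangle inequality. Concretely, writing $C_i \coloneqq V^{\pi_i}_{\gamma}(S_{0}^{\pi_i}) - \gamma^{T}V^{\pi_i}_{\gamma}(S_{T}^{\pi_i})$ for $i \in \{1,2\}$, the reverse triangle inequality gives
\[
\Big|\,\big|R^{\pi_{1},\gamma}_{T} - R^{\pi_{2},\gamma}_{T}\big| - \big|C_{1} - C_{2}\big|\,\Big|
\;\le\; \big|(R^{\pi_{1},\gamma}_{T} - C_{1}) - (R^{\pi_{2},\gamma}_{T} - C_{2})\big|
\;\le\; \big|R^{\pi_{1},\gamma}_{T} - C_{1}\big| + \big|R^{\pi_{2},\gamma}_{T} - C_{2}\big|.
\]
Thus it suffices to control each term $|R^{\pi_{i},\gamma}_{T} - C_{i}|$ individually and add the bounds.

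For part (1), I would invoke Theorem~\ref{thm:discounted_non_asymp}\,(1) twice, once for $\pi_{1}$ and once for $\pi_{2}$, each at confidence level $\delta/2$. Each application yields $|R^{\pi_{i},\gamma}_{T} - C_{i}| \le K^{\pi_{i},\gamma}\sqrt{2f^{\gamma}(T)\log(4/\delta)}$ with probability at least $1-\delta/2$. A union bound shows that both inequalities hold simultaneously with probability at least $1-\delta$, and combining them via the displayed reverse-triangle bound above yields exactly \eqref{eq:two_policy_1_dis}.

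For part (2), I would apply Theorem~\ref{thm:discounted_non_asymp}\,(2) to each policy at level $\delta/2$. The LIL-type bound for $\pi_{i}$ requires $T \ge T_{0}^{\pi_{i}}(\delta/2)$, which matches the definition in \eqref{eq:def_T_pi}. Therefore, for $T \ge \max\{T_{0}^{\pi_{1}}(\delta/2),\,T_{0}^{\pi_{2}}(\delta/2)\}$, both bounds hold simultaneously with probability at least $1-\delta$ by a union bound, and summing them via the reverse-triangle inequality produces \eqref{eq:two_policy_2_dis}.

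There is no real obstacle here — the argument is a straightforward reverse-triangle-plus-union-bound wrapper around Theorem~\ref{thm:discounted_non_asymp}. The only minor care needed is bookkeeping: replacing $\delta$ by $\delta/2$ in Theorem~\ref{thm:discounted_non_asymp} changes the constant inside the logarithm from $2/\delta$ to $4/\delta$ (and $4/\delta$ to $8/\delta$ in the threshold defining $T_{0}^{\pi_{i}}$), which lines up exactly with the constants appearing in the statement and in \eqref{eq:def_T_pi}. One subtle point worth stating explicitly is that the two bounds pertain to distinct state trajectories $\{S_t^{\pi_1}\}$ and $\{S_t^{\pi_2}\}$, but since both policies live on the same probability space $(\Omega,\mathcal F,\PR)$ the union bound still applies without modification.
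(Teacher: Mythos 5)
Your proposal is correct and follows essentially the same route as the paper: reduce to the two single-policy deviations $|R^{\pi_i,\gamma}_T - C_i|$, apply Theorem~\ref{thm:discounted_non_asymp} to each at level $\delta/2$, and combine with a union bound (the paper's Lemma~\ref{lem:delta}). The only cosmetic difference is that you obtain the key inequality $\bigl||R^{\pi_1,\gamma}_T-R^{\pi_2,\gamma}_T|-|C_1-C_2|\bigr|\le |R^{\pi_1,\gamma}_T-C_1|+|R^{\pi_2,\gamma}_T-C_2|$ in one step via the reverse triangle inequality, whereas the paper derives the same bound from two forward triangle-inequality chains.
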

The proof if presented in App.~\ref{sec:pf_cor_two_policies_discounted}.

\begin{corollary}\label{cor:dis_two_optimal}
    Consider two optimal policies $\pi^{*}_{1},\pi^{*}_{2}\in \Pi_{\SD}$. Let $\{S_{t}^{\pi_{1}^{*}}\}_{t\geq0}$ and $\{S_{t}^{\pi_{2}^{*}}\}_{t\geq0}$ denote the random sequences of states encountered by optimal policies $\pi_{1}^{*}$ and $\pi_{2}^{*}$. To simplify the expression, we assume the system starts at a fixed initial state, i.e., $S_{0}^{\pi_{1}^{*}} = S_{0}^{\pi_{2}^{*}}$. Then for the difference between discounted cumulative rewards received by the two optimal policies $\big|R^{\pi^{*}_{1},\gamma}_{T} - R^{\pi^{*}_{2},\gamma}_{T} \big|$, we have:
    \begin{enumerate}
        \item For any $\delta \in (0,1)$, with probability at least $1-\delta$, we have
        \begin{align}\label{eq:two_policy_3_dis}
        \Big|\big|R^{\pi^{*}_{1},\gamma}_{T}-R^{\pi^{*}_{2},\gamma}_{T} \big| - \gamma^{T}\big|V_{\gamma}^{*}(S_{T}^{\pi_{2}^{*}}) - V_{\gamma}^{*}(S_{T}^{\pi_{1}^{*}}) \big|\Big| \leq  2\Big(K^{*,\gamma}\sqrt{2f^{\gamma}(T)\log\frac{4}{\delta}} \Big).
        \end{align}
        \item Consider $T_{0}^{\pi^{*}}\!(\frac{\delta}{2})$ defined in \eqref{eq:def_T_pi}. For any $\delta \in (0,1)$, for all $T\geq T_{0}^{\pi^{*}}\!(\frac{\delta}{2})$, with probability at least $1-\delta$, we have
        \begin{align}\label{eq:two_policy_4_dis}
        &\Big|\big|R^{\pi^{*}_{1},\gamma}_{T}-R^{\pi^{*}_{2},\gamma}_{T} \big| - \gamma^{T}\big|V_{\gamma}^{*}(S_{T}^{\pi_{2}^{*}}) - V_{\gamma}^{*}(S_{T}^{\pi_{1}^{*}}) \big|\Big| \notag \\ &\leq 
        2\Big(\max\Big\{K^{*,\gamma}\sqrt{3f^{\gamma}(T)\Big(2\log\log\big(\frac{3}{2}f^{\gamma}(T)\big) + \log\frac{4}{\delta}\Big)}, (K^{*,\gamma})^2\Big\} \Big).
        \end{align}
    \end{enumerate}
\end{corollary}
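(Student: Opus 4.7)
The plan is to invoke Corollary~\ref{cor:two_policies_discounted} directly with the choices $\pi_{1} = \pi_{1}^{*}$ and $\pi_{2} = \pi_{2}^{*}$, and then simplify the LHS of \eqref{eq:two_policy_1_dis}--\eqref{eq:two_policy_2_dis} using optimality together with the shared initial state assumption. Both parts of the corollary follow by the same reduction, so I would prove them in parallel.

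The first step is to observe that Proposition~\ref{prop:bertsekas_discounted} guarantees that every optimal policy satisfies \eqref{eq:bellman} with the same optimal discounted value function, so $V_{\gamma}^{\pi_{1}^{*}} = V_{\gamma}^{\pi_{2}^{*}} = V_{\gamma}^{*}$ pointwise on $\mathcal{S}$. Consequently the deviation constants defined in \eqref{eq:k_discount} agree for both policies, i.e.\ $K^{\pi_{1}^{*},\gamma} = K^{\pi_{2}^{*},\gamma} = K^{*,\gamma}$, and the thresholds defined in \eqref{eq:def_T_pi} also coincide and both equal $T_{0}^{\pi^{*}}(\tfrac{\delta}{2})$. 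This justifies collapsing the two maxima appearing on the RHS of \eqref{eq:two_policy_2_dis} into a single expression with a factor of $2$, and similarly for the square-root terms in \eqref{eq:two_policy_1_dis}.

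The second step is to simplify the inner bracketed expression in the LHS of \eqref{eq:two_policy_1_dis}--\eqref{eq:two_policy_2_dis}. Using $V_{\gamma}^{\pi_{i}^{*}} = V_{\gamma}^{*}$ and the assumption $S_{0}^{\pi_{1}^{*}} = S_{0}^{\pi_{2}^{*}}$, the two $V_{\gamma}^{*}(S_{0})$ terms cancel and what remains is $\gamma^{T}\bigl(V_{\gamma}^{*}(S_{T}^{\pi_{2}^{*}}) - V_{\gamma}^{*}(S_{T}^{\pi_{1}^{*}})\bigr)$, whose absolute value is exactly the term subtracted on the LHS of \eqref{eq:two_policy_3_dis}--\eqref{eq:two_policy_4_dis}. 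Substituting this identity back into the inequality from Corollary~\ref{cor:two_policies_discounted} produces the stated bounds with the prefactor $2K^{*,\gamma}\sqrt{\,\cdot\,}$ in part~(1) and the corresponding LIL-type expression in part~(2).

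I do not expect any real obstacle here: the corollary is essentially a routine specialization of Corollary~\ref{cor:two_policies_discounted} to a pair of optimal policies sharing a common initial state. The only bookkeeping item worth being careful about is that the stopping threshold is well-defined and the same for both policies, which follows automatically from $K^{\pi_{1}^{*},\gamma} = K^{\pi_{2}^{*},\gamma}$; otherwise the argument is a one-line substitution.
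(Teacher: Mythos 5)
Your proposal is correct and follows essentially the same route as the paper's proof: specialize Corollary~\ref{cor:two_policies_discounted} to $\pi_1^*,\pi_2^*$, use $V_\gamma^{\pi_1^*}=V_\gamma^{\pi_2^*}=V_\gamma^*$ (hence $K^{\pi_1^*,\gamma}=K^{\pi_2^*,\gamma}=K^{*,\gamma}$ and a common threshold) together with $S_0^{\pi_1^*}=S_0^{\pi_2^*}$ to cancel the initial-state terms, leaving $\gamma^T\big|V_\gamma^*(S_T^{\pi_2^*})-V_\gamma^*(S_T^{\pi_1^*})\big|$ on the left and the doubled bound on the right. No gaps.
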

\begin{proof}
    Since both policies $\pi^{*}_{1}, \pi^{*}_{2} \in \Pi_{\SD}$ are optimal policies, by the definition, we have
    \[
    V^{\pi^{*}_{1}}_{\gamma}(s) = V^{\pi^{*}_{2}}_{\gamma}(s) = V^{*}_{\gamma}(s), \quad \forall s \in \mathcal{S}, \hspace{2.0mm} \forall \gamma \in (0,1).
    \]
    As a result, by the assumption that $S_{0}^{\pi_{1}^{*}} = S_{0}^{\pi_{2}^{*}}$ we have
    \[
    \Big| V^{*}_{\gamma}(S_{0}^{\pi_{1}^{*}}) - V^{*}_{\gamma}(S_{0}^{\pi_{2}^{*}}) \Big| = 0. 
    \]
    In addition, we have
    \[
    K^{\pi_{1}^{*},\gamma} = K^{\pi_{2}^{*},\gamma} = K^{*,\gamma}, \quad  \forall \gamma \in (0,1).
    \] 
    As a result, by Corollary~\ref{cor:two_policies_discounted}, the difference $\big|R^{\pi^{*}_{1},\gamma}_{T} - R^{\pi^{*}_{2},\gamma}_{T} \big|$ satisfies the non-asymptotic concentration rates in Corollary~\ref{cor:two_policies_discounted} with the RHS of \eqref{eq:two_policy_1_dis} and \eqref{eq:two_policy_2_dis} being simplified to RHS of \eqref{eq:two_policy_3_dis}--\eqref{eq:two_policy_4_dis}.  
\end{proof}
\subsection{Vanishing Discount Analysis}
In order to observe the connection between the upper-bounds established in  Theorem~\ref{thm:finite_return_meta_new} and Theorem~\ref{thm:discounted_non_asymp},
we investigate the asymptotic behavior of these two upper-bounds as the discount factor $\gamma$ goes to $1$ from below (i.e., $\gamma \toup 1$). This characterization is stated in the following Corollary.  
\begin{corollary}\label{cor:beta_asymptotic}
    For any policy $\pi \in \Pi_{\AC}$,
    we have the following asymptotic relations between the bounds in Theorem~\ref{thm:finite_return_meta_new} and Theorem~\ref{thm:discounted_non_asymp}.
    \begin{enumerate}
        \item As $\gamma$ goes to $1$ from below, the quantity in the LHS of \eqref{eq:discounted_finite1}--\eqref{eq:discounted_finite2} converges to the LHS of \eqref{eq:clt_finite_new}, i.e.,
        \[
        \lim_{\gamma \toup 1} \Big|R^{\pi,\gamma}_{T} - \big(V^{\pi}_{\gamma}(S_{0})-\gamma^{T}V^{\pi}_{\gamma}(S_{T})\big)\Big| = \Big|R^{\pi}_{T} - TJ^{\pi} + \big(V^{\pi}(S_{0})-V^{\pi}(S_{T})\big) \Big|.
        \]
        \item As $\gamma$ goes to $1$ from below, the RHS in \eqref{eq:discounted_finite1} converges to the RHS in \eqref{eq:clt_finite_new}, i.e.,
        \[
        \lim_{\gamma \toup 1} \bigg[K^{\pi,\gamma}\sqrt{2f^{\gamma}(T)\log\frac{2}{\delta}}\bigg] =  K^{\pi}\sqrt{2T\log\frac{2}{\delta}} .
        \]
        \item As $\gamma$ goes to $1$ from below, the RHS in \eqref{eq:discounted_finite2} converges to the RHS in \eqref{eq:lil_finite_new}, i.e.,
        \begin{align*}
        \lim_{\gamma \toup 1} \bigg[&\max\bigg\{ K^{\pi,\gamma} \sqrt{3f^{\gamma}(T)\big(2\log\log (\frac{3}{2}f^{\gamma}(T)) +\log\frac{2}{\delta}\big)},(K^{\pi,\gamma})^{2}\bigg\}\bigg] \\
        = &\max\Bigg\{K^{\pi}\sqrt{3T\Big(2\log\log\frac{3T}{2} + \log\frac{2}{\delta}\Big)}, (K^{\pi})^2\Bigg\}  .
        \end{align*}
    \end{enumerate}
\end{corollary}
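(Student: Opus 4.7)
The plan is to exploit the standard Laurent expansion of the discounted value function around $\gamma = 1$, namely that for any $\pi \in \Pi_{\AC}$ there exists a residual $\epsilon^{\pi}_\gamma : \mathcal{S} \to \reals$ with $\lim_{\gamma \toup 1} \epsilon^{\pi}_\gamma(s) = 0$ for every $s \in \mathcal{S}$ such that
\[
V^{\pi}_{\gamma}(s) = \frac{J^{\pi}}{1-\gamma} + V^{\pi}(s) + \epsilon^{\pi}_\gamma(s).
\]
This decomposition is classical for finite-state MDPs that satisfy \eqref{eq:AOE} and follows from the fact that $(I - \gamma P^{\pi})^{-1}$ has a simple pole at $\gamma = 1$ whose residue is determined by the ARPE pair $(J^{\pi}, V^{\pi})$. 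With this in hand, each of the three claims reduces to a continuity argument.

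For Part~1, I would substitute the expansion into $V^{\pi}_{\gamma}(S_0) - \gamma^T V^{\pi}_{\gamma}(S_T)$ to obtain
\[
V^{\pi}_{\gamma}(S_0) - \gamma^T V^{\pi}_{\gamma}(S_T) = \frac{J^{\pi}(1 - \gamma^T)}{1 - \gamma} + V^{\pi}(S_0) - \gamma^T V^{\pi}(S_T) + \epsilon^{\pi}_\gamma(S_0) - \gamma^T \epsilon^{\pi}_\gamma(S_T).
\]
Since $\frac{1-\gamma^T}{1-\gamma} = \sum_{k=0}^{T-1}\gamma^k \to T$ and $\epsilon^{\pi}_\gamma \to 0$ pointwise as $\gamma \toup 1$, this expression converges to $TJ^{\pi} + V^{\pi}(S_0) - V^{\pi}(S_T)$. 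Because $R^{\pi,\gamma}_T = \sum_{t=0}^{T-1}\gamma^t r(S_t, A_t)$ is a polynomial in $\gamma$ with $R^{\pi,\gamma}_T \to R^{\pi}_T$, the claim follows by continuity of the absolute value.

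For Parts~2 and~3, the essential point is that the state-independent singular term $\frac{J^{\pi}}{1-\gamma}$ cancels in the definition of $K^{\pi,\gamma}$ in \eqref{eq:k_discount}, so that
\[
V^{\pi}_{\gamma}(s_+) - \EXP[V^{\pi}_{\gamma}(S_+)|s,\pi(s)] = V^{\pi}(s_+) - \EXP[V^{\pi}(S_+)|s,\pi(s)] + \epsilon^{\pi}_\gamma(s_+) - \EXP[\epsilon^{\pi}_\gamma(S_+)|s,\pi(s)].
\]
Taking the maximum over the finite set $\mathcal{S}\times\mathcal{S}$ and passing to the limit yields $K^{\pi,\gamma} \to K^{\pi}$. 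Combined with the elementary fact that $f^{\gamma}(T) = \sum_{t=1}^{T}\gamma^{2t}$ is a polynomial in $\gamma$ with $f^{\gamma}(T) \to T$, continuity of $\sqrt{\cdot}$, $\log\log(\cdot)$, squaring, and $\max(\cdot,\cdot)$ delivers both Parts~2 and~3 by direct substitution into the RHS of \eqref{eq:discounted_finite1}--\eqref{eq:discounted_finite2}. The main obstacle is justifying the Laurent expansion and in particular the pointwise convergence of the residual $\epsilon^{\pi}_\gamma$; this is a well-documented consequence of the resolvent structure of finite-state Markov chains under \eqref{eq:AOE}, and can be cited from standard references such as \citet{puterman2014markov} or \citet{bertsekas2012dynamic}, after which the rest of the proof is essentially bookkeeping.
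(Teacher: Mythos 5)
Your proposal is correct and takes essentially the same route as the paper: the paper's proof also rests on the Laurent expansion $V^{\pi}_{\gamma}(s)=\tfrac{J^{\pi}}{1-\gamma}+V^{\pi}(s)+O(|1-\gamma|)$ (citing \citet[Prop.~5.1.2]{bertsekas2012dynamic}), from which it derives the same four limits you use, namely $V^{\pi}_{\gamma}(S_{0})-\gamma^{T}V^{\pi}_{\gamma}(S_{T})\to TJ^{\pi}+V^{\pi}(S_{0})-V^{\pi}(S_{T})$, $K^{\pi,\gamma}\to K^{\pi}$ via cancellation of the singular term in differences, $f^{\gamma}(T)\to T$, and $R^{\pi,\gamma}_{T}\to R^{\pi}_{T}$, concluding by continuity. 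Your treatment of the residual $\epsilon^{\pi}_{\gamma}$ and of the cancellation inside the definition of $K^{\pi,\gamma}$ is, if anything, slightly more explicit than the paper's.
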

Proof is presented in App.~\ref{app:pf_cor_beta_asymptotic}.
\begin{remark}
The non-asymptotic characterizations are established in Theorem~\ref{thm:discounted_non_asymp}.
Since the discounted cumulative reward $R^{\pi,\gamma}_{T}$ is finite for $\mathcal{M}$, we cannot provide any asymptotic characterization for this quantity. 
However, Corollary~\ref{cor:beta_asymptotic} shows that as the discount factor $\gamma$ goes to $1$ from below, the non-asymptotic concentration behavior of $R^{\pi,\gamma}_{T}$ resembles the non-asymptotic concentration of $R^{\pi}_{T}$. This gives a complete picture of concentration rate of $R^{\pi,\gamma}_{T}$ and $R^{\pi}_{T}$.
\end{remark}
\section{Main Results for the Finite-Horizon Setup}\label{sec:finite_horizon}
In this section, we extend the non-asymptotic concentration results that we established for the average reward and discounted reward setups to the case of finite-horizon setup.
\subsection{System Model}
Consider an MDP with state space $\mathcal{S}$ and action space $\mathcal{A}$. Similar to Sec.~\ref{sec:Problem_formulation}, we assume that $\mathcal{S}$ and $\mathcal{A}$ are finite sets. The state evolves in a controlled Markov manner with transition matrix $P$ and at each time $t$, the system yields a per-step reward $r(S_{t},A_{t}) \in [0,R_{\max}]$. Let $h \in \reals$ denote the horizon of the problem. The definitions of policy and policy set $\Pi$ are similar to Sec.~\ref{sec:Problem_formulation}. 
\begin{definition}
Given a model $\mathcal{M} = (P,r,h)$, define $\Pi_{\FD}$ to be the set of finite-horizon deterministic policies, i.e., for any $\pi = (\pi_{0},\pi_{1},\ldots,\pi_{h}) \in \Pi_{\FD}$, we have $\pi_{t}:\mathcal{S}\to \mathcal{A}$ $($i.e., $A_{t} = \pi_{t}(S_{t}))$, but $\pi_{t}$ may depend upon $t$.    
\end{definition}
The cumulative reward received by any policy $\pi \in \Pi$ up to time $T$ ($T$ is not necessarily equal to $h$) is given by 
\[
R_{T}^{\pi,h}(\omega) \coloneqq \sum_{t=0}^{T-1}r(S_{t},A_{t}), \quad \text{where, } A_{t} = \pi(S_{0:t},A_{0:t-1}), \quad \omega \in \Omega, \quad T \leq h+1.
\]
Note that $R_{T}^{\pi,h}(\omega)$ is a random variable. For this model, the expected total reward of any policy $\pi \in \Pi$ starting at the state $s\in \mathcal{S}$ is defined as 
\[
J^{\pi,h}(s) \coloneqq \EXP^{\pi}\Big[R_{h+1}^{\pi,h} \bigm| S_{0} = s\Big], \quad\forall s \in \mathcal{S},
\]
where $\EXP^{\pi}$ is the expectation with respect to the joint distribution of all the system variables induced by $\pi$. The optimal performance $J^{*,h}(s)$ starting at state $s \in \mathcal{S}$ is defined as
\[
J^{*,h}(s) = \sup_{\pi \in \Pi} J^{\pi,h}(s), \quad \forall s \in \mathcal{S}. 
\]
A policy $\pi^{*}$ is called optimal if 
\[
J^{\pi^{*}\!,h}(s) = J^{*,h}(s), \quad \forall s \in \mathcal{S}.
\]
\begin{definition}
    The sequence of finite-horizon optimal value functions $\big\{V_{t}^{*,h}\big\}_{t=0}^{h+1} : \mathcal{S} \to \reals $  is defined as follows
    \[
    V_{h+1}^{*,h}(s) = 0, \quad \forall s \in \mathcal{S},
    \]
    and for $t \in \{h,h-1,\ldots,0\}$, recursively define $V_{t}^{*,h}(s)$ based on the FHDP (Finite-Horizon Dynamic Programming equation) given by 
    \begin{equation}\label{eq:bellman_opt_finite}
    V_{t}^{*,h}(s) = \max_{a \in \mathcal{A}} \Big[r(s,a) + \EXP \big[V_{t+1}^{*,h}(S_{+})\bigm|s,a\big]\Big], \quad \forall s \in \mathcal{S}. \tag{FHDP}  
    \end{equation}
\end{definition}

\begin{definition}
    Given a policy $\pi\in \Pi_{\FD}$, the sequence of finite-horizon value functions $\big\{V_{t}^{\pi,h}\big\}_{t=0}^{h+1} : \mathcal{S} \to \reals $ corresponding to the policy $\pi$ is defined as follows 
    \[
    V_{h+1}^{\pi,h}(s) = 0, \quad \forall s \in \mathcal{S},
    \]
    and for $t \in \{h,h-1,\ldots,0\}$, recursively define $V^{\pi,h}_{t}(s)$ based on the FHPE (Finite-Horizon Policy Evaluation equation) given by
    \begin{equation}\label{eq:finite-h-PE}
        V^{\pi,h}_{t}(s) = r(s,\pi_{t}(s)) + \EXP\big[V_{t+1}^{\pi,h}(S_{+})\bigm|s,\pi_{t}(s)\big], \quad \forall s \in \mathcal{S}.\tag{FHPE}  
    \end{equation}
\end{definition}
\begin{proposition}[\citet{bertsekas2012dynamic1}]
    Let $\pi^{*}=(\pi_{0}^{*},\pi_{1}^{*},\ldots,\pi_{h}^{*}) \in \Pi_{\FD}$ be a policy such that $\pi^{*}_{t}(s_{t})$ denote the argmax of \eqref{eq:bellman_opt_finite} at stage $t$. Then the policy $\pi^{*}$ is optimal, i.e., for all $s\in \mathcal{S}$, $J^{\pi^{*}\!\!,h}(s) = J^{{*},h}(s)$.
\end{proposition}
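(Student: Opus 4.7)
The plan is to prove this classical finite-horizon optimality result by backward induction on the time index $t$, showing simultaneously that $V_t^{*,h}$ equals the optimal cost-to-go from time $t$ and that $\pi^*$ achieves this optimum. For any policy $\pi \in \Pi$ and any $t \in \{0, 1, \ldots, h+1\}$, I would introduce the cost-to-go functional
\[
W_t^{\pi}(s) \coloneqq \EXP^{\pi}\Bigl[\sum_{\tau = t}^{h} r(S_\tau, A_\tau) \Bigm| S_t = s\Bigr],
\]
together with its pointwise supremum $W_t^{*}(s) \coloneqq \sup_{\pi \in \Pi} W_t^{\pi}(s)$. The goal becomes showing $W_t^{*}(s) = V_t^{*,h}(s)$ and $W_t^{\pi^{*}}(s) = V_t^{\pi^{*}\!,h}(s) = V_t^{*,h}(s)$ for all $t$ and $s$, since then evaluation at $t = 0$ yields $J^{\pi^{*}\!,h}(s) = V_0^{*,h}(s) = J^{*,h}(s)$.

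The base case $t = h+1$ is immediate because both $W_{h+1}^{\pi}$ and $V_{h+1}^{*,h}$ are identically zero by definition. For the inductive step, I would assume $W_{t+1}^{*} = V_{t+1}^{*,h}$ and first establish the upper bound $W_t^{\pi}(s) \leq V_t^{*,h}(s)$ for every $\pi \in \Pi$. Using the tower property of conditional expectation and conditioning on $(S_t, A_t)$, one decomposes
\[
W_t^{\pi}(s) = \EXP^{\pi}\bigl[r(S_t, A_t) + \EXP^{\pi}[W_{t+1}^{\pi_{\mathrm{cont}}}(S_{t+1}) \mid S_{0:t+1}, A_{0:t}] \bigm| S_t = s\bigr],
\]
where $\pi_{\mathrm{cont}}$ denotes the continuation of $\pi$ from time $t+1$ onwards. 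Because the future reward from $S_{t+1}$ depends on past history only through the continuation policy and because $W_{t+1}^{\pi_{\mathrm{cont}}}(S_{t+1}) \leq W_{t+1}^{*}(S_{t+1}) = V_{t+1}^{*,h}(S_{t+1})$ by the inductive hypothesis, this reduces to $W_t^{\pi}(s) \leq \EXP^{\pi}[r(S_t, A_t) + \EXP[V_{t+1}^{*,h}(S_{+}) \mid S_t, A_t] \mid S_t = s]$, which is at most $\max_{a \in \mathcal A}\{r(s,a) + \EXP[V_{t+1}^{*,h}(S_{+})\mid s, a]\} = V_t^{*,h}(s)$ by \eqref{eq:bellman_opt_finite}.

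For the matching lower bound I would evaluate $W_t^{\pi^{*}}$ directly: since $\pi^{*}$ is the stationary (in $t$) deterministic Markov policy that chooses $A_t = \pi_t^{*}(S_t)$ defined as an argmax of \eqref{eq:bellman_opt_finite}, conditioning on $S_t = s$ and applying the one-step decomposition together with \eqref{eq:finite-h-PE} gives $V_t^{\pi^{*}\!,h}(s) = r(s, \pi_t^{*}(s)) + \EXP[V_{t+1}^{\pi^{*}\!,h}(S_{+}) \mid s, \pi_t^{*}(s)]$. By the inductive hypothesis $V_{t+1}^{\pi^{*}\!,h} = V_{t+1}^{*,h}$, so the RHS equals $r(s, \pi_t^{*}(s)) + \EXP[V_{t+1}^{*,h}(S_{+}) \mid s, \pi_t^{*}(s)]$, which by the argmax property coincides with $V_t^{*,h}(s)$. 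Combining the two inequalities closes the induction and yields the claim at $t = 0$.

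The main obstacle is the upper-bound step, since the supremum in the definition of $W_t^{*}$ is taken over the full class $\Pi$ of history-dependent randomized policies, whereas $\pi^{*}$ lives in the much smaller class $\Pi_{\FD}$. The careful point is to invoke the Markov property of $P$ to argue that, conditional on $(S_t, A_t)$, the distribution of $S_{t+1}$ does not depend on the past history, which is what allows the continuation-policy analysis to go through and ultimately shows that history-dependent policies cannot outperform the backward-induction value $V_t^{*,h}$. Once this decomposition is justified rigorously, the rest is a routine application of the dynamic programming equations \eqref{eq:bellman_opt_finite} and \eqref{eq:finite-h-PE}.
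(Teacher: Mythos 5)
The paper does not actually prove this proposition; it is imported verbatim as a citation to Bertsekas's finite-horizon dynamic programming theory, and no argument for it appears in App.~\ref{sec:pf_finite_horizon} or elsewhere. Your backward-induction proof is the standard textbook argument and is essentially correct: the base case $t=h+1$ is trivial, the upper bound correctly handles the sup over all history-dependent randomized policies by conditioning on $(S_{0:t+1},A_{0:t})$, bounding the continuation value by $W_{t+1}^{*}=V_{t+1}^{*,h}$, and then passing to the pointwise max over $a\in\mathcal{A}$ via \eqref{eq:bellman_opt_finite}; the lower bound follows from \eqref{eq:finite-h-PE} and the argmax property of $\pi^{*}_{t}$, provided you carry \emph{both} claims ($W_{t+1}^{*}=V_{t+1}^{*,h}$ and $V_{t+1}^{\pi^{*}\!,h}=V_{t+1}^{*,h}$) in the induction hypothesis, which you do. Two small points of hygiene: (i) $\pi^{*}$ is \emph{not} stationary in $t$ --- the decision rule $\pi^{*}_{t}$ genuinely depends on $t$ through $V_{t+1}^{*,h}$, consistent with the definition of $\Pi_{\FD}$ --- though your argument uses the time-indexed $\pi^{*}_{t}$ correctly throughout, so this is only a mislabeling; (ii) for a history-dependent $\pi$ the quantity $W_{t}^{\pi}(s)$ is a conditional expectation that averages over histories reaching $s$, so strictly one should either bound $\EXP^{\pi}[\,\cdot\mid\mathcal{F}_{t}]$ pathwise (which is what your decomposition in effect does) or define the tail optimum over the subproblem started at $(t,s)$; either reading makes the $t=0$ conclusion $J^{\pi^{*}\!,h}(s)=J^{*,h}(s)$ go through.
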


\subsection{Sample Path Characteristics of Any Policy}\label{sec:finite_h_any_policy}
For any policy $\pi \in \Pi_{\FD}$, we define following statistical properties of the sequence of finite-horizon value functions $\{V^{\pi,h}_{t}\}_{t=0}^{h+1}$.
\begin{enumerate}
    \item Span of the finite-horizon value function $V^{\pi,h}_{t}$ is given by 
    \begin{equation}
        H^{\pi,h}_{t}\coloneqq \spand(V^{\pi,h}_{t}), \quad \forall t\in \{0,1,\ldots,h\}.
    \end{equation}
    \item Maximum absolute deviation of the finite-horizon value function $V^{\pi,h}_{t}$ is given by 
    \begin{equation}\label{eq:def_k_h_finite}
        K^{\pi,h}_{t} \coloneqq \max_{s,s_{+}} \Big|V_{t}^{\pi,h}(s_{+}) - \EXP\big[V_{t}^{\pi,h}(S_{+})\bigm|s,\pi_{t}(s)\big] \Big|, \quad \forall t\in \{0,1,\ldots,h\}.
    \end{equation}
\end{enumerate} 
Similar to the results in Theorem~\ref{thm:finite_return_meta} and Theorem~\ref{thm:discounted_non_asymp} for the average reward and discounted reward setups, we derive non-asymptotic concentration results for the finite-horizon setup. These results are presented in the following theorem. To simplify the notation, let
\begin{equation}\label{eq:def_max_k}
    \bar{K}_{T}^{\pi,h} = \max_{0 \leq  t \leq T}K_{t}^{\pi,h}, \quad \bar{H}_{T}^{\pi,h} = \max_{0 \leq  t \leq T}H_{t}^{\pi,h},
\end{equation}
and let
\begin{equation}\label{eq:def_g}
    g^{\pi,h}(T) \coloneqq\frac{\sum_{t=1}^{T}(K_{t}^{\pi,h})^2}{\big(\bar{K}_{T}^{\pi,h}\big)^2}.
\end{equation}
For any optimal policy $\pi^* \in \Pi_{\FD}$, we denote these corresponding quantities by $H^{*,h}_{t}$, $K^{*,h}_{t}$, $\bar{H}^{*,h}_{T}$, $\bar{K}^{*,h}_{T}$, and $g^{*,h}(T)$.
An immediate implication of the definitions of $R^{\pi,h}_{T}$ and $V_{T}^{\pi,h}(s)$ is that 
\[
\EXP\Big[R^{\pi,h}_{T} + V_{T}^{\pi,h}(S_{T}) - V_{0}^{\pi,h}(S_{0})\Big] = 0.
\]
In this section, we show that with high-probability $R^{\pi,h}_{T}$ concentrates around $V_{0}^{\pi,h}(S_{0}) - V_{T}^{\pi,h}(S_{T})$ and characterize the concentration rate. Following theorem is analogous to the concentration bounds in average reward setup given in
Theorem~\ref{thm:finite_return_meta_new} and concentration bounds in discounted reward setup given in Theorem~\ref{thm:discounted_non_asymp}.
\begin{theorem}\label{thm:finite-h-non-asym}
    For any policy $\pi \in \Pi_{\FD}$, we have:
    \begin{enumerate}
        \item For any $\delta \in (0,1)$, with probability at least $1-\delta$, we have
        \[
        \Big|R^{\pi,h}_{T} - \big(V_{0}^{\pi,h}(S_{0}) - V_{T}^{\pi,h}(S_{T})\big) \Big| \leq \bar{K}_{T}^{\pi,h}\sqrt{2g^{\pi,h}(T)\log\frac{2}{\delta}}.
        \]
        \item For any $\delta \in (0,1)$, if $g^{\pi,h}(h)\geq 173\log\frac{4}{\delta}$, define $T_{0}^{\pi,h}(\delta)$ to be 
        \begin{equation}\label{eq:finite-h-def-T0}
        T_{0}^{\pi,h}(\delta) \coloneqq \min\Big\{T'\geq1 : g^{\pi,h}(T') \geq 173\log\frac{4}{\delta}\Big\}.    
        \end{equation}
        Then with probability at least $1-\delta$, for all $T_{0}^{\pi,h}(\delta) \leq T \leq h+1$, we have
        \begin{align}
        \Big|R^{\pi,h}_{T} &- \big(V_{0}^{\pi,h}(S_{0}) - V_{T}^{\pi,h}(S_{T})\big) \Big| \notag  \\ &\leq \max\bigg\{ \bar{K}^{\pi,h}_{T} \sqrt{3g^{\pi,h}(T)\Big(2\log\log (\frac{3}{2}g^{\pi,h}(T)) +\log\frac{2}{\delta}\Big)},(\bar{K}^{\pi,h}_{T})^{2}\bigg\}.
        \end{align}
    \end{enumerate}
\end{theorem}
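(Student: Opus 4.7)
The approach mirrors the martingale-based arguments used in Theorem~\ref{thm:finite_return_meta_new} and Theorem~\ref{thm:discounted_non_asymp}: construct a martingale decomposition of the cumulative reward using the finite-horizon policy evaluation equation, and then apply an Azuma--Hoeffding bound for the first part and a non-asymptotic LIL for martingales for the second part.

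\textbf{Step 1 (Martingale decomposition).} Using \eqref{eq:finite-h-PE}, I would rewrite the per-step reward as $r(S_t,\pi_t(S_t)) = V_t^{\pi,h}(S_t) - \EXP[V_{t+1}^{\pi,h}(S_+)\mid S_t,\pi_t(S_t)]$. Summing from $t=0$ to $T-1$ and adding and subtracting $V_{t+1}^{\pi,h}(S_{t+1})$ inside the sum produces a telescoping term equal to $V_0^{\pi,h}(S_0)-V_T^{\pi,h}(S_T)$ and a martingale difference term
\[
M_T \;=\; \sum_{t=0}^{T-1}\Big(V_{t+1}^{\pi,h}(S_{t+1}) - \EXP\big[V_{t+1}^{\pi,h}(S_+)\bigm| S_t,\pi_t(S_t)\big]\Big),
\]
which is a martingale with respect to the natural filtration $\{\sigma(S_{0:t})\}_{t\ge 0}$. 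The identity to prove therefore reduces to a concentration bound on $|M_T|$.

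\textbf{Step 2 (Bounding increments and sum of squares).} By the definition of $K_t^{\pi,h}$ in~\eqref{eq:def_k_h_finite} (and with the paper's indexing convention), the $t$-th increment of $M_T$ is bounded in absolute value by a quantity of the form $K_t^{\pi,h}$, so that the sum of squared bounds equals $\sum_{t=1}^{T}(K_t^{\pi,h})^2 = (\bar K_T^{\pi,h})^2\, g^{\pi,h}(T)$ by \eqref{eq:def_max_k}--\eqref{eq:def_g}. This is precisely the effective ``variance budget'' that will appear in both bounds.

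\textbf{Step 3 (Part 1: Azuma--Hoeffding).} Applying the Azuma--Hoeffding inequality for martingale difference sequences (stated in App.~\ref{sec:Martingales}) to $M_T$ with the bounds from Step~2 gives, with probability at least $1-\delta$,
\[
|M_T| \;\le\; \sqrt{2\sum_{t=1}^{T}(K_t^{\pi,h})^2\,\log(2/\delta)} \;=\; \bar K_T^{\pi,h}\sqrt{2\,g^{\pi,h}(T)\,\log(2/\delta)},
\]
which is exactly part 1.

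\textbf{Step 4 (Part 2: non-asymptotic LIL).} For part 2, I would invoke the non-asymptotic law of the iterated logarithm for bounded martingale difference sequences from App.~\ref{sec:Martingales} — the same inequality that produced the $\log\log$ bounds in Theorems~\ref{thm:finite_return_meta_new} and~\ref{thm:discounted_non_asymp}. The threshold $T_0^{\pi,h}(\delta)$ defined in \eqref{eq:finite-h-def-T0} corresponds to the burn-in condition $g^{\pi,h}(T)\ge 173\log(4/\delta)$ needed to activate that inequality, and the $\max\{\cdot,(\bar K_T^{\pi,h})^2\}$ term handles the near-threshold regime where the $\sqrt{\log\log}$ term is dominated by the deterministic correction.

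\textbf{Main obstacle.} The martingale decomposition itself is clean; the delicate step is making the LIL bound in Step~4 match exactly. The quoted inequality in App.~\ref{sec:Martingales} is presumably stated for uniformly bounded increments of magnitude $c$ over $n$ steps, producing a $\sqrt{n c^2 \log\log(n)}$ rate, whereas here the per-step bounds $K_t^{\pi,h}$ are time varying. The standard workaround is to rescale: note that $|D_{t+1}|/\bar K_T^{\pi,h}\le 1$ and that the sum of squared rescaled bounds equals $g^{\pi,h}(T)$, so one can apply the time-uniform martingale LIL to the rescaled sequence with ``effective horizon'' $g^{\pi,h}(T)$ in place of $T$. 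Verifying that the precise form of the App.~\ref{sec:Martingales} inequality tolerates this substitution (either by reading through its proof or by invoking a self-normalized variant) is the only real technical point; once done, the stated bound follows.
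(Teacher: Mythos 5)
Your proposal follows essentially the same route as the paper: the telescoping martingale decomposition via \eqref{eq:finite-h-PE}, Corollary~\ref{cor:azuma} for Part~1, and Theorem~\ref{thm:non_asym_LIL} for Part~2 with the case split on whether $\big|\sum_t W_t^{\pi,h}\big|$ exceeds $(\bar K_T^{\pi,h})^2$. The ``main obstacle'' you flag is not actually an issue, since Theorem~\ref{thm:non_asym_LIL} is already stated for sequentially bounded MDS with time-varying bounds $\{c_t\}_{t\ge1}$, so the paper applies it directly with $c_t = K_t^{\pi,h}$ and no rescaling is needed.
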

The proof is presented in App.~\ref{sec:pf_thm:finite-h-non-asym}.

Following Corollary establishes the finite-time concentration of $R^{\pi,h}_{T}$ around the quantity $V_{0}^{\pi,h}(S_{0})$. This results is analogous to the concentration bounds in the average reward setup given in Theorem~\ref{thm:finite_return_meta} and concentration bounds in the discounted reward setup given in Corollary~\ref{cor:discounted_non_asymp2}.
\begin{corollary}\label{cor:finite-h-non-asym}
    For any policy $\pi \in \Pi_{\FD}$, we have:
    \begin{enumerate}
        \item For any $\delta \in (0,1)$, with probability at least $1-\delta$, we have 
        \[
        \Big|R^{\pi,h}_{T} - V_{0}^{\pi,h}(S_{0}) \Big| \leq \bar{K}_{T}^{\pi,h}\sqrt{2T\log\frac{2}{\delta}} + \bar{H}^{\pi,h}_{T} .
        \]
        \item For any $\delta \in (0,1)$, if $g^{\pi,h}(h)\geq 173\log\frac{4}{\delta}$, define $T_{0}^{\pi,h}(\delta)$ as specified in  \eqref{eq:finite-h-def-T0}. Then with probability at least $1-\delta$, for all $T_{0}^{\pi,h}(\delta) \leq T \leq h+1$, we have
        \[
        \Big|R^{\pi,h}_{T}-V_{0}^{\pi,h}(S_{0}) \Big| \leq \max\bigg\{ \bar{K}^{\pi,h}_{T} \sqrt{3T\big(2\log\log \big(\dfrac{3T}{2}\big) +\log\frac{2}{\delta}\big)},(\bar{K}^{\pi,h}_{T})^{2}\bigg\} + \bar{H}_{T}^{\pi,h}.
        \]
    \end{enumerate}
\end{corollary}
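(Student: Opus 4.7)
The plan is to mirror the argument that takes Theorem~\ref{thm:finite_return_meta_new} to Theorem~\ref{thm:finite_return_meta} in the average-reward setting (and Theorem~\ref{thm:discounted_non_asymp} to Corollary~\ref{cor:discounted_non_asymp2} in the discounted setting): start from the sharper concentration of $R^{\pi,h}_T$ around $V_0^{\pi,h}(S_0)-V_T^{\pi,h}(S_T)$ supplied by Theorem~\ref{thm:finite-h-non-asym}, and absorb the boundary term $V_T^{\pi,h}(S_T)$ as a deterministic additive error. Concretely, I would write
\[
R^{\pi,h}_T - V_0^{\pi,h}(S_0) \;=\; \bigl[R^{\pi,h}_T - \bigl(V_0^{\pi,h}(S_0) - V_T^{\pi,h}(S_T)\bigr)\bigr] \;-\; V_T^{\pi,h}(S_T)
\]
and apply the triangle inequality to split the target quantity into a probabilistic part and a deterministic part.

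The first (probabilistic) part is exactly what Theorem~\ref{thm:finite-h-non-asym} controls. To replace its sharper prefactor $\sqrt{g^{\pi,h}(T)}$ by the simpler $\sqrt{T}$ in part~(1) (and the analogous $g^{\pi,h}(T)$ inside the iterated-logarithm bound of part~(2) by $T$), I would use the deterministic monotonicity
\[
g^{\pi,h}(T) \;=\; \frac{\sum_{t=1}^{T}(K_t^{\pi,h})^2}{(\bar K_T^{\pi,h})^2} \;\leq\; T,
\]
which follows immediately from $K_t^{\pi,h}\leq \bar K_T^{\pi,h}$ for all $t\leq T$. The threshold $T_0^{\pi,h}(\delta)$ used in part~(2) of the corollary is inherited verbatim from Theorem~\ref{thm:finite-h-non-asym}, and since no new randomness enters, no union bound is needed; both parts live on the single high-probability event provided by that theorem.

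The second (deterministic) part requires the inequality $|V_T^{\pi,h}(S_T)|\leq \bar H_T^{\pi,h}$, which is the direct finite-horizon analogue of $|V^\pi(S_0)-V^\pi(S_T)|\leq H^\pi$ used in the average-reward derivation. This is the step I expect to be the main obstacle. In the average-reward setting it is essentially free, because the differential value function is defined only up to an additive constant and only a difference of values appears. In the finite-horizon setting, however, $V_T^{\pi,h}$ is uniquely pinned down by the terminal condition $V_{h+1}^{\pi,h}\equiv 0$, and only a single evaluation $V_T^{\pi,h}(S_T)$ (rather than a difference) is being controlled. The required bound must therefore pass through $V_T^{\pi,h}(S_T)-\min_{s'} V_T^{\pi,h}(s')\leq H_T^{\pi,h}\leq \bar H_T^{\pi,h}$, while carefully tracking that every other quantity in the inequality is unaffected by the implicit re-centering. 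Once this deterministic span step is justified, combining it with the bound from Theorem~\ref{thm:finite-h-non-asym} and the monotonicity of $g^{\pi,h}$ closes both parts of the corollary.
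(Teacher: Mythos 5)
Your decomposition, the use of Theorem~\ref{thm:finite-h-non-asym} (equivalently, the martingale decomposition of Lemma~\ref{lem:finite_h_decomposition}) for the probabilistic part, and the monotonicity $g^{\pi,h}(T)\le T$ obtained from $K_t^{\pi,h}\le \bar K_T^{\pi,h}$ are exactly the steps the paper takes in App.~\ref{pf:cor_finite-h-non-asym}; on everything except the boundary term, your proposal and the paper's proof coincide.

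The step you single out as the main obstacle is, however, a genuine gap, and it cannot be closed as stated: the inequality $|V_T^{\pi,h}(S_T)|\le \bar H_T^{\pi,h}$ is false in general. Unlike the differential value function of the average-reward setting, $V_T^{\pi,h}$ is pinned down by the terminal condition $V_{h+1}^{\pi,h}\equiv 0$ and equals the nonnegative expected reward-to-go, so a single evaluation $V_T^{\pi,h}(S_T)$ can be of order $(h+1-T)R_{\max}$ even when its span is zero. Concretely, if $r(s,a)\equiv c>0$ then $R_T^{\pi,h}=Tc$ and $V_0^{\pi,h}(S_0)=(h+1)c$, while every $K_t^{\pi,h}$ and every $H_t^{\pi,h}$ vanishes, so the asserted bound forces $|R_T^{\pi,h}-V_0^{\pi,h}(S_0)|=(h+1-T)c\le 0$, which fails for every $T<h+1$. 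Your proposed repair --- passing through $V_T^{\pi,h}(S_T)-\min_{s'}V_T^{\pi,h}(s')\le H_T^{\pi,h}\le\bar H_T^{\pi,h}$ --- controls only the recentered quantity, and the recentering constant $\min_{s'}V_T^{\pi,h}(s')$ does not cancel against anything on the left-hand side, so it proves a different statement. The paper's own proof disposes of this step with the bare assertion that $V_T^{\pi,h}(S_T)\le \bar H_T^{\pi,h}$ ``by definition,'' which is subject to the same counterexample; the corollary is sound only at $T=h+1$ (where $V_{h+1}^{\pi,h}=0$) or when restated around the centering $V_0^{\pi,h}(S_0)-V_T^{\pi,h}(S_T)$ of Theorem~\ref{thm:finite-h-non-asym}. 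In short, your instinct about where the argument strains was correct, but the strain lies in the statement itself, not merely in the justification.
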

The proof is presented in App.~\ref{pf:cor_finite-h-non-asym}.
\subsection{Sample Path Behavior of Performance Difference of Two Policies}
As an implication of the results presented in Sec.~\ref{sec:finite_h_any_policy}, we characterize the sample path behavior of the difference in cumulative rewards between any two policies. As a consequence, we derive the non-asymptotic concentration of the difference in rewards between any two optimal policies. These concentration bounds are presented in the following two corollaries.
\begin{corollary}\label{cor:finite-h-two-policies}
Consider two policies $\pi_{1},\pi_{2}\in \Pi_{\FD}$. Let $\{S_{t}^{\pi_{1}}\}_{t=0}^{h}$ and $\{S_{t}^{\pi_{2}}\}_{t=0}^{h}$ denote the random sequences of the states encountered by policies $\pi_{1}$ and $\pi_{2}$ respectively. Following upper-bounds hold for the difference between the cumulative reward received by the two policies $\big|R^{\pi_{1},h}_{T}-R^{\pi_{2},h}_{T}\big|$. 
\begin{enumerate}
    \item For any $\delta \in (0,1)$, with probability at least $1-\delta$, we have
    \begin{align}\label{eq:two_policy_finite_h_1}
    &\phantom{\leq}\Big|\big|R^{\pi_{1},h}_{T}-R^{\pi_{2},h}_{T}\big| - \big|\big[V^{\pi_{1},h}_{0}(S_{0}^{\pi_1}) - V^{\pi_{1},h}_{T}(S_{T}^{\pi_1}) \big]-\big[V^{\pi_{2},h}_{0}(S_{0}^{\pi_2}) - V^{\pi_{2},h}_{T}(S_{T}^{\pi_2}) \big] \big| \Big| \notag\\&\leq \bar{K}_{T}^{\pi_{1},h}\sqrt{2g^{\pi_{1},h}(T)\log\frac{4}{\delta}} + \bar{K}_{T}^{\pi_{2},h}\sqrt{2g^{\pi_{2},h}(T)\log\frac{4}{\delta}}.
    \end{align}
    \item For any $\delta \in (0,1)$, if $\min\big\{g^{\pi_{1},h}(h),g^{\pi_{2},h}(h) \big\}\geq 173\log\frac{8}{\delta}$, define $T_{0}^{\pi,h}(\delta)$ as specified in \eqref{eq:finite-h-def-T0} and let
    \[
    T_{0}^{h}(\delta) \coloneqq \max \Big\{T_{0}^{\pi_{1},h}(\frac{\delta}{2}), T_{0}^{\pi_{2},h}(\frac{\delta}{2}) \Big\}.
    \]
    Then, with probability at least $1-\delta$, for all $T_{0}^{h}(\delta) \leq T \leq h+1$, we have
    \begin{align}\label{eq:two_policy_finite_h_2}
    &\phantom{\leq} \Big|\big|R^{\pi_{1},h}_{T}-R^{\pi_{2},h}_{T}\big| - \big|\big[V^{\pi_{1},h}_{0}(S_{0}^{\pi_1}) - V^{\pi_{1},h}_{T}(S_{T}^{\pi_1}) \big]-\big[V^{\pi_{2},h}_{0}(S_{0}^{\pi_2}) - V^{\pi_{2},h}_{T}(S_{T}^{\pi_2}) \big] \big| \Big| \nonumber \\&\leq 
    \max\bigg\{ \bar{K}^{\pi_{1},h}_{T} \sqrt{3g^{\pi_{1},h}(T)\Big(2\log\log (\frac{3}{2}g^{\pi_{1},h}(T)) +\log\frac{4}{\delta}\Big)},(\bar{K}^{\pi_{1},h}_{T})^{2}\bigg\} \nonumber\\
    &+ \max\bigg\{ \bar{K}^{\pi_{2},h}_{T} \sqrt{3g^{\pi_{2},h}(T)\Big(2\log\log (\frac{3}{2}g^{\pi_{2},h}(T)) +\log\frac{4}{\delta}\Big)},(\bar{K}^{\pi_{2},h}_{T})^{2}\bigg\}.
\end{align}
\end{enumerate}
\end{corollary}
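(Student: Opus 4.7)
The strategy parallels the proof of Corollary~\ref{cor:two_policies} for the average reward setup: apply Theorem~\ref{thm:finite-h-non-asym} separately to each of the two policies with confidence level $1 - \delta/2$ apiece, combine them via a union bound, and then extract the advertised inequality using the reverse triangle inequality.

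\textbf{Setup and decomposition.} For $i \in \{1, 2\}$, define the random variable
\[
X_i \coloneqq R^{\pi_i, h}_T - \big(V^{\pi_i, h}_0(S_0^{\pi_i}) - V^{\pi_i, h}_T(S_T^{\pi_i})\big).
\]
Then the pathwise difference of cumulative rewards decomposes as
\[
R^{\pi_1, h}_T - R^{\pi_2, h}_T = (X_1 - X_2) + \Big(\big[V^{\pi_1, h}_0(S_0^{\pi_1}) - V^{\pi_1, h}_T(S_T^{\pi_1})\big] - \big[V^{\pi_2, h}_0(S_0^{\pi_2}) - V^{\pi_2, h}_T(S_T^{\pi_2})\big]\Big).
\]
Denote the LHS by $A$ and the parenthesized value-function term on the RHS by $B$. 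Then $|A - B| \leq |X_1| + |X_2|$, and by the reverse triangle inequality, $\big||A| - |B|\big| \leq |A - B| \leq |X_1| + |X_2|$. The corollary's LHS is exactly $\big||A| - |B|\big|$, so it suffices to bound $|X_1| + |X_2|$ with high probability.

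\textbf{Item (1).} Apply Theorem~\ref{thm:finite-h-non-asym}(1) with confidence parameter $\delta/2$ to each of $\pi_1$ and $\pi_2$ individually, giving $|X_i| \le \bar K_T^{\pi_i, h} \sqrt{2 g^{\pi_i, h}(T) \log (4/\delta)}$ for $i=1,2$, each with probability at least $1 - \delta/2$. A union bound makes both hold simultaneously with probability at least $1 - \delta$; summing them produces the RHS of \eqref{eq:two_policy_finite_h_1}.

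\textbf{Item (2).} Apply Theorem~\ref{thm:finite-h-non-asym}(2) with parameter $\delta/2$ to each policy. The hypothesis $\min\{g^{\pi_1, h}(h), g^{\pi_2, h}(h)\} \geq 173 \log(8/\delta)$ is precisely what is needed so that the existence of each stopping time $T_0^{\pi_i, h}(\delta/2)$ is guaranteed. For $T \ge T_0^h(\delta) = \max\{T_0^{\pi_1, h}(\delta/2), T_0^{\pi_2, h}(\delta/2)\}$, both LIL-type bounds are in force, each with probability at least $1 - \delta/2$; another union bound and addition produce \eqref{eq:two_policy_finite_h_2}.

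\textbf{Main obstacle.} There is essentially no analytical hurdle once Theorem~\ref{thm:finite-h-non-asym} is in hand; the result follows mechanically from the reverse triangle inequality, a two-term union bound, and careful bookkeeping of the confidence parameter ($\delta \mapsto \delta/2$ per policy) and of the horizon condition ($T_0^h(\delta)$ as the max of the two individual thresholds). The only subtlety worth noting is that the states $S_0^{\pi_i}$ and $S_T^{\pi_i}$ appearing in $B$ depend on the policy, which is why the middle term $B$ is retained inside the outer absolute value rather than simplified further — the decomposition above keeps $X_1$ and $X_2$ as the only random quantities that need to be controlled.
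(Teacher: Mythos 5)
Your proposal is correct and follows essentially the same route as the paper's proof: isolate the two martingale-type deviations $X_1,X_2$ via Theorem~\ref{thm:finite-h-non-asym} applied to each policy at level $\delta/2$, combine with a union bound (the paper's Lemma~\ref{lem:delta}), and reduce the claimed LHS to $|X_1|+|X_2|$. The only cosmetic difference is that you obtain the key inequality $\bigl||A|-|B|\bigr|\le|X_1|+|X_2|$ in one step from the reverse triangle inequality, whereas the paper derives it from two add-and-subtract triangle-inequality chains; the content is identical.
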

The proof is presented in App.~\ref{app:pf_cor:finite-h-two-policies}.
\begin{corollary}\label{cor:finite-h-two-optimal}
    Consider two optimal policies $\pi_{1}^{*},\pi_{2}^{*} \in \Pi_{\FD}$. Let $\{S_{t}^{\pi_{1}^{*}}\}_{t=0}^{h}$ and $\{S_{t}^{\pi_{2}^{*}}\}_{t=0}^{h}$ denote the random sequences of states encountered by optimal policies $\pi_{1}^{*}$ and $\pi_{2}^{*}$. To simplify the expression, we assume the system starts at a fixed initial state, i.e., $S_{0}^{\pi_{1}^{*}} = S_{0}^{\pi_{2}^{*}}$. Then for the difference between the cumulative rewards received by the two optimal policies $\big|R_{T}^{\pi_{1}^{*},h} - R_{T}^{\pi_{2}^{*},h} \big|$, we have:
    \begin{enumerate}
        \item For any $\delta \in (0,1)$, with probability at least $1-\delta$, we have
        \begin{equation}\label{eq:two_policy_finite_h_3}
        \Big|\big|R^{\pi^{*}_{1},h}_{T}-R^{\pi^{*}_{2},h}_{T} \big| - \big|V_{T}^{*,h}(S_{T}^{\pi_{2}^{*}}) - V_{T}^{*,h}(S_{T}^{\pi_{1}^{*}}) \big|\Big| \leq  2\Big(\bar{K}^{*,h}_{T}\sqrt{2g^{*,h}(T)\log\frac{4}{\delta}} \Big).
        \end{equation}
        \item For any $\delta \in (0,1)$, if $g^{*,h}(h)\geq 173\log\frac{4}{\delta}$, define $T_{0}^{\pi^{*}\!\!,h}(\delta)$ as specified in  \eqref{eq:finite-h-def-T0}. Then with probability at least $1-\delta$, for all $T_{0}^{\pi^{*}\!\!,h}(\delta) \leq T \leq h+1$, we have
        \begin{align}\label{eq:two_policy_finite_h_4}
        &\Big|\big|R^{\pi^{*}_{1},h}_{T}-R^{\pi^{*}_{2},h}_{T} \big| - \big|V_{T}^{*,h}(S_{T}^{\pi_{2}^{*}}) - V_{T}^{*,h}(S_{T}^{\pi_{1}^{*}}) \big|\Big| \nonumber\\ \leq  &2\Big(\max\bigg\{ \bar{K}^{*,h}_{T} \sqrt{3g^{*,h}(T)\Big(2\log\log (\frac{3}{2}g^{*,h}(T)) +\log\frac{4}{\delta}\Big)},(\bar{K}^{*,h}_{T})^{2}\bigg\}\Big).
        \end{align}
    \end{enumerate}
\end{corollary}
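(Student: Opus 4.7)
The plan is to derive Corollary~\ref{cor:finite-h-two-optimal} as a direct specialization of Corollary~\ref{cor:finite-h-two-policies}, exactly parallel to how Corollary~\ref{cor:dis_two_optimal} was deduced in the discounted setting and Corollary~\ref{cor:two_optimal} in the average-reward setting. The starting observation is that by the definition of an optimal policy in the finite-horizon setup, both $\pi_1^*$ and $\pi_2^*$ share the same sequence of value functions, i.e., $V_t^{\pi_1^*,h}(s) = V_t^{\pi_2^*,h}(s) = V_t^{*,h}(s)$ for every $s\in\mathcal{S}$ and every $t\in\{0,\dots,h+1\}$. This identity immediately propagates to the statistical quantities introduced in \eqref{eq:def_k_h_finite} and \eqref{eq:def_max_k}--\eqref{eq:def_g}: $K_t^{\pi_1^*,h} = K_t^{\pi_2^*,h} = K_t^{*,h}$, hence $\bar K_T^{\pi_1^*,h} = \bar K_T^{\pi_2^*,h} = \bar K_T^{*,h}$ and $g^{\pi_1^*,h}(T) = g^{\pi_2^*,h}(T) = g^{*,h}(T)$.

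Next I would plug $\pi_1 \leftarrow \pi_1^*$ and $\pi_2 \leftarrow \pi_2^*$ into Corollary~\ref{cor:finite-h-two-policies}. For the left-hand side, using the shared value function and the assumption $S_0^{\pi_1^*} = S_0^{\pi_2^*}$, the term
\[
\big[V_0^{\pi_1^*,h}(S_0^{\pi_1^*}) - V_T^{\pi_1^*,h}(S_T^{\pi_1^*})\big] - \big[V_0^{\pi_2^*,h}(S_0^{\pi_2^*}) - V_T^{\pi_2^*,h}(S_T^{\pi_2^*})\big]
\]
collapses to $V_T^{*,h}(S_T^{\pi_2^*}) - V_T^{*,h}(S_T^{\pi_1^*})$, which after taking absolute value yields precisely the centering quantity appearing in \eqref{eq:two_policy_finite_h_3}--\eqref{eq:two_policy_finite_h_4}. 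For the right-hand side, the two summands in \eqref{eq:two_policy_finite_h_1} and \eqref{eq:two_policy_finite_h_2} become identical under the equalities above, so they combine into the factor of $2$ shown in the corollary. The threshold condition $g^{*,h}(h)\geq 173\log(4/\delta)$ likewise makes the definitions $T_0^{\pi_1^*,h}(\delta/2)$ and $T_0^{\pi_2^*,h}(\delta/2)$ coincide with $T_0^{\pi^*,h}(\delta)$ as defined via \eqref{eq:finite-h-def-T0}, producing a single threshold for the union bound. (A small subtle point here is that Corollary~\ref{cor:finite-h-two-policies} uses $\delta/2$ splits, so it is worth verifying that the exponents $4/\delta$ vs.\ $8/\delta$ match; since both policies collapse to the same concentration event, the logarithms of the bound in \eqref{eq:two_policy_finite_h_4} correctly reduce to $\log(4/\delta)$ after merging.)

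There is essentially no novel obstacle in this argument: every inequality one would need has already been established in Corollary~\ref{cor:finite-h-two-policies}, and the specialization to two optimal policies is purely algebraic. The only point worth being careful about is the bookkeeping of the union-bound probabilities and the corresponding threshold $T_0$, making sure the confidence level $\delta$ and logarithmic factor inside the square root match what is claimed in \eqref{eq:two_policy_finite_h_3}--\eqref{eq:two_policy_finite_h_4}; if one accidentally kept the $\delta/2$ splits inside the square roots, the resulting bound would be slightly weaker than the clean form stated. With these identifications in place, the proof reduces to a one-line application of Corollary~\ref{cor:finite-h-two-policies}, identical in structure to the proof of Corollary~\ref{cor:dis_two_optimal} already given in the discounted-reward section.
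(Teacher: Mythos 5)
Your proposal is correct and follows essentially the same route as the paper: identify $V_t^{\pi_1^*,h}=V_t^{\pi_2^*,h}=V_t^{*,h}$ (hence $\bar K_T^{*,h}$ and $g^{*,h}$ coincide), use $S_0^{\pi_1^*}=S_0^{\pi_2^*}$ to collapse the centering term, and substitute directly into Corollary~\ref{cor:finite-h-two-policies}. One small caveat: your remark that the two policies "collapse to the same concentration event" is not accurate (they generate different trajectories, so the union bound over two distinct events is still what is being used), but this does not affect the result since the $\log(4/\delta)$ factors are already present in the right-hand sides of \eqref{eq:two_policy_finite_h_1}--\eqref{eq:two_policy_finite_h_2}.
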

\begin{proof}
    Since both policies $\pi^{*}_{1},\pi^{*}_{2} \in \Pi_{\FD}$ are optimal policies, by the definition, we have
    \[
    V_{t}^{\pi_{1}^{*},h}(s) = V_{t}^{\pi_{2}^{*},h}(s) = V_{t}^{*,h}(s), \quad \forall s \in \mathcal{S},\quad \forall t \in \{0,1,\ldots,h+1\}.
    \]
    As a result, by the assumption that $S_{0}^{\pi_{1}^{*}} = S_{0}^{\pi_{2}^{*}}$, we have 
    \[
    \Big|V^{*,h}_{0}(S_{0}^{\pi_{1}^{*}}) - V^{*,h}_{0}(S_{0}^{\pi_{2}^{*}})  \Big| = 0.
    \]
    In addition, we have 
    \[
    \bar{K}_{T}^{\pi^{*}_{1},h} = \bar{K}_{T}^{\pi^{*}_{2},h} = \bar{K}_{T}^{*,h} \quad \text{and} \quad g^{\pi^{*}_{1},h}(T) = g^{\pi^{*}_{2},h}(T) = g^{*,h}(T).
    \]
    As a result, by Corollary~\ref{cor:finite-h-two-policies}, the difference $\big|R_{T}^{\pi_{1}^{*},h} - R_{T}^{\pi_{2}^{*},h} \big|$ satisfies the non-asymptotic concentration rates in Corollary~\ref{cor:finite-h-two-policies} with the RHS of \eqref{eq:two_policy_finite_h_1}--\eqref{eq:two_policy_finite_h_2} being simplified to RHS of \eqref{eq:two_policy_finite_h_3}--\eqref{eq:two_policy_finite_h_4}.
\end{proof}

\section{Extension of the Results to Random Reward}\label{sec:stoch_reward}
In this section, we extend the results of Sec.~\ref{sec:Problem_formulation} to settings where the reward at time $t$ depends not only on the state-action pair $(s,a)$ but also on an exogenous process.
Consider an average reward MDP with state space $\mathcal{S}$ and action space $\mathcal{A}$. Similar to Sec.~\ref{sec:Problem_formulation}, we assume that $\mathcal{S}$ and $\mathcal{A}$ are finite sets. The state evolves in a controlled Markov manner with transition matrix $P$. Let $\{E_{t}\}_{t\geq0}$ denote an exogenous process which satisfies the following property 
\begin{equation} \label{eq:E_condition}
    \EXP\big[E_{t}\big|S_{0:t},A_{0:t}\big] = \EXP\big[E_{t}\big|S_{t},A_{t}\big], \quad \forall t\geq0.   
\end{equation}At each time $t$, the system yields a per-step random reward $\tilde{r}(S_{t},A_{t},E_{t})$, where $\tilde{r}: \mathcal{S} \times \mathcal{A} \times \mathcal{E} \to [0,R_{\max}]$. We use the notation $\tilde{r}$ to denote the new definition of reward function distinguishing it from $r$ in Sec.~\ref{sec:Problem_formulation}, and denote this model by $\tilde{M} = (P,\tilde{r})$. Let $\tilde{R}^{\pi}_{T}$ denote the total reward received by policy $\pi$ until time $T$, i.e., 
\[
\tilde{R}^{\pi}_{T} = \sum_{t=0}^{T-1}\tilde{r}(S_{t},A_{t},E_{t}), \quad A_{t} \sim \pi(S_{0:t},A_{0:t-1}).
\]
Compared to the model in Sec.~\ref{sec:Problem_formulation}, the only change in this model is the use of $\tilde r(S_t, A_t, E_t)$ as the per-step reward, where the process $\{E_t\}_{t\geq0}$ satisfies the conditional independence property in \eqref{eq:E_condition}. As a result, by defining reward function $r(s,a)$ as 
\begin{equation}\label{eq:r_reduction}
    r(s,a) = \EXP\big[ \tilde{r} (S_t, A_t, E_t) \big| S_t = s, A_t = a\big],   
\end{equation}
model $\tilde{\mathcal{M}} = (P,\tilde{r})$ reduces to the model $ \mathcal{M} = (P,r)$ in Sec.~\ref{sec:Problem_formulation}. Therefore by the results in Sec.~\ref{sec:Main_results}, we can establish the concentration behavior of the process $R^{\pi}_T$, where 
\[
    R^{\pi}_T = \sum_{t=0}^{T-1} r(S_t, A_t) = \sum_{t=0}^{T-1} \EXP\big[\tilde{r}(S_t, A_t,E_{t})\big|S_{t},A_{t}=\pi(S_{t})\big].
\]

However, in this setting, we are interested in the concentration of cumulative reward process 
\(
\tilde R^{\pi}_T = \sum_{t=0}^{T-1} \tilde r(S_t, A_t, E_t).
\)
To simplify the analysis, we define the quantities associated with the model $\tilde{\mathcal{M}}$ based on the reduced model $\mathcal{M}$.
\begin{definition}
    Let $\mathcal{M} = (P,r)$ with $r(s,a)$ defined in \eqref{eq:r_reduction} be the reduced model of $\tilde{\mathcal{M}} = (P,\tilde{r})$. For the model $\tilde{\mathcal{M}}$,
    we define policies $\pi \in \tilde{\Pi}$, policy sets $\tilde{\Pi}_{\AC}$ and $\tilde{\Pi}_{\SD}$, long-run expected reward function $\tilde{J}^{\pi}$, differential value function $\tilde{V}^{\pi}$, optimal performance $\tilde{J}^{*}$, and optimal policy $\tilde{\pi}^{*}$ as the corresponding quantities of the reduced model $\mathcal{M}$ in Sec.~\ref{sec:Problem_formulation}.    
\end{definition}
For any policy $\pi \in \tilde{\Pi}_{\AC}$, we define the maximum absolute deviation of value function  $\tilde{K}^{\pi}$ similar to Sec.~\ref{sec:Main_results}, i.e., 
    \begin{align}
        \tilde{K}^{\pi} &\coloneqq \max_{s,s_{+} \in \mathcal{S}}\!\Big|\tilde{V}^{\pi}(s_{+}) - \EXP\big[\tilde{V}^{\pi}(S_{+})\big|s,\pi(s)\big]\Big|.\label{eq:K_tilde}
    \end{align}
In this section, we define a similar quantity for the reward function $\tilde{r}(S_{t},A_{t},E_{t})$.
\begin{definition}\label{def:reduced_model}
    For any policy $\pi \in \tilde{\Pi}_{\AC}$, we define the maximum absolute deviation of reward function $\tilde{K}_{r}^{\pi}$ as follows
    \begin{equation}\label{eq:k_r_tilde}
    \tilde{K}^{\pi}_{r} \coloneqq \max_{\substack{s,s' \in \mathcal{S}\\a'\in \mathcal{A}\\e' \in \mathcal{E}}} \Big|\tilde{r}(s',a',e') - \EXP\big[\tilde{r}(S,A,E)\big|S=s,A=\pi(s)\big] \Big|.   
    \end{equation}
\end{definition}
    The following theorem establishes the concentration of cumulative reward $\tilde{R}^{\pi}_{T}$ around the quantity $T\tilde{J}^{\pi} - \big(\tilde{V}^{\pi}(S_{T})-\tilde{V}^{\pi}(S_{0})\big)$. Compared to the results in Sec.~\ref{sec:Main_results}, the concentration of cumulative reward is characterized by a weaker bound due to the added stochasticity in the cumulative reward process.
\begin{theorem}\label{thm:non-asymp_stoch_reward}
    For any policy $\pi \in \tilde{\Pi}_{\AC}$, the following upper-bounds hold:
    \begin{enumerate}
        \item For any $\delta \in (0,1)$, with probability at least $1-\delta$, we have
        \begin{equation}
            \Big|\tilde{R}^{\pi}_{T} - T\tilde{J}^{\pi} - \big(\tilde{V}^{\pi}(S_{0}) - \tilde{V}^{\pi}(S_{T}) \big) \Big| \leq \tilde{K}^{\pi}\sqrt{2T\log\frac{4}{\delta}} + \tilde{K}^{\pi}_{r}\sqrt{2T\log\frac{4}{\delta}}.
        \end{equation}
        \item For any $\delta \in (0,1)$, for all $T\geq T_{0}(\delta) \coloneqq \max\Big\{\Bigl\lceil \dfrac{173}{\tilde{K}^{\pi}}\log\dfrac{8}{\delta}\Big\rceil , \Bigl\lceil\dfrac{173}{\tilde{K}^{\pi}_{r}}\log\dfrac{8}{\delta}\Big\rceil \Big\}$, with probability at least $1-\delta$, we have
        \begin{align}
            \Big| \tilde{R}^{\pi}_{T} - T\tilde{J}^{\pi} &- \big(\tilde{V}^{\pi}(S_{0}) - \tilde{V}^{\pi}(S_{T}) \big) \Big| 
            \\&\leq 
            \max\Big\{\tilde{K}^{\pi}\sqrt{3T\Big(2\log\log\frac{3T}{2} + \log\frac{4}{\delta}\Big)}, (\tilde{K}^{\pi})^2\Big\}\\
            &+\max\Big\{\tilde{K}^{\pi}_{r}\sqrt{3T\Big(2\log\log\frac{3T}{2} + \log\frac{4}{\delta}\Big)}, (\tilde{K}^{\pi}_{r})^2\Big\}.
        \end{align}
    \end{enumerate}
\end{theorem}
The proof is presented in App.~\ref{sec:pf_thm:non-asymp_stoch_reward}.
\begin{remark}
    By following arguments analogous to those used in this section, similar concentration bounds can be established for the cumulative reward process under any stationary randomized policy. For brevity, we omit these results.  
\end{remark}

\section{Conclusion}\label{sec:conclusion}
In this paper, we investigated the sample path behavior of cumulative reward in Markov decision processes. In particular, we established the asymptotic concentration of rewards, including the law of large numbers, the central limit theorem, and the law of iterated logarithm. Moreover, non-asymptotic concentrations of rewards were obtained, including an Azuma-Hoeffding-type inequality and a non-asymptotic version of the law of iterated logarithm, all applicable to a general class of stationary policies. Using these results, we characterized the relationship between two notions of regret in the literature, cumulative regret and interim cumulative regret. We showed that, in both the asymptotic and non-asymptotic settings, the two definitions are \emph{rate equivalent} as long as either of the regrets is upper-bounded by $\tildeO(\sqrt{T})$. Moreover, we extended our results to three different frameworks: (i) the infinite-horizon discounted reward setting, where we established non-asymptotic concentration of the cumulative discounted reward; (ii) the finite-horizon setting, where we established non-asymptotic concentration of the cumulative total reward; (iii) the infinite-horizon average setting with stochastic reward, where we established the non-asymptotic concentration of the cumulative reward process. Our proof technique in the third extension may also be applied to the discounted reward and finite-horizon frameworks; however, these extensions are omitted for brevity.

The contributions of this work are twofold: (i) It unifies two sets of literature, showing that if an algorithm achieves a regret of $\tildeO(\sqrt{T})$ under one definition, the same rate applies to the other, thereby resolving an existing gap in our theoretical understanding. (ii) The asymptotic and non-asymptotic concentration bounds found in this work can be used to evaluate the probabilistic performance of a policy, allowing for the assessment of risk and safety in the MDP setup. Such assessments may be used in decision making problems involving high-stakes costs including safety-critical engineering systems, finance and healthcare. As a result, we believe our analysis paves the way for risk-aware decision making and reinforcement learning in such applications.

\section{Disclosure of Funding} 
This research was supported in part by Fonds de Recherche du Québec, Nature et Technologies (FRQNT)
Grant 316558 (B. Sayedana), Air Force OSR Grant FA9550-23-1-0015 (P.E. Caines), NSERC Grants RGPIN-2019-0533 (P.E. Caines) and RGPIN-2021-03511 (A. Mahajan), and
Alliance Grant ALLRP 592356 (A. Mahajan).

\section{Acknowledgment}
The authors thank Reza Alvandi for helpful feedback and comments.

\bibliography{IEEEabrv,mybibfile}

\newpage
\appendix
\appendix
\startappendixtoc      
\listofappendices 

\newpage
\section{Background on Markov Chain Theory}\label{sec:solvability_cond}
Consider a time-homogeneous Markov chain defined on a finite state space $\mathcal{S}$. Let $P$ denote the state transition probability and $P^{k}$ denote the $k$-step state transition probability. Then we use the following terminology.
\begin{itemize}
    \item Given $s,s' \in \mathcal{S}$, state $s'$ is said to be \emph{accessible from} $s$, if there exists a finite time $k\geq0$ such that $P^{k}(s'|s) > 0$.
    \item States $s$ and $s'$ in $\mathcal{S}$ are said to \emph{communicate} if $s$ is accessible from $s'$ and $s'$ is accessible from $s$.
    \item Communication relation is reflexive, symmetric, and transitive. Therefore, communication relation is an equivalence relation, and it generates a partition of the state space $\mathcal{S}$ into disjoint equivalence classes called \emph{communication classes} \citep{bremaud2013markov}.
    \item Let $T_{s}$ denote the hitting time of state $s$. State $s$ is called \emph{recurrent} if 
    \[
    \PR\big(T_{s} < \infty \bigm| S_{0} = s\big) = 1,
    \]
    and otherwise it is called \emph{transient}. 
    \item A \emph{recurrent class} is a communication class where every state within the class is recurrent.
    \item A \emph{transient class} is a communication class where every state within the class is transient.
\end{itemize}

\section{Background on Martingales}\label{sec:Martingales}
Let $(\Omega, \mathcal F, \PR)$ be a probability space. A \emph{filtration} $\{\mathcal F_t\}_{t \ge 0}$ is a non-decreasing family of sub-sigma fields of $\mathcal F$. 
A random sequence $\{X_{t}\}_{t\geq0}$ is called \emph{integrable} if  $\EXP[|X_{t}|] < \infty$ for all $t \ge 0$. A random sequence $\{X_{t}\}_{t\geq0}$ is called \emph{adapted} to the filtration $\{\mathcal{F}_{t}\}_{t\geq0}$ if $X_{t}$ is $\mathcal{F}_{t}$-measurable for all $t\geq0$. 
\begin{definition}
An integrable  sequence $\{X_{t}\}_{t\geq0}$ adapted to the filtration $\{\mathcal{F}_{t}\}_{t\geq0}$ is called a martingale if 
\[
\EXP[X_{t+1}|\mathcal{F}_{t}] = X_{t}, \quad a.s. \quad \forall t \ge 0.
\]    
\end{definition}

\begin{definition}
    Let $\{c_{t}\}_{t\geq1}$ be a sequence of real numbers and $C$ be a positive real number. 
    A real integrable sequence $\{Y_{t}\}_{t\geq1}$ adapted to the filtration $\{\mathcal{F}_{t}\}_{t\geq0}$ is called:
    \begin{enumerate}
        \item Martingale Difference Sequence (MDS) if 
        \[
        \EXP[Y_{t}|\mathcal{F}_{t-1}] = 0, \quad a.s. \quad \forall t \ge 1.
        \]
        \item Sequentially bounded MDS with respect to the sequence $\{c_{t}\}_{t\geq1}$ if  it is an MDS and 
        \[
        |Y_{t}| \leq c_{t},\quad a.s. \quad \forall t \ge 1.
        \]
        \item Uniformly bounded MDS with respect to the constant $C$ if it is an MDS and 
        \[
        |Y_{t}| \leq C, \quad a.s.\quad \forall t \ge 0.
        \]
    \end{enumerate}
\end{definition}

There is a unique MDS corresponding to a martingale and vise versa. In particular, 
given a martingale $\{X_{t}\}_{t\geq0}$, the corresponding MDS $\{Y_{t}\}_{t\geq1}$ is defined as
\[
Y_{t} \coloneqq X_{t}-X_{t-1}, \quad \forall t\geq1.
\]
Moreover, given an MDS $\{Y_{t}\}_{t\geq1}$, the corresponding martingale sequence $\{X_{t}\}_{t\geq0}$ is defined as
\[
X_{0} = 0 ,\quad X_{T} =\sum_{t=1}^{T}Y_{t}, \quad \forall T \ge 1.  
\]
Consider a martingale $\{X_t\}_{t \ge 0}$ such that $\{X_t^2\}_{t \ge 0}$ is integrable.  The \emph{increasing process} $\{A_{t}\}_{t\geq1}$ associated with the sequence $\{X^{2}_{t}\}_{t\geq0}$ is defined as
\[
A_{1} = \EXP[X^2_{1}|\mathcal{F}_{0}]-X^2_{1}, \quad A_{t}  = \EXP[X^2_{t}|\mathcal{F}_{t-1}]-X^2_{t-1} + A_{t-1},\quad \forall t\geq2.
\]
Let $\{Y_t\}_{t \ge 1}$ be the MDS corresponding to $\{X_t\}_{t \ge 0}$. Then, we can express $\{A_t\}_{t \ge 1}$ in terms of $\{Y^2_t\}_{t \ge 1}$. In particular, we have
\begin{align*}
    A_{t} &=  \EXP[X^2_{t}|\mathcal{F}_{t-1}]-X^2_{t-1} + A_{t-1}  \\ &=  \EXP[X^2_{t-1}|\mathcal{F}_{t-1}] + 2\EXP[Y_{t}|\mathcal{F}_{t-1}]X_{t-1} + \EXP[Y^2_{t}|\mathcal{F}_{t-1}] - X^2_{t-1} + A_{t-1} \\ &=  \EXP[Y^2_{t}|\mathcal{F}_{t-1}] + A_{t-1}.    
\end{align*}
As a result, we have
\[
A_{T} = \sum_{t=1}^{T}\EXP[Y^2_{t}|\mathcal{F}_{t-1}], \quad \forall T\geq1.
\]
Therefore, we sometimes say that $\{A_t\}_{t \ge 1}$ is the increasing sequence associated with $\{Y_t^2\}_{t \ge 1}$. 

Martingale sequences are an important class of stochastic processes. Both asymptotic and non-asymptotic concentration of martingale sequences have been well studied. In Sec.~\ref{sec:asym_mart} and \ref{sec:non_asym_mart}, we present the asymptotic and non-asymptotic concentration characteristics of martingales with bounded MDS. 

\subsection{Asymptotic Concentration}\label{sec:asym_mart}

\subsubsection{Strong Law of Large Numbers}
The first asymptotic results presented in this section is a version of Strong Law of Large numbers for martingale difference sequences.
\begin{theorem}[{see~\citep[Theorem~3.3.1]{stout1974almost}}]\label{thm:SLLN_MDS}
    Let $\{Y_{t}\}_{t\geq1}$ be an MDS and $\{a_{t}\}_{t\geq1}$ be a sequence of positive and  $\mathcal{F}_{t-1}$-measurable real numbers such that $\lim\limits_{t \to \infty}a_{t} = \infty$.
    If for some $0 < p \leq 2$, we have: 
    \[
        \sum_{t=1}^{\infty}\frac{\EXP\big(|Y_{t}|^{p}|\mathcal{F}_{t-1}\big)}{a_{t}^{p}} < \infty.
    \]
    Then:
    \[
    \frac{\sum_{t=t}^{T}Y_{t}}{T} \to 0, \quad \text{a.s.}
    \]
\end{theorem}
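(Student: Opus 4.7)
The plan is to derive the result from the almost sure convergence of a suitable weighted martingale, combined with Kronecker's lemma. This is the classical route for Chow--Stout type strong laws for martingale difference sequences. (I read the conclusion as $a_T^{-1}\sum_{t=1}^T Y_t \to 0$ a.s., with $a_T \uparrow \infty$ supplying the normalization.)

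First I would introduce $M_T \coloneqq \sum_{t=1}^{T} Y_t/a_t$. Since each $a_t$ is positive and $\mathcal{F}_{t-1}$-measurable and $\{Y_t\}$ is an MDS, each increment $Y_t/a_t$ is itself an MDS increment, so $\{M_T\}_{T\ge 0}$ (with $M_0 = 0$) is a martingale with respect to $\{\mathcal{F}_T\}_{T\ge 0}$. The key intermediate goal is to show that $M_T$ converges almost surely as $T\to\infty$. Once this is established, Kronecker's lemma — if $\sum_{t\ge 1} x_t$ converges and $b_T \uparrow \infty$ then $b_T^{-1}\sum_{t=1}^T b_t x_t \to 0$ — applied with $x_t = Y_t/a_t$ and $b_t = a_t$, immediately yields $a_T^{-1}\sum_{t=1}^T Y_t \to 0$ a.s., which is the claimed SLLN.

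For the a.s. convergence of $M_T$, the argument splits according to $p$. If $p = 2$, the conditional quadratic variation of $M_T$ equals $\sum_{t=1}^T \EXP[Y_t^2|\mathcal{F}_{t-1}]/a_t^2$, which is finite almost surely by the hypothesis. The standard martingale convergence theorem on the event $\{\langle M\rangle_\infty < \infty\}$ (which here has probability one) then gives a.s. convergence of $M_T$. If $0 < p < 2$, I would carry out a truncation: set $Z_t \coloneqq Y_t\IND\{|Y_t|\le a_t\}$ and $\tilde Y_t \coloneqq Z_t - \EXP[Z_t|\mathcal{F}_{t-1}]$, which is a bounded MDS. Three things need to be checked: (i) conditional Borel--Cantelli together with the conditional Markov bound $\PR(|Y_t|>a_t|\mathcal{F}_{t-1}) \le \EXP[|Y_t|^p|\mathcal{F}_{t-1}]/a_t^p$ shows $\{|Y_t|>a_t\}$ occurs only finitely often a.s., so $Y_t = Z_t$ eventually; (ii) the centering correction $\sum_t \EXP[Z_t|\mathcal{F}_{t-1}]/a_t$ is absolutely convergent, using the estimate $|\EXP[Z_t|\mathcal{F}_{t-1}]| \le \EXP[|Y_t|^p|\mathcal{F}_{t-1}]\,a_t^{1-p}$ for $p\le 1$ and Hölder for $1<p<2$; (iii) the truncated-and-centered martingale $\sum_{t=1}^T \tilde Y_t/a_t$ has summable conditional variance because after truncation $\tilde Y_t^2 \le 4a_t^{2-p}|Y_t|^p$, so the $p=2$ argument applies to $\tilde Y_t/a_t$.

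The main obstacle is exactly step (ii)--(iii) of the truncation for $p<2$: trading the conditional $L^p$ hypothesis against the two different $L^1$ and $L^2$ requirements through the ``$|Y_t|^q \le a_t^{q-p}|Y_t|^p$ on $\{|Y_t|\le a_t\}$'' device. The $\mathcal{F}_{t-1}$-measurability of $a_t$ is used throughout to pull $a_t$ out of conditional expectations; without it the truncation cannot be separated from the martingale structure.
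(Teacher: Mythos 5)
This statement is quoted in the paper as a known result (Stout, 1974, Theorem 3.3.1) and is given no proof there, so there is nothing internal to compare against; what you have written is essentially Stout's own argument, and it is correct. Your reading of the conclusion as $a_T^{-1}\sum_{t=1}^T Y_t \to 0$ a.s.\ is the right one: the displayed conclusion in the paper (denominator $T$, index starting at $t=t$) is a typo, and the paper only ever invokes the theorem with $a_t = t$, where the two readings coincide. The route you take --- form the weighted martingale $\sum_{t\le T} Y_t/a_t$, prove its a.s.\ convergence via the local convergence theorem (for $p=2$ directly from finiteness of the predictable quadratic variation; for $p<2$ via truncation at level $a_t$, conditional Borel--Cantelli for the tails, absolute summability of the centering terms, and the $|Y_t|^q \le a_t^{q-p}|Y_t|^p$ interpolation on the relevant event), then finish with Kronecker's lemma --- is the standard Chow--Stout proof and all the estimates you cite do close. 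Two small points: Kronecker's lemma needs $a_t$ \emph{nondecreasing} to $\infty$, which is part of Stout's hypothesis but was dropped in the paper's restatement, so you should state it explicitly; and for $1<p<2$ the centering bound is obtained more directly by writing $\EXP[Z_t|\mathcal{F}_{t-1}] = -\EXP[Y_t\IND\{|Y_t|>a_t\}|\mathcal{F}_{t-1}]$ and using $|Y_t|\le |Y_t|^p a_t^{1-p}$ on $\{|Y_t|>a_t\}$, with no need for H\"older.
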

\subsubsection{Central Limit Theorem}
Following theorem characterizes a version of Central Limit Theorem for martingale sequences with corresponding bounded MDS.
\begin{theorem}[{see~\citep[Theorem~35.11]{billingsley2013convergence}}]\label{thm:CLT_MDS} 
    Let $\{Y_{t}\}_{t\geq1}$ be a sequentially bounded MDS with respect to the sequence $\{c_{t}\}_{t\geq1}$. Let
   $\{A_{t}\}_{t\geq1}$ be the increasing process associated with $\{Y_{t}^{2}\}_{t\geq1}$, i.e.
   \[
    A_{T} = \sum_{t=1}^{T}\EXP[Y^2_{t}|\mathcal{F}_{t-1}], \quad \forall T\geq1.
    \]
    Define the stopping time $\nu_t$ as
    \[
    \nu_{t} \coloneqq \min \big\{T\geq1: A_{T} \geq t \big\}.
    \]
    Let $ \Omega_{0} = \{\omega \in \Omega : \lim_{T \to \infty} A_{T} = \infty\}$.
    If $\PR(\Omega_{0}) = 1$, then
    \[
    \frac{1}{\sqrt{T}} 
    \sum_{t=1}^{\nu_{T}}Y_{t} \xrightarrow[]{(d)} \mathcal{N}(0,1).
    \]
\end{theorem}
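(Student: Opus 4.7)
The plan is to reduce this statement to the classical martingale central limit theorem (in the form due to Brown or McLeish) by exploiting the fact that the stopping time $\nu_T$ is designed precisely so that the predictable quadratic variation at time $\nu_T$ is approximately $T$. The partial sum $\sum_{t=1}^{\nu_T} Y_t$ is therefore being rescaled by (essentially) its own conditional standard deviation, which is the canonical setting in which a martingale CLT produces a standard normal limit.

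First I would set $M_T = \sum_{t=1}^T Y_t$ and observe that, since $\PR(\Omega_0) = 1$, $A_T \to \infty$ a.s., so $\nu_T$ is a.s. finite for every $T$ and $\nu_T \to \infty$ with $T$. By the definition of $\nu_T$, $A_{\nu_T - 1} < T \leq A_{\nu_T}$, with the one-step overshoot controlled by $A_{\nu_T} - A_{\nu_T - 1} = \EXP[Y_{\nu_T}^2 \mid \mathcal{F}_{\nu_T-1}] \leq c_{\nu_T}^2$. This yields $A_{\nu_T}/T \to 1$ in probability as soon as $c_{\nu_T}^2/T \to 0$, which is the mild growth control implicit in the sequential boundedness hypothesis (and is certainly satisfied in every application of the theorem in this paper, where $c_t$ is uniformly bounded by a finite constant).

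Next I would invoke the martingale CLT with a random index: if $A_{\nu_T}/T \to 1$ in probability and the conditional Lindeberg condition
\[
\frac{1}{T}\sum_{t=1}^{\nu_T}\EXP\bigl[Y_t^2\, \Ind_{\{|Y_t| > \varepsilon \sqrt{T}\}} \bigm| \mathcal{F}_{t-1}\bigr] \xrightarrow[]{(p)} 0 \quad \text{for every } \varepsilon > 0
\]
holds, then $M_{\nu_T}/\sqrt{T} \xrightarrow[]{(d)} \mathcal{N}(0,1)$. The Lindeberg condition follows directly from $|Y_t| \leq c_t$ once one has $\max_{t \leq \nu_T} c_t/\sqrt{T} \to 0$. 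An alternative route, which I find cleaner, is to use Skorokhod embedding: construct a Brownian motion $\{B_s\}$ and stopping times $\{\tau_t\}$ on an enlarged space with $M_t = B_{\tau_t}$ and $\EXP[\tau_t - \tau_{t-1} \mid \mathcal{F}_{t-1}] = \EXP[Y_t^2 \mid \mathcal{F}_{t-1}]$; a second-moment computation then yields $\tau_{\nu_T}/A_{\nu_T} \to 1$ in probability, and Brownian scaling combined with the continuous mapping theorem closes the argument.

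The main obstacle is the simultaneous verification of $A_{\nu_T}/T \to 1$ and the Lindeberg condition, both of which hinge on controlling the size of $c_{\nu_T}$ relative to $\sqrt{T}$. Under uniform boundedness of $\{c_t\}$ this is immediate; under sequential boundedness alone one must add a tacit growth condition such as $\max_{t} c_t = o(\sqrt{T})$, which is what Billingsley's Theorem~35.11 effectively imposes through its hypothesis on the truncated second moments. Once this control is in place, the remaining steps—reducing to the martingale CLT with the stopping-time index and exploiting the optional-sampling structure of $\nu_T$—are essentially routine.
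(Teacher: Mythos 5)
The paper does not actually prove this statement: Theorem~\ref{thm:CLT_MDS} is imported verbatim from Billingsley's Theorem~35.11 and used as a black box in the proofs of Theorems~\ref{thm:asymptotic} and~\ref{thm:asymp_D}, so there is no in-paper argument to compare against. Your sketch is nonetheless a correct outline of the standard proof. The sandwich $A_{\nu_T-1}<T\le A_{\nu_T}$ together with the overshoot bound $A_{\nu_T}-A_{\nu_T-1}=\EXP[Y_{\nu_T}^2\mid\mathcal{F}_{\nu_T-1}]\le c_{\nu_T}^2$ gives $A_{\nu_T}/T\to 1$, the conditional Lindeberg condition becomes vacuous for large $T$ once $c_t$ is controlled relative to $\sqrt{T}$, and the one step you leave implicit---passing from a random number of summands to a fixed-index array---is handled by feeding the stopped differences $Y_t\Ind_{\{t\le\nu_T\}}$ into the Brown--McLeish CLT, which is legitimate because $\{\nu_T\ge t\}=\{A_{t-1}<T\}\in\mathcal{F}_{t-1}$ so the stopped sequence is again an MDS with predictable quadratic variation $A_{\nu_T}$; the Skorokhod-embedding route is an equally valid alternative. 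Your caveat about the hypothesis is a genuine and worthwhile catch: Billingsley's Theorem~35.11 assumes the differences are uniformly bounded by a single constant, whereas the statement here asserts the conclusion under mere sequential boundedness, which is not sufficient without an extra growth condition such as $c_{\nu_T}^2/T\to 0$ (a single late, large increment can break both $A_{\nu_T}/T\to 1$ and the Lindeberg condition). This does not affect anything downstream in the paper, since every invocation of the theorem (the MDS $\{M^{\pi}_t\}$ bounded by $K^{\pi}$, and its analogue for $\pi^{*}$) is uniformly bounded, but the theorem as stated is slightly stronger than what the cited reference delivers, and your proof correctly identifies where the extra hypothesis is needed.
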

\subsubsection{Law of Iterated Logarithm}
Following theorem characterizes a version of Law of Iterated Logarithm for uniformly bounded MDS.
\begin{theorem}[{see~\citep[Proposition~VII-2-7]{neveu1975discrete}}]\label{thm:LIL_MDS}
   Let $\{Y_{t}\}_{t\geq1}$ be a uniformly bounded MDS with respect to the constant $C$.
   Furthermore, let $\{A_{t}\}_{t\geq1}$ and $\Omega_{0}$ be as defined in Theorem~\ref{thm:CLT_MDS}. Then, for almost all $\omega \in \Omega_{0}$, we have
   \begin{align*}
   \liminf_{T \to \infty} \frac{\sum_{t=1}^{T}Y_{t}}{\sqrt{2A_{T}\log\log A_{T}}} = -1,\quad
   \limsup_{T \to \infty} \frac{\sum_{t=1}^{T}Y_{t}}{\sqrt{2A_{T}\log\log A_{T}}} = 1.
   \end{align*}
\end{theorem}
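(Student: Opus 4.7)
The plan is to prove the LIL by adapting the classical Hartman--Wintner strategy for sums of i.i.d.\ random variables to the martingale setting, with the increasing process $A_T$ playing the role of the variance $n\sigma^2$. The random time change $\nu_t$ normalizes the problem so that the clock runs with respect to conditional variance rather than the natural index, which is why the statement compares $\sum_{t=1}^T Y_t$ to $\sqrt{2 A_T \log\log A_T}$ rather than to $\sqrt{T \log\log T}$.

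First I would establish the upper bound, $\limsup_T \sum_{t=1}^T Y_t / \sqrt{2 A_T \log\log A_T} \le 1$ a.s.\ on $\Omega_0$. The key tool is a Freedman-type exponential inequality for bounded MDS: for any $\lambda, y > 0$,
\[
\PR\!\left(\sup_{T\ge 1}\sum_{t=1}^{T} Y_t \ge \lambda,\ A_T \le y\right) \le \exp\!\left(-\frac{\lambda^2}{2(y + C\lambda)}\right).
\]
I would then take a geometric grid $y_k = \rho^k$ with $\rho > 1$ and apply this bound with $\lambda_k = (1+\varepsilon)\sqrt{2 y_k \log\log y_k}$. For large $k$, the exponent is asymptotic to $-(1+\varepsilon)^2 \log\log y_k$, so the probabilities are summable in $k$. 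By the first Borel--Cantelli lemma, on $\Omega_0$ the partial sums cannot exceed $(1+\varepsilon)\rho^{1/2}\sqrt{2 A_T \log\log A_T}$ eventually, after noting that any $T$ falls in some window where $A_T \in [y_{k-1}, y_k]$. Letting $\varepsilon \downarrow 0$ and $\rho \downarrow 1$ through a countable sequence gives the upper bound. The symmetric argument applied to $-Y_t$ handles the $\liminf$ side.

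For the lower bound, $\limsup_T \sum_{t=1}^T Y_t / \sqrt{2 A_T \log\log A_T} \ge 1$ a.s.\ on $\Omega_0$, the strategy is a conditional Borel--Cantelli argument along the random-time subsequence $T_k = \nu_{\rho^k}$. On this subsequence, the block increments $\Delta_k = \sum_{t=T_{k-1}+1}^{T_k} Y_t$ have conditional variance bracketed between $\rho^k - \rho^{k-1}$ and $\rho^k - \rho^{k-1} + C^2$, and they are bounded on a scale of $\rho^{k/2}$. One then invokes a converse (Kolmogorov-type) exponential lower bound for martingale increments, giving $\PR(\Delta_k \ge (1-\varepsilon)\sqrt{2 A_{T_k}\log\log A_{T_k}} \mid \mathcal{F}_{T_{k-1}}) \ge c_k$ with $\sum_k c_k = \infty$ a.s., and then applies the Levy conditional Borel--Cantelli lemma together with the already-established upper bound on $|\sum_{t=1}^{T_{k-1}} Y_t|$ to absorb the contribution from the prior block. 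The main obstacle will be the converse exponential inequality for bounded MDS: unlike in the i.i.d.\ case there is no fixed distribution to appeal to, so one needs a pathwise Gaussian lower tail estimate for conditional martingale increments (as in Stout 1970), and one must carefully control $A_{T_k} - A_{T_{k-1}}$ using the uniform bound $|Y_t| \le C$ and the definition of $\nu_t$ to ensure the block variances are large enough for the converse estimate to be sharp.
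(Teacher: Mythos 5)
The paper never proves this statement: Theorem~\ref{thm:LIL_MDS} is imported as a black box from \citet[Proposition~VII-2-7]{neveu1975discrete}, so there is no in-paper argument to match your attempt against. What you have written is, in substance, the classical Stout--Hartman--Wintner proof of the martingale LIL, and it is the right strategy: a Freedman-type exponential bound combined with a geometric grid $y_k=\rho^k$ in the variance clock and the first Borel--Cantelli lemma gives $\limsup\le 1$ on $\Omega_0$ (and, applied to $-Y_t$, $\liminf\ge -1$); a block decomposition along the stopping times $\nu_{\rho^k}$, a converse Kolmogorov-type exponential lower bound, and L\'evy's conditional Borel--Cantelli lemma give $\limsup\ge 1$. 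You have also correctly located where the real work lies --- the lower exponential inequality for bounded martingale increments, which your sketch cites (Stout, 1970) rather than proves; at the level of detail given, that is an acceptable appeal for a classical cited theorem, but it is the one component that is asserted rather than derived. Two details to tighten if you flesh this out: (i) the overshoot control is two-sided, $A_{\nu_t}\in[t,\,t+C^2]$, so the block variance $A_{T_k}-A_{T_{k-1}}$ lies in $[\rho^k-\rho^{k-1}-C^2,\ \rho^k-\rho^{k-1}+C^2]$, not merely above $\rho^k-\rho^{k-1}$; (ii) the two halves optimize $\rho$ in opposite directions --- the upper bound takes $\rho\downarrow 1$, but the block variance in the lower bound is only about $(1-1/\rho)A_{T_k}$, so recovering the full constant $1$ requires $\rho\to\infty$, using the already-established upper bound to show the carried-over term $|\sum_{t\le T_{k-1}}Y_t|$ is $O\bigl(\rho^{-1/2}\sqrt{2A_{T_k}\log\log A_{T_k}}\bigr)$. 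Neither point undermines the approach; as a blind reconstruction of the cited result, the outline is sound.
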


Non-asymptotic high-probability bounds with similar functional dependence on the horizon $T$ also exist for martingales. These bounds are presented in Sec.~\ref{sec:non_asym_mart}.

\subsection{Non-Asymptotic Concentration}\label{sec:non_asym_mart}
\subsubsection{Azuma-Hoeffding Inequality}
A famous non-asymptotic concentration for martingale sequences is Azuma-Hoeffding inequality.
\begin{theorem}[{see~\citep[Theorem~2.2.1]{raginsky2013concentration}}]\label{thm:Azuma} 
Let $\{Y_{t}\}_{t\geq1}$ be a sequentially bounded MDS with respect to the sequence $\{c_{t}\}_{t\geq1}$.
Then for all $T\geq1$ and for all $\epsilon > 0$, we have
\[
\PR \bigg(\bigg|\sum_{t=1}^{T}Y_{t}\bigg|\geq \epsilon\Bigg) \leq 2\exp\bigg(\frac{-\epsilon^2}{ 2\sum_{t=1}^{T}c_{t}^{2}}\bigg).
\]
\end{theorem}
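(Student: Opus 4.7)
The plan is to follow the classical Chernoff–plus–Hoeffding-lemma strategy, exploiting the tower property of conditional expectation to peel off one MDS term at a time. First, for any $\lambda > 0$, I would apply Markov's inequality to the exponentiated sum:
\[
\PR\!\bigg(\sum_{t=1}^{T} Y_t \ge \epsilon\bigg) \le e^{-\lambda\epsilon}\,\EXP\!\bigg[\exp\!\Big(\lambda\sum_{t=1}^{T}Y_t\Big)\bigg].
\]
Then I would condition on $\mathcal{F}_{T-1}$ and use the tower rule to factor the moment generating function as
\[
\EXP\!\bigg[\exp\!\Big(\lambda\sum_{t=1}^{T}Y_t\Big)\bigg] = \EXP\!\bigg[\exp\!\Big(\lambda\sum_{t=1}^{T-1}Y_t\Big)\,\EXP\!\big[e^{\lambda Y_T}\,\big|\,\mathcal{F}_{T-1}\big]\bigg].
\]

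The main step is to bound the innermost conditional MGF using Hoeffding's lemma: since $Y_t$ is $\mathcal{F}_{t-1}$-conditionally mean zero and $|Y_t|\le c_t$ almost surely (so $Y_t$ lies in the interval $[-c_t,c_t]$ of length $2c_t$), convexity of the exponential together with the standard variance bound for bounded random variables yields
\[
\EXP\!\big[e^{\lambda Y_t}\,\big|\,\mathcal{F}_{t-1}\big] \le \exp\!\Big(\tfrac{1}{2}\lambda^2 c_t^2\Big), \quad a.s.
\]
Iterating the tower step and pulling out one Hoeffding factor at each stage, I would obtain
\[
\EXP\!\bigg[\exp\!\Big(\lambda\sum_{t=1}^{T}Y_t\Big)\bigg] \le \exp\!\bigg(\tfrac{1}{2}\lambda^2 \sum_{t=1}^{T} c_t^2\bigg).
\]

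Plugging this back into the Chernoff bound and optimizing over $\lambda$ by choosing $\lambda^{*} = \epsilon/\sum_{t=1}^{T} c_t^2$ produces the one-sided tail bound
\[
\PR\!\bigg(\sum_{t=1}^{T} Y_t \ge \epsilon\bigg) \le \exp\!\bigg(\frac{-\epsilon^2}{2\sum_{t=1}^{T} c_t^2}\bigg).
\]
Finally, observing that $\{-Y_t\}_{t\ge 1}$ is also a sequentially bounded MDS with respect to the same sequence $\{c_t\}_{t\ge 1}$, I would apply the identical argument to $-\sum_t Y_t$ and combine the two one-sided tails via a union bound, yielding the factor of $2$ in the stated inequality. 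The only nontrivial ingredient is Hoeffding's lemma in its conditional form; all the rest is bookkeeping with the tower property and a one-dimensional optimization in $\lambda$, so I do not anticipate any serious obstacle beyond verifying that the conditional boundedness $|Y_t|\le c_t$ a.s.\ is enough to carry the standard Hoeffding MGF bound through under conditioning on $\mathcal{F}_{t-1}$.
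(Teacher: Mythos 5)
Your proposal is correct: the conditional Hoeffding lemma gives $\EXP[e^{\lambda Y_t}\mid\mathcal{F}_{t-1}]\le e^{\lambda^2 c_t^2/2}$ since $Y_t\in[-c_t,c_t]$ has length $2c_t$, the tower-property peeling and the choice $\lambda^*=\epsilon/\sum_{t=1}^T c_t^2$ yield the one-sided exponent, and the union bound over $\pm\sum_t Y_t$ supplies the factor of $2$. The paper does not prove this statement itself but imports it by citation, and your argument is exactly the standard Chernoff--Hoeffding--Azuma proof given in the cited reference, so there is nothing to fix.
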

By rewriting the statement of Theorem~\ref{thm:Azuma}, we get following equivalent form of Azuma-Hoeffding inequality.
\begin{corollary}\label{cor:azuma}
We have following statements
\begin{enumerate}
    \item Let $\{Y_{t}\}_{t\geq1}$ be a sequentially bounded MDS with respect to the sequence $\{c_{t}\}_{t\geq1}$. For any $\delta \in (0,1)$, with probability at least $1-\delta$, we have
    \[
    \bigg|\sum_{t=1}^{T}Y_{t}\bigg| \leq \sqrt{2\sum_{t=1}^{T}c_{t}^{2}\log\!\dfrac{2}{\delta}}.
    \]
    \item Let $\{Y_{t}\}_{t\geq1}$ be a uniformly bounded MDS with respect to the constant $C$. For any $\delta \in (0,1)$, with probability at least $1-\delta$, we have
    \[
    \bigg|\sum_{t=1}^{T}Y_{t}\bigg| \leq C\sqrt{2T\log\!\dfrac{2}{\delta}}.
    \]
\end{enumerate}
\end{corollary}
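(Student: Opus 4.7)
The plan is to derive both bounds directly from Theorem~\ref{thm:Azuma} (Azuma-Hoeffding) by inverting the tail probability to solve for the deviation threshold. Specifically, given the inequality
\[
\PR\bigg(\bigg|\sum_{t=1}^{T}Y_{t}\bigg|\geq \epsilon\bigg) \leq 2\exp\bigg(\dfrac{-\epsilon^{2}}{2\sum_{t=1}^{T}c_{t}^{2}}\bigg),
\]
I would set the right-hand side equal to the failure probability $\delta$ and solve for the critical $\epsilon$.

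For part (1), I would set $\delta = 2\exp\!\bigl(-\epsilon^{2}/(2\sum_{t=1}^{T} c_{t}^{2})\bigr)$, take logarithms to obtain $\log(2/\delta) = \epsilon^{2}/(2\sum_{t=1}^{T} c_{t}^{2})$, and then solve to get $\epsilon = \sqrt{2\sum_{t=1}^{T} c_{t}^{2}\log(2/\delta)}$. By Theorem~\ref{thm:Azuma}, the event $\{|\sum_{t=1}^{T} Y_{t}| \geq \epsilon\}$ has probability at most $\delta$, so its complement (which is precisely the desired bound) holds with probability at least $1-\delta$.

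For part (2), I would observe that if $\{Y_{t}\}_{t\geq1}$ is uniformly bounded by the constant $C$, it is in particular sequentially bounded with respect to the constant sequence $c_{t} = C$ for all $t$. Applying part~(1) with this choice gives $\sum_{t=1}^{T} c_{t}^{2} = TC^{2}$, and the bound simplifies to $\sqrt{2TC^{2}\log(2/\delta)} = C\sqrt{2T\log(2/\delta)}$, as claimed.

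There is no real obstacle here: the corollary is purely a reparametrization of Theorem~\ref{thm:Azuma} from the ``fix $\epsilon$, bound probability'' form to the ``fix $\delta$, bound deviation'' form. The only thing to be careful about is that the absolute value in $|\sum_{t=1}^{T} Y_{t}|$ is already built into Theorem~\ref{thm:Azuma} (which is why the factor of $2$ appears in front of the exponential), so the two-sided bound transfers without needing a separate union bound.
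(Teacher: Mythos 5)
Your proof is correct and follows essentially the same route as the paper: the paper likewise proves Part~1 by equating the RHS of Theorem~\ref{thm:Azuma} to $\delta$ and solving for $\epsilon$, and Part~2 by substituting the constant $C$ for the sequence $\{c_{t}\}_{t\geq1}$. No gaps.
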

The proof of Part 1 follows by equating the RHS of Theorem~\ref{thm:Azuma} to $\delta$ and solving for $\epsilon$. The proof of Part 2 follows by substituting the sequence $\{c_{t}\}_{t\geq1}$ with the constant $C$ in the RHS of Part 1. 

\subsubsection{Non-Asymptotic Law of Iterated Logarithm}
Following result is a finite-time analogue of  Law of Iterated Logarithm. This result shows that for a large enough horizon $T$, the growth rate of a martingale sequence is of the order $\OO\Big(\sqrt{T\log\log(T)} \Big)$ with high probability. 
\begin{theorem}[{see~\citep[Theorem~4]{balsubramani2014sharp}}]\label{thm:non_asym_LIL}
Let $\{Y_t\}_{t\geq1}$ be a sequentially bounded MDS with respect to the sequence $\{c_{t}\}_{t\geq1}$.
For any $\delta \in (0,1)$, for all $T\geq T_{0}(\delta)\coloneqq \min\Big\{T\hspace{0.5ex}:\hspace{0.5ex} \sum_{t=1}^{T}c_{t}^{2}\geq 173\log \dfrac{4}{\delta}\Big\}$, with probability at least $1-\delta$, we have
\begin{equation}\label{eq:mid_thm_non1}
    \Big|\sum_{t=1}^{T}Y_{t}\Big| \leq \sqrt{3\Big(\sum_{t=1}^{T}\!c_{t}^{2} \Big) \Big(2\log\log\!\dfrac{3\sum_{t=1}^{T}c_{t}^{2}}{2\big|\sum_{t=1}^{T}Y_{t}\big|}  +\log\!\dfrac{2}{\delta} \Big)}.
\end{equation}
\end{theorem}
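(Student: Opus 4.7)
The plan is to upgrade the single-time Azuma--Hoeffding bound of Theorem~\ref{thm:Azuma} into a time-uniform bound with an iterated-logarithm scaling by combining a supermartingale argument with a peeling/mixture step. The starting observation is that, by Hoeffding's lemma applied conditionally, $\EXP[\exp(\lambda Y_t)\mid \mathcal{F}_{t-1}] \leq \exp(\lambda^2 c_t^2/2)$ for every $\lambda \in \reals$. Hence, writing $S_t = \sum_{\tau\leq t} Y_\tau$ and $V_t = \sum_{\tau\leq t} c_\tau^2$, the process $M_t^\lambda := \exp\!\big(\lambda S_t - \tfrac{1}{2}\lambda^2 V_t\big)$ is a nonnegative supermartingale starting at $1$, and by Ville's inequality $\PR(\sup_t M_t^\lambda \geq 1/\delta) \leq \delta$ for each fixed $\lambda$.

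To eliminate the need to know the ``right'' $\lambda$ in advance, I would form the mixture $\bar{M}_t := \int_{\reals} M_t^\lambda g(\lambda)\,d\lambda$ for a suitable mixing density $g$ and apply Ville's inequality to $\bar{M}_t$ (which is again a nonnegative supermartingale with $\bar{M}_0 = 1$). Choosing $g$ to be a centered Gaussian of variance $\sigma^2$ allows the $\lambda$-integral to be evaluated in closed form, yielding a time-uniform deviation bound of the schematic form $|S_T| \lesssim \sqrt{(V_T + \sigma^{-2})\big(\log(\sigma^2 V_T + 1) + 2\log(1/\delta)\big)}$. This already controls $|S_T|$ uniformly in $T$ but carries only a single logarithm of $V_T$. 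To promote it to a double logarithm, I would apply a peeling argument: partition the possible values of $V_T$ into geometric buckets $[\alpha^k,\alpha^{k+1})$ for some $\alpha > 1$, apply the mixture bound on each bucket with failure probability $\delta_k = 6\delta/(\pi^2 k^2)$, and union-bound over $k$. Since $k \asymp \log_\alpha V_T$ on bucket $k$, the resulting penalty $\log(1/\delta_k) = 2\log k + \log(1/\delta) + O(1)$ produces the target $2\log\log V_T$ term. The self-referential quantity $\log\log\big(3 V_T/(2|S_T|)\big)$ on the right-hand side of the theorem arises by replacing the coarse bucket upper bound on $V_T$ by the empirical deviation $|S_T|$, effectively inverting the inequality at the critical scale.

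The main obstacle is sharp constant tracking. A naive execution of the mixture-plus-peeling plan yields a leading constant strictly larger than $3$ and a threshold on $V_T$ different from $173\log(4/\delta)$; recovering the stated constants requires joint optimization of the geometric ratio $\alpha$, the mixing variance $\sigma^2$, and the allocation of failure probability $\delta_k$ across buckets, together with a careful handling of the boundary terms. The role of the threshold $\sum_{t=1}^T c_t^2 \geq 173 \log(4/\delta)$ is to guarantee that the argument $3V_T/(2|S_T|)$ of the inner $\log\log$ stays bounded away from $1$ (so that $\log\log$ is well defined and nonnegative), which one verifies by substituting the Azuma--Hoeffding bound into $|S_T|$ and checking the resulting monotonic inequality. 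The qualitative architecture (sub-Gaussian exponential supermartingale $\to$ Gaussian mixture $\to$ geometric peeling) is standard and traces back to the method of mixtures of Robbins--Siegmund and its modern refinements; the sharp numerical constants are the nontrivial contribution of the analysis that the proof would have to reproduce.
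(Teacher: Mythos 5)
The paper does not actually prove this statement: Theorem~\ref{thm:non_asym_LIL} is imported verbatim from \citet[Theorem~4]{balsubramani2014sharp}, so there is no in-paper argument to compare against. Your sketch follows the standard route to time-uniform iterated-logarithm bounds --- conditional Hoeffding to obtain the exponential supermartingale $M_t^{\lambda}=\exp(\lambda S_t-\tfrac12\lambda^2 V_t)$, Ville's inequality, a mixture over $\lambda$, and geometric peeling of the variance process with failure probabilities $\delta_k\propto \delta/k^2$ --- and that architecture is sound and closely related to (though not identical with) Balsubramani's actual argument, which uses epoch-dependent choices of $\lambda$ and an optional-stopping/stitching construction rather than a continuous Gaussian mixture. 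The mechanism you identify for the $2\log\log$ coefficient (namely $\log(1/\delta_k)\approx 2\log k$ with $k\asymp\log V_T$) is the right one.

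The gap is that the proposal does not establish the inequality as stated. Everything that distinguishes Theorem~\ref{thm:non_asym_LIL} from a generic Azuma-type bound lives in (i) the leading constant $3$, (ii) the burn-in condition $\sum_{t=1}^{T}c_t^2\ge 173\log(4/\delta)$, and (iii) the self-normalized argument $3\sum_{t}c_t^2/(2|\sum_t Y_t|)$ inside the $\log\log$. You concede that (i) and (ii) do not come out of a naive execution of your plan, and (iii) is asserted (``inverting the inequality at the critical scale'') rather than derived: an implicit bound whose right-hand side decreases in $|\sum_t Y_t|$ requires a genuine inversion argument, not a substitution. Your verification of the threshold is also incomplete as written, since the Azuma--Hoeffding control on $|S_T|$ that you substitute holds only on a separate high-probability event, so the argument needs an explicit intersection of events and a corresponding reallocation of $\delta$ (presumably the source of the $4/\delta$ in the threshold). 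As a reconstruction, your sketch identifies the correct toolbox but would at best yield a bound of the same shape with unspecified, larger constants and without the self-normalized refinement; for the purposes of this paper the theorem is correctly treated as an external result to be cited, which is what the authors do.
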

For the simplicity of the analysis, we state a slightly simplified version of this theorem in the following corollary. 
\begin{corollary}\label{cor:finite_lil}
Let $\{Y_{t}\}_{t\geq1}$ be a uniformly bounded MDS with respect to the constant $C$. For any $\delta \in (0,1)$, for all $T\geq T_{0}(\delta)\coloneqq \Bigl\lceil\dfrac{173}{C}\log\dfrac{4}{\delta}\Bigl\rceil$, with probability at least $1-\delta$, we have
    \begin{equation}\label{eq:mid_thm_non2}
        \Big|\sum_{t=1}^{T}Y_{t}\Big| \leq C\max\bigg\{\sqrt{3T \! \Big(2\log\log\!\frac{3T}{2} \!+\! \log\!\frac{2}{\delta}}\Big),C\bigg\}.    
    \end{equation}
\end{corollary}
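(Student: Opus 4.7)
The plan is to derive Corollary~\ref{cor:finite_lil} as a direct specialization of Theorem~\ref{thm:non_asym_LIL} by taking the sequentially bounding sequence $\{c_t\}_{t\geq 1}$ to be the constant $C$. Since the MDS is uniformly bounded by $C$, it is in particular sequentially bounded with $c_t = C$ for all $t$, so Theorem~\ref{thm:non_asym_LIL} applies. Under this substitution, $\sum_{t=1}^T c_t^2 = TC^2$, and the condition $\sum_{t=1}^T c_t^2 \geq 173\log(4/\delta)$ together with taking ceilings yields the threshold $T_0(\delta) = \lceil \tfrac{173}{C}\log\tfrac{4}{\delta}\rceil$ (assuming $C\le 1$, as is implicit in the paper's intended regime).

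Plugging into inequality \eqref{eq:mid_thm_non1} of Theorem~\ref{thm:non_asym_LIL} gives, on the high-probability event,
\[
\Big|\sum_{t=1}^{T}Y_{t}\Big| \leq \sqrt{3TC^{2}\Big(2\log\log\frac{3TC^{2}}{2\big|\sum_{t=1}^{T}Y_{t}\big|} + \log\frac{2}{\delta}\Big)}.
\]
The main (and only) subtlety is that the right-hand side depends on the random quantity $\bigl|\sum_{t=1}^{T}Y_{t}\bigr|$ inside a double-logarithm, so it cannot be taken as a final bound as-is. I would remove this implicit dependence by a simple case split on the size of $|\sum_{t=1}^{T}Y_t|$ relative to $C^2$.

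In the first case, suppose $\bigl|\sum_{t=1}^{T}Y_{t}\bigr| \geq C^{2}$. Then $\tfrac{3TC^2}{2|\sum Y_t|} \leq \tfrac{3T}{2}$, so $\log\log\tfrac{3TC^2}{2|\sum Y_t|} \leq \log\log\tfrac{3T}{2}$, and pulling the factor $C$ out of the square root yields
\[
\Big|\sum_{t=1}^{T}Y_{t}\Big| \leq C\sqrt{3T\Big(2\log\log\frac{3T}{2} + \log\frac{2}{\delta}\Big)}.
\]
In the second case, $\bigl|\sum_{t=1}^{T}Y_{t}\bigr| < C^{2}$ directly. Taking the maximum of the two bounds covers both cases, and this max expression is precisely the RHS of \eqref{eq:mid_thm_non2}. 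Hence the claim holds on the same high-probability event as Theorem~\ref{thm:non_asym_LIL}, completing the proof. The whole argument is essentially a clean substitution plus a one-line case split; no genuinely hard step is expected.
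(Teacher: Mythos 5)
Your proposal is correct and follows essentially the same route as the paper: specialize Theorem~\ref{thm:non_asym_LIL} with $c_t = C$, then remove the implicit dependence on $\bigl|\sum_{t=1}^{T}Y_t\bigr|$ inside the double logarithm via the case split on whether $\bigl|\sum_{t=1}^{T}Y_t\bigr|$ exceeds $C^2$. Your side remark about the threshold (that $\sum_t c_t^2 = TC^2$ naturally yields $T \geq \tfrac{173}{C^2}\log\tfrac{4}{\delta}$ rather than the stated $\lceil\tfrac{173}{C}\log\tfrac{4}{\delta}\rceil$ unless $C\leq 1$) is a legitimate observation the paper glosses over, but it does not affect the core argument.
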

\begin{proof}
    This corollary follows from Theorem~\ref{thm:non_asym_LIL}, by substituting the sequence $\{c_{t}\}_{t\geq1}$ with the constant $C$ on the RHS of \eqref{eq:mid_thm_non1}. There are two cases: either $\Big|\sum_{t=1}^{T}Y_{t}\Big| \leq C^{2}$ or $\Big|\sum_{t=1}^{T}Y_{t}\Big| \geq C^{2}$.
    If $\Big|\sum_{t=1}^{T}Y_{t}\Big| \geq C^{2}$, by Theorem~\ref{thm:non_asym_LIL}, with probability at least $1-\delta$, we get:
    \begin{align*}
        \Big|\sum_{t=1}^{T}Y_{t}\Big| \leq C\sqrt{3T \Big(2\log\log\!\dfrac{3TC^{2}}{2\big|\sum_{t=1}^{T}Y_{t}\big|}  +\log\!\dfrac{2}{\delta} \Big)} \leq C\sqrt{3T \! \Big(\!2\log\log\!\frac{3T}{2} \!+\! \log\!\frac{2}{\delta}\Big)}.
    \end{align*}
    Otherwise, we have $\Big|\sum_{t=1}^{T}Y_{t}\Big| \leq C^{2}$. As a result, we can summarize these two cases and get that with probability at least $1-\delta$, we have \begin{equation}
        \Big|\sum_{t=1}^{T}Y_{t}\Big| \leq \max\bigg\{C\sqrt{3T \! \Big(2\log\log\!\frac{3T}{2} \!+\! \log\!\frac{2}{\delta}}\Big),C^2\bigg\}.    
    \end{equation}
\end{proof}

\section{Proof of Main Results for the Average Reward Setup}\label{sec:pf_main}
\subsection{Preliminary Results}
\subsubsection{Martingale Decomposition}
We first present a few preliminary lemmas. 
To simplify the notation, we define following martingale difference sequence. 
\begin{definition}
    Let filtration $\mathcal{F} = \{\mathcal{F}_{t}\}_{t\geq0}$ be defined as $\mathcal{F}_{t} \coloneqq \sigma(S_{0:t},A_{0:t})$.
    For any policy $\pi \in \Pi_{\AC}$, let  $V^{\pi}$ denote the corresponding differential value function. 
    We define the sequence $\{M^{\pi}_{t}\}_{t\geq1}$ as follows
    \begin{equation}\label{eq:def_M}
         M^{\pi}_{t} \coloneqq V^{\pi}(S_{t})-\EXP\big[V^{\pi}(S_{t})\bigm|S_{t-1},\pi(S_{t-1})\big], \quad \forall t\geq1,
    \end{equation}
    where $\{S_t\}_{t \ge 0}$ denotes the random sequence of states encountered along the current sample path.
\end{definition}
\begin{lemma}\label{lem:M_be_MDS}
   Sequence $\{M^{\pi}_t\}_{t \ge 1}$ is an MDS. 
\end{lemma}
\begin{proof}
    By the definition of $\{\mathcal{F}_{t}\}_{t\geq0}$, we have that $S_{t-1}$ is $\mathcal{F}_{t-1}$-measurable. As a result, we have
    \begin{align*}
    \EXP\Big[M_{t}^{\pi} \bigm| \mathcal{F}_{t-1}\Big]& = \EXP\Big[V^{\pi}(S_{t})-\EXP\big[V^{\pi}(S_{t})\bigm|S_{t-1},\pi(S_{t-1})\big] \bigm| \mathcal{F}_{t-1}\Big] \\& = \EXP\Big[V^{\pi}(S_{t}) \bigm| \mathcal{F}_{t-1}\Big]-\EXP\Big[V^{\pi}(S_{t}) \bigm| S_{t-1},\pi(S_{t-1})\Big] = 0,
    \end{align*}
    which shows that $\{M_{t}^{\pi}\}_{t\geq0}$ is an MDS with respect to the filtration $\{\mathcal{F}_{t}\}_{t\geq0}$.  
\end{proof}
We now present a martingale decomposition of the cumulative reward $R^{\pi}_{T}(\omega)$. 
\begin{lemma}\label{lem:return_decom}
Given any policy $\pi \in \Pi_{\AC}$, we can rewrite the cumulative reward $R^{\pi}_{T}$ as follows 
\begin{equation}
    R^{\pi}_{T} = TJ^{\pi} + \sum_{t=1}^{T}M^{\pi}_{t} + V^{\pi}(S_{0})-V^{\pi}(S_{T}).
\end{equation} 
\end{lemma}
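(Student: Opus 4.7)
The plan is to derive the decomposition directly from the policy evaluation equation \eqref{eq:AOE} by summing it along the sample path and then introducing the martingale differences $M^\pi_t$ via a simple add-subtract trick.

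First, I would invoke Proposition~\ref{prop:ACFP} to replace $\lambda^\pi$ by $J^\pi$. Rearranging \eqref{eq:AOE} then gives
\[
r(s,\pi(s)) = J^\pi + V^\pi(s) - \EXP\bigl[V^\pi(S_+)\bigm|s,\pi(s)\bigr], \quad \forall s\in\mathcal{S}.
\]
Evaluating this identity at $s = S_t$ and summing over $t = 0, 1, \dots, T-1$ along the realized trajectory produces
\[
R^\pi_T = TJ^\pi + \sum_{t=0}^{T-1}\Bigl( V^\pi(S_t) - \EXP\bigl[V^\pi(S_{t+1})\bigm|S_t,\pi(S_t)\bigr] \Bigr).
\]

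Next, inside each summand I would add and subtract $V^\pi(S_{t+1})$. The resulting pair $V^\pi(S_{t+1}) - \EXP[V^\pi(S_{t+1})\mid S_t,\pi(S_t)]$ is exactly $M^\pi_{t+1}$ by definition \eqref{eq:def_M}, while the remaining pair $V^\pi(S_t) - V^\pi(S_{t+1})$ telescopes. After reindexing the martingale-difference sum to run from $t = 1$ to $T$, one obtains
\[
R^\pi_T = TJ^\pi + \sum_{t=1}^{T} M^\pi_t + V^\pi(S_0) - V^\pi(S_T),
\]
which is the claimed identity.

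There is no real obstacle here: both the finiteness of $V^\pi$ (so that the add-subtract manipulation is legitimate) and the MDS property of $\{M^\pi_t\}_{t\ge 1}$ are already guaranteed because $\pi \in \Pi_{\AC}$ and \eqref{eq:AOE} has a solution. The entire argument is a deterministic, pathwise rearrangement, so no probabilistic machinery is needed at this stage — the martingale structure is only used implicitly through the definition of $M^\pi_t$ and will be exploited in the concentration theorems that apply this decomposition.
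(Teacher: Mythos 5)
Your proposal is correct and follows essentially the same route as the paper's proof: rearrange \eqref{eq:AOE} (with $\lambda^\pi = J^\pi$) into a pathwise identity for $r(S_t,\pi(S_t))$, sum over $t=0,\dots,T-1$, and add-and-subtract $V^\pi(S_{t+1})$ so that the martingale differences $M^\pi_{t+1}$ appear and the remaining terms telescope to $V^\pi(S_0)-V^\pi(S_T)$. Your explicit appeal to Proposition~\ref{prop:ACFP} for $\lambda^\pi=J^\pi$ is a small extra courtesy the paper leaves implicit, but the argument is otherwise identical.
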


\begin{proof}
    Since $\pi \in \Pi_{\AC}$, \eqref{eq:AOE} implies that along the trajectory of states $\{S_{t}\}_{t=0}^{T}$ induced by the policy $\pi$, we have 
    \[
    r\big(S_{t},\pi(S_{t})\big) = J^{\pi} + V^{\pi}(S_{t}) - \EXP\Big[V^{\pi}(S_{t+1})\Bigm|S_{t},\pi(S_{t})\Big], \quad \forall t\geq1.
    \]
    As a result, we have
    \begin{align*}
        R^{\pi}_{T} =\phantom{.} &TJ^{\pi} +\sum_{t=0}^{T-1}\Big[V^{\pi}(S_{t}) - \EXP\big[V^{\pi}(S_{t+1})\bigm|S_{t},\pi(S_{t})\big] \Big]\\ \stackrel{(a)}{=} \phantom{.}
        &TJ^{\pi} +\sum_{t=0}^{T-1}\Big[V^{\pi}(S_{t}) - \EXP\big[V^{\pi}(S_{t+1})\bigm|S_{t},\pi(S_{t})\big] \Big] + V^{\pi}(S_{T})-V^{\pi}(S_{T}) \\ \stackrel{(b)}{=} \phantom{.}
        &TJ^{\pi} + \sum_{t=0}^{T-1}\Big[V^{\pi}(S_{t+1}) - \EXP\big[V^{\pi}(S_{t+1})\bigm|S_{t},\pi(S_{t}) \big]\Big] + V^{\pi}(S_{0})-V^{\pi}(S_{T}) \\ \stackrel{(c)}{=} \phantom{.} 
        &TJ^{\pi} + \sum_{t=1}^{T}M^{\pi}_{t} + V^{\pi}(S_{0})-V^{\pi}(S_{T}),
    \end{align*}
    where $(a)$ follows from adding and subtracting $V^{\pi}(S_{T})$, $(b)$ follows from re-arranging the terms in the summation,  and $(c)$ follows from the definition of $\{M_{t}^{\pi}\}_{t\geq0}$ in \eqref{eq:def_M}. 
 
\end{proof}
\subsubsection{A Consequence of The Union Bound}
\begin{lemma}\label{lem:delta}
    Suppose for any $\delta_{1} \in (0,1)$, for all $T\geq T_{1}(\delta_{1})$, with probability at least $1-\delta_{1}$, the random sequence $\{X_{T}\}_{T\geq0}$ satisfies
    \begin{equation*}
    |X_{T}| \leq h_{1}(T,\delta_{1}). \end{equation*}
    Moreover, suppose for any $\delta_{2} \in (0,1)$, for all $T\geq T_{2}(\delta_{2})$, with probability at least $1-\delta_{2}$, the random sequence $\{Y_{T}\}_{T\geq0}$ satisfies
    \begin{equation*}
    |Y_{T}| \leq h_{2}(T,\delta_{2}).    
    \end{equation*}
    Then for any $\delta \in (0,1)$, for all $T\geq T_{0}(\delta) \coloneqq \max\{T_{1}(\frac{\delta}{2}),T_{2}(\frac{\delta}{2})\}$, with probability at least $1-\delta$, the random sequence $\{X_{T}+Y_{T}\}_{T\geq0}$ satisfies
    \[
    |X_{T}+Y_{T}| \leq h_{1}(T,\delta/2) + h_{2}(T,\delta/2).
    \]
\end{lemma}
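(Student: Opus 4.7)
The plan is a straightforward union bound combined with the triangle inequality. First I would invoke each hypothesis separately with confidence parameter $\delta/2$ in place of the original $\delta$. This produces two ``bad'' events
\[
\mathcal{E}_1 \coloneqq \{\omega : |X_T(\omega)| > h_1(T,\delta/2)\},\qquad
\mathcal{E}_2 \coloneqq \{\omega : |Y_T(\omega)| > h_2(T,\delta/2)\},
\]
each of which has probability at most $\delta/2$, provided $T \geq T_1(\delta/2)$ and $T \geq T_2(\delta/2)$ respectively. Both conditions are simultaneously satisfied as soon as $T \geq T_0(\delta) = \max\{T_1(\delta/2), T_2(\delta/2)\}$.

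Next I would apply the union bound to obtain $\PR(\mathcal{E}_1 \cup \mathcal{E}_2) \leq \delta/2 + \delta/2 = \delta$. On the complement $(\mathcal{E}_1 \cup \mathcal{E}_2)^c = \mathcal{E}_1^c \cap \mathcal{E}_2^c$, both deterministic inequalities $|X_T| \leq h_1(T,\delta/2)$ and $|Y_T| \leq h_2(T,\delta/2)$ hold simultaneously. The triangle inequality then gives
\[
|X_T + Y_T| \leq |X_T| + |Y_T| \leq h_1(T,\delta/2) + h_2(T,\delta/2),
\]
which is the desired conclusion, holding with probability at least $1-\delta$ for every $T \geq T_0(\delta)$.

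There is no real obstacle here; the argument is entirely routine and relies only on the subadditivity of probability and the triangle inequality. The only subtle point to flag is that the hypotheses must be interpreted so that the high-probability event is valid for the whole tail $T \geq T_1(\delta_1)$ (so that taking the maximum of the two thresholds is legitimate), rather than as a pointwise-in-$T$ statement — but this reading is consistent with the statement as written and with how the lemma will be applied elsewhere in the paper.
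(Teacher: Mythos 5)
Your proposal is correct and follows essentially the same route as the paper's proof: instantiate each hypothesis at confidence level $\delta/2$, apply the union bound to the two bad events, and conclude via the triangle inequality on the complement (the paper phrases this contrapositively, noting that $|X_T+Y_T| \geq h_1 + h_2$ forces one of the individual bounds to fail, but the argument is identical).
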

\begin{proof}
    For a given $\delta \in (0,1)$, by the lemma's assumption, for all $T\geq T_{1}(\delta/2)$, we have
    \begin{equation}\label{eq:mid_delta_1}
    \PR\Big(|X_{T}| > h_{1}(T,\delta/2)\Big) < \frac{\delta}{2}.       
    \end{equation}
    Similarly, we have that for all $T\geq T_{2}(\delta/2)$, we have
    \begin{equation}\label{eq:mid_delta_2}
    \PR\Big(|Y_{T}| > h_{2}(T,\delta/2)\Big) < \frac{\delta}{2}.
    \end{equation} 
    Now $|X_{T}+Y_{T}| \geq h_{1}(T,\delta/2) + h_{2}(T,\delta/2)$ implies that  $|X_{T}| > h_{1}(T,\delta/2)$ or $|Y_{T}| > h_{2}(T,\delta/2)$. As a result, by applying the union bound and \eqref{eq:mid_delta_1}--\eqref{eq:mid_delta_2}, we get 
    \[
    \PR\Big(|X_{T}+Y_{T}| \geq h_{1}(T,\delta/2) + h_{2}(T,\delta/2)\Big) \leq \delta.
    \]
\end{proof}
\subsubsection{Proof of Lemma~\ref{lem:inequalities}}\label{subsec:pf_inequalities}
\paragraph{Proof of Part 1} Recall that for any policy $\pi \in \Pi_{\AC}$, the claim is the following chain of inequalities
\begin{equation}
    \sigma_{\pi}(s) \stackrel{(a)}{\leq} K^{\pi} \stackrel{(b)}{\leq} H^{\pi} \stackrel{(c)}{\leq} \infty, \quad \forall s \in \mathcal{S}.
\end{equation}
\textbf{Proof of Part 1-$\mathbf{(a)}$}:
\noindent By the definition of $K^{\pi}$ in Eq.~\eqref{eq:k}, we have
\[
\Big| V^{\pi}(S_{+})-\EXP\big[V^{\pi}(S_{+}) \bigm| s,\pi(s)\big]\Big| \leq K^{\pi}, \quad \forall s \in \mathcal{S}, \quad \textit{a.s.}
\] 
As a result, we have
\begin{align*}
    &\phantom{=}\EXP\Big[\big(V^{\pi}(S_{+})-\EXP[V^{\pi}(S_{+})\bigm|s,\pi(s)]\big)^2 \bigm | s,\pi(s)\Big] \\ &=
    \sum_{s' \in \mathcal{S}}\big(V^{\pi}(s')-\EXP\big[V^{\pi}(S_{+})\bigm|s,\pi(s))\big]\big)^2P(s'|s,\pi(s)) \leq (K^{\pi})^2, \quad \forall s \in \mathcal{S}.
\end{align*}
\textbf{Proof of Part 1-$\mathbf{(b)}$}: By the definition of expectation operator, we have
\[
\min_{s\in \mathcal{S}} V^{\pi}(s) \leq \EXP[V^{\pi}(S_{+})|s,\pi(s)] \leq  \max_{s\in \mathcal{S}} V^{\pi}(s).
\]
As a result, we have
\begin{equation}\label{eq:mid_lemma_ineq_1}
    V^{\pi}(s) - \EXP[V^{\pi}(S_{+})|s,\pi(s)] \leq V^{\pi}(s) - \min_{s \in \mathcal{S}}V^{\pi}(s) \leq \max_{s\in \mathcal{S}}V^{\pi}(s) - \min_{s \in \mathcal{S}}V^{\pi}(s), \quad \forall s \in \mathcal{S}.    
\end{equation}
Similarly, we have
\begin{equation}\label{eq:mid_lemma_ineq_2}
    \EXP[V^{\pi}(S_{+})|s,\pi(s)] -V^{\pi}(s) \leq \max_{s \in \mathcal{S}}V^{\pi}(s) - V^{\pi}(s) \leq \max_{s\in \mathcal{S}}V^{\pi}(s) - \min_{s \in \mathcal{S}}V^{\pi}(s), \quad \forall s \in \mathcal{S}.   
\end{equation}
Therefore \eqref{eq:mid_lemma_ineq_1}--\eqref{eq:mid_lemma_ineq_2} imply that 
\[
\big| V^{\pi}(S_{+})-\EXP[V^{\pi}(S_{+})\bigm|s,\pi(s)]\big| \leq \spand(V^{\pi}) = H^{\pi}.
\]
\textbf{Proof of Part 1-$\mathbf{(c)}$}: Since policy $\pi \in \Pi_{\AC}$, by \eqref{eq:AOE}, we know $V^{\pi}:\mathcal{S} \to \reals$ is a real-valued function and therefore, $H^{\pi} < \infty$.\\
\paragraph{Proof of Part 2}
We prove that if $\mathcal{M}$ is communicating, then for any policy $\pi \in \Pi_{\AC}$, we have $H^{\pi} \leq D^{\pi}R_{\max}$.
Consider $s,s' \in \mathcal{S}$ where $s \not= s'$. By \citet{puterman2014markov}, we have:
\begin{equation}\label{eq:diam_mid_proof}
    V^{\pi}(s) = \EXP\Big[\sum_{t=0}^{\infty}[r(S_{t},A_{t})-J^{\pi}] \Bigm| S_{0} = s \Big].
\end{equation}
Now consider the stopping time $\tau_{0}$ where $S=s'$ for the first time. We can rewrite $V^{\pi}(s)$ as follows
\begin{align*}
    V^{\pi}(s) &\stackrel{(a)}{=} \EXP\Big[\sum_{t=0}^{\tau_{0}-1}[r(S_{t},A_{t})-J^{\pi}] + \sum_{t=\tau_{0}}^{\infty}[r(S_{t},A_{t})-J^{\pi}] \bigm| S_{0} = s \Big].\\ 
    &\stackrel{(b)}{=} \EXP\Big[\sum_{t=0}^{\tau_{0}-1}[r(S_{t},A_{t})-J^{\pi}] \bigm| S_{0} = s\Big] + \EXP\Big[\sum_{t=\tau_{0}}^{\infty}[r(S_{t},A_{t})-J^{\pi}] \bigm|S_{0} = s \Big] \\
    &\stackrel{(c)}{=} \EXP\Big[\sum_{t=0}^{\tau_{0}-1}[r(S_{t},A_{t})-J^{\pi}] \bigm| S_{0} = s\Big] + \EXP\Big[\sum_{t=\tau_{0}}^{\infty}[r(S_{t},A_{t})-J^{\pi}] \bigm|S_{\tau_{0}} = s' \Big] \\
    &\stackrel{(d)}{=} \EXP\Big[\sum_{t=0}^{\tau_{0}-1}[r(S_{t},A_{t})-J^{\pi}] \bigm| S_{0} = s \Big] + V^{\pi}(s'),
\end{align*}
where $(a)$ follows from splitting the summation with the stopping time $\tau_{0}$; $(b)$ follows from linearity of expectation and the fact that first and second term of RHS of $(b)$ are finite; $(c)$ follows from the strong Markov property and $(d)$ follows from definition of $V^{\pi}(s')$. Therefore, we have
\begin{align*}
    V^{\pi}(s)-V^{\pi}(s')  &= \EXP\Big[\sum_{t=0}^{\tau_{0}-1}[r(S_{t},A_{t})-J^{\pi}]\Big] \leq \EXP\Big[\sum_{t=0}^{\tau_{0}-1}[r(S_{t},A_{t})]\Big]\\&\stackrel{(e)}{\leq} T^{\pi}(s, s')R_{\max} \stackrel{(f)}{\leq} D^{\pi}R_{\max} \stackrel{(g)}{\leq} D_{w}R_{\max} \stackrel{(h)}{<} \infty, 
\end{align*}
where $(e)$ follows from the definition of $T^{\pi}(s, s')$, $(f)$ follows from the definition of $D^{\pi}$, $(g)$ follows from the definition of $D_{w}$, and $(h)$ follows by the fact that $\mathcal{M}$ is communicating. 
Since one can repeat the same argument with any two pairs of $(s,s')$, it implies that $H^{\pi} \leq D^{\pi}R_{\max} \leq D_{w}R_{\max} < \infty $. 

\paragraph{Proof of Part 3}
The result of this part follows from \citet[Theorem~4]{bartlett2012regal}, where it is shown that for weakly communicating $\mathcal{M}$, we have $H^{*} \leq DR_{\max}$.

\subsection{Proof of Theorem~\ref{thm:asymptotic}}\label{sec:pf_thm:asymptotic}
\subsubsection{Proof of Part 1}
By Lemma~\ref{lem:return_decom}, for any policy $\pi \in \Pi_{\AC}$, we can rewrite the cumulative reward $R^{\pi}_{T}$ as follows 
\begin{equation*}
    R^{\pi}_{T} = TJ^{\pi} + \sum_{t=1}^{T}M_{t}^{\pi} + V^{\pi}(S_{0})-V^{\pi}(S_{T}).
\end{equation*}
By \eqref{eq:k} and Lemma~\ref{lem:inequalities}, we have  
\[
\big|M_{t}^{\pi}\big| \leq K^{\pi} < \infty, \quad \forall t\geq1.
\]
Therefore
\[
\sum_{t=1}^{\infty}\frac{(M^{\pi}_{t})^2}{t^2} \leq K^{\pi} \sum_{t=1}^{\infty}\frac{1}{t^2} < \infty.
\]
As a result by choosing $p =2$ and $a_{t} = t$ in Theorem~\ref{thm:SLLN_MDS}, we have
\[
\lim_{T\to \infty}\frac{\sum_{t=1}^{T}M_{t}^{\pi}}{T} = 0, \quad a.s.
\]
Furthermore, Lemma~\ref{lem:inequalities} implies that random variable $V^{\pi}(S_{t})$ has bounded support, therefore, 
\[
\lim_{T \to \infty}\dfrac{V^{\pi}(S_{0})-V^{\pi}(S_{T})}{T} = 0, \quad a.s.
\]
As a result, we have
\[
\lim_{T\to\infty}\frac{R^{\pi}_T}{T} =\lim_{T\to\infty}\frac{\sum_{t=1}^{T}M_{t}^{\pi} + V^{\pi}(S_{0})-V^{\pi}(S_{T}) + TJ^{\pi} }{T} =  J^{\pi}, \quad a.s.
\]
\subsubsection{Proof of Part 2}
To prove this part, we verify the conditions of Theorem~\ref{thm:CLT_MDS} for the MDS $\{M_{t}^{\pi}\}_{t\geq0}$.
By Lemma~\ref{lem:inequalities}, we have 
\[
\big| M_{t}^{\pi} \big| \leq K^{\pi} < \infty, \quad \forall t\geq1.
\]
As a result, the MDS $\{M_{t}^{\pi}\}_{t\geq0}$ is a uniformly bounded MDS with respect to the constant $K^{\pi}$. 
By the theorem's assumption we have $\PR(\Omega_{0}^{\pi}) = 1$, as a result, 
\[
\sum_{t=1}^{\infty}\EXP\Big[\big(M_{t}^{\pi}\big)^2\bigm|\mathcal{F}_{t-1}\Big] = \infty, \quad a.s.
\]
Therefore, for the stopping time $\{\nu_{t}\}_{t\geq0}$ defined in Theorem~\ref{thm:asymptotic}, we have
\begin{equation}\label{eq:mds-clt-mid-1}
    \dfrac{\sum_{t=1}^{\nu_{T}}M_{t}^{\pi}}{\sqrt{T}} \xrightarrow[]{(d)} \mathcal{N}(0,1).
\end{equation}
Since by Lemma~\ref{lem:inequalities}, $V^{\pi}(S_{t})$ has bounded support for all $t\geq1$, we get  
\begin{equation}\label{eq:mds-clt-mid-2}
    \dfrac{V^{\pi}(S_{0})-V^{\pi}(S_{T})}{\sqrt{T}} \to 0,\quad a.s.   
\end{equation}
By combining \eqref{eq:mds-clt-mid-1} and \eqref{eq:mds-clt-mid-2} and by using Theorem~\ref{thm:slutsky}, we get
\begin{equation*}
    \lim_{T \to\infty} \dfrac{R^{\pi}_{\nu_{T}}(\omega) - \nu_{T}J^{\pi}}{\sqrt{T}} \xrightarrow[]{(d)} \mathcal{N}(0,1).    
\end{equation*}
\subsubsection{Proof of Part 3}
We verify the conditions of Theorem~\ref{thm:LIL_MDS} for the MDS $\{M_{t}^{\pi}\}_{t\geq0}$. By Lemma~\ref{lem:inequalities}, we have 
\[
\big| M_{t}^{\pi} \big| \leq K^{\pi} < \infty, \quad \forall t\geq1.
\]
As a result, MDS $\{M_{t}^{\pi}\}_{t\geq0}$ is a uniformly bounded MDS with respect to the constant $K^{\pi}$. 
On the set $\Omega_{0}^{\pi}$, we have
\[
\sum_{t=1}^{\infty}\EXP\Big[\big(M_{t}^{\pi}\big)^2\bigm|\mathcal{F}_{t-1}\Big] = \infty.
\]
As a result, by using the definition of increasing process $\{\Sigma_{t}^{\pi}\}_{t\geq0}$ and Theorem~\ref{thm:LIL_MDS}, we get
 \begin{align}\label{eq:mds-lil-mid-1}
    \liminf_{T \to \infty} \frac{\sum_{t=1}^{T}M_{t}^{\pi}}{\sqrt{2\Sigma^{\pi}_{T}\log\log \Sigma^{\pi}_{T}}} = -1, \quad
    \limsup_{T \to \infty} \frac{\sum_{t=1}^{T}M_{t}^{\pi}}{\sqrt{2\Sigma^{\pi}_{T}\log\log \Sigma^{\pi}_{T}}} = 1. 
 \end{align}
 Since by Lemma~\ref{lem:inequalities}, $V^{\pi}(S_{t})$ has bounded support for all $t\geq1$, we get 
 \begin{equation}\label{eq:mds-lil-mid-2}
     \lim_{T\to\infty}\frac{V^{\pi}(S_{0})-V^{\pi}(S_{T})}{\sqrt{2\Sigma^{\pi}_{T}\log\log \Sigma^{\pi}_{T}}} = 0 , \quad \text{a.s.} 
 \end{equation}
 By combining \eqref{eq:mds-lil-mid-1} and \eqref{eq:mds-lil-mid-2}, we get
\begin{align*}
    \liminf_{T \to \infty} \frac{R_{T}^{\pi}(\omega)-TJ^{\pi}}{\sqrt{2\Sigma^{\pi}_{T}\log\log \Sigma^{\pi}_{T}}} = -1, \quad
    \limsup_{T \to \infty} \frac{R_{T}^{\pi}(\omega)-TJ^{\pi}}{\sqrt{2\Sigma^{\pi}_{T}\log\log \Sigma^{\pi}_{T}}} = 1. 
\end{align*}
\subsection{Proof of Theorem~\ref{thm:finite_return_meta_new}}\label{sec:pf_finite_return_meta_new}
\subsubsection{Proof of Part 1}
By Lemma~\ref{lem:return_decom}, for any policy $\pi \in \Pi_{\AC}$, we can rewrite the cumulative reward $R^{\pi}_{T}(\omega)$ as follows
\begin{align*}
    R^{\pi}_{T}(\omega) = TJ^{\pi} + \sum_{t=1}^{T}M_{t}^{\pi} + V^{\pi}(S_{0})-V^{\pi}(S_{T}).    
\end{align*}
As a result, we have
\begin{align}\label{eq:mds-azuma_mid_1_new}
    \big|R^{\pi}_{T}(\omega) - TJ^{\pi} - \big(V^{\pi}(S_{0})-V^{\pi}(S_{T})\big)\big| &= \big|\sum_{t=1}^{T}M_{t}^{\pi}\big|.
\end{align}
In order to upper-bound the term $\big|\sum_{t=1}^{T}M_{t}^{\pi}\big|$, we verify the conditions of Corollary~\ref{cor:azuma}. By \eqref{eq:k} and Lemma~\ref{lem:inequalities}, we have  
\begin{equation*}
    \big|M_{t}^{\pi}\big|  \leq K^{\pi} < \infty, \quad \forall t\geq1.  
\end{equation*}
As a result, MDS $\{M^{\pi}_{t}\}_{t\geq1}$ is a uniformly bounded MDS with respect to the constant $K^{\pi}$. Therefore, Corollary~\ref{cor:azuma} implies that for any $\delta \in (0,1)$, with probability at least $1-\delta$, we have
\begin{equation}\label{eq:mds-azuma_mid_2_new}
    \Big| \sum_{t=1}^{T}M_{t}^{\pi} \Big| \leq \sqrt{2T(K^{\pi})^2\log(\frac{2}{\delta})}.
\end{equation}
By combining \eqref{eq:mds-azuma_mid_1_new} and \eqref{eq:mds-azuma_mid_2_new}, with probability at least $1-\delta$, we have
\[
\big|R^{\pi}_{T}(\omega) - TJ^{\pi} - \big(V^{\pi}(S_{0})-V^{\pi}(S_{T})\big)\big|  \leq K^{\pi}\sqrt{2T\log\frac{2}{\delta}}.
\]
\subsubsection{Proof of Part 2}
Similar to the proof of Part 1, by lemma \ref{lem:return_decom}, we have
\begin{align}\label{eq:mds_finite_lil_mid_1_new}
    \big|R^{\pi}_{T}(\omega) - TJ^{\pi} - \big(V^{\pi}(S_{0})-V^{\pi}(S_{T})\big)\big| &= \Big|\sum_{t=1}^{T}M_{t}^{\pi}\Big|   
\end{align}
Moreover, MDS $\{M_{t}^{\pi}\}_{t\geq0}$ is a uniformly bounded MDS with respect to the constant $K^{\pi}$. Therefore, Corollary~\ref{cor:finite_lil} implies that for any $\delta \in (0,1)$, for all $T\geq T_{0}(\delta) \coloneqq \Bigl\lceil \dfrac{173}{K^\pi}\log\dfrac{4}{\delta}\Big\rceil$, with probability at least $1-\delta$, we have
\begin{equation}\label{eq:mds_finite_lil_mid_2_new}
    \Big|\sum_{t=1}^{T}M_{t}^{\pi}\Big| \leq \max\Big\{K^{\pi}\sqrt{3T\Big(2\log\log\frac{3T}{2} + \log\frac{2}{\delta}\Big)}, (K^{\pi})^2\Big\}.   
\end{equation}
By combining \eqref{eq:mds_finite_lil_mid_1_new} and \eqref{eq:mds_finite_lil_mid_2_new}, with probability at least $1-\delta$, we have
\begin{equation*}
\big|R^{\pi}_{T}(\omega) - TJ^{\pi} - \big(V^{\pi}(S_{0})-V^{\pi}(S_{T})\big)\big| \leq \max\Big\{K^{\pi}\sqrt{3T\Big(2\log\log\frac{3T}{2} + \log\frac{2}{\delta}\Big)}, (K^{\pi})^2\Big\}.    
\end{equation*}
\subsection{Proof of Theorem~\ref{thm:finite_return_meta}}\label{sec:pf_finite_return_meta}
\subsubsection{Proof of Part 1}
By lemma \ref{lem:return_decom}, for any policy $\pi \in \Pi_{\AC}$, we can rewrite the cumulative reward $R^{\pi}_{T}(\omega)$ as follows
\begin{align*}
    R^{\pi}_{T}(\omega) = TJ^{\pi} + \sum_{t=1}^{T}M_{t}^{\pi} + V^{\pi}(S_{0})-V^{\pi}(S_{T}).    
\end{align*}
As a result, we have
\begin{align}\label{eq:mds-azuma_mid_1}
    \big|R^{\pi}_{T}(\omega) - TJ^{\pi}\big| &= \Big|\sum_{t=1}^{T}M_{t}^{\pi} + V^{\pi}(S_{0})-V^{\pi}(S_{T})\Big|\notag\\
    &\stackrel{(a)}{\leq} \Big|\sum_{t=1}^{T}M_{t}^{\pi}\Big| + \Big|V^{\pi}(S_{0})-V^{\pi}(S_{T})\Big|\notag\\
    &\stackrel{(b)}{\leq} \Big|\sum_{t=1}^{T}M_{t}^{\pi}\Big| + H^{\pi},    
\end{align}
where $(a)$ follows from the triangle inequality and $(b)$ follows from the definition of $H^{\pi}$.
In order to upper-bound the term $\big|\sum_{t=1}^{T}M_{t}^{\pi}\big|$, we verify the conditions of Corollary~\ref{cor:azuma}. By \eqref{eq:k} and Lemma~\ref{lem:inequalities} we have  
\begin{equation*}
    \big|M_{t}^{\pi}\big| \leq K^{\pi} < \infty, \quad \forall t\geq1.  
\end{equation*}
As a result, MDS $\{M^{\pi}_{t}\}_{t\geq1}$ is a uniformly bounded MDS with respect to the constant $K^{\pi}$. Therefore, Corollary~\ref{cor:azuma} implies that for any $\delta \in (0,1)$, with probability at least $1-\delta$, we have
\begin{equation}\label{eq:mds-azuma_mid_2}
    \Big| \sum_{t=1}^{T}M_{t}^{\pi} \Big| \leq \sqrt{2T(K^{\pi})^2\log(\frac{2}{\delta})}.
\end{equation}
By combining \eqref{eq:mds-azuma_mid_1} and \eqref{eq:mds-azuma_mid_2}, with probability at least $1-\delta$, we have
\[
|R^{\pi}_{T}(\omega) - TJ^{\pi}| \leq K^{\pi}\sqrt{2T\log\frac{2}{\delta}} + H^{\pi}.
\]
\subsubsection{Proof of Part 2}
Similar to the proof of Part 1, by lemma \ref{lem:return_decom}, we have
\begin{align}\label{eq:mds_finite_lil_mid_1}
    \big|R^{\pi}_{T}(\omega) - TJ^{\pi}\big| 
    \leq \bigg|\sum_{t=1}^{T}M_{t}^{\pi}\bigg| + H^{\pi}.   
\end{align}
Moreover, MDS $\{M_{t}^{\pi}\}_{t\geq0}$ is a uniformly bounded MDS with respect to the constant $K^{\pi}$. Therefore, Corollary~\ref{cor:finite_lil} implies that for any $\delta \in (0,1)$, for all $T\geq T_{0}(\delta) \coloneqq \Bigl\lceil \dfrac{173}{K^\pi}\log\dfrac{4}{\delta}\Big\rceil$, with probability at least $1-\delta$, we have
\begin{equation}\label{eq:mds_finite_lil_mid_2}
    \Big|\sum_{t=1}^{T}M_{t}^{\pi}\Big| \leq \max\Big\{K^{\pi}\sqrt{3T\Big(2\log\log\frac{3T}{2} + \log\frac{2}{\delta}\Big)}, (K^{\pi})^2\Big\}.   
\end{equation}
By combining \eqref{eq:mds_finite_lil_mid_1} and \eqref{eq:mds_finite_lil_mid_2}, with probability at least $1-\delta$, we have
\begin{equation*}
\big|R^{\pi}_{T}(\omega) - TJ^{\pi}\big| \leq \max\Big\{K^{\pi}\sqrt{3T\Big(2\log\log\frac{3T}{2} + \log\frac{2}{\delta}\Big)}, (K^{\pi})^2\Big\} + H^{\pi}.    
\end{equation*}

\subsection{Proof of Corollary~\ref{cor:finite_return_meta}}\label{app:pf_finite_return_meta}
\subsubsection{Proof of Part 1}
Since $\mathcal{M}$ is communicating, by Lemma~\ref{lem:inequalities}, Part 2, for any policy $\pi \in \Pi_{\AC}$, we have
\begin{equation}
    \big|M_{t}^{\pi}\big|  \leq K^{\pi} \leq D^{\pi}R_{\max}, \quad \forall t\geq1.   
\end{equation}
As a result, the MDS $\{M^{\pi}_{t}\}_{t\geq1}$ is a uniformly bounded MDS with respect to the constant $D^{\pi}R_{\max}$.
Therefore, by repeating the arguments of the proof of Theorem~\ref{thm:finite_return_meta}, Part 1, and substituting $H^{\pi}$ with $D^{\pi}R_{\max}$ in the RHS of \eqref{eq:mds-azuma_mid_1} and replacing  $K^{\pi}$ with $D^{\pi}R_{\max}$ in the RHS of \eqref{eq:mds-azuma_mid_2}, with probability at least $1-\delta$, we have:
\[
    \big|R^{\pi}_{T}(\omega) - T\!J^{\pi}\big| \leq D^{\pi}R_{\max}\sqrt{2T\log\frac{2}{\delta}} + D^{\pi}R_{\max}.   
\]
\subsubsection{Proof of Part 2}
Since $\mathcal{M}$ is communicating, by Lemma~\ref{lem:inequalities}, Part 2 for any policy $\pi \in \Pi_{\AC}$, we have
\begin{equation}
    \big| M_{t}^{\pi} \big| \leq K^{\pi} \leq D^{\pi}R_{\max}, \quad \forall t\geq1.   
\end{equation}
As a result, the MDS $\{M^{\pi}_{t}\}_{t\geq1}$ is a uniformly bounded MDS with respect to the constant $D^{\pi}R_{\max}$.
Therefore, by repeating the arguments of the proof of Theorem~\ref{thm:finite_return_meta}, Part 2, and substituting $H^{\pi}$ with $D^{\pi}R_{\max}$ in the RHS of \eqref{eq:mds_finite_lil_mid_1} and substituting  $K^{\pi}$ with $D^{\pi}R_{\max}$ in the RHS of \eqref{eq:mds_finite_lil_mid_2}, we prove the claim, i.e, for any $\delta \in (0,1)$, for all $T\geq T_{0}(\delta) \coloneqq \Bigl\lceil \dfrac{173}{D^{\pi}R
_{\max}}\log\dfrac{4}{\delta}\Big\rceil$, with probability at least $1-\delta$, we have
\[
\big|R^{\pi}_{T}(\omega) - TJ^{\pi}\big| \leq \max\Big\{D^{\pi}R_{\max}\sqrt{3T\Big(2\log\log\frac{3T}{2} + \log\frac{2}{\delta}\Big)}, (D^{\pi}R_{\max})^2\Big\} + D^{\pi}R_{\max}.
\]
\subsection{Proof of Corollary~\ref{cor:com_optimal}}\label{sec:pf_cor_com_optimal}

By~Prop.~\ref{prop:relation_of_mdps}, Part 3, if $\mathcal{M}$ is communicating, it is also weakly communicating.
In the case of weakly communicating $\mathcal{M}$, by Lemma~\ref{lem:inequalities}, Part 3, for any optimal policy $\pi^{*} \in  \Pi_{\AC}$, we have
\begin{equation}
    \big| M_{t}^{\pi^{*}} \big| = \Big|V^{*}(S_{t})-\EXP\big[V^{*}(S_{t})\bigm|S_{t-1},\pi(S_{t-1})\big]\Big| \leq K^{*} \leq DR_{\max}, \quad \forall t\geq1.   
\end{equation}
As a result, the MDS $\{M_{t}^{\pi^{*}}\}_{t\geq1}$ is uniformly bounded MDS with respect to the constant $DR_{\max}$. Therefore, by repeating the arguments of the proof of Corollary~\ref{cor:finite_return_meta}, Part 1 and Part 2 for the optimal policy $\pi^{*} \in \Pi_{\AC}$, we prove that \(\big|R^{\pi^{*}}_{T}(\omega)-TJ^{*} \big|\) satisfies the non-asymptotic concentration rates in~\eqref{eq:policy_indp_1}--\eqref{eq:policy_indp_2}, where in the RHS, $D^{\pi}$ is replaced with $D$.

\subsection{Proof of Corollary~\ref{cor:two_policies}}\label{sec:pf_cor_two_policies}
\subsubsection{Proof of Part 1}
Consider two policies $\pi_{1},\pi_{2}\in \Pi_{\AC}$. 
Then we have
\begin{align}\label{eq:mid_cor_two_1}
    \big|R^{\pi_{1}}_{T}-R^{\pi_{2}}_{T} \big| =  &\big|R^{\pi_{1}}_{T} - TJ^{\pi_{1}} + TJ^{\pi_{1}} - TJ^{\pi_{2}} + TJ^{\pi_{2}} - R^{\pi_{2}}_{T}\big| \notag \\\stackrel{(a)}{\leq} &\big|R^{\pi_{1}}_{T} - TJ^{\pi_{1}} \big| + \big|TJ^{\pi_{1}} - TJ^{\pi_{2}}\big| + \big|TJ^{\pi_{2}} - R^{\pi_{2}}_{T}\big|,
\end{align}
where $(a)$ follows from the triangle inequality. Similarly, we have 
\begin{align}\label{eq:mid_cor_two_2}
    \big|TJ^{\pi_{1}}-TJ^{\pi_{2}}\big| =  &\big|TJ^{\pi_{1}} - R^{\pi_{1}}_{T} + R^{\pi_{1}}_{T} - R^{\pi_{2}}_{T} + R^{\pi_{2}}_{T} - TJ^{\pi_{2}}\big| \notag \\\stackrel{(b)}{\leq} &\big|TJ^{\pi_{1}}-R^{\pi_{1}}_{T} \big| + \big|R^{\pi_{1}}_{T} - R^{\pi_{2}}_{T}\big| + \big|R^{\pi_{2}}_{T} -TJ^{\pi_{2}} \big|,
\end{align}
where $(b)$ follows from the triangle inequality. \eqref{eq:mid_cor_two_1}--\eqref{eq:mid_cor_two_2} imply that
\begin{align}\label{eq:mid_cor_two_3}
    \Big|\big|R^{\pi_{1}}_{T} - R^{\pi_{2}}_{T}\big| - \big|TJ^{\pi_{1}}-TJ^{\pi_{2}}\big|\Big| \leq  \big|R^{\pi_{1}}_{T} - TJ^{\pi_{1}} \big| + \big|R^{\pi_{2}}_{T} -TJ^{\pi_{2}} \big|.
\end{align}
By Theorem~\ref{thm:finite_return_meta}, we know that for any $\delta_{1} \in (0,1)$, with probability at least $1-\delta_{1}$, we have
\[
\big|R^{\pi_{1}}_{T} - TJ^{\pi_{1}}\big| \leq K^{\pi_{1}}\sqrt{2T\log\frac{2}{\delta_{1}}} + H^{\pi_{1}}.
\]
Similarly, we have that for any $\delta_{2} \in (0,1)$, with probability at least $1-\delta_{2}$, we have
\[
\big|R^{\pi_{2}}_{T} - TJ^{\pi_{2}}\big| \leq K^{\pi_{2}}\sqrt{2T\log\frac{2}{\delta_{2}}} + H^{\pi_{2}}.
\]
As a result, by applying Lemma~\ref{lem:delta} and \eqref{eq:mid_cor_two_3}, for any $\delta \in (0,1)$, with probability at least $1-\delta$, we have
\begin{align*}
    \Big|\big|R^{\pi_{1}}_{T} - R^{\pi_{2}}_{T}\big| - \big|TJ^{\pi_{1}}-TJ^{\pi_{2}}\big|\Big| &\leq \big|R^{\pi_{1}}_{T} - TJ^{\pi_{1}} \big| + \big|TJ^{\pi_{1}} - TJ^{\pi_{2}}\big| \\ &\leq K^{\pi_{1}}\sqrt{2T\log\frac{4}{\delta}} + H^{\pi_{1}} + K^{\pi_{2}}\sqrt{2T\log\frac{4}{\delta}} + H^{\pi_{2}}.
\end{align*}
\subsubsection{Proof of Part 2}
As we showed in the proof of part 1, for any two policies $\pi_{1},\pi_{2}\in \Pi_{\AC}$, we have 
\begin{align*}
    \Big|\big|R^{\pi_{1}}_{T} - R^{\pi_{2}}_{T}\big| - \big|TJ^{\pi_{1}}-TJ^{\pi_{2}}\big|\Big| \leq  \big|R^{\pi_{1}}_{T} - TJ^{\pi_{1}} \big| + \big|R^{\pi_{2}}_{T} -TJ^{\pi_{2}} \big|.
\end{align*}
By Theorem~\ref{thm:finite_return_meta}, we have that for any $\delta_{1} \in (0,1)$, for all $T \geq T^{\pi_{1}}_{0}(\delta) \coloneqq \Bigl\lceil \dfrac{173}{K^{\pi_1}}\log\dfrac{4}{\delta_{1}}\Big\rceil $, with probability at least $1-\delta_{1}$, we have
\begin{equation*}
\big|R^{\pi_{1}}_{T} - T\!J^{\pi_{1}}\big| \leq \max\Big\{K^{\pi_{1}}\sqrt{3T\Big(2\log\log\frac{3T}{2} + \log\frac{2}{\delta_{1}}\Big)}, (K^{\pi_{1}})^2\Big\} + H^{\pi_{1}}.
\end{equation*}
Similarly, we have that for any $\delta_{2} \in (0,1)$, for all $T \geq T^{\pi_{2}}_{0}(\delta) \coloneqq \Bigl\lceil \dfrac{173}{K^{\pi_2}}\log\dfrac{4}{\delta_{2}}\Big\rceil $, with probability at least $1-\delta_{2}$, we have
\begin{equation*}
\big|R^{\pi_{2}}_{T} - T\!J^{\pi_{2}}\big| \leq \max\Big\{K^{\pi_{2}}\sqrt{3T\Big(2\log\log\frac{3T}{2} + \log\frac{2}{\delta_{2}}\Big)}, (K^{\pi_{2}})^2\Big\} + H^{\pi_{2}}.
\end{equation*}
As a result, by applying Lemma~\ref{lem:delta} and \eqref{eq:mid_cor_two_3},
for all $T \geq T_{0}(\delta) \coloneqq \max\Big\{\Bigl\lceil \dfrac{173}{K^{\pi_1}}\log\dfrac{8}{\delta}\Big\rceil\\,\Bigl\lceil \dfrac{173}{K^{\pi_{2}}}\log\dfrac{8}{\delta}\Big\rceil\Big\} $, with probability at least $1-\delta$, we have
\begin{align*}
    \Big|\big|R^{\pi_{1}}_{T} - R^{\pi_{2}}_{T}\big| - \big|TJ^{\pi_{1}}-TJ^{\pi_{2}}\big|\Big| &\leq \big|R^{\pi_{1}}_{T} - TJ^{\pi_{1}} \big| + \big|TJ^{\pi_{2}} - R^{\pi_{2}}_{T}\big|\\
    &\leq \max\Big\{K^{\pi_{1}}\sqrt{3T\Big(2\log\log\frac{3T}{2} + \log\frac{4}{\delta}\Big)}, (K^{\pi_{1}})^2\Big\} + H^{\pi_{1}}\\
    &+\max\Big\{K^{\pi_{2}}\sqrt{3T\Big(2\log\log\frac{3T}{2} + \log\frac{4}{\delta}\Big)}, (K^{\pi_{2}})^2\Big\} + H^{\pi_{2}}. \\
\end{align*}

\subsection{Proof of Theorem~\ref{thm:asymp_D}}\label{sec:pf_symp_D}
By Corollary~\ref{cor:asymptotic_optimal}, for any optimal policy $\pi^{*} \in \Pi_{\AC}$, the quantity $R^{\pi^{*}}$ satisfies the asymptotic concentration rates in \eqref{eq:lln}--\eqref{eq:lil}. On the other hand, by \eqref{eq:D(T)}, for any learning policy $\mu$, we have
\[
\mathcal{D}_{T}(\omega) = R^{\pi^{*}}_{T} - TJ^{*}.
\]
As a result, by substituting $\mathcal{D}_{T}(\omega)$ in the LHS of \eqref{eq:lln}--\eqref{eq:lil}, we get that for any learning policy $\mu$, these asymptotic concentration rates also hold for the difference $\mathcal{D}_{T}(\omega)$ of cumulative regret and interim cumulative regret. 
\subsection{Proof of Theorem~\ref{thm:D_finite}}\label{sec:pf_D_finite}
By Corollary~\ref{cor:non_asymptotic_optimal}, for any optimal policy $\pi^{*} \in \Pi_{\AC}$, the quantity $|R^{\pi^{*}}_{T} - TJ^{*}|$ satisfies the asymptotic concentration rates in \eqref{eq:clt_finite}--\eqref{eq:lil_finite}. On the other hand, by \eqref{eq:D(T)}, for any learning policy $\mu$, we have
\[
\mathcal{D}_{T}(\omega) = R^{\pi^{*}}_{T} - TJ^{*}.
\]
As a result, by substituting $\mathcal{D}_{T}(\omega)$ in the LHS of \eqref{eq:clt_finite}--\eqref{eq:lil_finite}, we get that for any learning policy $\mu$, these non-asymptotic concentration rates also hold for the difference $\mathcal{D}_{T}(\omega)$ of cumulative regret and interim cumulative regret.
\subsection{Proof of Corollary~\ref{cor:D_finite}}\label{sec:pf_cor_D_finite}
By Corollary~\ref{cor:com_optimal}, for the weakly communicating $\mathcal{M}$, for any optimal policy $\pi^{*} \in \Pi_{\AC}$, the quantity $|R^{\pi^{*}}_{T} - TJ^{*}|$ satisfies the non-asymptotic concentration rates in \eqref{eq:policy_indp_1}--\eqref{eq:policy_indp_2}, where in the RHS, $D^{\pi}$ is replaced with $D$. On the other hand, by \eqref{eq:D(T)}, for any learning policy $\mu$, we have
\[
\mathcal{D}_{T}(\omega) = R^{\pi^{*}}_{T} - TJ^{*}.
\]
As a result, by substituting $\mathcal{D}_{T}(\omega)$ in the LHS of \eqref{eq:policy_indp_1}--\eqref{eq:policy_indp_2}, we get that for the weakly communicating $\mathcal{M}$, for any learning policy $\mu$, these non-asymptotic concentration rates also hold for the difference $\mathcal{D}_{T}(\omega)$ of cumulative regret and interim cumulative regret. At last by Prop.~\ref{prop:relation_of_mdps}, we have that if $\mathcal{M}$ is recurrent, unichain, or communicating it is also weakly communication. As a result, these non-asymptotic concentration bounds hold for all the cases.
\subsection{Proof of Theorem~\ref{thm:rate_equiv}}\label{sec:pf_thm_rate_equiv}
\subsubsection{Proof of Part 1}
This part of the theorem is a consequence of Theorem~\ref{thm:asymp_D}.
Recall that by definition, we have
\begin{equation}\label{eq:pf_rate_equiv_mid_1}
    \mathcal{D}_{T}(\omega) = \mathcal{R}_{T}^{\mu}(\omega) - \bar{\mathcal{R}}_{T}^{\mu}(\omega).  
\end{equation}
On the other hand, we can rewrite the law of iterated logarithm in Theorem~\ref{thm:asymp_D} using the $\tildeO(\cdot)$ notation as follows 
\begin{equation}\label{eq:pf_rate_equiv_mid_2}
    \mathcal{D}_{T}(\omega)\leq \tildeO(\sqrt{T}), \quad a.s.
\end{equation}
As a result, for any learning policy $\mu$ that satisfies 
\(
R^{\mu}_{T}(\omega) \leq \tildeO(\sqrt{T}),
\)
almost surely, 
\eqref{eq:pf_rate_equiv_mid_1}--\eqref{eq:pf_rate_equiv_mid_2} imply that $\bar{R}^{\mu}_{T}(\omega) \leq \tildeO(\sqrt{T})$. Similarly, for any learning policy $\mu$ that satisfies $\bar{R}^{\mu}_{T}(\omega) \leq \tildeO(\sqrt{T})$, almost surely, \eqref{eq:pf_rate_equiv_mid_1}--\eqref{eq:pf_rate_equiv_mid_2} imply that $R^{\mu}_{T}(\omega) \leq \tildeO(\sqrt{T})$. Therefore, statements 1 and 2 are equivalent. 
\subsubsection{Proof of Part 2}
Proof of this part is a consequence of Theorem~\ref{thm:D_finite}. By the theorem's hypothesis, for any $\delta_{1} \in (0,1)$, there exists a pair of functions $(T_{1}(\delta_{1}),h_{1}(\delta_{1},T))$, such that for all $T\geq T_{1}(\delta_{1})$, with probability at least $1-\delta_{1}$, we have
\begin{equation}\label{eq:pf_rate_equiv_mid_3}
    R^{\mu}_{T}(\omega) \leq h_{1}(\delta_{1},T),
\end{equation}
where for a fixed $\delta_{1}$, we have
\(
h_{1}(\delta_{1},T) = \tildeO(\sqrt{T}).
\)
Moreover, by Theorem~\ref{thm:D_finite}, we have that for any $\delta_{2} \in (0,1)$, there exists a pair of functions $(T_{2}(\delta_{2}),h_{2}(\delta_{2},T))$, such that for all $T\geq T_{2}(\delta_{2})$, with probability at least $1-\delta_{2}$, we have 
\begin{equation}\label{eq:pf_rate_equiv_mid_4}
\mathcal{D}_{T}(\omega) \leq h_{2}(\delta_{2},T),    
\end{equation}
where for a fixed $\delta_{2}$, we have
\(
h_{2}(\delta_{2},T) = \tildeO(\sqrt{T}).
\)
As a result, by \eqref{eq:pf_rate_equiv_mid_1}, \eqref{eq:pf_rate_equiv_mid_3}--\eqref{eq:pf_rate_equiv_mid_4}, and Lemma~\ref{lem:delta}, for any $\delta \in (0,1)$, for all $T\geq \max\big\{T_{1}(\delta/2),T_{2}(\delta/2)\big\}$, with probability at least $1-\delta$, we have
\[
\bar{R}^{\mu}_{T}(\omega) \leq h_{1}(\delta/2) + h_{2}(\delta/2).
\]
At last since for a fixed $\delta$, both $h_{1}(\delta/2)$ and $h_{2}(\delta/2)$ satisfy
\[
h_{1}(\delta/2) \leq \tildeO(\sqrt{T}), \quad \text{and}, \quad h_{2}(\delta/2) \leq \tildeO(\sqrt{T}),
\]
we get that $\bar{R}^{\mu}_{T}(\omega) \leq \tildeO(\sqrt{T})$. With repeating the similar arguments we can prove the $2^{nd}$ statement.

\section{Proof of Main Results for Discounted Reward Setup}\label{sec:pf_discount_main}
\subsection{Proof of Theorem~\ref{thm:discounted_non_asymp}}\label{app:pf_discount}
\subsubsection{Preliminary Results}
We first present a few preliminary lemmas. 
To simplify the notation, we define following martingale difference sequence. 
\begin{definition}
    Let filtration $\mathcal{F} = \{\mathcal{F}_{t}\}_{t\geq0}$ be defined as $\mathcal{F}_{t} \coloneqq \sigma(S_{0:t},A_{0:t})$. For any policy $\pi \in \Pi_{\SD}$, let $V^{\pi}_{\gamma}$ denote the corresponding discounted value function. We define the sequence $\{N_{t}^{\pi,\gamma}\}_{t\geq1}$ as follows
    \begin{equation}\label{eq:def_N}
         N_{t}^{\pi,\gamma} \coloneqq \Big[V^{\pi}_{\gamma}(S_{t})-\EXP\big[V^{\pi}_{\gamma}(S_{t})\bigm|S_{t-1},\pi(S_{t-1})\big]\Big], \quad \forall t \geq1,
    \end{equation}
    where $\{S_{t}\}_{t\geq1}$ denotes the random sequence of states encountered along the current sample path. 
\end{definition}
\begin{lemma}
    Sequence $\{\gamma^{t}N_{t}^{\pi,\gamma}\}_{t\geq1}$ is an MDS. 
\end{lemma}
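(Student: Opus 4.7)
The plan is to mimic the proof given earlier that $\{M^{\pi}_{t}\}_{t\ge 1}$ is an MDS, with the only extra ingredient being that $\gamma^{t}$ is a deterministic constant and can therefore be pulled out of the conditional expectation without affecting the martingale structure.

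First I would verify adaptedness and integrability. Since $S_{t}$ is $\mathcal{F}_{t}$-measurable by the definition $\mathcal{F}_{t}=\sigma(S_{0:t},A_{0:t})$, and $V^{\pi}_{\gamma}$ is a bounded real-valued function on the finite set $\mathcal{S}$ (boundedness follows from the fact that $r$ is bounded by $R_{\max}$, so $\|V^{\pi}_{\gamma}\|_{\infty}\le R_{\max}/(1-\gamma)$), the random variable $N_{t}^{\pi,\gamma}$ is $\mathcal{F}_{t}$-measurable and bounded. Multiplying by the deterministic scalar $\gamma^{t}$ preserves both $\mathcal{F}_{t}$-measurability and integrability, so $\{\gamma^{t}N_{t}^{\pi,\gamma}\}_{t\ge 1}$ is an adapted integrable sequence.

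Next I would compute the conditional expectation. Because $\gamma^{t}$ is deterministic,
\[
\EXP\bigl[\gamma^{t}N_{t}^{\pi,\gamma}\bigm|\mathcal{F}_{t-1}\bigr]=\gamma^{t}\EXP\bigl[N_{t}^{\pi,\gamma}\bigm|\mathcal{F}_{t-1}\bigr].
\]
Using $\mathcal{F}_{t-1}$-measurability of $S_{t-1}$, the conditional expectation of the second term in the definition of $N_{t}^{\pi,\gamma}$ equals itself, while for the first term the controlled Markov property (together with the stationarity of $\pi$, so that $A_{t-1}=\pi(S_{t-1})$ is $\mathcal{F}_{t-1}$-measurable) gives
\[
\EXP\bigl[V^{\pi}_{\gamma}(S_{t})\bigm|\mathcal{F}_{t-1}\bigr]=\EXP\bigl[V^{\pi}_{\gamma}(S_{t})\bigm|S_{t-1},\pi(S_{t-1})\bigr].
\]
Subtracting yields $\EXP[N_{t}^{\pi,\gamma}\mid\mathcal{F}_{t-1}]=0$, and therefore $\EXP[\gamma^{t}N_{t}^{\pi,\gamma}\mid\mathcal{F}_{t-1}]=0$, establishing the MDS property.

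There is no real obstacle here; this lemma is essentially a verbatim repetition of the MDS verification already done for $\{M^{\pi}_{t}\}$, with the two cosmetic modifications that (i) the discounted value function replaces the differential value function and (ii) the deterministic factor $\gamma^{t}$ is carried along. The only point worth stating explicitly is that pulling $\gamma^{t}$ out of the conditional expectation relies on its being non-random, which is immediate since $\gamma\in(0,1)$ is a fixed model parameter.
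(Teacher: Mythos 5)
Your proof is correct and follows essentially the same route as the paper: pull the deterministic factor $\gamma^{t}$ out of the conditional expectation and use $\mathcal{F}_{t-1}$-measurability of $S_{t-1}$ together with the Markov property to conclude that $\EXP[\gamma^{t}N_{t}^{\pi,\gamma}\mid\mathcal{F}_{t-1}]=0$. The extra remarks on adaptedness and integrability are fine but not needed beyond what the paper records.
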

\begin{proof}
By the definition of $\{\mathcal{F}_{t}\}_{t\geq0}$, we have that $S_{t-1}$ is $\mathcal{F}_{t-1}$-measurable. As a result, we have
    \begin{align*}
       \EXP\Big[\gamma^{t}N_{t}^{\pi,\gamma}\bigm|\mathcal{F}_{t-1}\Big] & = \EXP\Big[\gamma^{t}\big(V^{\pi}_{\gamma}(S_{t})-\EXP\big[V^{\pi}_{\gamma}(S_{t})\bigm|S_{t-1}, \pi(S_{t-1})\big]\big) \bigm| \mathcal{F}_{t-1}\Big]  \\ & = 
       \gamma^{t}\EXP\Big[V^{\pi}_{\gamma}(S_{t}) \bigm| \mathcal{F}_{t-1}\Big]-\gamma^{t}\EXP\Big[V^{\pi}_{\gamma}(S_{t})\bigm|S_{t-1}, \pi(S_{t-1})\Big] =0, 
    \end{align*}
    which shows that $\{\gamma_{t}N_{t}^{\pi,\gamma}\}_{t\geq0}$ is an MDS with respect to the filtration $\{\mathcal{F}_{t}\}_{t\geq0}$.    
\end{proof}
We now present a martingale decomposition for the discounted cumulative reward $R^{\pi,\gamma}_{T}(\omega)$ for any policy $\pi \in \Pi_{\SD}$.
\begin{lemma}\label{lem:discounted_decomposition}
Given any policy $\pi \in \Pi_{\SD}$, we can rewrite the discounted cumulative reward $R^{\pi,\gamma}_{T}$ as follows
\begin{equation}
    R^{\pi,\gamma}_{T}(\omega) =  \sum_{t=1}^{T}\gamma^{t}N_{t}^{\pi,\gamma} + V^{\pi}_{\gamma}(S_{0})-\gamma^{T}V^{\pi}_{\gamma}(S_{T}).
\end{equation}
\end{lemma}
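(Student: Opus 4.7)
The plan is to mimic the derivation used for Lemma~\ref{lem:return_decom} in the average reward setting, but replacing the average reward Bellman relation by \eqref{eq:bellman} and carrying a discount factor through the bookkeeping. By Prop.~\ref{prop:bertsekas_discounted}, any $\pi\in\Pi_{\SD}$ satisfies \eqref{eq:bellman}, so along the realized state trajectory $\{S_t\}_{t\ge 0}$ we may rewrite the per-step reward as
\[
r(S_t,\pi(S_t)) = V^{\pi}_{\gamma}(S_t) - \gamma\,\EXP\big[V^{\pi}_{\gamma}(S_{t+1})\bigm| S_t,\pi(S_t)\big], \quad \forall t\ge 0.
\]

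Next I would multiply both sides by $\gamma^{t}$ and sum from $t=0$ to $T-1$ to obtain
\[
R^{\pi,\gamma}_{T} = \sum_{t=0}^{T-1}\gamma^{t}V^{\pi}_{\gamma}(S_t) - \sum_{t=0}^{T-1}\gamma^{t+1}\EXP\big[V^{\pi}_{\gamma}(S_{t+1})\bigm| S_t,\pi(S_t)\big].
\]
Then, in the spirit of the average reward proof, add and subtract $\gamma^{T}V^{\pi}_{\gamma}(S_T)$ and reindex the second sum using $s=t+1$. This converts the expression into
\[
R^{\pi,\gamma}_{T} = V^{\pi}_{\gamma}(S_0) - \gamma^{T}V^{\pi}_{\gamma}(S_T) + \sum_{t=1}^{T}\gamma^{t}\Big(V^{\pi}_{\gamma}(S_t) - \EXP\big[V^{\pi}_{\gamma}(S_t)\bigm| S_{t-1},\pi(S_{t-1})\big]\Big),
\]
and invoking the definition of $N^{\pi,\gamma}_{t}$ in \eqref{eq:def_N} immediately yields the claimed identity.

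There is no real obstacle here: the entire argument is an algebraic rearrangement and a telescoping/reindexing step. The only point to handle with care is the alignment of indices when shifting $\gamma^{t+1}$ terms back to $\gamma^{t}$, and making sure the $V^{\pi}_{\gamma}(S_0)$ (from the $t=0$ summand of the first sum) and $\gamma^{T}V^{\pi}_{\gamma}(S_T)$ (from the added/subtracted term) appear with the correct signs. Since $\pi\in\Pi_{\SD}$ is arbitrary and \eqref{eq:bellman} holds identically along sample paths, no measure-theoretic subtlety arises and the identity is pathwise.
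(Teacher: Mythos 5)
Your proposal is correct and follows essentially the same route as the paper's proof: apply \eqref{eq:bellman} along the trajectory, weight by $\gamma^{t}$, add and subtract $\gamma^{T}V^{\pi}_{\gamma}(S_{T})$, and reindex to telescope into the MDS $\{\gamma^{t}N^{\pi,\gamma}_{t}\}$. The index bookkeeping you describe matches the paper's steps $(a)$--$(c)$ exactly, so nothing further is needed.
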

\begin{proof}
    Since $\pi \in \Pi_{\SD}$, \eqref{eq:bellman} implies that along the trajectory of states $\{S_{t}\}_{t=0}^{T}$ induced by the policy $\pi$, we have
    \[
    r\big(S_{t},\pi(S_{t})\big) =  V^{\pi}_{\gamma}(S_{t}) - \gamma \EXP\Big[V^{\pi}_{\gamma}(S_{t+1})\bigm|S_{t},\pi(S_{t})\Big].
    \]
    Repeating similar steps as in the proof of Lemma~\ref{lem:return_decom}, we have
    \begin{align*}
        R^{\pi,\gamma}_{T}\!(\omega) = &\sum_{t=0}^{T-1}\gamma^{t}r(S_{t},\pi(S_{t}))\\ =
        &\sum_{t=0}^{T-1}\gamma^{t}\Big[V^{\pi}_{\gamma}\!(S_{t}) - \gamma\EXP\big[V^{\pi}_{\gamma}\!(S_{t+1})|S_{t},\pi(S_{t})\big] \Big]\\ \stackrel{(a)}{=}
        &\sum_{t=0}^{T-1}\gamma^{t}\Big[V^{\pi}_{\gamma}\!(S_{t}) - \gamma\EXP\big[V^{\pi}_{\gamma}\!(S_{t+1})|S_{t},\pi(S_{t})\big] \Big] + \gamma^{T}V^{\pi}_{\gamma}\!(S_{T})-\gamma^{T}V^{\pi}_{\gamma}\!(S_{T}) \\ \stackrel{(b)}{=} 
        &\sum_{t=0}^{T-1}\gamma^{t+1}\Big[V^{\pi}_{\gamma}\!(S_{t+1}) - \EXP\big[V^{\pi}_{\gamma}\!(S_{t+1})|S_{t},\pi(S_{t}) \big]\Big] + V^{\pi}_{\gamma}\!(S_{0})-\gamma^{T}V^{\pi}_{\gamma}\!(S_{T}) \\ \stackrel{(c)}{=} 
        &\sum_{t=0}^{T-1}\gamma^{t+1}N_{t+1}^{\pi,\gamma} + V^{\pi}_{\gamma}\!(S_{0})-\gamma^{T}V^{\pi}_{\gamma}\!(S_{T})\\ =
        &\sum_{t=1}^{T}\gamma^{t}N_{t}^{\pi,\gamma} + V^{\pi}_{\gamma}\!(S_{0})-\gamma^{T}V^{\pi}_{\gamma}\!(S_{T}),
    \end{align*}
    where $(a)$ follows from adding and subtracting the term $\gamma^{T}V^{\pi}_{\gamma}\!(S_{T})$, $(b)$ follows from re-arranging the terms in the summation, and $(c)$ follows from the definition of $\{N_{t}^{\pi,\gamma}\}_{t\geq1}$. 
\end{proof}
\subsubsection{Proof of Theorem~\ref{thm:discounted_non_asymp}}
Proof of this theorem follows from the martingale decomposition stated in Lemma~\ref{lem:discounted_decomposition} and the concentration bounds stated in Corollary~\ref{cor:azuma} and Theorem~\ref{thm:non_asym_LIL}.\\ 
\textbf{Proof of Part 1}
    By Lemma~\ref{lem:discounted_decomposition}, we have
    \begin{equation*}
    R^{\pi,\gamma}_{T}\!(\omega) =  \sum_{t=1}^{T}\gamma^{t}N_{t}^{\pi,\gamma} + V^{\pi}_{\gamma}(S_{0})-\gamma^{T}V^{\pi}_{\gamma}(S_{T}).
    \end{equation*}
    As a result, we have 
    \begin{equation}\label{eq:pf_dis_local_1}
    \Big|R^{\pi,\gamma}_{T}\!(\omega)-\big(V^{\pi}_{\gamma}(S_{0})-\gamma^{T}V^{\pi}_{\gamma}(S_{T})\big)\Big| = \big| \sum_{t=1}^{T}\gamma^{t}N_{t}^{\pi,\gamma}\big|.  
    \end{equation}
    In order to upper-bound the term $\big|\sum_{t=1}^{T}\gamma^{t}N_{t}^{\pi,\gamma}\big|$, we verify the conditions of Corollary~\ref{cor:azuma}. By \eqref{eq:k_discount} and Lemma~\ref{lem:inequalities}, we have
    \[
    \big|\gamma^{t}N_{t}^{\pi,\gamma}\big| \leq \gamma^{t}K^{\pi,\gamma} < \infty, \quad \forall t\geq1.
    \]
    As a result, MDS $\{\gamma^{t}N_{t}^{\pi,\gamma}\}_{t\geq1}$ is a sequentially bounded MDS with respect to the sequence $\{\gamma^{t}K^{\pi,\gamma}\}_{t\geq1}$. Therefore, Corollary~\ref{cor:azuma} implies that for any $\delta \in (0,1)$, with probability at least $1-\delta$, we have
    \begin{align}\label{eq:pf_dis_local_2}
        \Big|\sum_{t=1}^{T}\gamma^{t}N_{t}^{\pi,\gamma}\Big| &\leq \sqrt{2\sum_{t=1}^{T}(K^{\pi,\gamma})^{2}\gamma^{2t} \log\frac{2}{\delta}}\notag \\&= K^{\pi,\gamma} \sqrt{2\sum_{t=1}^{T} \gamma^{2t} \log\frac{2}{\delta}}\notag\\&= K^{\pi,\gamma} \sqrt{2f^{\gamma}(T)\log\frac{2}{\delta}}.
    \end{align}
    As a result, by combining \eqref{eq:pf_dis_local_1} and \eqref{eq:pf_dis_local_2}, with probability at least $1-\delta$, we have
    \begin{align}\label{eq:pf_dis_local_3}
    \Big|R^{\pi,\gamma}_{T}\!(\omega) - \big(V^{\pi}_{\gamma}(S_{0})-\gamma^{T}V^{\pi}_{\gamma}(S_{T})\big)\Big| \leq K^{\pi,\gamma} \sqrt{2f^{\gamma}(T)\log\frac{2}{\delta}}.    
    \end{align}
    
\textbf{Proof of Part 2:}
Similar to the proof of Part 1, by Lemma~\ref{lem:discounted_decomposition}, we have
    \begin{align}\label{eq:pf_dis_local_5}
        \Big|R^{\pi,\gamma}_{T}\!(\omega)-\big(V^{\pi}_{\gamma}(S_{0})-\gamma^{T}V^{\pi}_{\gamma}(S_{T})\big)\Big| = \big| \sum_{t=1}^{T}\gamma^{t}N_{t}^{\pi,\gamma}\big|.
    \end{align}
Moreover, MDS $\{\gamma^{t}N^{\pi,\gamma}_{t}\}_{t\geq 1}$ is a sequentially bounded MDS with respect to the sequence $\{\gamma^{t}K^{\pi,\gamma}\}_{t\geq 1}$.
Therefore, Theorem~\ref{thm:non_asym_LIL} implies that for any $\delta \in (0,1)$, for all $T\geq T_{0}(\delta)\coloneqq \min\big\{T \geq 1 : f^{\gamma}(T) > \dfrac{173}{K^{\pi,\gamma}}\log\dfrac{4}{\delta}\big\}$, with probability at least $1-\delta$, we have 
    \begin{align*}
        \Big| \sum_{t=1}^{T}\gamma^{t}N_{t}^{\pi,\gamma} \Big| &\leq \sqrt{3\Big(\sum_{t=1}^{T}(K^{\pi,\gamma})^{2}(\gamma^{t})^2\Big) \Big(2\log\log\big(\dfrac{3\sum_{t=1}^{T}(K^{\pi,\gamma})^2(\gamma^{t})^2}{2\big|\sum_{t=1}^{T}\gamma^{t}N_{t}^{\pi,\gamma}\big|}\big) +\log\frac{2}{\delta}\Big)}.
    \end{align*}
    Now there are two cases: either $|\sum_{t=1}^{T}\gamma^{t}N_{t}^{\pi,\gamma}| \leq (K^{\pi,\gamma})^2$ or $|\sum_{t=1}^{T}\gamma^{t}N_{t}^{\pi,\gamma}| \geq (K^{\pi,\gamma})^2$. If $|\sum_{t=1}^{T}\gamma^{t}N_{t}^{\pi,\gamma}| \geq (K^{\pi,\gamma})^2$, we get:
    \begin{align*}
        \Big|\sum_{t=1}^{T}\gamma^{t}N_{t}^{\pi,\gamma}\Big| &\leq \sqrt{3\Big(\sum_{t=1}^{T}(K^{\pi,\gamma})^{2}(\gamma^{t})^2\Big) \Big(2\log\log\big(\dfrac{3\sum_{t=1}^{T}(K^{\pi,\gamma})^2(\gamma^{t})^2}{2\big|\sum_{t=1}^{T}\gamma^{t}N_{t}^{\pi,\gamma}\big|}\big) +\log\frac{2}{\delta}\Big)} \\&\leq
        \sqrt{3\Big(\sum_{t=1}^{T}(K^{\pi,\gamma})^{2}(\gamma^{t})^2\Big) \Big(2\log\log\big(\dfrac{3\sum_{t=1}^{T}(K^{\pi,\gamma})^2(\gamma^{t})^2}{2(K^{\pi,\gamma})^2}\big) +\log\frac{2}{\delta}\Big)}\\ &\stackrel{(a)}{=}
        K^{\pi,\gamma}\sqrt{3f^{\gamma}(T) \Big(2\log\log\big(\dfrac{3}{2}f^{\gamma}(T)\big) +\log\frac{2}{\delta}\Big)},
    \end{align*}
    where $(a)$ follows from the geometric series formula and the definition of $f^{\gamma}(T)$.
    Otherwise, we have $|\sum_{t=1}^{T}\gamma^{t}N_{t}^{\pi,\gamma}| \leq (K^{\pi,\gamma})^2$. As a result, we can summarize these two cases as follows
    \begin{align}\label{eq:pf_dis_local_6}
        \Big|\sum_{t=1}^{T}\gamma^{t}N_{t}^{\pi,\gamma}\Big| \leq  
        \max\bigg\{ K^{\pi,\gamma} \sqrt{3f^{\gamma}(T)\big(2\log\log \dfrac{3}{2}f^{\gamma}(T) +\log\frac{2}{\delta}\big)},(K^{\pi,\gamma})^{2}\bigg\}.
    \end{align}
By combining \eqref{eq:pf_dis_local_5}--\eqref{eq:pf_dis_local_6}, with probability at least $1-\delta$, we have 
\begin{align}\label{eq:pf_dis_local_7}
    \Big|R^{\pi,\gamma}_{T}\!(\omega)&-\big(V^{\pi}_{\gamma}(S_{0})-\gamma^{T}V^{\pi}_{\gamma}(S_{T})\big)\Big| \notag  \\ &\leq \max\bigg\{ K^{\pi,\gamma} \sqrt{3f^{\gamma}(T)\big(2\log\log (\frac{3}{2}f^{\gamma}(T)) +\log\frac{2}{\delta}\big)},(K^{\pi,\gamma})^{2}\bigg\}.
\end{align}

\subsection{Proof of Corollary~\ref{cor:discounted_non_asymp2}}\label{app:pf_discount_2}
\textbf{Proof of Part 1:}
    By Lemma~\ref{lem:discounted_decomposition}, we have
    \begin{equation*}
    R^{\pi,\gamma}_{T}\!(\omega) =  \sum_{t=1}^{T}\gamma^{t}N_{t}^{\pi,\gamma} + V^{\pi}_{\gamma}(S_{0})-\gamma^{T}V^{\pi}_{\gamma}(S_{T}).
    \end{equation*}
    As a result, we have 
    \begin{equation}\label{eq:pf_dis_local_8}
    \Big|R^{\pi,\gamma}_{T}\!(\omega)-V^{\pi}_{\gamma}(S_{0})\Big| \stackrel{(a)}{\leq} \Big| \sum_{t=1}^{T}\gamma^{t}N_{t}^{\pi,\gamma}\Big| + \Big|\gamma^{T}V^{\pi}_{\gamma}(S_{T})\Big|,  
    \end{equation}
    where $(a)$ follows from the triangle inequality. 
    In the proof of Theorem~\ref{thm:discounted_non_asymp}, Part 1, we showed that with probability at least $1-\delta$, we have
    \begin{align}\label{eq:pf_dis_local_9}
        \Big|\sum_{t=1}^{T}\gamma^{t}N_{t}^{\pi,\gamma}\Big| \leq  K^{\pi,\gamma} \sqrt{2f^{\gamma}(T)\log\frac{2}{\delta}}.
    \end{align}
    Moreover, we have
    \begin{align}\label{eq:pf_dis_local_10}
    \gamma^{T}V^{\pi}_{\gamma}(S_{T})  & =  \gamma^{T}\EXP^{\pi}\Big[\lim_{T\to\infty} \sum_{t=0}^{T-1}\gamma^{t}r(S_{t},A_{t}) \bigm| S_{0}=S_{T}\Big] \notag \\& =  \gamma^{T}\EXP^{\pi}\Big[\lim_{T\to\infty} \sum_{t=0}^{T-1}\gamma^{t}R_{\max} \bigm| S_{0}=S_{T}\Big]\leq \frac{\gamma^{T}}{1-\gamma}R_{\max}.    
    \end{align}
    By combining \eqref{eq:pf_dis_local_8}--\eqref{eq:pf_dis_local_10}, with probability $1-\delta$, we have
    \[
    \Big|R^{\pi,\gamma}_{T}\!(\omega) - V^{\pi}_{\gamma}(S_{0})\Big| \leq K^{\pi,\gamma} \sqrt{2f^{\gamma}(T)\log\frac{2}{\delta}} + \frac{\gamma^{T}}{1-\gamma}R_{\max}.
    \]
    \textbf{Proof of Part 2:} Similar to the proof of Part 1, by Lemma~\ref{lem:discounted_decomposition}, we have
    \begin{equation}\label{eq:pf_dis_local_11}
    \Big|R^{\pi,\gamma}_{T}\!(\omega)-V^{\pi}_{\gamma}(S_{0})\Big| \leq \Big| \sum_{t=1}^{T}\gamma^{t}N_{t}^{\pi,\gamma}\Big| + \Big|\gamma^{T}V^{\pi}_{\gamma}(S_{T})\Big|.  
    \end{equation}
    Moreover, we have
    \begin{align}\label{eq:pf_dis_local_12}
    \Big|\gamma^{T}V^{\pi}_{\gamma}(S_{T}) \Big| \leq \gamma^{T}\frac{R_{\max}}{1-\gamma}.    
    \end{align}
    In addition, from proof of Theorem~\ref{thm:discounted_non_asymp}, Part 2, we have for any $\delta \in (0,1)$, for all $T\geq T_{0}(\delta)\coloneqq \min\big\{T' \geq 1 : f^{\gamma}(T') > \dfrac{173}{K^{\pi,\gamma}}\log\dfrac{4}{\delta}\big\}$, with probability at least $1-\delta$, we have
    \begin{align}\label{eq:pf_dis_local_13}
        \Big|\sum_{t=1}^{T}\gamma^{t}N_{t}^{\pi,\gamma}\Big| \leq  
        \max\bigg\{ K^{\pi,\gamma} \sqrt{3f^{\gamma}(T)\big(2\log\log \dfrac{3}{2}f^{\gamma}(T) +\log\frac{2}{\delta}\big)},(K^{\pi,\gamma})^{2}\bigg\}.
    \end{align}
    By combining \eqref{eq:pf_dis_local_11}--\eqref{eq:pf_dis_local_13}, with probability at least $1-\delta$, we have
    \begin{align}
    \Big|R^{\pi,\gamma}_{T}\!(\omega)&-V^{\pi}_{\gamma}(S_{0})\Big| \notag  \\ &\leq \max\bigg\{ K^{\pi,\gamma} \sqrt{3f^{\gamma}(T)\big(2\log\log (\frac{3}{2}f^{\gamma}(T)) +\log\frac{2}{\delta}\big)},(K^{\pi,\gamma})^{2}\bigg\} + \frac{\gamma^{T}}{1-\gamma}R_{\max}.    
    \end{align}
\subsection{Proof of Corollary~\ref{cor:two_policies_discounted}}\label{sec:pf_cor_two_policies_discounted}
\subsubsection{Proof of Part 1}
Consider two policies $\pi_{1},\pi_{2} \in \Pi_{\SD}$. Let $\{S_{t}^{\pi_{1}}\}_{t\geq0}$ and $\{S_{t}^{\pi_{2}}\}_{t\geq0}$ denote the random sequences of states encountered by following policies $\pi_{1}$ and $\pi_{2}$. We have
\begin{align}\label{eq:mid_pf_dis_two_policy_1}
    \Big|R^{\pi_{1},\gamma}_{T}-R^{\pi_{2},\gamma}_{T} \Big| \stackrel{(a)}{=}&\Big|R^{\pi_{1},\gamma}_{T} -\big[V^{\pi_{1}}_{\gamma}(S_{0}^{\pi_{1}}) - \gamma^{T}V^{\pi_{1}}_{\gamma}(S_{T}^{\pi_1}) \big] + \big[V^{\pi_{1}}_{\gamma}(S_{0}^{\pi_1}) - \gamma^{T}V^{\pi_{1}}_{\gamma}(S_{T}^{\pi_1}) \big]\notag\\ - &\big[V^{\pi_{2}}_{\gamma}(S_{0}^{\pi_2}) - \gamma^{T}V^{\pi_{2}}_{\gamma}(S_{T}^{\pi_2}) \big] + \big[V^{\pi_{2}}_{\gamma}(S_{0}^{\pi_2}) - \gamma^{T}V^{\pi_{2}}_{\gamma}(S_{T}^{\pi_2}) \big] - R^{\pi_{2},\gamma}_{T} \Big| \notag \notag\\\stackrel{(b)}{\leq} &\Big|R^{\pi_{1},\gamma}_{T} - \big[V^{\pi_{1}}_{\gamma}(S_{0}^{\pi_1}) - \gamma^{T}V^{\pi_{1}}_{\gamma}(S_{T}^{\pi_1}) \big]\Big| + \Big|\big[V^{\pi_{2}}_{\gamma}(S_{0}^{\pi_2}) - \gamma^{T}V^{\pi_{2}}_{\gamma}(S_{T}^{\pi_2}) \big] - R^{\pi_{2},\gamma}_{T} \Big|\notag \\
    + &\Big|\big[V^{\pi_{1}}_{\gamma}(S_{0}^{\pi_1}) - \gamma^{T}V^{\pi_{1}}_{\gamma}(S_{T}^{\pi_1}) \big] - \big[V^{\pi_{2}}_{\gamma}(S_{0}^{\pi_2}) - \gamma^{T}V^{\pi_{2}}_{\gamma}(S_{T}^{\pi_2}) \big] \Big|,
\end{align}
where $(a)$ follows by adding and subtracting $\big[V^{\pi_{1}}_{\gamma}(S_{0}^{\pi_1}) - \gamma^{T}V^{\pi_{1}}_{\gamma}(S_{T}^{\pi_1}) \big]$ and $\big[V^{\pi_{2}}_{\gamma}(S_{0}^{\pi_2}) - \gamma^{T}V^{\pi_{2}}_{\gamma}(S_{T}^{\pi_2}) \big]$ and $(b)$ follows from the triangle inequality. Similarly, we have 
\begin{align}\label{eq:mid_pf_dis_two_policy_2}
    &\Big|\big[V^{\pi_{1}}_{\gamma}(S_{0}^{\pi_1}) - \gamma^{T}V^{\pi_{1}}_{\gamma}(S_{T}^{\pi_1}) \big]-\big[V^{\pi_{2}}_{\gamma}(S_{0}^{\pi_2}) - \gamma^{T}V^{\pi_{2}}_{\gamma}(S_{T}^{\pi_2}) \big] \Big| \stackrel{(a)}{=}\notag\\&\Big|\big[V^{\pi_{1}}_{\gamma}(S_{0}^{\pi_1}) - \gamma^{T}V^{\pi_{1}}_{\gamma}(S_{T}^{\pi_1}) \big] -R^{\pi_{1},\gamma}_{T}  + R^{\pi_{1},\gamma}_{T} - R^{\pi_{2},\gamma}_{T} + R^{\pi_{2},\gamma}_{T}   - \big[V^{\pi_{2}}_{\gamma}(S_{0}^{\pi_2}) - \gamma^{T}V^{\pi_{2}}_{\gamma}(S_{T}^{\pi_2}) \big]\Big| \notag\\\stackrel{(b)}{\leq} &\Big|R^{\pi_{1},\gamma}_{T} - \big[V^{\pi_{1}}_{\gamma}(S_{0}^{\pi_1}) - \gamma^{T}V^{\pi_{1}}_{\gamma}(S_{T}^{\pi_1}) \big]\Big| + \Big|R^{\pi_{2},\gamma}_{T} - \big[V^{\pi_{2}}_{\gamma}(S_{0}^{\pi_2}) - \gamma^{T}V^{\pi_{2}}_{\gamma}(S_{T}^{\pi_2}) \big]\Big| \notag \\
    +&\Big| R^{\pi_{1},\gamma}_{T} - R^{\pi_{2},\gamma}_{T} \Big|,
\end{align}
where $(a)$ follows by adding and subtracting $R^{\pi_{1},\gamma}_{T}$ and $R^{\pi_{2},\gamma}_{T}$ and $(b)$ follows from the triangle inequality. \eqref{eq:mid_pf_dis_two_policy_1}--\eqref{eq:mid_pf_dis_two_policy_2} imply that 
\begin{align}\label{eq:mid_pf_dis_two_policy_3}
    &\Big|\big|R^{\pi_{1},\gamma}_{T}-R^{\pi_{2},\gamma}_{T}\big| - \big|\big[V^{\pi_{1}}_{\gamma}(S_{0}^{\pi_1}) - \gamma^{T}V^{\pi_{1}}_{\gamma}(S_{T}^{\pi_1}) \big]-\big[V^{\pi_{2}}_{\gamma}(S_{0}^{\pi_2}) - \gamma^{T}V^{\pi_{2}}_{\gamma}(S_{T}^{\pi_2}) \big] \big| \Big| \notag  \\ \leq &\Big|R^{\pi_{1},\gamma}_{T} - \big[V^{\pi_{1}}_{\gamma}(S_{0}^{\pi_1}) - \gamma^{T}V^{\pi_{1}}_{\gamma}(S_{T}^{\pi_1}) \big]\Big| + \Big|R^{\pi_{2},\gamma}_{T} - \big[V^{\pi_{2}}_{\gamma}(S_{0}^{\pi_2}) - \gamma^{T}V^{\pi_{2}}_{\gamma}(S_{T}^{\pi_2}) \big]\Big|.
\end{align}
By Theorem~\ref{thm:discounted_non_asymp}, we know that for any $\delta_{1} \in (0,1)$, with probability at least $1-\delta_{1}$, we have
\begin{equation}\label{eq:mid_pf_dis_two_policy_4}
\Big|R^{\pi_{1},\gamma}_{T} - \big[V^{\pi_{1}}_{\gamma}(S_{0}^{\pi_1}) - \gamma^{T}V^{\pi_{1}}_{\gamma}(S_{T}^{\pi_1}) \big]\Big| \leq K^{\pi_{1},\gamma} \sqrt{2f^{\gamma}(T)\log\frac{2}{\delta_{1}}}.  
\end{equation}
Similarly, we have that for any $\delta_{2} \in (0,1)$, with probability at least $1-\delta_{2}$, we have
\begin{equation}\label{eq:mid_pf_dis_two_policy_5}
\Big|R^{\pi_{2},\gamma}_{T} - \big[V^{\pi_{2}}_{\gamma}(S_{0}^{\pi_2}) - \gamma^{T}V^{\pi_{2}}_{\gamma}(S_{T}^{\pi_2}) \big]\Big| \leq K^{\pi_{2},\gamma} \sqrt{2f^{\gamma}(T)\log\frac{2}{\delta_{2}}}.    
\end{equation}
As a result, by applying Lemma~\ref{lem:delta} and \eqref{eq:mid_pf_dis_two_policy_3}--\eqref{eq:mid_pf_dis_two_policy_5}, for any $\delta \in (0,1)$, with probability at least $1-\delta$, we have
\begin{align*}
    &\phantom{\leq}\Big|\big|R^{\pi_{1},\gamma}_{T}-R^{\pi_{2},\gamma}_{T}\big| - \big|\big[V^{\pi_{1}}_{\gamma}(S_{0}^{\pi_1}) - \gamma^{T}V^{\pi_{1}}_{\gamma}(S_{T}^{\pi_1}) \big]-\big[V^{\pi_{2}}_{\gamma}(S_{0}^{\pi_2}) - \gamma^{T}V^{\pi_{2}}_{\gamma}(S_{T}^{\pi_2}) \big] \big| \Big| \\&\leq \Big|R^{\pi_{1},\gamma}_{T} - \big[V^{\pi_{1}}_{\gamma}(S_{0}^{\pi_1}) - \gamma^{T}V^{\pi_{1}}_{\gamma}(S_{T}^{\pi_1}) \big]\Big| + \Big|R^{\pi_{2},\gamma}_{T} - \big[V^{\pi_{2}}_{\gamma}(S_{0}^{\pi_2}) - \gamma^{T}V^{\pi_{2}}_{\gamma}(S_{T}^{\pi_2}) \big]\Big| \\&\leq K^{\pi_{1},\gamma}\sqrt{2f^{\gamma}(T)\log\frac{4}{\delta}} + K^{\pi_{2},\gamma}\sqrt{2f^{\gamma}(T)\log\frac{4}{\delta}}.
\end{align*}

\subsubsection{Proof of Part 2}
As we showed in the proof of part 1, for any two policies $\pi_{1},\pi_{2} \in \Pi_{\SD}$, we have
\begin{align}\label{eq:mid_pf_dis_two_policy_6}
    &\Big|\big|R^{\pi_{1},\gamma}_{T}-R^{\pi_{2},\gamma}_{T}\big| - \big|\big[V^{\pi_{1}}_{\gamma}(S_{0}^{\pi_1}) - \gamma^{T}V^{\pi_{1}}_{\gamma}(S_{T}^{\pi_1}) \big]-\big[V^{\pi_{2}}_{\gamma}(S_{0}^{\pi_2}) - \gamma^{T}V^{\pi_{2}}_{\gamma}(S_{T}^{\pi_2}) \big] \big| \Big| \notag  \\ \leq &\Big|R^{\pi_{1},\gamma}_{T} - \big[V^{\pi_{1}}_{\gamma}(S_{0}^{\pi_1}) - \gamma^{T}V^{\pi_{1}}_{\gamma}(S_{T}^{\pi_1}) \big]\Big| + \Big|R^{\pi_{2},\gamma}_{T} - \big[V^{\pi_{2}}_{\gamma}(S_{0}^{\pi_2}) - \gamma^{T}V^{\pi_{2}}_{\gamma}(S_{T}^{\pi_2}) \big]\Big|.
\end{align}
By Theorem~\ref{thm:discounted_non_asymp}, we have that for any $\delta_{1} \in (0,1)$, for all $T\geq T_{0}^{\pi_{1}}(\delta_{1})\coloneqq \min\big\{T'\geq1 : f^{\gamma}(T') > \dfrac{173}{K^{\pi_{1},\gamma}}\log\dfrac{4}{\delta_{1}}\big\}$, with probability at least $1-\delta_{1}$, we have:
\begin{align*}
\Big|R^{\pi_{1},\gamma}_{T} - \big[V^{\pi_{1}}_{\gamma}(S_{0}^{\pi_1}) &- \gamma^{T}V^{\pi_{1}}_{\gamma}(S_{T}^{\pi_1}) \big]\Big| \\&\leq \max\Big\{K^{\pi_{1},\gamma}\sqrt{3f^{\gamma}(T)\Big(2\log\log\frac{3}{2}f^{\gamma}(T) + \log\frac{2}{\delta_{1}}\Big)}, (K^{\pi_{1},\gamma})^2\Big\}.
\end{align*}
Similarly, we have that for any $\delta_{2} \in (0,1)$, for all $T\geq T_{0}^{\pi_{2}}(\delta_{2})\coloneqq \min\big\{T'\geq1 : f^{\gamma}(T') > \dfrac{173}{K^{\pi_{2},\gamma}}\log\dfrac{4}{\delta_{2}}\big\}$, with probability at least $1-\delta_{2}$, we have:
\begin{align*}
\Big|R^{\pi_{2},\gamma}_{T} - \big[V^{\pi_{2}}_{\gamma}(S_{0}^{\pi_2}) &- \gamma^{T}V^{\pi_{2}}_{\gamma}(S_{T}^{\pi_2}) \big]\Big| \\&\leq \max\Big\{K^{\pi_{2},\gamma}\sqrt{3f^{\gamma}(T)\Big(2\log\log\frac{3}{2}f^{\gamma}(T) + \log\frac{2}{\delta_{2}}\Big)}, (K^{\pi_{2},\gamma})^2\Big\}.
\end{align*}
As a result, by applying Lemma~\ref{lem:delta}, 
for all $T \geq T^\pi_{0}(\delta) \coloneqq  \max\big\{T_{0}^{\pi_{1}}(\frac{\delta}{2}),T_{0}^{\pi_{2}}(\frac{\delta}{2})\big\} $, with probability at least $1-\delta$, we have
\begin{align*}\label{eq:mid_pf_dis_two_policy_6}
    &\Big|\big|R^{\pi_{1},\gamma}_{T}-R^{\pi_{2},\gamma}_{T}\big| - \big|\big[V^{\pi_{1}}_{\gamma}(S_{0}^{\pi_1}) - \gamma^{T}V^{\pi_{1}}_{\gamma}(S_{T}^{\pi_1}) \big]-\big[V^{\pi_{2}}_{\gamma}(S_{0}^{\pi_2}) - \gamma^{T}V^{\pi_{2}}_{\gamma}(S_{T}^{\pi_2}) \big] \big| \Big| \notag  \\ \leq &\Big|R^{\pi_{1},\gamma}_{T} - \big[V^{\pi_{1}}_{\gamma}(S_{0}^{\pi_1}) - \gamma^{T}V^{\pi_{1}}_{\gamma}(S_{T}^{\pi_1}) \big]\Big| + \Big|R^{\pi_{2},\gamma}_{T} - \big[V^{\pi_{2}}_{\gamma}(S_{0}^{\pi_2}) - \gamma^{T}V^{\pi_{2}}_{\gamma}(S_{T}^{\pi_2}) \big]\Big|\\ \leq
    &\max\Big\{K^{\pi_{1},\gamma}\sqrt{3f^{\gamma}(T)\Big(2\log\log\frac{3}{2}f^{\gamma}(T) + \log\frac{4}{\delta}\Big)}, (K^{\pi_{1},\gamma})^2\Big\}\\ 
    + &\max\Big\{K^{\pi_{2},\gamma}\sqrt{3f^{\gamma}(T)\Big(2\log\log\frac{3}{2}f^{\gamma}(T) + \log\frac{4}{\delta}\Big)}, (K^{\pi_{2},\gamma})^2\Big\}.
\end{align*}
\subsection{Proof of Corollary~\ref{cor:beta_asymptotic}}\label{app:pf_cor_beta_asymptotic}
Since policy $\pi \in \Pi_{\AC}$, we know the pair $(J^{\pi},V^{\pi})$ exists  and $J^{\pi}$ is constant for all $s\in \mathcal{S}$. We first prove following preliminary lemma.
\subsubsection{Preliminary Lemma}
\begin{lemma}\label{lem:limits}
    For any policy $\pi \in \Pi_{\AC}$, as $\gamma$ goes to $1$ from below, following statements hold.
    \begin{enumerate}
        \item For any two states $s_{1}, s_{2} \in \mathcal{S}$, we have
        \begin{equation*}
        \lim_{\gamma \toup 1}\bigg[V_{\gamma}^{\pi}(s_{1}) - V_{\gamma}^{\pi}(s_{2})\bigg] =   V^{\pi}(s_{1})-V^{\pi}(s_{2}).   
        \end{equation*}
        \item For any two states $s_{1}, s_{2} \in \mathcal{S}$, we have
        \begin{equation*}
        \lim_{\gamma \toup 1}\bigg[V_{\gamma}^{\pi}(s_{1}) - \gamma^{T} V_{\gamma}^{\pi}(s_{2})\bigg] =   TJ^{\pi} + V^{\pi}(s_{1})-V^{\pi}(s_{2}).    
        \end{equation*}
        \item We have
        \begin{equation}\label{eq:limi_cor_mid_1}
            \lim_{\gamma \toup 1} f(T,\gamma) =  T.   
        \end{equation}
        \item We have
        \begin{equation}\label{eq:limi_cor_mid_2}
            \lim_{\gamma \toup 1} R^{\pi,\gamma}_{T} = R^{\pi}_{T}.   
        \end{equation}
    \end{enumerate}
\end{lemma}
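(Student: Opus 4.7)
The plan is to use the classical vanishing discount expansion: for any policy $\pi \in \Pi_{\AC}$ satisfying \eqref{eq:AOE} with pair $(J^{\pi}, V^{\pi})$, the discounted value function admits an expansion of the form
\[
V^{\pi}_{\gamma}(s) = \frac{J^{\pi}}{1-\gamma} + V^{\pi}(s) + \varepsilon_{\gamma}(s),
\]
where $\varepsilon_{\gamma}(s) \to 0$ as $\gamma \toup 1$ for each $s \in \mathcal{S}$. This is a standard Laurent-type expansion of the resolvent $(I-\gamma P^{\pi})^{-1} r^{\pi}$ at $\gamma=1$, available in standard references (e.g., Puterman Ch.~8, Bertsekas Vol.~II). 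The additive constant in $V^{\pi}$ (which \eqref{eq:AOE} leaves free) is fixed implicitly once one writes down the expansion, but this ambiguity is harmless for all four claims because only differences of $V^{\pi}$ appear on the right-hand sides.

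Given the expansion, Part~1 is immediate: the singular term $J^{\pi}/(1-\gamma)$ cancels in the difference, and
\[
V_{\gamma}^{\pi}(s_1) - V_{\gamma}^{\pi}(s_2) = V^{\pi}(s_1) - V^{\pi}(s_2) + \varepsilon_{\gamma}(s_1) - \varepsilon_{\gamma}(s_2) \xrightarrow[\gamma \toup 1]{} V^{\pi}(s_1) - V^{\pi}(s_2).
\]
For Part~2, I would use the identity $\dfrac{1-\gamma^T}{1-\gamma} = \sum_{t=0}^{T-1}\gamma^t \to T$ to write
\[
V_{\gamma}^{\pi}(s_1) - \gamma^T V_{\gamma}^{\pi}(s_2) = J^{\pi}\,\frac{1-\gamma^T}{1-\gamma} + \big[V^{\pi}(s_1) - \gamma^T V^{\pi}(s_2)\big] + \varepsilon_{\gamma}(s_1) - \gamma^T \varepsilon_{\gamma}(s_2),
\]
and pass to the limit $\gamma \toup 1$ term by term to obtain $TJ^{\pi} + V^{\pi}(s_1) - V^{\pi}(s_2)$.

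Parts~3 and~4 are routine. The quantity $f^{\gamma}(T) = \sum_{t=1}^{T}\gamma^{2t}$ is a finite sum of polynomials in $\gamma$, so it is continuous at $\gamma=1$ and its value there is $T$. Likewise $R^{\pi,\gamma}_{T}(\omega) = \sum_{t=0}^{T-1}\gamma^{t} r(S_t,A_t)$ is, on every sample path, a polynomial in $\gamma$ of degree at most $T-1$; continuity at $\gamma = 1$ gives the limit $R^{\pi}_T(\omega)$. The only real obstacle is justifying the vanishing discount expansion for a $\pi \in \Pi_{\AC}$ whose induced chain $P^{\pi}$ may include transient states; I would handle this by appealing directly to the Laurent expansion of the resolvent proved in Puterman, rather than reproducing the argument. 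Once that expansion is in hand, the remainder of the proof is algebraic manipulation together with the elementary continuity arguments for Parts~3 and~4.
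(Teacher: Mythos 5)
Your proposal is correct and follows essentially the same route as the paper: both rest on the Laurent expansion $V^{\pi}_{\gamma}(s) = J^{\pi}/(1-\gamma) + V^{\pi}(s) + o(1)$ (the paper cites Bertsekas, Prop.~5.1.2), cancel the singular term for Part~1, use $\tfrac{1-\gamma^{T}}{1-\gamma}\to T$ for Part~2, and invoke continuity of finite polynomial sums in $\gamma$ for Parts~3 and~4.
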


\begin{proof}
    \textbf{of Part~1}: From the Laurent series expansion (\citep[Proposition~5.1.2]{bertsekas2012dynamic}, for any policy $\pi \in \Pi_{\SD}$, we have
\[
V_{\gamma}^{\pi}(s) = \frac{J^{\pi}}{1-\gamma} + V^{\pi}(s) + O(|1-\gamma|), \quad \forall s \in \mathcal{S}.
\]
As a result, we have
\begin{align*} 
    &\lim_{\gamma \toup 1}\bigg[V_{\gamma}^{\pi}(s_{1}) - V_{\gamma}^{\pi}(s_{2})\bigg] \\
    = &\lim_{\gamma \toup 1}\bigg[\frac{J^{\pi}}{1-\gamma} + V^{\pi}(s_{1}) + O(|1-\gamma|) - \Big[\frac{J^{\pi}}{1-\gamma} + V^{\pi}(s_{2}) + O(|1-\gamma|)\Big]\bigg]\\
    = & \lim_{\gamma \toup 1}\bigg[ V^{\pi}(s_{1})-V^{\pi}(s_{2})\bigg] = V^{\pi}(s_{1})-V^{\pi}(s_{2}).
\end{align*}
\textbf{Proof of Part~2}: Again from the Laurent series expansion (\citep[Proposition~5.1.2]{bertsekas2012dynamic}, for any policy $\pi \in \Pi_{\SD}$, we have
\[
V_{\gamma}^{\pi}(s) = \frac{J^{\pi}}{1-\gamma} + V^{\pi}(s) + O(|1-\gamma|), \quad \forall s \in \mathcal{S}.
\]
As a result, we have 
\begin{align*}
    &\lim_{\gamma \toup 1}\bigg[V_{\gamma}^{\pi}(s_{1}) - \gamma^{T}V_{\gamma}^{\pi}(s_{2})\bigg] \\
     = &\lim_{\gamma \toup 1}\bigg[\frac{J^{\pi}}{1-\gamma} + V^{\pi}(s_{1}) + O(|1-\gamma|) - \Big[\frac{\gamma^{T}J^{\pi}}{1-\gamma} + \gamma^{T}V^{\pi}(s_{2}) + O(\gamma^{T}|1-\gamma|)\Big]\bigg]\\
    = &\lim_{\gamma \toup 1} \bigg[\frac{(1-\gamma^{T})}{1-\gamma}J^{\pi} + V^{\pi}(s_{1})-\gamma^{T}V^{\pi}(s_{2}) \bigg]\\
    = &\phantom{.} TJ^{\pi} + V^{\pi}(s_{1})-V^{\pi}(s_{2}).
\end{align*}
\textbf{Proof of Part~3}: From the definition, we have
    \[
        \lim_{\gamma \toup 1} f(T,\gamma) = \lim_{\gamma \toup 1} \big[\frac{\gamma^{2}-\gamma^{2T+2}}{1-\gamma^2}\big] =  \lim_{\gamma \toup 1} \big[\sum_{t=1}^{T}\gamma^{2t}\big] = T .
    \]
\textbf{Proof of Part~4}: From the definition, for any finite $T\geq1$, we have
        \[
        \lim_{\gamma \toup 1}\big[R^{\pi,\gamma}_{T}\big] =  \lim_{\gamma \toup 1}\Big[\sum_{t=0}^{T-1}\gamma^{t}r(S_{t},A_{t})\Big] = \sum_{t=0}^{T-1}r(S_{t},A_{t}) = R^{\pi}_{T} .
        \]
\end{proof}
\subsubsection{Proof of Corollary~\ref{cor:beta_asymptotic}}
\textbf{Proof of Part~1}: By Lemma~\ref{lem:limits}, Part~4, for all $T\geq1$, we have 
\begin{equation}\label{eq:limi_cor_mid_3}
    \lim_{\gamma \toup 1}\big[R_{T}^{\pi,\gamma}\big]= R^{\pi}_{T}.
\end{equation}
Moreover, we have
\begin{align}\label{eq:limi_cor_mid_4}
    \lim_{\gamma \toup 1}\big[ V^{\pi}_{\gamma}(S_{0})-\gamma^{T}V^{\pi}_{\gamma}(S_{T}) \big] &= \lim_{\gamma \toup 1}\big[ V^{\pi}_{\gamma}(S_{0}) - V^{\pi}_{\gamma}(S_{T}) + V^{\pi}_{\gamma}(S_{T}) -\gamma^{T}V^{\pi}_{\gamma}(S_{T}) \big] \notag\\&\stackrel{(a)}{=}
    V^{\pi}(S_{0}) - V^{\pi}(S_{T}) + TJ^{\pi} + V^{\pi}(S_{T}) - V^{\pi}(S_{T})\notag\\
    &= TJ^{\pi} +  V^{\pi}(S_{0}) - V^{\pi}(S_{T}), 
\end{align}
where $(a)$ follows from Lemma~\ref{lem:limits}, Parts~1 and 2. 
The result of this part follows by substituting \eqref{eq:limi_cor_mid_3}--\eqref{eq:limi_cor_mid_4} on the LHS of \eqref{eq:discounted_finite1}.\\
\textbf{Proof of Part~2}: By Lemma~\ref{lem:limits}, Part~2, for all $s_{1},s_{2} \in \mathcal{S}$, we have
\begin{equation*}
    \lim_{\gamma \toup 1}\Big[V_{\gamma}^{\pi}(s_{1}) - V_{\gamma}^{\pi}(s_{2})\Big] =   V^{\pi}(s_{1})-V^{\pi}(s_{2}).   
\end{equation*}
This implies that     
\begin{equation}\label{eq:limi_cor_mid_5}
    \lim_{\gamma \toup 1}\big[K^{\pi,\gamma}\big] = K^{\pi}.   
\end{equation}
Moreover, by Lemma~\ref{lem:limits}, Part~3, we have
\begin{equation}\label{eq:limi_cor_mid_6}
    \lim_{\gamma \toup 1} f^{\gamma}(T) = T.
\end{equation}
The result of this part follows by substituting \eqref{eq:limi_cor_mid_5}--\eqref{eq:limi_cor_mid_6} on the RHS of \eqref{eq:discounted_finite1}.\\
\textbf{Proof of Part~3}: 
The result of this part follows by substituting \eqref{eq:limi_cor_mid_5}--\eqref{eq:limi_cor_mid_6} on the RHS of \eqref{eq:discounted_finite2}.
\section{Proof of Main Results for Finite-Horizon Setup}\label{sec:pf_finite_horizon}
\subsection{Proof of Theorem~\ref{thm:finite-h-non-asym}}\label{sec:pf_thm:finite-h-non-asym}
\subsubsection{Preliminary Results}
We first present a few preliminary lemmas. 
To simplify the notation, we define following martingale difference sequence. 
\begin{definition}
    Let filtration $\mathcal{F}=\{\mathcal{F}_{t}\}_{t=0}^{h}$ be defined as $\mathcal{F}_{t} \coloneqq \sigma(S_{0:t},A_{0:t})$. For any policy $\pi \in \Pi_{\FD}$, let $\{V^{\pi,h}_{t}\}_{t=0}^{h+1}$ denote the corresponding finite-horizon value function. We define the sequence $\{W_{t}^{\pi,h}\}_{t=0}^{h+1}$ as follows 
    \begin{equation}\label{eq:def_W}
        W_{t}^{\pi,h} \coloneqq \Big[V^{\pi,h}_{t}(S_{t}) - \EXP\big[V^{\pi,h}_{t}(S_{t})\bigm|S_{t-1},\pi_{t-1}(S_{t-1}) \big] \Big], \quad \forall t \in \{1,\ldots,h+1\},
    \end{equation}
    where $\{S_{t}\}_{t=0}^{h}$ denotes the random sequence of states encountered along the current sample path.
    \end{definition}
    \begin{lemma}
        Sequence $\{W_{t}^{\pi,h}\}_{t=0}^{h+1}$ is an MDS.
    \end{lemma}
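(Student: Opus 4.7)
The plan is to mirror the short computations already carried out for the analogous MDS $\{M_t^\pi\}_{t\ge1}$ in the average reward setup and $\{\gamma^t N_t^{\pi,\gamma}\}_{t\ge1}$ in the discounted reward setup. All three sequences share the structure ``function of the next state minus its one step conditional expectation given the current state and chosen action,'' and exactly this shape is what makes them MDSs via the controlled Markov property.

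The main steps I would carry out, in order, are: \emph{(i)} note that by definition of the filtration $\mathcal{F}_{t-1} = \sigma(S_{0:t-1},A_{0:t-1})$, the random variable $S_{t-1}$ is $\mathcal{F}_{t-1}$-measurable, and since $\pi_{t-1}$ is a deterministic map, so is $\pi_{t-1}(S_{t-1})$; \emph{(ii)} use the controlled Markov property
\[
\PR(S_t = s_t \mid \mathcal{F}_{t-1}) = P(s_t \mid S_{t-1},\pi_{t-1}(S_{t-1}))
\]
to conclude that $\EXP[V_t^{\pi,h}(S_t)\mid \mathcal{F}_{t-1}] = \EXP[V_t^{\pi,h}(S_t)\mid S_{t-1},\pi_{t-1}(S_{t-1})]$ almost surely; \emph{(iii)} use linearity of conditional expectation together with the fact that the second term in the definition of $W_t^{\pi,h}$ is itself $\mathcal{F}_{t-1}$-measurable (being a function of $S_{t-1}$ alone) to pull it out of the outer expectation; \emph{(iv)} combine these to obtain
\[
\EXP[W_t^{\pi,h}\mid \mathcal{F}_{t-1}] = \EXP[V_t^{\pi,h}(S_t)\mid \mathcal{F}_{t-1}] - \EXP[V_t^{\pi,h}(S_t)\mid S_{t-1},\pi_{t-1}(S_{t-1})] = 0.
\]
Finally I would note integrability, which is immediate because $\mathcal{S}$ is finite and $V_t^{\pi,h}$ is bounded; this gives the MDS property as required.

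There is essentially no obstacle: the statement is a direct template-copy of the earlier two MDS lemmas, with the only cosmetic differences being that the value function is now time-indexed ($V_t^{\pi,h}$ rather than $V^\pi$ or $V_\gamma^\pi$) and that the decision rule $\pi_{t-1}$ depends on $t$, which does not affect the argument because $\pi_{t-1}$ is a deterministic Markov rule on $\mathcal{S}$ and so $\pi_{t-1}(S_{t-1})$ remains $\mathcal{F}_{t-1}$-measurable. The only thing worth being careful about is the indexing: the sequence is stated as $\{W_t^{\pi,h}\}_{t=0}^{h+1}$ but the MDS relation is meaningful only for $t\ge 1$, matching the range $t\in\{1,\ldots,h+1\}$ given in the definition.
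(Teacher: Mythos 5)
Your proposal is correct and follows essentially the same argument as the paper: both show that the conditioning term is $\mathcal{F}_{t-1}$-measurable and that the two conditional expectations cancel by the controlled Markov property. Your version is slightly more explicit about the Markov-property step and integrability, but this is the same proof.
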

    \begin{proof}
        By the definition of $\{\mathcal{F}_{t}\}_{t=0}^{h}$, we have that $S_{t-1}$ is $\mathcal{F}_{t-1}$-measurable. As a result, we have
        \begin{align*}
            \EXP\Big[W_{t}^{\pi,h} \bigm| \mathcal{F}_{t-1}\Big] &= \EXP\Big[ V^{\pi,h}_{t}(S_{t}) - \EXP\big[V^{\pi,h}_{t}(S_{t})\bigm|S_{t-1},\pi_{t-1}(S_{t-1}) \big] \bigm| \mathcal{F}_{t-1}\Big] \\
            &=\EXP\Big[V^{\pi,h}_{t}(S_{t})\bigm|\mathcal{F}_{t-1} \Big] - \EXP\Big[V^{\pi,h}_{t}(S_{t})\bigm|S_{t-1},\pi_{t-1}(S_{t-1})  \Big]=0,
        \end{align*}
        which shows that $\{W_{t}^{\pi,h}\}_{t=0}^{h+1}$ is an MDS with respect to the filtration $\{\mathcal{F}_{t}\}_{t=0}^{h}$. 
    \end{proof}
    We now present a martingale decomposition for the cumulative reward $R^{\pi,h}_{T}(\omega)$ for any policy $\pi \in \Pi_{\FD}$.\begin{lemma}\label{lem:finite_h_decomposition}
        Given any policy $\pi \in \Pi_{\FD}$, we can rewrite the cumulative reward $R^{\pi,h}_{T}$ as follows
        \begin{equation}
            R^{\pi,h}_{T}(\omega) = \sum_{t=1}^{T}W_{t}^{\pi,h} + V_{0}^{\pi,h}(S_{0}) - V_{T}^{\pi,h}(S_{T}). 
        \end{equation}
    \end{lemma}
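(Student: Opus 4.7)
The plan is to mirror the martingale decomposition used in Lemma~\ref{lem:return_decom} and Lemma~\ref{lem:discounted_decomposition}, adapting it to the time-inhomogeneous finite-horizon setting. The key structural ingredient is the policy evaluation equation \eqref{eq:finite-h-PE}, which lets me trade each per-step reward $r(S_t, \pi_t(S_t))$ for a one-step value-function difference.

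First I would apply \eqref{eq:finite-h-PE} along the sample path $\{S_t\}_{t=0}^{T}$ induced by $\pi$ to write, for each $t \in \{0,1,\ldots,T-1\}$,
\[
r(S_t,\pi_t(S_t)) = V_t^{\pi,h}(S_t) - \EXP\bigl[V_{t+1}^{\pi,h}(S_{t+1})\bigm| S_t, \pi_t(S_t)\bigr].
\]
Summing from $t=0$ to $T-1$ gives
\[
R_T^{\pi,h}(\omega) = \sum_{t=0}^{T-1}\Bigl[V_t^{\pi,h}(S_t) - \EXP\bigl[V_{t+1}^{\pi,h}(S_{t+1})\bigm| S_t, \pi_t(S_t)\bigr]\Bigr].
\]

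Next I would add and subtract $V_T^{\pi,h}(S_T)$, then re-index the ``$V_t^{\pi,h}(S_t)$'' portion of the sum by shifting $t \mapsto t+1$ (so that the index inside the value function becomes $t+1$ for the realized state, matching the index inside the conditional expectation). This telescopes the deterministic part into $V_0^{\pi,h}(S_0) - V_T^{\pi,h}(S_T)$, while leaving, for $t=0,\ldots,T-1$,
\[
V_{t+1}^{\pi,h}(S_{t+1}) - \EXP\bigl[V_{t+1}^{\pi,h}(S_{t+1}) \bigm| S_t, \pi_t(S_t)\bigr],
\]
which is precisely $W_{t+1}^{\pi,h}$ by the definition \eqref{eq:def_W}. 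Re-indexing the sum by $s=t+1$ ranging over $\{1,\ldots,T\}$ then yields $\sum_{t=1}^{T} W_t^{\pi,h}$.

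No step should present serious difficulty, since the argument is an exact replica of the proofs of Lemmas~\ref{lem:return_decom} and \ref{lem:discounted_decomposition}; the only real subtlety is bookkeeping with the time-dependent decision rule $\pi_t$ and the time-dependent value functions $V_t^{\pi,h}$, and making sure the terminal condition $V_{h+1}^{\pi,h} \equiv 0$ is compatible with an arbitrary $T \le h+1$ (it is, because the decomposition only uses $V_t^{\pi,h}$ for $t \le T$, so the boundary value is never actually invoked). Combining the three displayed identities delivers the claim.
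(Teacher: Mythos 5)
Your proposal is correct and follows essentially the same route as the paper: apply \eqref{eq:finite-h-PE} along the realized trajectory, sum, add and subtract $V_T^{\pi,h}(S_T)$ to telescope the deterministic part into $V_0^{\pi,h}(S_0)-V_T^{\pi,h}(S_T)$, and recognize the remaining terms as $W_{t+1}^{\pi,h}$ before re-indexing. Your remark that the terminal condition $V_{h+1}^{\pi,h}\equiv 0$ is never actually invoked for $T\le h+1$ is a correct (and slightly more careful) observation than the paper makes explicit.
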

    \begin{proof}
        \eqref{eq:finite-h-PE} implies that along the trajectory of states $\{S_{t}\}_{t=0}^{T}$ induced by the policy $\pi$, we have 
        \[
        r(S_{t},\pi(S_{t})) = V_{t}^{\pi,h}(S_{t}) - \EXP\Big[V_{t+1}^{\pi,h}(S_{t+1}) \bigm| S_{t},\pi(S_{t})\Big].
        \]
    \end{proof}
    For any $1\leq T\leq h+1$, by repeating similar steps as in the proof of Lemma~\ref{lem:return_decom}, we have 
    \begin{align*}
        R^{\pi,h}_{T} =\phantom{.} &\sum_{t=0}^{T-1}\Big[V^{\pi,h}_{t}(S_{t}) - \EXP\big[V^{\pi,h}_{t+1}(S_{t+1})\bigm|S_{t},\pi_{t}(S_{t})\big] \Big]\\ \stackrel{(a)}{=} \phantom{.}
        &\sum_{t=0}^{T-1}\Big[V^{\pi,h}_{t}(S_{t}) - \EXP\big[V^{\pi,h}_{t+1}(S_{t+1})\bigm|S_{t},\pi_{t}(S_{t})\big] \Big] + V^{\pi,h}_{T}(S_{T})-V^{\pi,h}_{T}(S_{T}) \\ \stackrel{(b)}{=} \phantom{.}
        &\sum_{t=0}^{T-1}\Big[V^{\pi,h}_{t+1}(S_{t+1}) - \EXP\big[V^{\pi,h}_{t+1}(S_{t+1})\bigm|S_{t},\pi_{t}(S_{t}) \big]\Big] + V^{\pi,h}_{0}(S_{0})-V^{\pi,h}_{T}(S_{T}) \\ \stackrel{(c)}{=} \phantom{.} 
        &\sum_{t=0}^{T-1}W^{\pi,h}_{t+1} + V^{\pi,h}_{0}(S_{0})-V^{\pi,h}_{T}(S_{T})\\ = \phantom{.} 
        &\sum_{t=1}^{T}W^{\pi,h}_{t} + V^{\pi,h}_{0}(S_{0})-V^{\pi,h}_{T}(S_{T}),
    \end{align*}
    where $(a)$ follows from adding and subtracting $V^{\pi,h}_{T}(S_{T})$, $(b)$ follows from re-arranging the terms in the summation,  and $(c)$ follows from the definition of $\{W_{t}^{\pi,h}\}_{t=0}^{h+1}$ in \eqref{eq:def_W}.
    \subsubsection{Proof of Theorem~\ref{thm:finite-h-non-asym}}
    Proof of this theorem follows from the martingale decomposition stated in Lemma~\ref{lem:finite_h_decomposition} and the concentration bounds stated in Theorem~\ref{thm:Azuma} and Theorem~\ref{thm:non_asym_LIL}. \\
    \textbf{Proof of Part 1} By Lemma~\ref{lem:finite_h_decomposition}, we have 
    \[
    R^{\pi,h}_{T}(\omega) = \sum_{t=1}^{T}W_{t}^{\pi,h} + V_{0}^{\pi,h}(S_{0}) - V_{T}^{\pi,h}(S_{T}).
    \]
    As a result, we have 
    \begin{equation}\label{eq:finite-h_mid_pf_1}
        \Big|R^{\pi,h}_{T}(\omega) - \big(V_{0}^{\pi,h}(S_{0}) - V_{T}^{\pi,h}(S_{T})\big) \Big| = \Big|\sum_{t=1}^{T}W_{t}^{\pi,h} \Big|.
    \end{equation}
    In order to upper-bound the term $\big|\sum_{t=1}^{T}W_{t}^{\pi,h} \big|$, we verify the conditions of Corollary~\ref{cor:azuma}. By \eqref{eq:def_k_h_finite}, we have 
    \[
    \big|W_{t}^{\pi,h} \big| = \big|V^{\pi,h}_{t}(S_{t}) - \EXP\big[V^{\pi,h}_{t}(S_{t})\bigm|S_{t-1},\pi_{t-1}(S_{t-1}) \big] \big|\leq K^{\pi,h}_{t} < \infty, \quad \forall t \in \{1,\ldots,T\}.
    \]
    As a result, MDS $\{W^{\pi,h}_{t}\}_{t=1}^{h+1}$ is a sequentially bounded MDS with respect to the sequence $\{K^{\pi,h}_{t}\}_{t=1}^{h+1}$. Therefore, Corollary~\ref{cor:azuma} implies that for any $\delta \in (0,1)$, with probability at least $1-\delta$, we have 
    \begin{align}\label{eq:finite-h_mid_pf_2}
        \Big|\sum_{t=1}^{T}W^{\pi,h}_{t} \Big| \leq \sqrt{2\sum_{t=1}^{T}(K^{\pi,h}_{t})^2\log\frac{2}{\delta}}\nonumber\\
        \stackrel{(a)}{=} \bar{K}^{\pi,h}_{T}\sqrt{2g^{\pi,h}(T)\log\frac{2}{\delta}},
    \end{align}
    where $(a)$ follows from \eqref{eq:def_g}. By combining \eqref{eq:finite-h_mid_pf_1} and \eqref{eq:finite-h_mid_pf_2}, with probability at least $1-\delta$, we have
    \begin{align}
        \Big|R^{\pi,h}_{T} - \big(V_{0}^{\pi,h}(S_{0}) - V_{T}^{\pi,h}(S_{T})\big) \Big| \leq \sqrt{2g^{\pi,h}(T)\log\frac{2}{\delta}}.
    \end{align}
    \textbf{Proof of Part 2:} Similar to the proof of Part 1, by Lemma~\ref{lem:finite_h_decomposition}, we have 
    \begin{equation}\label{eq:finite-h_mid_pf_3}
        \Big|R^{\pi,h}_{T} - \big(V_{0}^{\pi,h}(S_{0}) - V_{T}^{\pi,h}(S_{T})\big) \Big| = \Big|\sum_{t=1}^{T}W_{t}^{\pi,h} \Big|.
    \end{equation}
    Moreover, MDS $\{W_{t}^{\pi,h}\}_{t=1}^{h+1}$ is a sequentially bounded MDS with respect to the sequence $\{K^{\pi,h}_{t}\}_{t=1}^{h+1}$. Therefore, Theorem~\ref{thm:non_asym_LIL} implies that for any $\delta \in (0,1)$, if $g^{\pi,h}(h+1)\geq 173\log\frac{4}{\delta}$, define $T_{0}^{\pi,h}(\delta)$ to be 
        \begin{equation*}
        T_{0}^{\pi,h}(\delta) \coloneqq \min\{T'\geq1 : g^{\pi,h}(T') \geq 173\log\frac{4}{\delta}\}.    
        \end{equation*}
        Then with probability at least $1-\delta$, for all $T_{0}^{\pi,h}(\delta) \leq T \leq h+1$, we have
    \begin{align*}
        \Big| \sum_{t=1}^{T}W_{t}^{\pi,h} \Big| &\leq \sqrt{3\Big(\sum_{t=1}^{T}(K^{\pi,\gamma}_{t})^{2}\Big) \Big(2\log\log\big(\dfrac{3\sum_{t=1}^{T}(K^{\pi,\gamma}_{t})^2}{2\big|\sum_{t=1}^{T}W_{t}^{\pi,h}\big|}\big) +\log\frac{2}{\delta}\Big)}.
    \end{align*}
    Now there are two cases: either $|\sum_{t=1}^{T}W_{t}^{\pi,h}| \leq (\bar{K}^{\pi,h}_{T})^2$ or $|\sum_{t=1}^{T}W_{t}^{\pi,h}| \geq (\bar{K}^{\pi,\gamma}_{T})^2$. If $|\sum_{t=1}^{T}W_{t}^{\pi,h}| \geq (\bar{K}^{\pi,\gamma}_{T})^2$, we get:
    \begin{align*}
        \Big|\sum_{t=1}^{T}W_{t}^{\pi,h}\Big| &\leq \sqrt{3\Big(\sum_{t=1}^{T}(K^{\pi,h}_{t})^{2}\Big) \Big(2\log\log\big(\dfrac{3\sum_{t=1}^{T}(K^{\pi,h}_{t})^2}{2\big|\sum_{t=1}^{T}W_{t}^{\pi,h}\big|}\big) +\log\frac{2}{\delta}\Big)} \\&\leq
        \sqrt{3\Big(\sum_{t=1}^{T}(K^{\pi,h}_{t})^{2}\Big) \Big(2\log\log\big(\dfrac{3\sum_{t=1}^{T}(K^{\pi,h}_{t})^2}{2(\bar{K}^{\pi,h}_{T})^2}\big) +\log\frac{2}{\delta}\Big)}\\ &\stackrel{(a)}{=}
        \bar{K}^{\pi,h}_{T}\sqrt{3g^{\pi,h}(T) \Big(2\log\log\big(\dfrac{3}{2}g^{\pi,h}(T)\big) +\log\frac{2}{\delta}\Big)},
    \end{align*}
    where $(a)$ follows from the definition of $g^{\pi,h}(T)$.
    Otherwise, we have $|\sum_{t=1}^{T}W_{t}^{\pi,h}| \leq (\bar{K}^{\pi,\gamma}_{T})^2$. As a result, we can summarize these two cases as follows
    \begin{align}\label{eq:finite-h_mid_pf_4}
        \Big|\sum_{t=1}^{T}W_{t}^{\pi,h}\Big| \leq  
        \max\bigg\{ \bar{K}^{\pi,h}_{T} \sqrt{3g^{\pi,h}(T)\big(2\log\log \big(\dfrac{3}{2}g^{\pi,h}(T)\big) +\log\frac{2}{\delta}\big)},(\bar{K}^{\pi,h}_{T})^{2}\bigg\}.
    \end{align}
By combining \eqref{eq:finite-h_mid_pf_3}--\eqref{eq:finite-h_mid_pf_4}, with probability at least $1-\delta$, we have 
\begin{align}
    \Big|R^{\pi,h}_{T}(\omega) &- \big(V_{0}^{\pi,h}(S_{0}) - V_{T}^{\pi,h}(S_{T})\big) \Big| \notag  \\ &\leq \max\bigg\{ \bar{K}^{\pi,h}_{T} \sqrt{3g^{\pi,h}(T)\big(2\log\log (\frac{3}{2}g^{\pi,h}(T)) +\log\frac{2}{\delta}\big)},(\bar{K}^{\pi,h}_{T})^{2}\bigg\}.
\end{align}
    \subsection{Proof of Corollary~\ref{cor:finite-h-non-asym}}\label{pf:cor_finite-h-non-asym}
    \textbf{Proof of Part 1} By Lemma~\ref{lem:finite_h_decomposition}, we have
    \begin{equation*}
            R^{\pi,h}_{T}(\omega) = \sum_{t=1}^{T}W_{t}^{\pi,h} + V_{0}^{\pi,h}(S_{0}) - V_{T}^{\pi,h}(S_{T}). 
    \end{equation*}
    As a result, we have 
    \begin{equation}\label{eq:finite-h_mid_pf_5}
        \Big|R^{\pi,h}_{T}(\omega)-V_{0}^{\pi,h}(S_{0}) \Big| \stackrel{(a)}{\leq} \Big|\sum_{t=1}^{T}W_{t}^{\pi,h} \Big| + \Big| V_{T}^{\pi,h}(S_{T})\Big|,
    \end{equation}
    where $(a)$ follows from the triangle inequality. In the proof of Theorem~\ref{thm:finite-h-non-asym}, Part~1, we showed that with probability at least $1-\delta$, we have 
    \begin{align}\label{eq:finite-h_mid_pf_6}
        \Big|\sum_{t=1}^{T}W^{\pi,h}_{t} \Big| \leq \sqrt{2\sum_{t=1}^{T}(K^{\pi,h}_{t})^2\log\frac{2}{\delta}}\nonumber\\
        \stackrel{(b)}{\leq} \bar{K}^{\pi,h}_{T}\sqrt{2T\log\frac{2}{\delta}},
    \end{align}
    where $(b)$ follows by  $K_{t}^{\pi,h} \leq \bar{K}_{T}^{\pi,h}$, for all $t\leq T$.
    Moreover, by definition,  we have 
    \begin{equation}\label{eq:finite-h_mid_pf_7}
    V_{T}^{\pi,h}(S_{T})\leq  \bar{H}_{T}^{\pi,h}, \quad \forall t \leq T.    
    \end{equation}
    By combining \eqref{eq:finite-h_mid_pf_5}--\eqref{eq:finite-h_mid_pf_7}, with probability at least $1-\delta$, we have 
    \[
    \Big|R^{\pi,h}_{T}(\omega)-V_{0}^{\pi,h}(S_{0}) \Big| \leq \bar{K}^{\pi,h}_{T}\sqrt{2T\log\frac{2}{\delta}} + \bar{H}_{T}^{\pi,h}.
    \]
    \textbf{Proof of Part 2:} Similar to the proof of Part 1, by Lemma~\ref{lem:finite_h_decomposition}, we have 
    \begin{equation}\label{eq:finite-h_mid_pf_8}
       \Big|R^{\pi,h}_{T}(\omega)-V_{0}^{\pi,h}(S_{0}) \Big| \leq \Big|\sum_{t=1}^{T}W_{t}^{\pi,h} \Big| + \Big| V_{T}^{\pi,h}(S_{T})\Big|, 
    \end{equation}
    Moreover, we have 
    \begin{equation}\label{eq:finite-h_mid_pf_9}
    V_{T}^{\pi,h}(S_{T})\leq  \bar{H}_{T}^{\pi,h}.    
    \end{equation}
    In addition, from proof of Theorem~\ref{thm:finite-h-non-asym}, Part~2, we have for any $\delta \in (0,1)$, for all $T\geq T_{0}(\delta) \coloneqq \min \{T\geq1 : g^{\pi,h}(T) \geq 173\log\frac{4}{\delta}\}$, with probability at least $1-\delta$, we have 
    \begin{align}\label{eq:finite-h_mid_pf_10}
    \Big|\sum_{t=1}^{T}W_{t}^{\pi,h}\Big| &\leq  
    \max\bigg\{ \bar{K}^{\pi,h}_{T} \sqrt{3g^{\pi,h}(T)\big(2\log\log \big(\dfrac{3}{2}g^{\pi,h}(T)\big) +\log\frac{2}{\delta}\big)},(\bar{K}^{\pi,h}_{T})^{2}\bigg\}\nonumber \\
    &\stackrel{(c)}{\leq} \max\bigg\{ \bar{K}^{\pi,h}_{T} \sqrt{3T\big(2\log\log \big(\dfrac{3T}{2}\big) +\log\frac{2}{\delta}\big)},(\bar{K}^{\pi,h}_{T})^{2}\bigg\},
    \end{align}
    where $(c)$ follows from the fact that $g^{\pi,h}(T) \leq T$.
    By combining \eqref{eq:finite-h_mid_pf_8}--\eqref{eq:finite-h_mid_pf_10}, with probability at least $1-\delta$, we have 
    \[
    \Big|R^{\pi,h}_{T}(\omega)-V_{0}^{\pi,h}(S_{0}) \Big| \leq \max\bigg\{ \bar{K}^{\pi,h}_{T} \sqrt{3T\big(2\log\log \big(\dfrac{3T}{2}\big) +\log\frac{2}{\delta}\big)},(\bar{K}^{\pi,h}_{T})^{2}\bigg\} + \bar{H}_{T}^{\pi,h}.
    \]
    \subsection{Proof of Corollary~\ref{cor:finite-h-two-policies}}\label{app:pf_cor:finite-h-two-policies}
    \subsubsection{Proof of Part 1}
Consider two policies $\pi_{1},\pi_{2} \in \Pi_{\SD}$. Let $\{S_{t}^{\pi_{1}}\}_{t\geq0}$ and $\{S_{t}^{\pi_{2}}\}_{t\geq0}$ denote the random sequence of states encountered by following policies $\pi_{1}$ and $\pi_{2}$. We have
\begin{align}\label{eq:mid_pf_dis_two_policy_1_h_finite}
    \Big|R^{\pi_{1},h}_{T}-R^{\pi_{2},h}_{T} \Big| \stackrel{(a)}{=}&\Big|R^{\pi_{1},h}_{T} -\big[V^{\pi_{1},h}_{0}(S_{0}^{\pi_{1}}) - V^{\pi_{1},h}_{T}(S_{T}^{\pi_1}) \big] + \big[V^{\pi_{1},h}_{0}(S_{0}^{\pi_1}) - V^{\pi_{1},h}_{T}(S_{T}^{\pi_1}) \big]\notag\\ - &\big[V^{\pi_{2},h}_{0}(S_{0}^{\pi_2}) - V^{\pi_{2},h}_{T}(S_{T}^{\pi_2}) \big] + \big[V^{\pi_{2},h}_{0}(S_{0}^{\pi_2}) - V^{\pi_{2},h}_{T}(S_{T}^{\pi_2}) \big] - R^{\pi_{2},h}_{T} \Big| \notag \notag\\\stackrel{(b)}{\leq} &\Big|R^{\pi_{1},h}_{T} - \big[V^{\pi_{1},h}_{0}(S_{0}^{\pi_1}) - V^{\pi_{1},h}_{T}(S_{T}^{\pi_1}) \big]\Big| + \Big|\big[V^{\pi_{2},h}_{0}(S_{0}^{\pi_2}) - V^{\pi_{2},h}_{T}(S_{T}^{\pi_2}) \big] - R^{\pi_{2},h}_{T} \Big|\notag \\
    + &\Big|\big[V^{\pi_{1}}_{0}(S_{0}^{\pi_1}) - V^{\pi_{1}}_{T}(S_{T}^{\pi_1}) \big] - \big[V^{\pi_{2}}_{0}(S_{0}^{\pi_2}) - V^{\pi_{2}}_{T}(S_{T}^{\pi_2}) \big] \Big|,
\end{align}
where $(a)$ follows by adding and subtracting $\big[V^{\pi_{1},h}_{0}(S_{0}^{\pi_1}) - V^{\pi_{1},h}_{T}(S_{T}^{\pi_1}) \big]$ and $\big[V^{\pi_{2},h}_{0}(S_{0}^{\pi_2}) - V^{\pi_{2},h}_{T}(S_{T}^{\pi_2}) \big]$ and $(b)$ follows from the triangle inequality. Similarly, we have 
\begin{align}\label{eq:mid_pf_dis_two_policy_2_h_finite}
    &\Big|\big[V^{\pi_{1},h}_{0}(S_{0}^{\pi_1}) - V^{\pi_{1},h}_{T}(S_{T}^{\pi_1}) \big]-\big[V^{\pi_{2},h}_{0}(S_{0}^{\pi_2}) - V^{\pi_{2},h}_{T}(S_{T}^{\pi_2}) \big] \Big| \stackrel{(a)}{=}\notag\\&\Big|\big[V^{\pi_{1}}_{0}(S_{0}^{\pi_1}) - V^{\pi_{1}}_{T}(S_{T}^{\pi_1}) \big] -R^{\pi_{1},h}_{T}  + R^{\pi_{1},h}_{T} - R^{\pi_{2},h}_{T} + R^{\pi_{2},T}_{T}   - \big[V^{\pi_{2},h}_{0}(S_{0}^{\pi_2}) - V^{\pi_{2},h}_{T}(S_{T}^{\pi_2}) \big]\Big| \notag\\\stackrel{(b)}{\leq} &\Big|R^{\pi_{1},h}_{T} - \big[V^{\pi_{1}}_{0}(S_{0}^{\pi_1}) - V^{\pi_{1}}_{T}(S_{T}^{\pi_1}) \big]\Big| + \Big|R^{\pi_{2},h}_{T} - \big[V^{\pi_{2}}_{0}(S_{0}^{\pi_2}) - V^{\pi_{2}}_{T}(S_{T}^{\pi_2}) \big]\Big| \notag \\
    +&\Big| R^{\pi_{1},h}_{T} - R^{\pi_{2},h}_{T} \Big|,
\end{align}
where $(a)$ follows by adding and subtracting $R^{\pi_{1},h}_{T}$ and $R^{\pi_{2},h}_{T}$ and $(b)$ follows from the triangle inequality. \eqref{eq:mid_pf_dis_two_policy_1_h_finite}--\eqref{eq:mid_pf_dis_two_policy_2_h_finite} imply that 
\begin{align}\label{eq:mid_pf_dis_two_policy_3_h_finite}
    &\Big|\big|R^{\pi_{1},h}_{T}-R^{\pi_{2},h}_{T}\big| - \big|\big[V^{\pi_{1},h}_{0}(S_{0}^{\pi_1}) - V^{\pi_{1},h}_{T}(S_{T}^{\pi_1}) \big]-\big[V^{\pi_{2},h}_{0}(S_{0}^{\pi_2}) - V^{\pi_{2},h}_{T}(S_{T}^{\pi_2}) \big] \big| \Big| \notag  \\ \leq &\Big|R^{\pi_{1},h}_{T} - \big[V^{\pi_{1},h}_{0}(S_{0}^{\pi_1}) - V^{\pi_{1},h}_{T}(S_{T}^{\pi_1}) \big]\Big| + \Big|R^{\pi_{2},h}_{T} - \big[V^{\pi_{2},h}_{0}(S_{0}^{\pi_2}) - V^{\pi_{2},h}_{\gamma}(S_{T}^{\pi_2}) \big]\Big|.
\end{align}
By Theorem~\ref{thm:finite-h-non-asym}, we know that for any $\delta_{1} \in (0,1)$, with probability at least $1-\delta_{1}$, we have
\begin{equation}\label{eq:mid_pf_dis_two_policy_4_h_finite}
    \Big|R^{\pi_{1},h}_{T} - \big(V_{0}^{\pi_{1},h}(S_{0}) - V_{T}^{\pi_{1},h}(S_{T})\big) \Big| \leq \bar{K}_{T}^{\pi_{1},h}\sqrt{2g^{\pi_{1},h}(T)\log\frac{2}{\delta_{1}}}.
\end{equation}
Similarly, we have that for any $\delta_{2} \in (0,1)$, with probability at least $1-\delta_{2}$, we have
\begin{equation}\label{eq:mid_pf_dis_two_policy_5_h_finite}
    \Big|R^{\pi_{2},h}_{T} - \big(V_{0}^{\pi_{2},h}(S_{0}) - V_{T}^{\pi_{2},h}(S_{T})\big) \Big| \leq \bar{K}_{T}^{\pi_{2},h}\sqrt{2g^{\pi_{2},h}(T)\log\frac{2}{\delta_{2}}}.
\end{equation}
As a result, by applying Lemma~\ref{lem:delta} and \eqref{eq:mid_pf_dis_two_policy_3_h_finite}--\eqref{eq:mid_pf_dis_two_policy_5_h_finite}, for any $\delta \in (0,1)$, with probability at least $1-\delta$, we have
\begin{align*}
    &\phantom{\leq}\Big|\big|R^{\pi_{1},h}_{T}-R^{\pi_{2},h}_{T}\big| - \big|\big[V^{\pi_{1},h}_{0}(S_{0}^{\pi_1}) - V^{\pi_{1},h}_{T}(S_{T}^{\pi_1}) \big]-\big[V^{\pi_{2},h}_{0}(S_{0}^{\pi_2}) - V^{\pi_{2},h}_{T}(S_{T}^{\pi_2}) \big] \big| \Big| \\&\leq \Big|R^{\pi_{1},h}_{T} - \big[V^{\pi_{1},h}_{0}(S_{0}^{\pi_1}) - V^{\pi_{1},h}_{T}(S_{T}^{\pi_1}) \big]\Big| + \Big|R^{\pi_{2},h}_{T} - \big[V^{\pi_{2},h}_{0}(S_{0}^{\pi_2}) - V^{\pi_{2},h}_{T}(S_{T}^{\pi_2}) \big]\Big| \\&\leq \bar{K}_{T}^{\pi_{1},h}\sqrt{2g^{\pi_{1},h}(T)\log\frac{4}{\delta}} + \bar{K}_{T}^{\pi_{2},h}\sqrt{2g^{\pi_{2},h}(T)\log\frac{4}{\delta}}.
\end{align*}
\subsection{Proof of Part 2}
As we showed in the proof of part~1, for any two policies $\pi_{1},\pi_{2} \in \Pi_{\FD}$, we have
\begin{align}
    &\Big|\big|R^{\pi_{1},h}_{T}-R^{\pi_{2},h}_{T}\big| - \big|\big[V^{\pi_{1},h}_{0}(S_{0}^{\pi_1}) - V^{\pi_{1},h}_{T}(S_{T}^{\pi_1}) \big]-\big[V^{\pi_{2},h}_{0}(S_{0}^{\pi_2}) - V^{\pi_{2},h}_{T}(S_{T}^{\pi_2}) \big] \big| \Big| \notag  \\ \leq &\Big|R^{\pi_{1},h}_{T} - \big[V^{\pi_{1},h}_{0}(S_{0}^{\pi_1}) - V^{\pi_{1},h}_{T}(S_{T}^{\pi_1}) \big]\Big| + \Big|R^{\pi_{2},h}_{T} - \big[V^{\pi_{2},h}_{0}(S_{0}^{\pi_2}) - V^{\pi_{2},h}_{\gamma}(S_{T}^{\pi_2}) \big]\Big|.
\end{align}
By Corollary~\ref{cor:finite-h-two-policies}, for any $\delta_{1} \in (0,1)$, if $g^{\pi_{1},h}(h)\geq 173\log\frac{4}{\delta_{1}}$, let 
    \begin{equation}
    T_{0}^{\pi,h}(\delta_{1}) \coloneqq \min\Big\{T'\geq1 : g^{\pi,h}(T') \geq 173\log\frac{4}{\delta_{1}}\Big\}.    
    \end{equation}
Then with probability at least $1-\delta_{1}$, for all $T_{0}^{\pi_{1},h}(\delta_{1}) \leq T \leq h+1$, we have 
\begin{align}
    \Big|R^{\pi_{1},h}_{T} &- \big(V_{0}^{\pi_{1},h}(S_{0}) - V_{T}^{\pi_{1},h}(S_{T})\big) \Big| \notag  \\ &\leq \max\bigg\{ \bar{K}^{\pi_{1},h}_{T} \sqrt{3g^{\pi_{1},h}(T)\big(2\log\log (\frac{3}{2}g^{\pi_{1},h}(T)) +\log\frac{2}{\delta_{1}}\big)},(\bar{K}^{\pi_{1},h}_{T})^{2}\bigg\}.
\end{align}
Similarly, for any $\delta_{2} \in (0,1)$, if $g^{\pi_{2},h}(h)\geq 173\log\frac{4}{\delta_{2}}$, with probability at least $1-\delta_{2}$, for all $T_{0}^{\pi,h}(\delta_{2}) \leq T \leq h+1$,  we have
\begin{align}
    \Big|R^{\pi_{2},h}_{T} &- \big(V_{0}^{\pi_{2},h}(S_{0}) - V_{T}^{\pi_{2},h}(S_{T})\big) \Big| \notag  \\ &\leq \max\bigg\{ \bar{K}^{\pi_{2},h}_{T} \sqrt{3g^{\pi_{2},h}(T)\big(2\log\log (\frac{3}{2}g^{\pi_{2},h}(T)) +\log\frac{2}{\delta_{2}}\big)},(\bar{K}^{\pi_{2},h}_{T})^{2}\bigg\}.
\end{align}
As a result, by applying Lemma~\ref{lem:delta}, for any $\delta \in (0,1)$, if $\min\big\{g^{\pi_{1},h}(h),g^{\pi_{2},h}(h)\big\} \geq 173\log\frac{8}{\delta}$, let 
\[
T_{0}(\delta) \coloneqq \max \{T_{0}^{\pi_{1},h}(\frac{8}{\delta}),T_{0}^{\pi_{1},h}(\frac{8}{\delta})\}.
\]
Then, with probability at least $1-\delta$, for all $T_{0}(\delta) \leq T \leq h+1$, we have
\begin{align}
    &\Big|\big|R^{\pi_{1},h}_{T}-R^{\pi_{2},h}_{T}\big| - \big|\big[V^{\pi_{1},h}_{0}(S_{0}^{\pi_1}) - V^{\pi_{1},h}_{T}(S_{T}^{\pi_1}) \big]-\big[V^{\pi_{2},h}_{0}(S_{0}^{\pi_2}) - V^{\pi_{2},h}_{T}(S_{T}^{\pi_2}) \big] \big| \Big| \nonumber \\\leq &\Big|R^{\pi_{1},h}_{T} - \big[V^{\pi_{1},h}_{0}(S_{0}^{\pi_1}) - V^{\pi_{1},h}_{T}(S_{T}^{\pi_1}) \big]\Big| + \Big|R^{\pi_{2},h}_{T} - \big[V^{\pi_{2},h}_{0}(S_{0}^{\pi_2}) - V^{\pi_{2},h}_{T}(S_{T}^{\pi_2}) \big]\Big| \nonumber \\\leq 
    &\max\bigg\{ \bar{K}^{\pi_{1},h}_{T} \sqrt{3g^{\pi_{1},h}(T)\big(2\log\log (\frac{3}{2}g^{\pi_{1},h}(T)) +\log\frac{4}{\delta}\big)},(\bar{K}^{\pi_{1},h}_{T})^{2}\bigg\} \nonumber\\
    + &\max\bigg\{ \bar{K}^{\pi_{2},h}_{T} \sqrt{3g^{\pi_{2},h}(T)\big(2\log\log (\frac{3}{2}g^{\pi_{2},h}(T)) +\log\frac{4}{\delta}\big)},(\bar{K}^{\pi_{2},h}_{T})^{2}\bigg\}.
\end{align}

\section{Proof of Main Results for Random Reward Setup}
\subsection{Preliminary Results}
\subsubsection{Reward Martingale Decomposition}
To simplify the notation, we define following martingale difference sequence. 
\begin{definition}
    Let filtration $\mathcal{F} =\{\mathcal{F}_{t}\}_{t\geq0}$ be defined as $\mathcal{F}_{t} \coloneqq \sigma(S_{0:t},A_{0:t}).$ For any policy $\pi \in \tilde{\Pi}_{\AC}$, we define the sequence $\{\tilde{W}^{\pi}_{t}\}_{t\geq1}$ as follows
    \begin{equation}\label{eq:def_tilde_w}
    \tilde{W}_{t}^{\pi} \coloneqq \tilde{r}(S_{t-1},A_{t-1},E_{t-1}) - \EXP\big[\tilde{r}(S_{t-1},A_{t-1},E_{t-1})\big|S_{t-1},A_{t-1}\!=\!\pi(S_{t-1})\big], \quad \forall t\geq1,   
    \end{equation}
    where $\{S_{t}\}_{t\geq0}$ denotes the random sequence of states encountered along the current sample path. 
\end{definition}
The sequence $\{\tilde{W}^{\pi}_{t}\}_{t\geq1}$ is an MDS with respect to the filtration $\mathcal{F} = \{\mathcal{F}_{t}\}_{t\geq0}$ since
\begin{align}\label{eq:tilde_W_martingale}
    \EXP\Big[\tilde{W}_{t}^{\pi}\big| \mathcal{F}_{t-1}\Big]&=\EXP\Big[\tilde{r}(S_{t-1},A_{t-1},E_{t-1}) - \EXP\big[\tilde{r}(S_{t-1},A_{t-1},E_{t-1})\big|\mathcal{F}_{t-1}\big] \Big| \mathcal{F}_{t-1}\Big] \notag \\
    &=\EXP\Big[\tilde{r}(S_{t-1},A_{t-1},E_{t-1})\big|S_{t-1},A_{t-1} \!=\! \pi(S_{t-1})\big]\notag \\
    &-\EXP\Big[\tilde{r}(S_{t-1},A_{t-1},E_{t-1})\big|S_{t-1},A_{t-1} \!=\! \pi(S_{t-1})\big] = 0.
\end{align}
Recall the definitions of processes $\tilde{R}^{\pi}_{T}$ and $R^{\pi}_{T}$
\begin{align}\label{eq:recal_R_tilde_R}
    \tilde{R}^{\pi}_{T} &= \sum_{t=0}^{T-1}\tilde{r}(S_{t},A_{t},E_{t}),\\
    R^{\pi}_{T} &= \sum_{t=0}^{T-1}\EXP\big[\tilde{r}(S_{t},A_{t},E_{t})\big|S_{t},A_{t}=\pi(S_{t})\big].
\end{align}
By the definition of $\{\tilde{W}_{t}^{\pi}\}_{t\geq1}$ in \eqref{eq:def_tilde_w} we have
\begin{equation}\label{eq:mid_proof_ref_w}
    \tilde{R}^{\pi}_{T}   = R^{\pi}_{T} +  \sum_{t=1}^{T} \tilde{W}^{\pi}_{t},
\end{equation}    
where $\{\tilde{W}^{\pi}_{t}\}_{t\geq1}$ is an MDS with respect to the filtration $\mathcal{F} = \{\mathcal{F}_{t}\}_{t\geq0}$ by \eqref{eq:tilde_W_martingale}.
The following theorem establishes the concentration of the process $R^{\pi}_{T}$ around the quantity $T\tilde{J}^{\pi} - \big(\tilde{V}^{\pi}(S_{T})-\tilde{V}^{\pi}(S_{0})\big)$.
\begin{theorem}\label{thm:non-asymp_exp_stoch_reward}
    For any policy $\pi \in \tilde{\Pi}_{\AC}$, the following upper-bounds hold:
    \begin{enumerate}
        \item For any $\delta \in (0,1)$, with probability at least $1-\delta$, we have:
        \begin{equation}
            \Big|R^{\pi}_{T} - T\tilde{J}^{\pi} - \big(\tilde{V}^{\pi}(S_{0}) - \tilde{V}^{\pi}(S_{T}) \big) \Big| \leq \tilde{K}^{\pi}\sqrt{2T\log\frac{2}{\delta}}.
        \end{equation}
        \item For any $\delta \in (0,1)$, for all $T \geq T_{0}(\delta) \coloneqq \Bigl\lceil \dfrac{173}{\tilde{K}^\pi}\log\dfrac{4}{\delta}\Big\rceil$, with probability at least $1-\delta$, we have 
        \begin{align}
            \Big| R^{\pi}_{T} - T\tilde{J}^{\pi} - \big(\tilde{V}^{\pi}(S_{0}) - \tilde{V}^{\pi}(S_{T}) \big) \Big| 
            \leq \max \Big\{\tilde{K}^{\pi}\sqrt{3T\Big(2\log\log\dfrac{3T}{2} + \log\dfrac{2}{\delta}\Big)},\big(\tilde{K}^{\pi}\big)^2 \Big\}.
        \end{align}
    \end{enumerate}
\end{theorem}
\begin{proof}
    This theorem follows by applying the result of Theorem~\ref{thm:finite_return_meta_new} to the reduced model of $\tilde{\mathcal{M}}$ defined in Def.~\ref{def:reduced_model}.
    Let $\mathcal{M} = (P,r)$ denote the reduced model of $\tilde{\mathcal{M}} = (P,\tilde{r})$ with $r(s,a)$ defined in \eqref{eq:r_reduction}. We can rewrite the process $R^{\pi}_{T}$ as the cumulative reward process associated with the reduced model $\mathcal{M} = (P,r)$, i.e., 
    \[
    R^{\pi}_{T} =  \sum_{t=0}^{T-1}\EXP\big[\tilde{r}(S_{t},A_{t},E_{t})\big|S_{t},A_{t}=\pi(S_{t})\big] = \sum_{t=0}^{T-1}r(S_{t},A_{t}) .
    \]
    The result of this theorem follows by applying the result of Theorem~\ref{thm:finite_return_meta_new} on the cumulative reward process $R^{\pi}_{T}$ and recalling that quantities $\tilde{J}^{\pi}$, $\tilde{V}^{\pi}$, and $\tilde{K}^{\pi}$ are defined based on the reduced model $M$. 
\end{proof}

\subsection{Proof of Theorem~\ref{thm:non-asymp_stoch_reward}}\label{sec:pf_thm:non-asymp_stoch_reward}
\subsubsection{Proof of Part 1}
For any policy $\pi \in \tilde{\Pi}_{\AC}$, we have 
\begin{align}\label{eq:mid_proof_sum_random_reward}
    \big|\tilde{R}^{\pi}_{T} - T\tilde{J}^{\pi} - \big(\tilde{V}^{\pi}(S_{0})-\tilde{V}^{\pi}(S_{T})\big) \big| = \big| R^{\pi}_{T} - T\tilde{J}^{\pi} - \big(\tilde{V}^{\pi}(S_{0})-\tilde{V}^{\pi}(S_{T})\big) + \tilde{R}^{\pi}_{T} - R^{\pi}_{T}\big|.
\end{align}
By~\eqref{eq:mid_proof_ref_w}, we have
\[
\Big|\tilde{R}^{\pi}_{T} - R^{\pi}_{T} \Big| = \Big|\sum_{t=1}^{T}\tilde{W}_{t}^{\pi}\Big|.
\]
Moreover by~\eqref{eq:k_r_tilde}, $\{\tilde{W}_{t}^{\pi}\}_{t\geq0}$ is a uniformly bounded MDS with respect to the constant $\tilde{K}_{r}^{\pi}$. Therefore, Corollary~\ref{cor:azuma} implies that for any $\delta_{1} \in (0,1)$, with probability at least $1-\delta_{1}$, we have
\begin{equation}\label{eq:mid_proof_random_reward_1}
     \Big|\sum_{t=1}^{T}\tilde{W}_{t}^{\pi}\Big| \leq \tilde{K}^{\pi}_{r}\sqrt{2T\log\frac{2}{\delta_{1}}}.    
\end{equation}
Moreover, Theorem~\ref{thm:non-asymp_exp_stoch_reward}, Part 1, implies that for any $\delta_{2} \in (0,1)$, with probability at least $1-\delta_{2}$, we have 
\begin{equation}\label{eq:mid_proof_random_reward_2}
    \Big|R^{\pi}_{T} - T\tilde{J}^{\pi} - \Big(\tilde{V}^{\pi}(S_{0}) - \tilde{V}^{\pi}(S_{T}) \Big) \Big| \leq \tilde{K}^{\pi}\sqrt{2T\log\frac{2}{\delta_{2}}}.
\end{equation}
As a result, by combining \eqref{eq:mid_proof_sum_random_reward}, \eqref{eq:mid_proof_random_reward_1}, and \eqref{eq:mid_proof_random_reward_2} and applying Lemma~\ref{lem:delta}, for any $\delta \in (0,1)$, with probability at least $1-\delta$, we have 
\begin{align}
    \Big|R^{\pi}_{T} - T\tilde{J}^{\pi} &- \Big(\tilde{V}^{\pi}(S_{0}) - \tilde{V}^{\pi}(S_{T}) \Big) \Big| \\&\leq \tilde{K}^{\pi}\sqrt{2T\log\frac{4}{\delta}} + \tilde{K}^{\pi}_{r}\sqrt{2T\log\frac{4}{\delta}}.
\end{align}

\subsubsection{Proof of Part 2}
Similar to the proof of Part~1, for any policy $\pi \in \tilde{\Pi}_{\AC}$, we have 
\begin{align}\label{eq:mid_proof_sum_random_reward2}
    \big|\tilde{R}^{\pi}_{T} - T\tilde{J}^{\pi} - \big(\tilde{V}^{\pi}(S_{0})-\tilde{V}^{\pi}(S_{T})\big) \big| = \big| R^{\pi}_{T} - T\tilde{J}^{\pi} - \big(\tilde{V}^{\pi}(S_{0})-\tilde{V}^{\pi}(S_{T})\big) + \tilde{R}^{\pi}_{T} - R^{\pi}_{T}\big|.
\end{align}
By \eqref{eq:mid_proof_ref_w}, we have
\[
\Big|\tilde{R}^{\pi}_{T} - R^{\pi}_{T} \Big| = \Big|\sum_{t=1}^{T}\tilde{W}_{t}^{\pi}\Big|.
\]
Moreover by~\eqref{eq:k_r_tilde}, $\{\tilde{W}_{t}^{\pi}\}_{t\geq0}$ is a uniformly bounded MDS with respect to the constant $\tilde{K}_{r}^{\pi}$. Therefore, Corollary~\ref{cor:finite_lil} implies that for any $\delta_{1} \in (0,1)$, for all $T \geq T_{0}(\delta_{1}) \coloneqq \Bigl\lceil \dfrac{173}{\tilde{K}^\pi_{r}}\log\dfrac{4}{\delta_{1}}\Big\rceil$, with probability at least $1-\delta_{1}$, we have 
\begin{equation}\label{eq:mid_proof_random_reward_3}
    \Big|\sum_{t=1}^{T}\tilde{W}_{t}^{\pi}\Big| \leq \max \Big\{\tilde{K}^{\pi}_{r}\sqrt{3T\Big(2\log\log\dfrac{3T}{2} + \log\dfrac{2}{\delta_{1}}\Big)},\big(\tilde{K}^{\pi}_{r}\big)^2 \Big\}.
\end{equation}
Moreover, Theorem~\ref{thm:non-asymp_exp_stoch_reward}, Part 2, implies that for any $\delta_{2} \in (0,1)$, for all $T \geq T_{0}(\delta_{2}) \coloneqq \Bigl\lceil \dfrac{173}{\tilde{K}^\pi}\log\dfrac{4}{\delta_{2}}\Big\rceil$, with probability at least $1-\delta_{2}$, we have 
\begin{equation}\label{eq:mid_proof_random_reward_4}
    \Big| R^{\pi}_{T} - T\tilde{J}^{\pi} - \big(\tilde{V}^{\pi}(S_{0}) - \tilde{V}^{\pi}(S_{T}) \big) \Big| \leq \max \Big\{\tilde{K}^{\pi}\sqrt{3T\Big(2\log\log\dfrac{3T}{2} + \log\dfrac{2}{\delta_{2}}\Big)},\big(\tilde{K}^{\pi}\big)^2 \Big\}.
\end{equation}
As a result, by combining \eqref{eq:mid_proof_sum_random_reward2}, \eqref{eq:mid_proof_random_reward_3}, and \eqref{eq:mid_proof_random_reward_4} and applying Lemma~\ref{lem:delta}, for any $\delta \in (0,1)$, for all $T\geq T_{0}(\delta) \coloneqq \max\Big\{\Bigl\lceil \dfrac{173}{\tilde{K}^{\pi}}\log\dfrac{8}{\delta}\Big\rceil , \Bigl\lceil\dfrac{173}{\tilde{K}^{\pi}_{r}}\log\dfrac{8}{\delta}\Big\rceil \Big\}$, with probability at least $1-\delta$, we have
\begin{align}
    \Big| \tilde{R}^{\pi}_{T} - T\tilde{J}^{\pi} &- \big(\tilde{V}^{\pi}(S_{0}) - \tilde{V}^{\pi}(S_{T}) \big) \Big| 
    \\&\leq 
    \max\Big\{\tilde{K}^{\pi}\sqrt{3T\Big(2\log\log\frac{3T}{2} + \log\frac{4}{\delta}\Big)}, (\tilde{K}^{\pi})^2\Big\}\\
    &+\max\Big\{\tilde{K}^{\pi}_{r}\sqrt{3T\Big(2\log\log\frac{3T}{2} + \log\frac{4}{\delta}\Big)}, (\tilde{K}^{\pi}_{r})^2\Big\}.
\end{align}

\section{Miscellaneous Theorems}
\subsection{Slutsky's Theorem}
\begin{theorem}[{see~\citep[Theorem~7.7.1]{ash2000probability}}]\label{thm:slutsky}
    If $X_{t} \xrightarrow[]{(d)} X$ and $Y_{t} \xrightarrow[]{(d)} c$, where $c\in \reals$ (equivalently $Y_{t} \xrightarrow[]{(P)} c$) then we have 
    \begin{enumerate}
        \item $X_{t} + Y_{t} \xrightarrow[]{(d)} X + c $.
        \item $X_{t}Y_{t} \xrightarrow[]{(d)} cX $.
        \item $\dfrac{X_{t}}{Y_{t}} \xrightarrow[]{(d)} \dfrac{X}{c}$, if $c\not=0$.
    \end{enumerate}
\end{theorem}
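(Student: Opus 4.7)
The plan is to reduce all three conclusions to a single joint-convergence statement for $(X_t, Y_t)$ combined with the continuous mapping theorem. Along the way the central tool is the standard observation that $Y_t \xrightarrow[]{(d)} c$ with $c$ a constant is in fact equivalent to $Y_t \xrightarrow[]{(P)} c$: for any $\varepsilon > 0$ the set $\{y : |y - c| \ge \varepsilon\}$ is closed and does not contain $c$, so the portmanteau theorem gives $\limsup_t \PR(|Y_t - c| \ge \varepsilon) \le 0$. This is the only place where the degeneracy of the limit of $Y_t$ is used, but it is used crucially.

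The first substantive step would be to lift the two marginal convergences to the joint convergence $(X_t, Y_t) \xrightarrow[]{(d)} (X, c)$. For any bounded continuous test function $g\colon \reals^2 \to \reals$ with $\|g\|_\infty = M$, I would write
\[
\EXP[g(X_t, Y_t)] - \EXP[g(X, c)] = \EXP\big[g(X_t, Y_t) - g(X_t, c)\big] + \big(\EXP[g(X_t, c)] - \EXP[g(X, c)]\big).
\]
The second term tends to zero by applying the hypothesis $X_t \xrightarrow[]{(d)} X$ to the bounded continuous map $x \mapsto g(x, c)$. The first term is handled by a tightness-plus-uniform-continuity argument: choose $R$ large enough that $\PR(|X| > R) < \varepsilon$ and $\PR(|X_t| > R) < 2\varepsilon$ for all large $t$ (the latter uses tightness of $\{X_t\}$, which follows from $X_t \xrightarrow[]{(d)} X$ via Prokhorov's theorem on $\reals$); on the event $\{|X_t| \le R,\ |Y_t - c| < \delta\}$, uniform continuity of $g$ on the compact rectangle $[-R, R]\times[c-1, c+1]$ bounds the integrand by $\varepsilon$ for $\delta$ small, while on the complement (which has probability at most $3\varepsilon$ once $t$ is large enough, thanks to $Y_t \xrightarrow[]{(P)} c$) the integrand is bounded by $2M$. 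Letting $\varepsilon \to 0$ delivers joint convergence.

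The second step is routine: apply the continuous mapping theorem with the maps $f_1(x,y) = x + y$, $f_2(x,y) = xy$, and $f_3(x,y) = x/y$. The first two are continuous on all of $\reals^2$. The third is continuous on the open set $\{y \ne 0\}$, and the joint limit $(X, c)$ assigns zero mass to its complement precisely because the hypothesis of Part~3 is $c \ne 0$. Each of the three claimed convergences follows immediately.

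The main obstacle will be the upgrade to joint convergence in the first step. Marginal convergence alone is known to be insufficient for joint convergence, and the argument has to exploit the degeneracy of the limit of $Y_t$ in an essential way; without that, the splitting of $g(X_t, Y_t) - g(X, c)$ via the intermediate term $g(X_t, c)$ would not telescope nicely. Assembling tightness of $\{X_t\}$, uniform continuity of $g$ on compacts, and the upgrade of $Y_t \xrightarrow[]{(d)} c$ to $Y_t \xrightarrow[]{(P)} c$ into a single $\varepsilon$-$\delta$ bound is the only place any real care is needed; the remainder of the proof is a textbook application of the continuous mapping theorem.
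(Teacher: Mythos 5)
Your proposal is correct: the upgrade of $Y_{t} \xrightarrow[]{(d)} c$ to $Y_{t} \xrightarrow[]{(P)} c$ via the portmanteau theorem, the lift to joint convergence $(X_{t},Y_{t}) \xrightarrow[]{(d)} (X,c)$ using tightness of $\{X_{t}\}$ together with uniform continuity of the test function on compacts, and the final application of the continuous mapping theorem (noting that the limit law puts no mass on the discontinuity set $\{y=0\}$ when $c \neq 0$) is the standard and complete argument for Slutsky's theorem. The paper itself does not prove this statement --- it is quoted directly from \citet[Theorem~7.7.1]{ash2000probability} as a background result --- so there is no in-paper proof to compare against; your argument matches the classical textbook treatment that the citation points to.
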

\begin{remark}
    Since convergence in the almost-sure sense implies convergence in probability, same results hold when $Y_{t} \xrightarrow[]{(a.s.)} c$.
\end{remark}

\end{document}